\documentclass[10pt,letterpaper]{report}
\usepackage[utf8]{inputenc}
\usepackage{mathtools,amsthm,amssymb,latexsym,amsmath,graphics,graphicx,tabularx,caption,subcaption,mathtools}
\usepackage{bm}
\usepackage[usenames,dvipsnames]{color}
\usepackage[linesnumbered,ruled,vlined]{algorithm2e}
\SetKwComment{Comment}{(}{)}
\usepackage[shortlabels]{enumitem}
\usepackage[normalem]{ulem}
\usepackage[
backend=biber
]{biblatex}
\usepackage{hyperref}
\hypersetup{
  colorlinks   = true,    % Colours links instead of ugly boxes
  urlcolor     = blue,    % Colour for external hyperlinks
  linkcolor    = blue,    % Colour of internal links
  citecolor    = red      % Colour of citations
}
\usepackage{setspace}
\usepackage{nomencl}
\makenomenclature
\usepackage{geometry}
\theoremstyle{plain}
\newtheorem{theorem}{Theorem}[section]

\newtheorem{corollary}{Corollary}[section]
\newtheorem{lemma}{Lemma}[section]
\newtheorem{proposition}{Proposition}[section]
\theoremstyle{definition}
\newtheorem{definition}{Definition}[section]
\newtheorem{example}{Example}[section]
\newtheorem{rem}{Remark}[section]
\newtheorem{remark}{Remark}[section]

\newcommand\norm[1]{\left|\left|#1\right|\right|}
\newcommand\abs[1]{\left|#1\right|}
\newcommand\alg[1]{\left\langle#1\right\rangle}
\newcommand\floor[1]{\left\lfloor#1\right\rfloor}

\renewcommand{\vec}[1]{\mathbf{#1}}

\newcommand{\be}{\begin{equation}}
\newcommand{\ee}{\end{equation}}
\newcommand{\ba}{\begin{aligned}}
\newcommand{\ea}{\end{aligned}}
\newcommand{\ben}{\begin{enumerate}}
\newcommand{\een}{\end{enumerate}}
\newcommand{\bit}{\begin{itemize}}
\newcommand{\eit}{\end{itemize}}
\newcommand\nc[2]{\nomenclature{$#1$}{#2}}

\def\XX{{\mathbb X}}
\def\BB{{\mathbb B}}
\def\HH{{\mathbb H}}

\def\TT{{\mathbb T}}
\def\RR{{\mathbb R}}

\def\ZZ{{\mathbb Z}}

\def\MM{\mathbb{M}}
\def\PPI{{{\rm I}\kern-1pt\Pi}}
\def\SS{{\mathbb S}}

\newcommand\dist{\operatorname{dist}}
\DeclareMathOperator*{\argmin}{\arg\!\min}
\DeclareMathOperator*{\argmax}{\arg\!\max}

\newcommand{\x}{\mathbf{x}}
\newcommand{\y}{\mathbf{y}}

\def\ls{\lesssim}
\def\D{{\mathcal D}}
\def\gs{\gtrsim}
\def\disp{\displaystyle}

\renewcommand{\vec}[1]{\mathbf{#1}}

\newcommand{\yadi}{\nomenclature}

\def\tn{|\!|\!|}

\numberwithin{equation}{section}

\begin{document}

\pagenumbering{gobble}
%TITLE PAGE
\vspace*{.3\textheight}
\begin{center}
\huge 
Learning Without Training\\
\large
\vspace{.5in}
By\\
Ryan O'Dowd\\
\vfill
Claremont Graduate University\\
2026
\normalsize
\end{center}

\newpage
%COPYRIGHT PAGE
\qquad
\vfill
\begin{center}
$\copyright$ Copyright Ryan O'Dowd, 2026.\\
All rights reserved.
\end{center}

\newpage
%COMMITTEE APPROVAL PAGE
\begin{center}
\textbf{Committee Approval}
\end{center}

This dissertation has been duly read, reviewed, and critiqued by the committee listed
below, which hereby approves the manuscript of Ryan O'Dowd as fulfilling the scope
and quality requirements for meriting the degree of Doctor of Philosophy in Mathematics.
\vspace{10pt}

\begin{center}
Hrushikesh Mhaskar, Chair\\
Claremont Graduate University\\
Distinguished Research Professor of Mathematics

\vspace{10pt}
Asuman Aksoy\\
Claremont McKenna College\\
Crown Professor of Mathematics and George R. Roberts Fellow

\vspace{10pt}
Alexander Cloninger\\
 University of California, San Diego\\
Professor of Mathematics

\vspace{10pt}
Allon Percus\\
Claremont Graduate University\\
Joseph H. Pengilly Professor of Mathematics

\end{center}

\doublespacing
\newpage
%ABSTRACT

\begin{abstract}

We live in an era of big data. Whether it be algorithms designed to help corporations efficiently allocate the use of their resources, systems to block or intercept transmissions in times of war, or the helpful pocket companion known as ChatGPT, machine learning is at the heart of managing the real-world problems associated with massive data. With the success of neural networks on such large-scale problems, more research in machine learning is being conducted now than ever before. This dissertation focuses on three different projects rooted in mathematical theory for machine learning applications. Common themes throughout involve the synthesis of mathematical ideas with problems in machine learning, yielding new theory, algorithms, and directions of study.

The first project deals with supervised learning and manifold learning. In theory, one of the main problems in supervised learning is that of function approximation: that is, given some data set $\mathcal{D}=\{(x_j,f(x_j))\}_{j=1}^M$, can one build a model $F\approx f$? At the surface level, classical approximation theory seems readily applicable to such a problem, but under the surface there are technical difficulties including unknown data domains, extremely high dimensional feature spaces, and noise. We introduce a method which aims to tackle these difficulties and remedies several of the theoretical shortcomings of the current paradigm.

The second project deals with transfer learning, which is the study of how an approximation process or model learned on one domain can be leveraged to improve the approximation on another domain.
This can be viewed as the lifting of a function from one manifold to another.
This viewpoint enables us to connect some inverse problems in applied mathematics (such as the inverse Radon transform) with transfer learning.
We study such liftings of functions when the data is assumed to be known only on a part of the whole domain.
We are interested in determining subsets of the target data space on which the lifting can be defined, and how the local smoothness of the function and its lifting are related.

The third project is concerned with the classification task in machine learning, particularly in the active learning paradigm. Classification has often been treated as an approximation problem as well, but we propose an alternative approach leveraging techniques originally introduced for signal separation problems.
The analogue to point sources are the supports of distributions from which data belonging to each class is sampled from.
We introduce theory to unify signal separation with classification and a new algorithm which yields competitive accuracy to other recent active learning algorithms while providing results much faster.

\end{abstract}

%\newpage
%DEDICATION

%\begin{center}
%\textbf{Dedication}
%\end{center}

\newpage
%ACKNOWLEDGEMENTS

\begin{center}
\textbf{Acknowledgements}
\end{center}

My advisor Hrushikesh Mhaskar has my deepest gratitude for introducing me to the world of mathematics research and pushing me to improve and accomplish far beyond what I imagined possible for myself. His support, advice, kindness, and feedback have been instrumental to the construction of this dissertation and to my development as a mathematician. I will always remember fondly our discussions of mathematics and philosophy, walks along campus, meals shared, and travel abroad together. 

I am grateful for the advice and kindness that Allon Percus---who has been my academic advisor during my years at CGU---has provided to me over the years. His guidance led me along the path that has culminated in this work. I also would like to thank Asuman Aksoy and Alexander Cloninger for graciously agreeing to sit on my dissertation committee and provide valuable feedback. 

I would like to acknowledge my fellow students who I have shared in the challenges of homeworks, the cheer of Math Club, or even the occasional conversation with. 
The friendly faces which are always present at CGU make the campus such a joy to be around.
I would also like to acknowledge the professors at CGU and the Claremont Colleges who all provide exceptionally well-led courses and have immense care for the success of their students.
%This work would also not have been possible without the support of grants from the Office of Naval Research and the National Science Foundation.

Beyond CGU, I am thankful for the advice I have gotten from mathematicians and scientists alike. In particular, Richard Spencer, Raghu Raj, Jerry Kim, and Frank Filbir stand out for the professional guidance and knowledge they have passed to me.

I am immensely thankful for my parents, Sean and Allison. The pursuit of my dreams and the culmination of my work in this dissertation has been upheld by their ongoing support and unconditional love throughout my life. I am thankful for my fianc\'ee Diana, and all of the moments she encouraged me forward when doubts held me back. Her love is the reservoir through which I find my strength. I am thankful for my siblings Tyler, Trevor, and Kelsey, and their uncanny ability to raise my spirits. I am also thankful for Monica, whose warmth and kindness is always inspiring. I love you all.

This dissertation is the product of an accumulation of support more plentiful than I can mention or fully appreciate. To all those who I have known as friends in my life, thank you.

\pagenumbering{roman}

\newpage
\tableofcontents

\newpage

\pagenumbering{arabic}

\chapter{Introduction}
\label{ch:intro}

Suppose we are given data about the weather and want to build a model to predict future weather occurrences. Our first thought is to visualize the data to understand trends, but the issue is that each data point has so many different parameters: geographic location, humidity, wind speed, wind direction, temperature, climate, human pollution, atmospheric composition, time of day, etc. It is impossible to visualize. The goal is to use all of this data to estimate a weather \textit{function} for future time data points. Although the fundamental problem is just that of function approximation, the function in this case is described by so much data that constructing a mathematical model also seems infeasible. So where to even begin?

Problems of large and changing data like this one are not unique to meteorology. They face scientists and researchers in every field, from finances, geology, astronomy, biology, etc. Some challenges are unique to each field and even each specific problem, but common ones include large scale data sets, noisy data, high dimensionality, changing data, and more. Machine learning algorithms have become a popular choice to tackle these large-scale problems where data visualization and traditional mathematical modeling approaches are simply not feasible. 
This chapter serves to review background information on machine learning, and investigate the shortcomings which have motivated the work in this dissertation. 

We discuss machine learning along with its many paradigms and problems of interest in Section~\ref{sec:machineback}. We delve into the paradigm of supervised learning in Section~\ref{sec:mlparadigm}. We point out the traditional role approximation theory plays in supervised learning in Section~\ref{sec:approxthy}. 
To conclude the discussion of the supervised learning paradigm, we describe some shortcomings of this paradigm in Section~\ref{sec:obstacles}.
In Section~\ref{sec:constapprox}, we explore an alternative way in which approximation theory could play a more important role in machine learning. 
A relatively recent paradigm which motivated part of our research is the area of manifold learning, which we introduce in Section~\ref{sec:manifoldlearning}. 
We then point out in Section~\ref{sec:approxshort} the reasons why we think classical approximation theory cannot play a direct role in machine learning.
We then conclude this chapter by providing an introduction to our thesis in Section~\ref{sec:thesisintro}.

Since it is absolutely essential for clarity in the presentation of our results, we would also like to state now how constants will be treated in this dissertation. The symbols $c, c_1, c_2, \cdots$ will denote generic positive constants depending upon the fixed quantities in the discussion. 
Their values may be different at different occurrences, even within a single formula.
The notation $A\lesssim B$ means $A\le cB$, $A\gtrsim B$ means $B\lesssim A$, and $A\sim B$ means $A\lesssim B\lesssim A$.
In some cases where we believe it may be otherwise unclear, we will clarify which values a constant may depend on, which may appear in the subscript of the above-mentioned symbols. For example, $A\lesssim_d B$ means there exists $c(d)>0$ such that $A\leq c(d)B$.

In this chapter only, we denote the $\ell_p$ norms on vectors by $\norm{\circ}_p$, and the $L^p$ norms for functions defined on a domain $X$ by $\norm{\circ}_{X,p}$. In later chapters we will adjust our notation for norms to simplify our discussion. In this context, we will always assume that $1\le p\le \infty$.

\section{Machine Learning Background}
\label{sec:machineback}

Machine learning is a field at the intersection of mathematics, computer science, and data science concerned with the development of theory and algorithms capable of learning complex tasks. Typically, machine learning models are given some data from which they \textit{learn} the task at hand. The tasks of interest for machine learning approaches are often those where traditional mathematical modeling leaves something to be desired. As mentioned, this may be due to a large size of data, changing data, high dimensionality, noise, complicated factors influencing the data, or even the lack of a known mathematical model behind the data to begin with. We cite \cite{goodfellow} as an introductory resource for machine learning, and much of the content in this section can be learned in further depth there.

We start our discussion on the background of machine learning by first introducing several settings for problems that are studied in the field.
\bit
\item \textbf{Supervised Learning:} The main goal of \textit{supervised learning} is to generate a model to \textit{approximate} a function $f$ on unseen data points. Let $\mathcal{D}=\{(x_j,y_j)\}_{j=1}^M$ be a given data set, where $(x_j,y_j)$ are sampled from some unknown joint distribution $\tau=(X,Y)$. The goal is to approximate the target function $f(x)=\mathbb{E}_{\tau}(Y|X=x)$.

\item \textbf{Unsupervised Learning:} Let $\mathcal{D}=\{x_j\}_{j=1}^M$ be a given data set, where $x_j$ are sampled from some unknown distribution $\nu$ on a high-dimensional, Euclidean space $\mathbb{X}$. The goal of \textit{unsupervised learning} is to use $\mathcal{D}$ to \textit{construct} a function $f$ that satisfies some desired property.

\item \textbf{Semi-Supervised Learning:} The problems considered in \textit{semi-supervised learning} are broad and encompass the case where the $y_j$ from the supervised setting are given for only some subset of the total data. In a common setup, one may seek to construct the missing $y_j$ data points and then proceed in a supervised manner to approximate new data points.

\item \textbf{Active Learning:} Active learning incorporates ideas from both unsupervised and supervised learning. Let $\mathcal{D}=\{x_j\}_{j=1}^M$ be a given data set like that of the unsupervised setting, but in active learning we are also given $f$ called an \textit{oracle} in this context. We assume that there is some cost to query $f$, so we start in the unsupervised setting to determine some set of points we would like to query to maximize the information we can gain. Then once we have queried this set of points, we are in the semi-supervised setting where the goal is to approximate $f$ for the non-queried points.
\eit 

We note briefly that there are numerous other settings one may consider in the realm of machine learning, including \textit{distributed learning}, \textit{online learning}, and \textit{self-supervised learning}. Since these topics are not directly relevant to this dissertation we have omitted discussion on them.
There are two major problems or \textit{tasks} in machine learning which we will discuss.
\bit
\item \textbf{Classification:} In \textit{classification} problems, the function $f$ is discrete, taking on only some finite set of values called \textit{class labels}.

\item \textbf{Regression:} In \textit{regression} problems the function $f$ may take on any value on a continuum (and is often assumed to be a continuous function). Furthermore, in many cases it is assumed that $y_j=f(x_j)+\epsilon_j$ where $\epsilon_j$ is sampled from some unknown (often assumed mean-zero) distribution $\epsilon$.

\eit

Unsupervised learning is used mostly for classification problems. 
In unsupervised classification, one seeks to \textit{cluster} the data by somehow constructing $f$ to give the same class label to points that are ``similar" in some sense.
Supervised, semi-supervised, and active learning each are known to be used for both classification and regression problems.
We now turn our focus to the general paradigm in machine learning for solving problems in the supervised setting.

%\subsection{Transfer Learning}
%\label{subsec:transferlearning}
%
%Another area of machine learning which is relevant to this dissertation is known as transfer learning.
%The problem of transfer learning is to learn the parameters of an approximation process based on one data set and leverage this information to aid in the determination of an approximation process on another data set \cite{valeriyasmartphone, maskey2023transferability, maurer2013sparse}. An overview of transfer learning can be found in \cite{hosna-transfer}. Since training large (by today's standards) neural networks requires computing power simply unavailable to most researchers, the idea of transfer learning may be appealing to many wishing to use neural network models on new problems. While training a neural network from scratch on a new problem is not feasible in many cases, tuning a pre-trained network for a similar problem as the one it was trained on is doable and often yields better results anyways. Of course, this leads to some major questions such as
%\bit
%\item How does one identify which parameters for one problem are important for another?
%
%\item What if the new problem has a feature (or multiple) unlike any from the pre-trained model?
%
%\item How do we interpret the outputs of the model after it has been tuned for the new problem?
%\eit
%Since these questions are very broad and remain open in many settings, transfer learning is an exciting and active area of research.

\section{Machine Learning Paradigm for Supervised Learning}
\label{sec:mlparadigm}

In this section, we discuss the current machine learning paradigm for supervised learning.

As mentioned above, the fundamental problem of supervised learning is to build a model to approximate a function based on finitely many (noisy) data points from said function.
Therefore, we begin with a function approximation framework.
We assume that $f$ belongs to some class of functions called the \textit{universe of discourse} $\mathcal{X}$ (typically a Banach space), and then decide on some \textit{hypothesis spaces} $V_n$ of functions to model $f$ by (\textit{approximants} in the language of approximation theory).
As in approximation theory, a typical approach is to choose a nested family of hypothesis spaces
\be
V_1\subseteq V_2 \subseteq \dots \subseteq V_n\subseteq \dots,
\ee
where the value of the subscript often represents some notion of complexity. The idea is that these hypothesis spaces should satisfy a \textit{density property}. That is, as $n\to \infty$ the space $\bigcup_{n=1}^\infty V_n$ should be dense in $\mathcal{X}$.

\subsection{Examples of Hypothesis Spaces}

As an example, suppose the universe of discourse is chosen to be continuous, real functions on $[-1,1]^d$. Perhaps the simplest choice for a family of hypothesis spaces $\{V_n\}$ would be the sets of polynomials of degree at most $n$ on $[-1,1]^d$. Fitting data to this model choice is known as \textit{polynomial regression}. From the Stone-Weierstrass theorem, we know that if $f\in \mathcal{X}$, then for any $\epsilon>0$ there exists $n$ and $P\in V_n$ such that $\norm{f-P}_{[-1,1]^d,\infty}<\epsilon$. That is, the model spaces allow us to approximate $f$ arbitrarily closely as $n$ increases, demonstrating the density property. In the case $d=1$, polynomials form a simple model. However as $d$ grows they may become unwieldy, with the number of parameters growing exponentially with $d$.

A second choice for hypothesis spaces that has increased in popularity immensely in recent years is that of neural networks. 
A \textit{shallow neural network} is a function approximation model typically taking the form
\be\label{eq:shallownet}
F_n(\vec{x})=\sum_{k=1}^n a_k\phi(\vec{w}_k^T\vec{x}+b_k),
\ee
where $\displaystyle\bigcup_{k=1}^n\{a_k,w_{k,1},\dots,w_{k,d},b_k\}$ are \textit{learning parameters} associated with the model, $\vec{x}$ is the input value, $n$ is known as the \textit{width} (also \textit{complexity}) of the network, and $\phi$ is a judiciously chosen \textit{activation} function.

One advantage of a neural network model is that it can be implemented using parallel computation. As to whether or not a class of neural networks satisfies the density property is another matter. For instance, it is known that if $\phi(t)=t$, then there is no network which can reproduce even the exclusive-OR function. In Section~\ref{subsec:universal} we will return to this question.

Other common hypothesis spaces include radial basis functions and kernel-based approximation based on a variety of kernel functions.
Once a space $V_n$ is chosen to model the function by, the paradigm for supervised learning diverges from approximation theory instead opting to use ideas from statistics and optimization theory to solve the problem.

\subsection{Empirical Risk, Generalization, and Optimization}

The primary approach to select a model from $V_n$ is to introduce the notion of a \textit{generalization error}, which is given as a loss functional taking in the model as an input and returning some error value. Many choices can be used for such a loss functional, but perhaps the most common example is \textit{mean-squared-error} (MSE), which is given as
\be\label{eq:generalizationerror}
\mathcal{L}(P)=\int(y-P(x))^2d\tau(x,y).
\ee
Notice that the generalization error does not depend on the given data; instead it is an expectation over the entire distribution $\tau$, which again is not known. This means that one cannot expect to attain a minimizer of the generalization error, $\tilde{P}\coloneqq \argmin_{P\in V_n}\mathcal{L}(P)$, from only finite data. 
Instead, one typically seeks to find a minimizer of the \textit{empirical risk}, which is a discretized version of the generalization error based on the data. For example, the MSE empirical risk is given as
\be\label{eq:MSE-e}
L(\mathcal{D};P)=\frac{1}{M}\sum_{j=1}^M (y_j-P(x_j))^2=\frac{1}{M}\norm{\vec{y}-P(\vec{x})}^2_2.
\ee
The minimizer of the empirical risk, $P^\#\coloneqq \argmin_{P\in V_n}L(\mathcal{D};P)$, is the value that is ultimately sought after in machine learning applications.
However, there are two questions with this approach that remain active areas of research:
\bit
\item How can we attain the minimizer $P^\#$?

\item If we can attain it, how closely does it estimate $\tilde{P}$?
\eit
Both questions are essential to the performance of machine learning algorithms trained by empirical risk minimization, and in many cases neither question has a simple solution.

 When given large amounts of high-dimensional data, and especially in the presence of finite noise realizations, analytically deducing $P^\#$ may be impossible. So instead, a process known as \textit{optimization} is used.
The field of optimization theory is vast so we will limit our discussion to a commonly used method known as \textit{gradient (or steepest) descent}. The idea of optimization is to parameterize the hypothesis space and then leverage the parameterization with respect to a chosen loss functional to make successive (typically improving) guesses at an optimal value. For example, if $V_n$ is the space of all neural networks of the form \eqref{eq:shallownet} with $n$ neurons, then setting $\bm{\theta}$ to be a vectorization of $\displaystyle\bigcup_{k=1}^n\{a_k,w_{k,1},\dots,w_{k,d},b_k\}$ may serve as the vector of parameters.
Gradient descent starts with some initial guess of the parameters for the model to approximate the target function, known as the \textit{initialization}. Then, the guess is moved by some step size in the opposite direction of the gradient of the chosen loss function with respect to those parameters at that point. 
That is, the guess at iteration $j$, which we will denote by $\bm{\theta}_j$, is updated by the following rule:
\be
\bm{\theta}_{j+1}=\bm{\theta}_j-\eta \nabla L(\bm{\theta}_j),
\ee
where $\eta$ is called the \textit{learning rate}, or \textit{step size}.
For theoretical analysis, it is convenient to express this update as a dynamical process involving $\bm{\theta}$ as a function of of a variable $t$.
Then the process is described by
\be\label{eq:gdprocess}
\dot{\bm{\theta}}=-\eta \nabla L(\bm{\theta}),
\ee
where $\dot{}$ denotes the derivative with respect to $t$. Then, 
\be\label{eq:lossfndecay}
\dot{L(\bm{\theta})}=\nabla L(\bm{\theta})\cdot \dot{\bm{\theta}}=-\eta\norm{\nabla L(\bm{\theta})}_2^2.
\ee
Thus, with the update process \eqref{eq:gdprocess}, the loss functional is non-increasing, and hence reaches a limit as $t\to\infty$.

We examine the implementation of gradient descent for a shallow neural network with inputs from $\mathbb{R}$ using the MSE empirical risk as in \eqref{eq:MSE-e}. For simplicity, we will break up the parameters $\bm{\theta}$ into $\vec{a}=[a_1,\dots,a_n]^T$, $\vec{w}=[w_1,\dots,w_n]^T$, and $\vec{b}=[b_1,\dots,b_n]^T$. By defining
\be
\ba
\bm{\phi}=&\begin{bmatrix}
\phi(w_1x_1+b_1) & \phi(w_2x_1+b_2) & \dots & \phi(w_nx_1+b_n)\\
\phi(w_1x_2+b_1) & \phi(w_2x_2+b_2) & \dots & \phi(w_nx_2+b_n)\\
\vdots & \vdots & \ddots & \vdots\\
\phi(w_1x_M+b_1) & \phi(w_2x_M+b_2) & \dots & \phi(w_nx_M+b_n)
\end{bmatrix},\\
\bm{\phi}'=&\begin{bmatrix}
\phi'(w_1x_1+b_1) & \phi'(w_2x_1+b_2) & \dots & \phi'(w_nx_1+b_n)\\
\phi'(w_1x_2+b_1) & \phi'(w_2x_2+b_2) & \dots & \phi'(w_nx_2+b_n)\\
\vdots & \vdots & \ddots & \vdots\\
\phi'(w_1x_M+b_1) & \phi'(w_2x_M+b_2) & \dots & \phi'(w_nx_M+b_n)
\end{bmatrix},
\ea
\ee
we can set the vector
\be
\vec{z}=\bm{\phi}\vec{a}-f(\vec{x}),
\ee
in order to simplify our notation of the MSE loss functional:
\be
L(\mathcal{D};F_n)=\frac{1}{M}\norm{\vec{z}}^2_2.
\ee
Then we observe that
\be
\frac{d L}{d \vec{z}}=\frac{2}{M}\vec{z},
\ee
allowing us to compute each of the following by the chain rule:
\be
\ba
\frac{\partial L}{\partial \vec{a}}=&\frac{d L}{d\vec{z}}\frac{\partial \vec{z}}{\partial \vec{a}}=\frac{2}{M}\bm{\phi}^T\vec{z},\\
\frac{\partial L}{\partial \vec{w}}=&\frac{d L}{d\vec{z}}\frac{\partial \vec{z}}{\partial \vec{w}}=\frac{2}{M}(\bm{\phi}')^T\operatorname{diag}(\vec{x})\vec{z},\\
\frac{\partial L}{\partial \vec{b}}=&\frac{d L}{d\vec{z}}\frac{\partial \vec{z}}{\partial \vec{b}}=\frac{2}{M}(\bm{\phi}')^T\vec{z},
\ea
\ee
where $\operatorname{diag}(\vec{x})$ represents the matrix with each diagonal entry $i,i$ given by $x_i$ and each non-diagonal entry set to $0$. Thus, the gradient descent updates can be done in three parts:
\be
\ba
\vec{a}_{t+1}=&\vec{a}_t-\eta_1 \frac{2}{M}\bm{\phi}^T(\bm{\phi}\vec{a}-f(\vec{x})),\\
\vec{w}_{t+1}=&\vec{w}_t-\eta_2 \frac{2}{M}(\bm{\phi}')^T\operatorname{diag}(\vec{x})(\bm{\phi}\vec{a}-f(\vec{x})),\\
\vec{b}_{t+1}=&\vec{b}_t-\eta_3 \frac{2}{M}(\bm{\phi}')^T(\bm{\phi}\vec{a}-f(\vec{x})).
\ea
\ee

Since the calculation involves the computation of two matrices depending upon $\phi,\phi'$, one significant way to simplify the gradient descent calculation is to use an activation function which satisfies
\be
\phi'(x)=g(\phi(x)),
\ee
for a relatively simple function $g$. For instance, the function $\phi(x)=\tanh(x)$ is a sigmoidal activation function satisfying $\phi'(x)=1-\phi(x)^2$. Another example is the popular choice of activation function known as the \textit{rectified linear unit} (ReLU), defined by
\be
\phi(x)=\begin{cases} x & x>0\\ 0 & x\leq 0\end{cases}.
\ee
Even though this function is not differentiable at $0$, it is piecewise differentiable and for every non-zero point we can represent $\phi'(x)=x\phi(x)$. These simplifications allow the $\bm{\phi}'$ matrix above to be calculated directly from the $\bm{\phi}$ matrix, saving computation time.

\section{Approximation Theory}\label{sec:approxthy}

One major question in supervised learning is how to choose the hypothesis space.
The classical intersection of approximation theory with machine learning seeks to provide some theoretical justification for this choice.
There are two main questions which are investigated extensively. The first question is whether the sequence of hypothesis spaces is appropriate. That is, does the loss functional for $V_n$ converge to $0$ as $n\to \infty$? We discuss a couple of examples of early research in this direction in Section~\ref{subsec:universal}.
The second question is: how well can the target function be approximated by elements of $V_n$? In Section~\ref{subsec:degapprox}, we give some examples of early research in this direction.

\subsection{Universal Approximation}\label{subsec:universal}

We say that a sequence of hypothesis spaces, $\{V_n\}$, satisfies a \textit{universal approximation property} if for every $d\geq 1$, every compact set $K\subseteq \mathbb{R}^d$, every $f\in C(K)$, and every $\epsilon>0$, there exists some $N\in\mathbb{N}$ and $P\in V_N$ such that
\be
\norm{f-P}_{K,\infty}\leq \epsilon.
\ee
Any activation function which yields a family of universal approximator neural networks is called a \textit{Kolmogorov} function.
An activation function is called sigmoidal of order $k$ if $\lim_{x\to \infty}x^{-k}\phi(x)=1$ and $\lim_{x\to -\infty}x^{-k}\phi(x)=0$.
In \cite[Theorem 1 and Lemma 1]{cybenko-net} and \cite[Theorem 2.1]{chuili1992}, it was shown that shallow neural networks with continuous sigmoidal order $0$ activation functions satisfy the universal approximation property for continuous functions defined on compact sets.
In \cite[Corollary 3.4]{mhaskar-activation} it was shown that a continuous function $\phi$ satisfying the condition
\be
\sup_{x\in\mathbb{R}} (1+x^2)^{-n}\abs{\phi(x)}<\infty
\ee
for some integer $n$ is a Kolmogorov function if and only if it is not a polynomial. The work of \cite{leshno-activation} also showed that the activation function being non-polynomial was sufficient under different assumptions on $\phi$. 

\subsection{Degree of Approximation}
\label{subsec:degapprox}

While the above discussion on neural networks gives some indication that they may be a suitable hypothesis space for machine learning, a crucial question remains unanswered: how well can a fixed-width network approximate a function? Even if neural networks can approximate any function as $n\to \infty$, this gives us no idea about the rate at which we would converge to $f$.

To answer this question, researchers have often turned to degree of approximation results. The \textit{degree of approximation} is defined to be the least possible distance from $V_n$ to $f$. That is,
\be
E_n(f)\coloneqq \inf_{P\in V_n}\norm{f-P},
\ee 
where $\|\cdot\|$ is a suitably defined norm, typically the $L^2$ norm with respect to the marginal distribution of $\tau$ on $x$. 
%Notice that in this case, the degree of approximation would be exactly the bias term from \eqref{eq:biasvariance}.
The \textit{best approximation}, $P^*=\argmin_{P\in V_n}E_n(f)$, is the model from $V_n$ that minimizes the degree of approximation. This value indicates a global best that one can hope their hypothesis space (and thereby algorithm) to achieve. In the context of neural networks, significant research has gone into degree of approximation theorems in different settings.

We say that $f: \mathbb{R}^d\to \mathbb{R}$ belongs to the \textit{Barron Space} with parameter $s>0$, denoted by $B_s$, if it satisfies the following norm condition
\be
\norm{f}_{B_s}\coloneqq \int_{\mathbb{R}^d} (1+\abs{\vec{x}}^2)^{s/2}\abs{\hat{f}(\vec{x})}d\vec{x}<\infty.
\ee
In \cite[Theorem 1]{barron-universal}, it was shown that if $f\in B_1$, then there exists a network $G_n$ with a sigmoidal activation function and width $n$ such that
\be
\norm{f-G_n}_{\mathbb{B}^d(0,1),2}=O(1/\sqrt{n}).
\ee
Another degree of approximation result of note is for Sobolev spaces. The Sobolev space with parameters $r,p$ on a set $\mathbb{X}\subseteq \mathbb{R}^d$ is defined as $W^d_{r,p}(\mathbb{X})=\{f\in L^p(\mathbb{X}): D^{\vec{k}}f\in L^p(\mathbb{X}), \abs{\vec{k}}\leq r\}$. These spaces are associated with the following (respective) semi-norm and norm:
\be
\abs{f}_{W^d_{r,p}}\coloneqq \left(\sum_{\abs{\vec{k}}=r} \norm{f^{(\vec{k})}}_p^p\right)^{1/p},\qquad \norm{f}_{W^d_{r,p}}\coloneqq \left(\sum_{\abs{\vec{k}}\leq r} \norm{f^{(\vec{k})}}_p^p\right)^{1/p}.
\ee
If $\mathbb{X}=[-1,1]^d$, it was shown in \cite[Theorem 2.1]{mhaskar-net} that if $f\in W^d_{r,p}$ then one can \textbf{construct} a network $G_n$ with width $n$ such that
\be\label{eq:mhaskartheo}
\norm{f-G_N}_{\mathbb{X},p}\lesssim n^{-r/d}\norm{f}_{W^d_{r,p}}.
\ee
The constructive process starts with a polynomial approximation which satisfies the bound in \eqref{eq:mhaskartheo}. For example, a shifted average of the partial sums of the Chebyshev expansion of $f$ can be used in the uniform approximation case ($p=\infty$). Then each of the monomials in the polynomial can be approximated by divided differences of the function defined by $\bm{w}\mapsto \phi(\bm{w}\cdot \x +b)$.

In \cite{mhaskar-multilayer}, degree of approximation results were shown for multilayer neural networks in terms of a variational modulus of smoothness. A survey of some of the main results of the time can be found in \cite{pinkus-survey}. Over the years, many others have studied optimal approximation properties of neural networks in different contexts including \cite{kutyniok-net,petersen-net,shen-net}.

Degree of approximation results like these are oft-cited to motivate the usage of neural networks in practical applications. In the following section, we discuss why this setup may not be as desirable as it first seems.
To summarize this section, we show the current paradigm graphically in Figure~\ref{fig:mlparadigm}.

\begin{figure}[!ht]
\centering
    \includegraphics[scale=0.20]{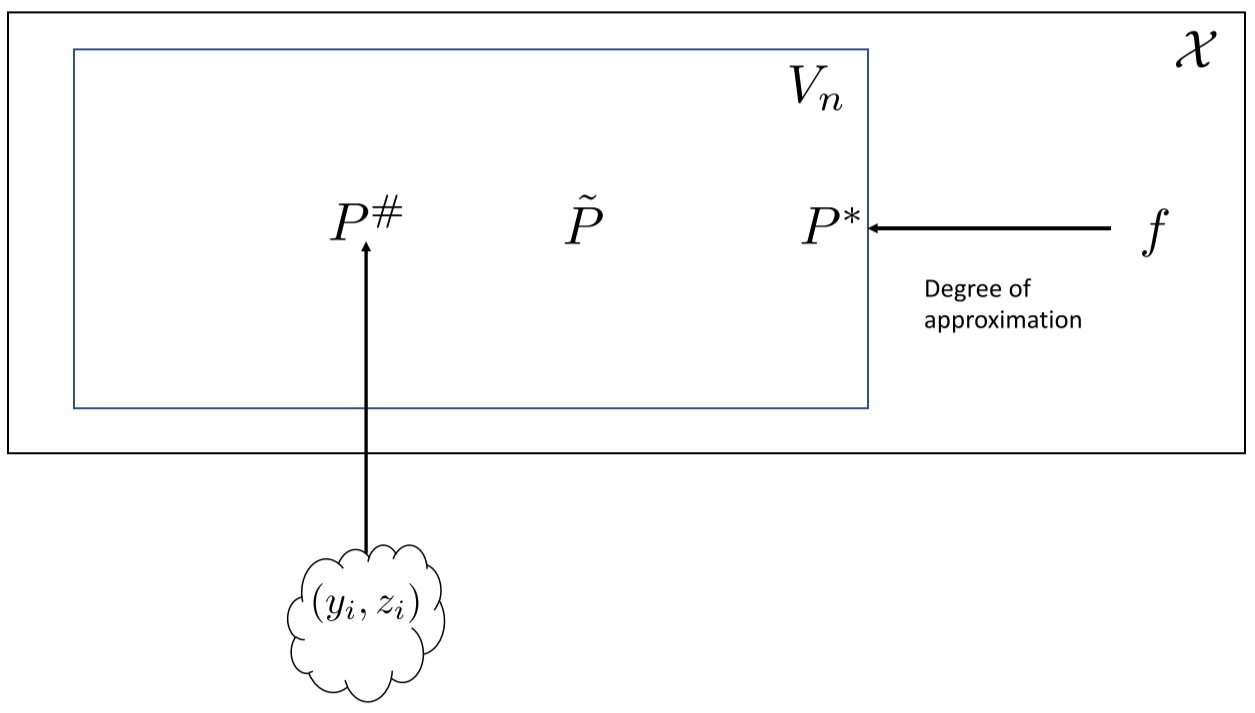}\\
    \caption{A depiction of the standard supervised learning paradigm. The universe of discourse $\mathcal{X}$ is assumed to contain a target function $f$ and hypothesis spaces $V_n$ are judiciously chosen based on the algorithm of choice. $P^\#$ denotes the empirical risk minimizer, $\tilde{P}$ denotes the minimizer of the generalization error, and $P^*$ denotes the best approximation.}
    \label{fig:mlparadigm}
\end{figure}

%\textcolor{red}{Perhaps include a section for paradigm of unsupervised learning including clustering by minimization of some entropy functional. This may be a good way to introduce the idea of linear vs non-linear clustering}.

\section{Shortcomings of the Supervised Learning Paradigm}
\label{sec:obstacles}

In this section we note some of the shortcomings of the current paradigm for supervised learning from a theoretical point of view, in spite of its tremendous success in practice.

\begin{enumerate}
\item Any optimization problem, gradient descent in particular, suffers from such shortcomings as getting stuck in a local minimum, slow convergence or even a lack of convergence, instability, and sensitivity to the initial choice of parameters. For example, in \cite{lu2019dying} a neural network tasked with approximating $|t|=t_++(-t)_+$ yields a constant output when given a poor choice of initialization. The phenomenon has become known as \textit{dead on arrival}. 
Another issue is how to decide when to stop iterating. One may halt the iterations once $\abs{\bm{\theta}_j-\bm{\theta}_{j-1}}$ are sufficiently small, but this runs into the issue of \textit{false stabilization}, or reaching a point where the iterations seem to converge to a point but in actuality will continue changing given enough iterations.

\item The use of degree of approximation for deciding upon the choice of the model space might be misleading. We elaborate upon this point in further detail in this section. 

\item The use of a global loss functional such as the one defined in \eqref{eq:generalizationerror} is insensitive to local artifacts in the target function. Finding minimizers $\tilde{P}$ (of the loss functional) or even $P^*$ (best approximation) may not preserve the important local effects to the problem at hand. We will elaborate on this aspect more in Section~\ref{sec:constapprox}.

\item In many cases, one can give a direct solution to the fundamental problem of machine learning, so that the approximation at each point is affected only by the smoothness of the target function in the vicinity of that point. Furthermore, this can be done while preserving the order of magnitude of the degree of approximation according to the local smoothness of the function on that neighborhood. We will elaborate on this aspect in Section~\ref{sec:constapprox}.
\end{enumerate}

It has become common to motivate the usage of neural networks for machine learning applications by citing degree of approximation results as in Section~\ref{subsec:degapprox} or similar. But it is important to recognize that these types of results are often existence-based and ignore the fact that in practice the approximation needs to be constructed from the data. This distinction should not be understated.
To give an example, we look at a ReLU activation function (defined by $t\to |t|=(t)_++(-t)_+$) network to approximate functions on the $d$-dimensional sphere $\mathbb{S}^d$. The functions in question are assumed to have a representation of the form
\be\label{eq:relunative}
f(\vec{x})=\int_{\mathbb{S}^d} |\vec{x}\cdot \vec{y}|\mathcal{
D}_{|\cdot|}(f)(\vec{y})d\mu_d^*(\vec{y}),
\ee
where $\mu_d^*$ is the volume measure of $\SS^d$ normalized to be a probability measure and $D_{|\cdot|}(f)\in C(\SS^d)$.
We define, in this discussion, $\|f\|_{W_{|\cdot|}}=\|f\|_\infty+\|\mathcal{D}_{|\cdot|}(f)\|_{\infty}$, and $W_{|\cdot|}$ to be the class of functions with $\|f\|_{W_{|\cdot|}}<\infty$.
The results in \cite[Corollary 4.1 and Remark 4.1]{mhaskar2020dimension} show that for $f\in W_{|\cdot|}$, there exist $a_k$, $\vec{y}_k$ depending upon $f$ in an unspecified manner such that
\be\label{eq:dimindbd}
\norm{f-\sum_{k=1}^N a_k\abs{\circ\cdot \vec{y}_k}}_{\mathbb{S}^d}\lesssim (N^{-(d+3)/(2d)})\|f\|_{W_{|\cdot|}}.
\ee
In contrast, it is proved in \cite[Corollary 4.1 and Remark 4.5]{sphrelu} that for any choice of $\vec{y}_k$ which admit a quadrature formula exact for integrating spherical polynomials of a certain degree, and scattered data of the form $(\bm{\xi}_j, f(\bm{\xi}_j))$, $\bm{\xi}_j\in\SS^d$, there exist linear functionals $a_k$ (given explicitly in \cite{sphrelu}) depending only on this data such that
\be\label{eq:dimdepbd}
\norm{f-\sum_{k=1}^N a_k\abs{\circ\cdot \vec{y}_k}}_{\mathbb{S}^d}\lesssim (N^{-2/d})\|f\|_{W_{|\cdot|}}.
\ee
 We note that  unlike the bounds for Barron spaces and Sobolev spaces discussed before, both the bounds above are for the same space of functions. 
 The dimension-independent bounds are obtained purely as an existence theorem, while the dimension-dependent bounds are based on explicit constructions, essentially solving the problem of supervised learning directly. 
In particular, utilizing degree of approximation results to motivate the use of a particular model in practice can be misleading.

%As another note, much effort must go into training neural networks in practice as well. This is because the training procedure, and even the initialization of this training procedure are crucial to the success of neural networks as a model. 
%For example, in \cite{lu2019dying} a neural network tasked with approximating $|t|=t_++(-t)_+$ yields a constant output when given a poor choice of initialization. The phenomenon has become known as \textit{dead on arrival}.

\section{Constructive Approximation}\label{sec:constapprox}
The purpose of this section is to introduce a direct and simple way of solving the problem of machine learning in some special cases. In particular, since trigonometric approximation serves as a role model for almost all approximation processes, we elaborate our ideas in this context.

We introduce the approximation of multivariate $2\pi$-periodic functions by trigonometric polynomials. We define $\mathbb{T}=\mathbb{R}/(2\pi\mathbb{Z})$ and denote in this section only $\mu^*_d$ to be the probability Lebesgue measure on $\mathbb{T}^d$.
In this section, we choose the universe of discourse to be the space $C(\TT^d)$ comprising continuous real functions on $\TT^d$, equipped with the uniform norm.
We define the hypothesis spaces $V_n$ by
$$
V_n=\mathsf{span}\{\vec{x}\mapsto \exp(i\vec{k}\cdot \vec{x}) : |\vec{k}| <n\}.
$$
%For $f\in C(\TT^d)$, we define the degree of approximation by
%$$
%E_n(f)=\min_{P\in V_n}\|f-P\|.
%$$
The Fourier coefficients of a function $f\in L^1(\mathbb{T}^d)$ are defined by
\be
\hat{f}(\vec{k})=\int_{\mathbb{T}^d} f(\vec{x})e^{-i\vec{k}\cdot \vec{x}}d\mu^*_d(\vec{x}).
\ee
The best approximation in the sense of the global $L^2$ norm is given by the Fourier projection, defined by
\be
s_n(f)(\vec{x})=\sum_{\abs{\vec{k}}< n} \hat{f}(\vec{k})e^{i\vec{k}\cdot \vec{x}}.
\ee
It is well known that $s_n(f)(0)$ might not converge to $f(0)$ for every $f\in C(\mathbb{T}^d)$. Furthermore, the best approximation in the sense of the uniform norm is difficult to compute in general, where the process may be aided by methods such as the Remez algorithm.
In order to achieve convergence in the uniform norm as well as the right order of magnitude for the degree of approximation on different arcs according to the smoothness of the target function on those arcs, we define a kernel by
\be\label{eq:kerntrig}
\Phi_n(\vec{x})=\sum_{\abs{\vec{k}}<n}h\left(\frac{\abs{\vec{k}}}{n}\right)e^{i\vec{k}\cdot \vec{x}},
\ee
where $h$ is a $C^\infty$ even function with $h(t)=1$ for $|t|\in [0,1/2]$ and $h(t)=0$ for $|t|\geq 1$. Then our reconstruction operator is given by
\be\label{eq:sigmatrig}
\sigma_n(f)(\vec{x})=\int_{\mathbb{T}^d}f(\vec{z})\Phi_n(\vec{x}-\vec{z})d\mu^*_d(\vec{z})=\sum_{\abs{\vec{k}}<n}h\left(\frac{\abs{\vec{k}}}{n}\right)\hat{f}(\vec{k})e^{i\vec{k}\cdot \vec{x}}.
\ee
One can show that for any $f\in C(\TT^d)$ and $n\ge 1$,
\be\label{eq:goodapprox}
E_n(f)\le \|f-\sigma_n(f)\|\lesssim E_{n/2}(f).
\ee
Due to this inequality, we say that $\sigma_n(f)$ is a \textit{good approximation}. 
Given that the reconstruction operator $\sigma_n$ has the form in \eqref{eq:sigmatrig}, it is in general much easier to compute than the best approximation.
Moreover, we can write down the solution of the supervsied learning problem in this context directly, without any optimization.
We reproduce the following theorem from \cite{mhaskar-survey}.

 In order to present the theorem, we need to define a discretization of \eqref{eq:sigmatrig}, given data $\{(\vec{x}_j,y_j)\}_{j=1}^M$:
\be\label{eq:sigmadisctrig}
\tilde{\sigma}_n(x)=\frac{1}{M}\sum_{j=1}^M y_j\Phi_n(x-x_j).
\ee

\begin{theorem}[{\cite[Theorem~3.5]{mhaskar-survey}}]
Let the marginal distribution of the points $\{x_j\}$ be $\mu_d^*$. Let $\gamma>0$ and $f$ belong to a smoothness class with parameter $\gamma$; i.e.,
$$
\|f\|_{W_\gamma}=\|f\| +\sup_{n\ge 0}2^{n\gamma}E_{2^n}(f) <\infty.
$$
 If $n\gtrsim 1$ and $M\gtrsim n^{d+2\gamma}\log(n)$, then with probability going to $1$ as $M\to \infty$, we have
\be\label{eq:machinesolve}
\norm{\tilde{\sigma}_n-f}_{\mathbb{T}^d,\infty}\lesssim n^{-\gamma}.
\ee
\end{theorem}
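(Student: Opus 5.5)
We split the error by the triangle inequality into a bias term and a variance term:
\[
\|\tilde{\sigma}_n-f\|_{\mathbb{T}^d,\infty}\le \|\sigma_n(f)-f\|_{\mathbb{T}^d,\infty}+\|\tilde{\sigma}_n-\sigma_n(f)\|_{\mathbb{T}^d,\infty}.
\]
The bias term is handled by the good approximation property \eqref{eq:goodapprox}. If $2^m\le n/2<2^{m+1}$, then $E_{n/2}(f)\le E_{2^m}(f)\le 2^{-m\gamma}\|f\|_{W_\gamma}\lesssim n^{-\gamma}\|f\|_{W_\gamma}$, so $\|\sigma_n(f)-f\|\lesssim n^{-\gamma}$. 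For the variance term we read the model as $y_j=f(x_j)+\epsilon_j$ with bounded, mean-zero noise; equivalently, $\mathbb{E}(y_j\mid x_j)=f(x_j)$ with $|y_j|\lesssim 1$. Then for each fixed $\vec{x}$ the random variables $Z_j(\vec{x})=y_j\Phi_n(\vec{x}-x_j)$ are i.i.d. Because the marginal of $x_j$ is $\mu_d^*$, their mean is $\int f(\vec{z})\Phi_n(\vec{x}-\vec{z})\,d\mu_d^*(\vec{z})=\sigma_n(f)(\vec{x})$.

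Next we need the size of the kernel. Since $h$ is smooth, $\Phi_n$ is a localized kernel, and we only need two crude estimates: $\|\Phi_n\|_\infty\lesssim n^d$ (there are $\sim n^d$ terms, each bounded by $1$), and $\int|\Phi_n|^2\,d\mu_d^*=\sum h(|\vec{k}|/n)^2\lesssim n^d$ by Parseval. Hence $|Z_j|\lesssim n^d$ and $\mathrm{Var}(Z_j)\lesssim \mathbb{E}|\Phi_n(\vec{x}-x_j)|^2\lesssim n^d$. Bernstein's inequality then gives, for fixed $\vec{x}$ and any $\eta>0$,
\[
\mathrm{Prob}\left(|\tilde{\sigma}_n(\vec{x})-\sigma_n(f)(\vec{x})|>\eta\right)\le 2\exp\left(-c\frac{M\eta^2}{n^d+n^d\eta}\right).
\]
We take $\eta=n^{-\gamma}$, so the exponent is $\sim M n^{-d-2\gamma}$.

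To pass from a fixed point to the uniform norm, we note that $\tilde{\sigma}_n-\sigma_n(f)$ is a trigonometric polynomial of degree $<n$. By a Marcinkiewicz–Zygmund or Bernstein-type argument, its sup norm is at most twice its maximum over a grid $\mathcal{C}$ of $\sim n^{d}$ points (spacing $\sim 1/n$ with a small constant). Concretely, the Bernstein inequality for derivatives gives $|P(\vec{x})-P(\vec{x}')|\le c\,n|\vec{x}-\vec{x}'|\,\|P\|_\infty$, so a grid of spacing $1/(2cn)$ suffices. A union bound over $\mathcal{C}$ then yields a failure probability of at most $c\,n^d\exp(-cMn^{-d-2\gamma})$. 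When $M\ge C n^{d+2\gamma}\log n$ with $C$ large, this is $\lesssim n^{-A}$, which tends to $0$. Here $n$ is tied to $M$ through the hypothesis, for instance by choosing $n$ as the largest value with $n^{d+2\gamma}\log n\lesssim M$, so that $n\to\infty$ as $M\to\infty$. Combining the two terms gives \eqref{eq:machinesolve}.

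The main obstacle is getting the concentration step right: the variance must be controlled by $n^d$ and not the naive $n^{2d}$, which is exactly what makes $M\sim n^{d+2\gamma}\log n$ suffice. The discretization of the sup norm also needs care so that the union bound costs only a $\log n$ factor. If the noise is unbounded (say sub-Gaussian), we would replace Bernstein's inequality by its sub-exponential version; the same rates result.
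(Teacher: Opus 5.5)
Your argument is correct and follows essentially the same route as the paper. The paper only cites this torus result, but it proves the manifold analogue (Theorem~\ref{theo:manifoldapproxmainthm}) with exactly your ingredients: a bias term $\|\sigma_n(f)-f\|$ controlled by the good-approximation property, a variance term handled by Bernstein's concentration inequality with second moment $\lesssim n^{q}$ (Lemma~\ref{lemma:variance_est}), a Bernstein-inequality net of $\sim n^{Q}$ points (Lemma~\ref{lemma:supnorm_concentration}), and a union bound. The differences are cosmetic. You bound the second moment via Parseval, whereas the paper uses $\Phi^2\le\|\Phi\|_\infty|\Phi|$ together with the bounded Lebesgue constant. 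Also, your raw failure bound $c\,n^{d}\exp(-cMn^{-d-2\gamma})$ already tends to $0$ as $M\to\infty$ even for fixed $n$, so tying $n$ to $M$ is unnecessary.
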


While this construction may seem disparate from neural networks, there is actually a trick to represent the approximation in the form \eqref{eq:sigmadisctrig} as a neural network structure. From \cite{mhaskarmicchelli}, we have
\be
e^{i\vec{k}\cdot \vec{x}}=\frac{1}{2\pi \hat{\phi}(1)}\int_{\mathbb{T}} \phi(\vec{k}\cdot \vec{x}-t)e^{it}dt.
\ee
Then, this integral can be discretized and substituted into \eqref{eq:kerntrig} so that \eqref{eq:sigmadisctrig} forms a neural network with activation function $\phi$.

Next, we discuss the notion of local approximation using our reconstruction operator in the case when $d=1$. We look at an example of approximating the function $f(\theta)=\abs{\cos\theta}^{1/4}$. We note that the function has singularities at $-\pi/2,\pi/2$. In Figure~\ref{fig:trigcomparison}, we see the recovery error of $f$ by $\sigma_{64}$, $\sigma_{128}$, and $\sigma_{256}$ compared with the recovery errors by $s_{64}$, $s_{128}$, and $s_{256}$ respectively. We see that the good approximation actually reduces error much faster away from the singularities than the best approximation.

\begin{figure}
\begin{center}
\includegraphics[width=.9\textwidth]{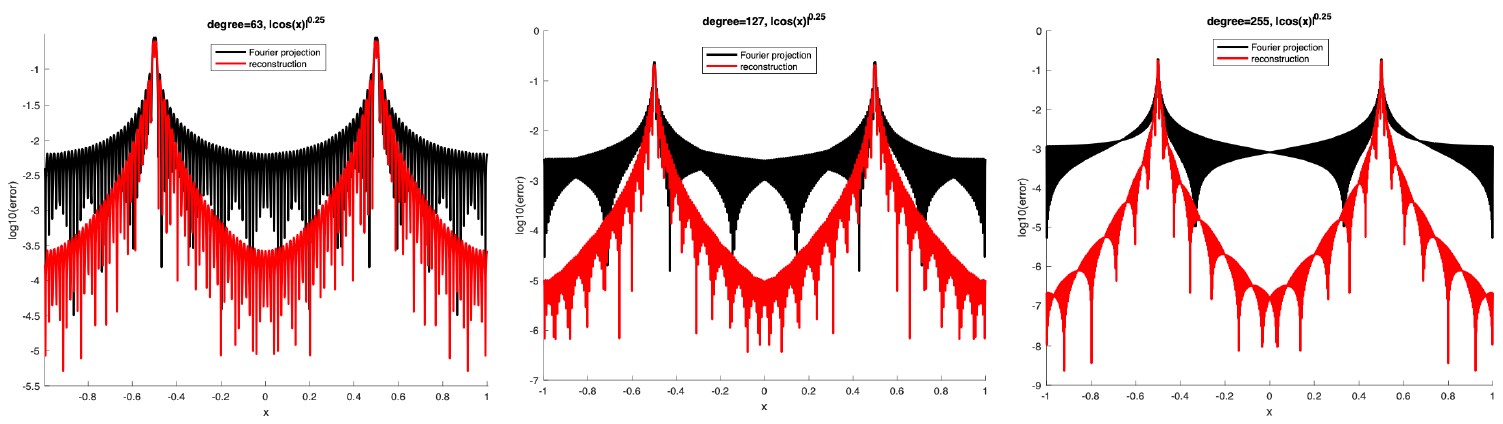}
\caption{Comparison of recoveries for $f(\theta)=\abs{\cos\theta}^{1/4}$ by the best approximation (black) and a good approximation (red) for degrees $n=63,127,255$. Figure credit: Hrushikesh Mhaskar.}
\label{fig:trigcomparison}
\end{center}
\end{figure}

\section{Manifold Learning}\label{sec:manifoldlearning}

The following section contains excerpts from our papers \cite{mhaskarodowd,mhaskarodowdtransfer}.

The purpose of this section is to introduce a relatively new paradigm of manifold learning. 
First, we examine a phenomenon known as the \textit{curse of dimensionality}. It is a critical issue in machine learning which informs our later choice to work with functions defined on manifolds.
We can formalize this notion through the lense of \textit{nonlinear widths}, shown graphically in Figure~\ref{fig:nonlinwidth}. We start with a subset of a Banach space of functions $\mathbb{K}\subseteq \mathbb{X}$ defined on a domain $X\subseteq \mathbb{R}^Q$, a (continuous) parameterization scheme $\mathcal{P}: \mathbb{K}\to \mathbb{R}^M$, and a (continuous) approximation scheme $\mathcal{A}:\mathbb{R}^M\to \mathbb{X}$. The nonlinear $L^p$ width is defined by
\be
d_M(\mathbb{K},\mathbb{X})_p\coloneqq\inf_{\mathcal{P},\mathcal{A}} \sup_{f\in \mathbb{K}}\norm{f-\mathcal{A}(\mathcal{R}(f))}_{X,p}.
\ee
This value represents the worst possible error resulting from the approximation of any $f\in \mathbb{K}$ using the best possible (continuous) parameterization and approximation process.

\begin{figure}[!ht]
\begin{center}
\includegraphics[width=.5\textwidth]{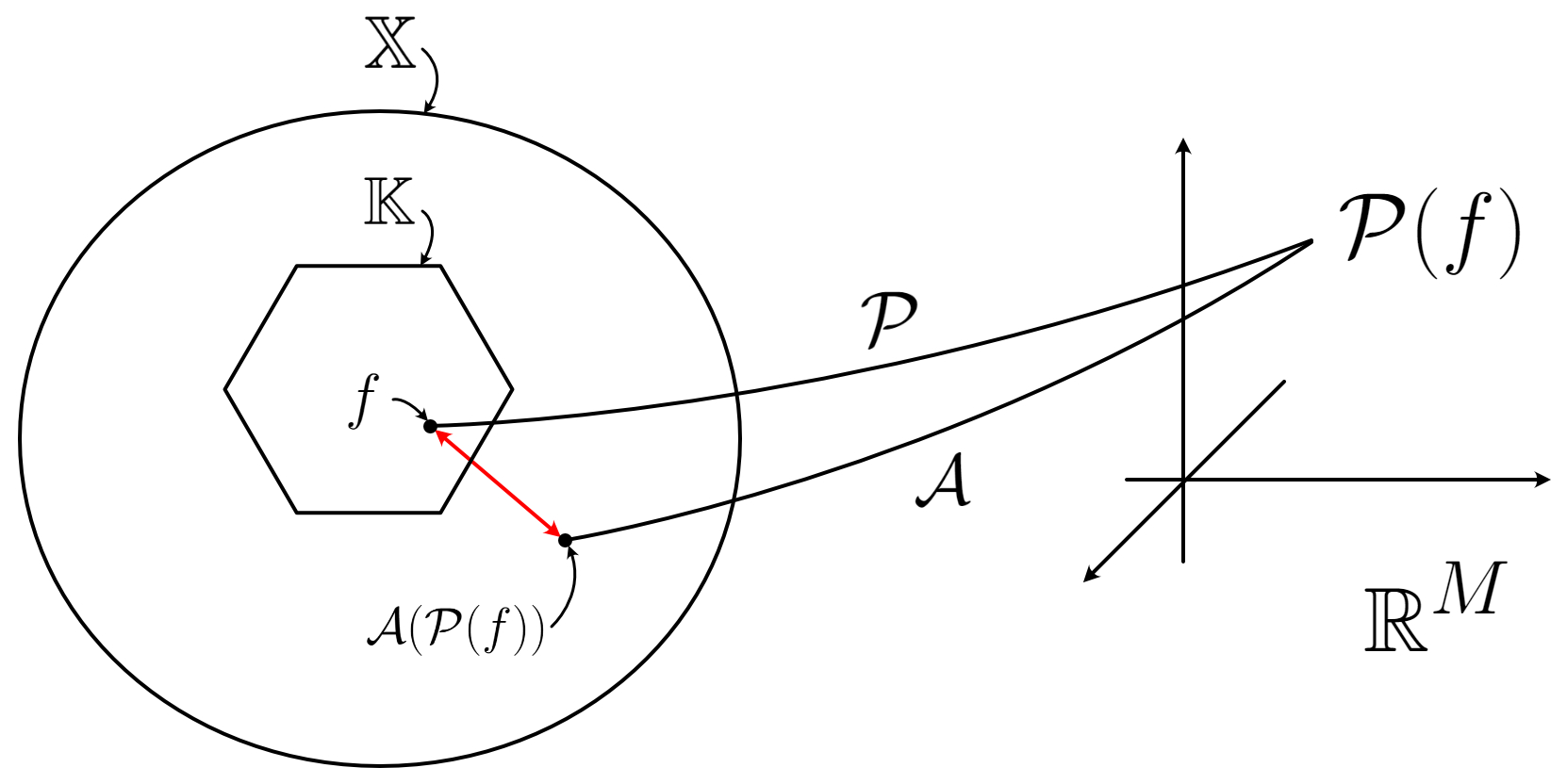}
\caption{Depiction of nonlinear width. Here $\mathbb{X}$ is a metric space of functions with $\mathbb{K}$ some subset. The goal is to understand the error associated with approximating any function $f\in \mathbb{K}$, where $\mathcal{P}$ is a continuous parameterization of $\mathbb{K}$ and $\mathcal{A}$ is an approximation scheme mapping into $\mathbb{X}$.}
\label{fig:nonlinwidth}
\end{center}
\end{figure}

In particular, we have the following theorem from \cite{devorenonlinear} demonstrating this curse for the Sobolev spaces as defined in Section~\ref{subsec:degapprox}. 

\begin{theorem}[{\cite[Theorem 4.2]{devorenonlinear}}]
Let $K^Q_{r,p}=\{f\in L^P([0,1]^Q): \abs{f}_{W^Q_{r,p}}\leq 1\}$, and $1\leq p\leq q$. Then,
\be
d_M(K^Q_{r,p},W^Q_{r,p})_{q}\gtrsim_r M^{-r/Q}.
\ee
\end{theorem}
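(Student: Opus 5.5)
The plan is the standard Borsuk–Ulam argument of DeVore, Howard and Micchelli. First I reduce to the case $q=p$, and in fact to the weakest norm needed. On $[0,1]^Q$, $\norm{g}_{p}\le\norm{g}_{q}$ for $p\le q$, so a lower bound for the $L^p$ width gives one for the $L^q$ width. The approximation scheme $\mathcal{A}$ maps into $W^Q_{r,p}$, but the error is measured in $L^q$, and only that norm enters the argument.

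Then I build a finite-dimensional "bump" sphere inside $K^Q_{r,p}$.
\begin{itemize}
\item Fix a nonzero $C^\infty$ function $\psi$ supported in $[0,1]^Q$.
\item Choose $m$ with $m^Q\ge M+1$ and $m^Q\sim M$, and partition $[0,1]^Q$ into $m^Q$ congruent cubes $I_j$ of side $1/m$.
\item Let $\psi_j(x)=\psi(m(x-a_j))$, where $a_j$ is the corner of $I_j$.
\item Consider $g_{\mathbf{b}}=c\,m^{-r}\sum_{j=1}^{M+1} b_j\psi_j$ for $\mathbf{b}\in\RR^{M+1}$ with $\norm{\mathbf{b}}_{p}=1$.
\end{itemize}
Since the supports are disjoint, scaling gives
\[
\abs{g_{\mathbf{b}}}_{W^Q_{r,p}}^p = c^p m^{-rp}\,m^{rp}\,m^{-Q}\,\abs{\psi}_{W^Q_{r,p}}^p\sum_j\abs{b_j}^p = c^p m^{-Q}\abs{\psi}_{W^Q_{r,p}}^p .
\]
To keep the $\norm{\mathbf{b}}_p=1$ normalization I rescale the amplitude to $c\,m^{-r+Q/p}$. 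The $\ell_p$-sphere then maps into $K^Q_{r,p}$ for a suitable $c=c(r,\psi)$. Likewise
\[
\norm{g_{\mathbf{b}}}_{p}\sim m^{-r+Q/p}\,m^{-Q/p}\,\norm{\mathbf{b}}_{p}=m^{-r}.
\]
So the sphere $S=\{g_{\mathbf{b}}\}$ is an odd, continuous image of the $M$-sphere, and every element has $L^p$ norm $\sim m^{-r}\sim M^{-r/Q}$.

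Next comes the key topological step. Let $\mathcal{P}$ and $\mathcal{A}$ be any continuous pair. The map $\mathbf{b}\mapsto\mathcal{P}(g_{\mathbf{b}})\in\RR^M$ is continuous on the $M$-sphere. By Borsuk–Ulam there is $\mathbf{b}$ with $\mathcal{P}(g_{\mathbf{b}})=\mathcal{P}(g_{-\mathbf{b}})$, hence $\mathcal{A}(\mathcal{P}(g_{\mathbf{b}}))=\mathcal{A}(\mathcal{P}(g_{-\mathbf{b}}))=:h$. Then
\[
2\norm{g_{\mathbf{b}}}_{q}=\norm{g_{\mathbf{b}}-g_{-\mathbf{b}}}_{q}\le\norm{g_{\mathbf{b}}-h}_{q}+\norm{g_{-\mathbf{b}}-h}_{q}.
\]
So one of the two functions has error at least $\norm{g_{\mathbf{b}}}_q\ge\norm{g_{\mathbf{b}}}_p\gtrsim M^{-r/Q}$. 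Taking the supremum over $\mathbb{K}$ and then the infimum over $(\mathcal{P},\mathcal{A})$ gives the claim.

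The main obstacle is the step where norms are compared across the whole sphere. I need $\norm{g_{\mathbf{b}}}_p\gtrsim m^{-r}$ uniformly in $\mathbf{b}$, which holds because the bumps have disjoint supports, so the $L^p$ norm splits exactly as a weighted $\ell_p$ norm. I also need the Sobolev seminorm bound uniformly. Here the disjointness of supports of all derivatives is essential, which is why I take $\psi$ compactly supported in the open cube. Matching the two scalings (amplitude $m^{-r+Q/p}$ for the seminorm, and hence $m^{-r}$ for the norm) is the computation I will check carefully. The constants depend only on $r$, $Q$ and $\psi$, consistent with the $\gtrsim_r$ in the statement.
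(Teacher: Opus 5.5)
The paper gives no proof of this statement (it is quoted from DeVore--Howard--Micchelli), and your argument is the standard Borsuk--Ulam lower bound from that source. It is correct: the amplitude $c\,m^{-r+Q/p}$ with $c=1/\abs{\psi}_{W^Q_{r,p}}$ gives $\abs{g_{\mathbf{b}}}_{W^Q_{r,p}}=1$ and $\norm{g_{\mathbf{b}}}_q\ge\norm{g_{\mathbf{b}}}_p=c\,m^{-r}\norm{\psi}_p$ uniformly on the $\ell_p$-sphere. The only loose point is your opening ``reduction to $q=p$'', which is unnecessary and, read literally, depends on which topology $\mathcal{P}$ must be continuous in. It does no harm, though, because your antipodal step already works directly in $L^q$: the map $\mathbf{b}\mapsto g_{\mathbf{b}}$ is linear on a finite-dimensional space and hence continuous into every norm in play.
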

This theorem informs us that if we want an approximation process that estimates every function in $K_{r,p}$ by some corresponding function in $W_{r,p}$ with an $L^p$ error less than $\epsilon$, we need $M \gtrsim_r \epsilon^{-Q/r}$. The curse of dimensionality comes into play when $Q$ is large and $\epsilon$ is small, making this lower bound on $M$ so large as to be impractical. The curse of dimensionality is a symptom of the choice of the hypothesis spaces the approximation process belongs to.

A relatively recent idea to alleviate the curse of dimensionality is to assume the values $x_j$ are sampled from an unknown, low-dimensional submanifold of the high-dimensional ambient space and build the approximation scheme based on functions from this lower-dimensional structure.
This has become known as the \textit{manifold assumption}.
In theory, this assumption implies that the overall structure of the data can be preserved on a much lower dimensional space.

Many methods for such dimensionality reduction have been studied, including Isomaps \cite{tenenbaum2000global}, locally linear embedding
 (LLE) \cite{roweis2000nonlinear}, Hessian locally
 linear embedding (HLLE) \cite{david2003hessian},  diffusion maps (Dmaps) \cite{coifmanlafondiffusion}, maximum variance unfolding (MVU) which is also known as
 semidefinite programming (SDP) \cite{weinberger2005nonlinear}, local tangent space alignment (LTSA) \cite{zhang2004principal}, Laplacian eigenmaps (Leigs) \cite{belkinlaplacian}, and randomized anisotropic transform \cite{chuiwang2010}. 
Chui and Wang have given a survey on these methods in \cite{chuidimred2015}. The special issue \cite{achaspissue} of Applied and Computational Harmonic Analysis (2006) provides a great introduction on diffusion geometry. 
Applications of these methods include, but are by no means limited to, semi-supervised learning \cite{niyogi1, niyogi2}, document analysis \cite{coifmanmauro2006}, 
face recognition \cite{chuiwang2010, niyogiface, ageface2011}, hyperspectral imaging \cite{chuihyper}, image processing \cite{arjuna1, donoho2005image},  cataloguing of galaxies \cite{donoho2002multiscale}, and social networking \cite{bertozzicommunity}.

The term \emph{manifold learning} has generally become associated with a two-step procedure, where there is first finding information about the manifold itself, and second utilizing this information to do function approximation. With the two-step procedure, the estimates obtained in function approximation need to be tempered by the errors accrued in the manifold learning step.
In turn, the errors in the manifold learning step may be sensitive to the choice of different parameters used in the process as well as noise in the data.

One approach is to estimate an atlas of the manifold, which thereby allows function approximation to be conducted via local coordinate charts. 
One such effort is to utilize the underlying parametric structure of the functions to determine the dimension of the manifold and the parameters involved \cite{manoni2020effective}.
Approximations utilizing estimated coordinate charts have been implemented, for example, via deep learning  \cite{cloninger-net,coifman_deep_learn_2015bigeometric,schmidt2019deep}, moving least-squares  \cite{soberleastsquares}, local linear regression  \cite{Wuregression},  or using Euclidean distances among the data points \cite{chui_deep}.  It is also shown in \cite{jones2010universal, jones2008parameter} that an atlas on the unknown manifold can be defined in terms of the heat kernel corresponding to the Laplace-Beltrami operator on the manifold.

This leads to another approach, which is to look at an eigendecomposition of the Laplace-Beltrami operator.
It has been shown that the so-called graph Laplacian (and the corresponding eigendecomposition) constructed from data points converges to the manifold Laplacian and its eigendecomposition  \cite{belkinlaplacian,belkinfound,singerlaplacian}.
An introduction to the subject is given in \cite{achaspissue}. 
In \cite{coifmanmauro2006, heatkernframe}, a multi-resolution analysis is constructed using the heat kernel. 
Another important tool is the theory of localized kernels based on the eigen-decomposition of the heat kernel. 
These were introduced in \cite{mhaskarmaggioniframe} based on certain assumptions on the spectral function and the property of finite speed of wave propagation. 
In the context of manifolds, this later property was proved in \cite{sikora2004riesz, frankbern} to be equivalent to the so called Gaussian upper bounds on the heat kernels. 
These bounds have been studied in many contexts by many authors, e.g., \cite{grigoryan1995upper, grigor1997gaussian, davies1990heat, kordyukov1991p}, and recently for a general smooth manifold in \cite{mhaskardata}.

Function approximation on manifolds based on \emph{scattered data} (i.e., data points $x_j$ whose locations are not prescribed analytically) has been studied in detail in many papers, starting with \cite{mhaskarmaggioniframe}, e.g., \cite{frankbern, modlpmz, eignet, compbio, heatkernframe, mhaskardata}. 
A theory was applied successfully in \cite{mhas_sergei_maryke_diabetes2017} to construct  deep networks for predicting blood sugar levels based on continuous glucose monitoring devices.
In \cite[Theorem~4.3]{tauberian}, localized kernels were constructed based on the Gaussian upper bound on the heat kernel.	

\section{Shortcomings of Classical Approximation Theory}\label{sec:approxshort}
As we discussed, the fundamental problem of supervised learning is that of function approximation, yet approximation theory has played only a marginal role.  We point out some  reasons.
\begin{enumerate}
\item We do not typically know whether the assumptions on the target function involved   in the approximation theory bounds are satisfied in practice, or whether the number of parameters is the right criterion to look at in the first place.
For example, when one considers approximation by radial basis function (RBF) networks, it is  observed in many papers (e.g., \cite{eignet}) that the minimal separation among the centers is the right criterion rather than the number of parameters. 
It is shown that if one measures the degree of approximation in terms of the minimal separation, then one can determine the smoothness of the underlying target function by examining the rate at which the degrees of approximation converge to $0$.
\item Most of the approximation theory literature focuses on the question of estimating the degree of approximation in various norms and conditions on $f$, where the support of the marginal distribution $\nu$ is assumed to be a known domain, such as a torus, a cube, the whole Euclidean space, a hypersphere, etc.; equivalently, one assumes that the data points $y_j$ are ``dense'' on such a domain.
This creates a gap between theory, where the domain of $\nu$ is known, and practice, where it is not.
\item One consequence of approximating, say on a cube, is the curse of dimensionality as mentioned. Manifold learning seeks to alleviate this problem, but the current methodology in that area is a two-fold procedure, introducing extra errors in the problem of function approximation itself, apart from the choice of hyper-parameters, need to find the eigen-decomposition of a large matrix, out-of-sample extension, etc. 
\end{enumerate}

\section{Organization of the Thesis}\label{sec:thesisintro}
This dissertation introduces three new approaches for problems of machine learning. A unifying theme across the methods is that they are developed from a harmonic analysis and approximation theory viewpoint.

The first project, discussed in Chapter~\ref{ch:manifoldapprox}, involves the approximation of noisy functions defined on an unknown submanifold of a hypersphere directly, without training and without estimating information about the submanifold. We note that our theory may also work with data from a Euclidean space via a projection to the hypersphere of the same dimension. This project aims to remedy ailments in the current supervised learning paradigm by introducing a method working in a new paradigm, where the results attained from real data are tied directly to those in the theory.

The second project, discussed in Chapter~\ref{ch:transferlearning}, introduces a new approach for localized transfer learning. The idea of transfer learning is to take information learned in one setting to aid with learning in another. Our theory works with data spaces which are a generalization of manifolds.

The third project, discussed in Chapter~\ref{ch:classification}, tackles problem of classification in machine learning. The problem of classification in the supervised setting has often been approached by function approximation. This yields several theoretical challenges which were not present when dealing with regression. Much work has been done to overcome these challenges. There is however an alternate perspective, which draws inspiration from signal separation problems. We introduce the problem of signal separation and show how the problem of machine learning classification can be viewed as a generalization of this problem. Our results are built from this alternative perspective which solves the classification problem completely when the classes are well-separated. We also introduce theory and an algorithm to tackle the case where the data classes may not be well-separated.

We note here that the chapters discussing these projects will be excerpts from our papers, and therefore have self-contained notation that may not apply across chapters.

\chapter{Approximation on Manifolds}
\label{ch:manifoldapprox}

%\title{Learning on manifolds without manifold learning}
%\author{H. N. Mhaskar\thanks{
%Institute of Mathematical Sciences, Claremont Graduate University, Claremont, CA 91711. 
%\textsf{email:} hrushikesh.mhaskar@cgu.edu.
%The research is  supported in part by NSF grant DMS 2012355, and ONR grants N00014-23-1-2394, N00014-23-1-2790.} \and Ryan O'Dowd\thanks{Institute of Mathematical Sciences, Claremont Graduate University, Claremont, CA 91711. 
%\textsf{email:} ryan.o'dowd@cgu.edu.}}
%\date{\today}

The content in this chapter is sourced from our paper published in \textit{Neural Networks} titled ``Learning on manifolds without manifold learning" \cite{mhaskarodowd}.

\section{Introduction}
\label{sec:intromanifold}

Let $\mathcal{D}=\{(y_j, z_j)\}_{j=1}^M$ be our data, drawn randomly from an unknown probability distribution $\tau$,  and set  $f(y)=\mathbb{E}_\tau(z|y)$. We recall the fundamental problem of supervised learning is to approximate $f$ given the data $\mathcal{D}$.
The purpose of this chapter is to introduce a direct method of approximation on \emph{unknown}, compact, $q$-dimensional submanifolds of $\mathbb{R}^Q$ without trying to find out anything about the manifold other than its dimension. The motivation of this approach is outlined in Section~\ref{sec:manifoldlearning}.

As discussed in Chapter~\ref{ch:intro}, there are many shortcomings to the current paradigm for supervised learning. The results in this chapter belong to an alternate machine learning paradigm proposed by H.N. Mhaskar, shown in Figure~\ref{fig:newparadigm}. In this paradigm, one is still concerned with assuming that $f$ belongs to some universe of discourse and approximating it by some function in a hypothesis space $V_n$. The major difference lies in the approximation. Instead of using optimization to minimize empirical risk, the idea is to directly \textit{construct} a \textit{good} approximation $\sigma_n\in V_n$ in the approximation theory sense.

\begin{figure}[!ht]
\centering
    \includegraphics[scale=0.20]{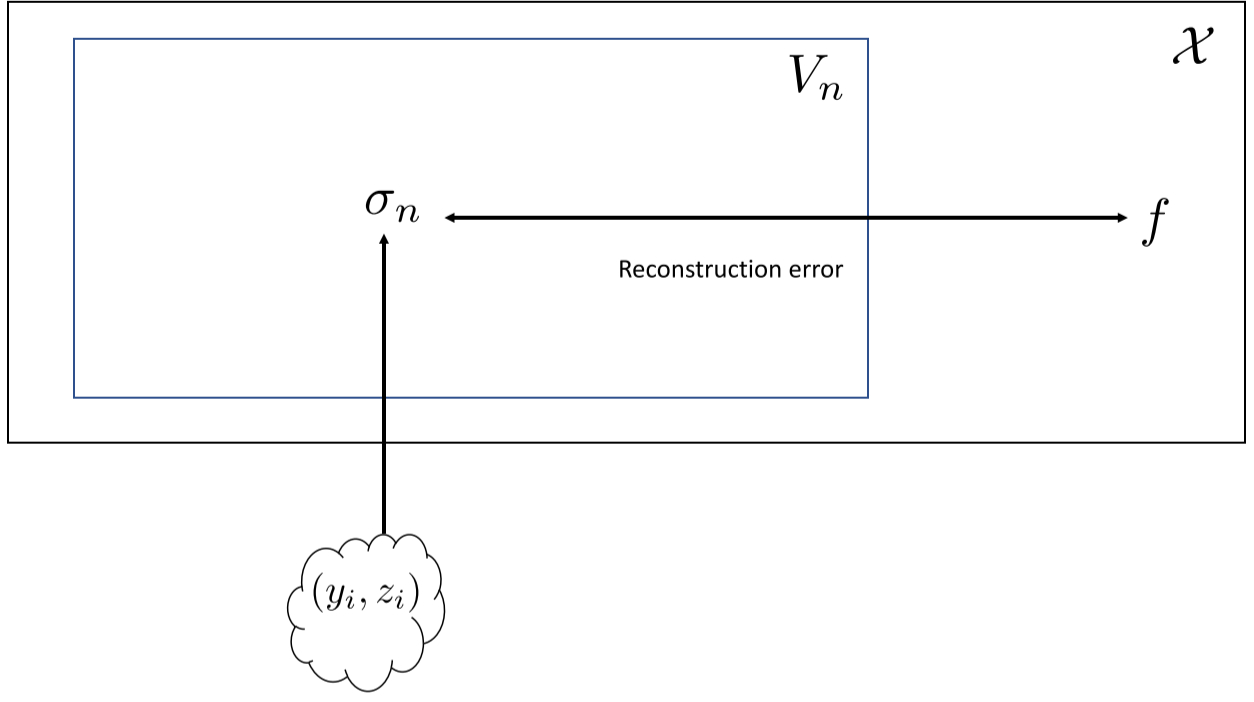}\\
    \caption{A depiction of a new machine learning paradigm, where one constructs an approximation $\sigma_n$ in the space $V_n$ directly from the data. This is done in such a way that one can also measure a direct reconstruction error from the approximation to the target function.}
    \label{fig:newparadigm}
\end{figure}

More specifically, in \cite{mhaskar2020deep} it is shown how a specific construction of kernels results in a direct function approximation method when the data space is an unknown, smooth, compact, connected manifold. The approximation procedure is additionally done directly from the data, without the two step procedure described in Section~\ref{sec:manifoldlearning}.
The key to that work was the construction of localized kernels on the Euclidean space in terms of Hermite functions. We define
\be
\mathcal{P}_{n,q}\coloneqq \begin{cases}
(-1)^n\frac{\sqrt{(2n)!}}{2^nn!\pi^{1/4}}h_{2n}(x)&\text{if }q=1\\
\frac{1}{\pi^{(2q-1)/4}}\Gamma((q-1)/2)\sum_{\ell=0}^n(-1)^\ell \frac{\Gamma((q-1)/2+n-\ell)\sqrt{(2\ell)!}}{(n-\ell)!2^\ell \ell!}h_{2\ell}(x)&\text{else,}
\end{cases}
\ee
where $h_n$ is the orthonormalized Hermite function as defined in \eqref{eq:hermitefun}.
Then the localized kernel is defined by the sum
\be
\tilde{\Phi}_{n,q}(x)=\sum_{k=0}^{\floor{n^2/2}}H\left(\frac{\sqrt{2k}}{n}\right)\mathcal{P}_{k,q}(x),
\ee
where $H:[0,\infty)\to [0,1]$ is a $C^\infty$ function such that $H(t)=1$ for $t\in [0,1/2]$ and $H(t)=0$ for $t\geq 1$. Letting $\mathcal{D}=\{(\vec{x}_j,y_j)\}_{j=1}^M$, the approximation process for $\vec{x}\in \mathbb{R}^q$ is then defined by
\be\label{eq:hermiteapprox}
\tilde{F}_{n,\alpha}(\mathcal{D};\vec{x})\coloneqq \frac{n^{q(1-\alpha)}}{M} \sum_{j=1}^M y_j\tilde{\Phi}_{n,q}(n^{1-\alpha}\norm{\vec{x}-\vec{x}_j}_2).
\ee
The main theorem of that work is the following.
\begin{theorem}[{\cite[Theorem~3.1]{mhaskar2020deep}}]
Let $\gamma>0$, and $\mathcal{D}$ be a data set sampled from a distribution $\tau$ supported on $\mathbb{X}\times \Omega$ where the marginal distribution of $\tau$ restricted to the manifold $\mathbb{X}$ (where the $\vec{x}_j$'s are sampled) is absolutely continuous with respect to $\mu^*$ and has density $f_0\in W_\gamma(\mathbb{X})$, where $W_\gamma$ is a smoothness class analogous to \ref{def:manifold_smoothness}. Let $f\in W_\gamma(\mathbb{X})$, $0<\alpha<\frac{4}{2+\gamma}$, and $\alpha\leq 1$. Then for every $n\geq 1$ and $0<\delta<1$, if $M\geq n^{q(2-\alpha)+2\alpha\gamma}\log(n/\delta)$, with probability at least $1-\delta$ we have
\be
\norm{\tilde{F}_{n,\alpha}(\mathcal{D};\circ)-ff_0}_{\mathbb{X},\infty}\lesssim \frac{\sqrt{\norm{f_0}_{\mathbb{X},\infty}}\norm{Y}_{\mathbb{X}\times \Omega,\infty}+\norm{ff_0}_{W_\gamma(\mathbb{X})}}{n^{\alpha\gamma}}.
\ee
\end{theorem}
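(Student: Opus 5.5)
The proof splits the error into a bias term and a variance term. Set $\tilde{F}_{n,\alpha}(\vec{x})=\mathbb{E}_\tau[\tilde{F}_{n,\alpha}(\mathcal{D};\vec{x})]$. Since $f(\vec{x})=\mathbb{E}(y|\vec{x})$ and the marginal has density $f_0$ with respect to $\mu^*$,
\[
\tilde{F}_{n,\alpha}(\vec{x})=n^{q(1-\alpha)}\int_{\mathbb{X}} \tilde{\Phi}_{n,q}(n^{1-\alpha}\norm{\vec{x}-\vec{y}}_2)f(\vec{y})f_0(\vec{y})\,d\mu^*(\vec{y}).
\]
We then bound $\norm{\tilde{F}_{n,\alpha}-ff_0}_{\mathbb{X},\infty}$ (bias) and $\norm{\tilde{F}_{n,\alpha}(\mathcal{D};\circ)-\tilde{F}_{n,\alpha}}_{\mathbb{X},\infty}$ (variance) separately.

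\textbf{Bias.} We use the localization property of the Hermite-based kernel: for every $S>0$,
\[
|\tilde{\Phi}_{n,q}(t)|\lesssim n^{q}\max(1,nt)^{-S},
\]
and $\tilde{\Phi}_{n,q}$, viewed as a radial function on $\mathbb{R}^q$, reproduces polynomials of low degree. This is the reason for the specific combination $\mathcal{P}_{k,q}$ of Hermite functions: it gives the radial projection of $\mathbb{R}^q$ Hermite expansions.

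Fix $\vec{x}\in\mathbb{X}$ and split the integral. Outside a ball of radius $n^{\alpha-1}\cdot n^{\epsilon}$ the kernel tail is negligible, of order $n^{-S}$. Inside, the manifold is well approximated by its tangent space at $\vec{x}$ via the exponential map. The distortion between geodesic and Euclidean distance, and the Jacobian of the exponential map, contribute errors of order $r^2$ at scale $r\sim n^{\alpha-1}$. On the tangent space, $ff_0\in W_\gamma$ can be replaced by a local polynomial (or a smoothed approximant) of accuracy $n^{-\alpha\gamma}$, and the kernel reproduces it up to $O(n^{-\alpha\gamma})$.

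The constraint $\alpha<4/(2+\gamma)$ is exactly what makes the curvature error $n^{2(\alpha-1)}$ times appropriate factors dominated by $n^{-\alpha\gamma}$. The intended balance is $(n^{\alpha-1})^2 n^{\ldots}\le n^{-\alpha\gamma}$. I expect this step to be the main obstacle: the exact interplay between the Hermite scale $n$ and the spatial dilation $n^{1-\alpha}$, and handling curvature uniformly over $\mathbb{X}$ using only compactness and smoothness, require careful estimates of the Jacobian of normal coordinates. I would work in normal coordinates with a partition into a near region and a far region, and use moments of $\tilde{\Phi}_{n,q}$ up to the needed order. This gives bias $\lesssim n^{-\alpha\gamma}\norm{ff_0}_{W_\gamma}$.

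\textbf{Variance.} For fixed $\vec{x}$, the summands $Z_j=n^{q(1-\alpha)}y_j\tilde{\Phi}_{n,q}(n^{1-\alpha}\norm{\vec{x}-\vec{x}_j}_2)$ are i.i.d. They satisfy
\[
|Z_j|\lesssim n^{q(1-\alpha)}n^{q}\norm{Y}_\infty.
\]
By the localization estimate integrated over the $q$-dimensional manifold,
\[
\mathbb{E}Z_j^2\lesssim n^{2q(1-\alpha)}\norm{Y}^2_\infty\norm{f_0}_\infty\int\tilde{\Phi}^2_{n,q}(n^{1-\alpha}\norm{\vec{x}-\vec{y}}_2)\,d\mu^*(\vec{y})\lesssim n^{q(2-\alpha)}\norm{f_0}_\infty\norm{Y}^2_\infty,
\]
where the last step uses $\int\tilde{\Phi}^2\sim n^{q}$ at scale $n^{-1}$ after dilation.

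Bernstein's inequality then gives deviation $\lesssim\sqrt{\norm{f_0}_\infty}\norm{Y}_\infty\sqrt{n^{q(2-\alpha)}\log(n/\delta)/M}$ at a single point. To make this uniform over $\mathbb{X}$, take a finite net of $\mathbb{X}$ of cardinality polynomial in $n$, apply a union bound over it, and control the oscillation between net points using a Lipschitz (Bernstein-type) bound on the kernel. The condition $M\ge n^{q(2-\alpha)+2\alpha\gamma}\log(n/\delta)$ turns the deviation into $n^{-\alpha\gamma}$ with probability at least $1-\delta$. Combining this with the bias bound yields the theorem.
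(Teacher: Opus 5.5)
Your architecture is the right one. The dissertation only quotes this Hermite result from \cite{mhaskar2020deep}, but it proves the spherical analogue, Theorem~\ref{theo:manifoldapproxmainthm}, by the same split. The bias of the integral operator is handled by local lifting (Theorem~\ref{theo:manifold_summop} via Lemma~\ref{lemma:manifold_keylemma}), and the discretization by Bernstein concentration over a net (Theorem~\ref{theo:theorem2}). Your variance and net argument is correct.

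The gap is in the bias step. You place the localization of the dilated kernel at scale $n^{\alpha-1}$. By your own bound $|\tilde\Phi_{n,q}(t)|\lesssim n^q\max(1,nt)^{-S}$, however, $K_n(r)=n^{q(1-\alpha)}\tilde\Phi_{n,q}(n^{1-\alpha}r)$ behaves like a localized kernel of degree $N=n^{2-\alpha}$:
$|K_n(r)|\lesssim N^q\max(1,Nr)^{-S}$ and $|K_n'|\lesssim N^{q+1}$, so its scale is $1/N=n^{\alpha-2}$. Your variance bound $n^{q(2-\alpha)}$ implicitly uses this correct scale. With your scale, the sketched estimates cannot give the theorem:
\begin{itemize}
\item a curvature error of order $n^{2(\alpha-1)}$ is $\lesssim n^{-\alpha\gamma}$ only when $\alpha\le 2/(2+\gamma)$;
\item local-polynomial accuracy at scale $n^{\alpha-1}$ is $n^{-(1-\alpha)\gamma}$, which is $\lesssim n^{-\alpha\gamma}$ only when $\alpha\le 1/2$.
\end{itemize}
The unspecified factor in ``$(n^{\alpha-1})^2n^{\ldots}\le n^{-\alpha\gamma}$'' is exactly the missing step.

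To repair it, rerun the proof of Lemma~\ref{lemma:manifold_keylemma} with two substitutions: the tangent space $\mathbb{T}_x(\mathbb{X})\cong\mathbb{R}^q$ in place of $\mathbb{S}_x$, and $K_n$ (that is, degree $N$) in place of $\Phi_{n,q}$. Take $\delta=N^{-1+\epsilon}$.
\begin{itemize}
\item The near-region error (Jacobian distortion plus the Lipschitz difference between chordal and geodesic distance) is $\lesssim N^{q}\delta^{q+2}+N^{q+1}\delta^{q+3}\lesssim N^{-2+(q+3)\epsilon}$.
\item The far-region error is $\lesssim (N\delta)^{q-S}$.
\end{itemize}
With small $\epsilon$ and large $S$, both are $\lesssim n^{-\alpha\gamma}=N^{-\alpha\gamma/(2-\alpha)}$ precisely when $\alpha\gamma/(2-\alpha)<2$, i.e.\ $\alpha<4/(2+\gamma)$. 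This is where that hypothesis comes from; it is the analogue of $\gamma<2$ in Theorem~\ref{theo:manifold_summop}.

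The Euclidean approximation step also needs a real argument rather than ``the kernel reproduces low-degree polynomials.'' Since $\tilde\Phi_{n,q}(|x-y|)=\Phi_n(x-y,0)$ for the $q$-variate Hermite kernel, you have $\int K_n(|u|)G(u)\,du=\sigma_n\bigl(G(n^{\alpha-1}\circ)\bigr)(0)$. Here $\sigma_n$ is the Hermite summability operator, which reproduces weighted polynomials $e^{-|v|^2/2}P(v)$, not polynomials. The rate $n^{-\alpha\gamma}$ must therefore be derived from the Hermite-type definition of $W_\gamma$ under this dilation.

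This step is also where the hypothesis $\alpha\le 1$ enters, and your sketch never uses it. Heuristically, a kernel of resolution $N^{-1}$ delivers accuracy of order $N^{-\gamma}=n^{-(2-\alpha)\gamma}$, and that is at least as good as $n^{-\alpha\gamma}$ exactly when $\alpha\le 1$.
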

This theorem demonstrates an approximation process for functions defined on an unknown manifold. There are some limitations however. The first is that the complexity of the approximation grows like $n^2$ even though the approximation rate is dependent on $n^{\alpha\gamma}$. The second is that it is based on Hermite functions, which have the issue of becoming unstable when implemented too far from the origin, due to the quickly decreasing $e^{-\vec{x}^2/2}$ term. The third is that the class of approximants used to generate the approximation in \eqref{eq:hermiteapprox} depends explicitly on the choice $\vec{x}$ at which one is approximating. In this chapter, we look to alleviate these drawbacks.

The upshot is that this method does not require optimization since it is built from mathematical theory with guaranteed good approximation properties, and the localization of the kernels allows for the approximation to be successful despite the usual pitfalls of approximation theory in machine learning applications. Furthermore, it succeeds in doing the above without utilizing the classical approach of approximating the manifold via an eigen-decomposition or atlas estimate as discussed in Section~\ref{sec:manifoldlearning}. Instead, only the dimension of the manifold is assumed and in practice can be considered a hyperparameter.
 
In the present work we project the $q$-dimensional manifold $\XX$ in question from the ambient space $\RR^Q$ to a sphere $\SS^Q$ of the same dimension.
We can then use a specially designed, localized, univariate kernel $\Phi_{n,q}$  (cf. \eqref{eq:kernel}) which is a spherical polynomial of degree $< n$ on $\SS^Q$, with $n$ and $q$ being tunable hyperparameters. 
Our construction is very simple; we define
\begin{equation}\label{eq:proto_const}
    F_{n}(\mathcal{D};x)\coloneqq \frac{1}{M}\sum_{j=1}^M z_j\Phi_{n,q}(x\cdot y_j).
\end{equation}
We note that $F_n(\mathcal{D};\circ)$ is a function defined on the ambient sphere $\SS^Q$.
The localization of the kernel allows us to adapt the approximation to the unknown manifold.

Our main theorem (cf. Theorem~\ref{theo:manifoldapproxmainthm}) has the following form:

\begin{theorem}\label{theo:proto_mainthm} (\textbf{Informal statement})
Let $\mathcal{D}=\{(y_j,z_j)\}_{j=1}^M$ be a set of random samples chosen from a distribution $\tau$. Suppose $f$ belongs to a smoothness class $W_\gamma$ (detailed in Definition~\ref{def:manifold_smoothness}) with associated norm $\norm{\circ}_{W_\gamma}$. Then under some additional conditions and with a judicious choice of $n$, we have with high probability:
\begin{equation}\label{eq:proto_est}
    \norm{F_n(\mathcal{D};\circ)-f}_\mathbb{X}\leq c\left(\norm{z}+\norm{f}_{W_\gamma}\right)\left(\frac{\log M}{M}\right)^{\gamma/(q+2\gamma)},
\end{equation}
where $c$ is a positive constant independent of $f$.
\end{theorem}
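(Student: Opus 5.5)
We would prove Theorem~\ref{theo:proto_mainthm} with the usual bias--variance decomposition, taking the properties of the kernel $\Phi_{n,q}$ as the main tools. Let $\nu$ be the marginal distribution of $\tau$ on $\XX\subset\SS^Q$ and $f_0$ its density with respect to the normalized volume measure $\mu^*$ on $\XX$. We assume $\nu$ is absolutely continuous with smooth density, and we fold $f_0$ into the estimate, either by approximating $ff_0$ or by assuming $f_0\equiv1$. We write
\[
F_n(\mathcal{D};x)-f(x)=\bigl(F_n(\mathcal{D};x)-\mathbb{E}F_n(\mathcal{D};x)\bigr)+\bigl(\mathbb{E}F_n(\mathcal{D};x)-f(x)\bigr),
\]
where
\[
\mathbb{E}F_n(\mathcal{D};x)=\int_\XX f(y)f_0(y)\Phi_{n,q}(x\cdot y)\,d\mu^*(y)=:\sigma_n(ff_0)(x).
\]
We bound the two terms separately and then balance them.

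\textbf{Bias term.} We need three properties of the kernel.
\begin{enumerate}
\item Localization: $|\Phi_{n,q}(x\cdot y)|\lesssim n^q\max(1,n\,\rho(x,y))^{-S}$ for $x,y\in\XX$, with $S$ large, where $\rho$ is the geodesic distance. The factor $n^q$ rather than $n^Q$ reflects the fact that the kernel is normalized for the $q$-dimensional manifold.
\item Approximate identity: $\int_\XX\Phi_{n,q}(x\cdot y)\,d\mu^*(y)=1+O(n^{-S})$ uniformly on $\XX$.
\item Approximate polynomial reproduction: at each $x$, the integral against $\Phi_{n,q}(x\cdot\circ)$ nearly reproduces polynomials of low degree in the local coordinates.
\end{enumerate}
These follow by comparing integrals over $\XX$ near $x$ with integrals over its tangent space $T_x\XX\cong\RR^q$. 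Near $x$ the manifold is a graph over the tangent plane with curvature error $O(\rho^2)$. The far-away part is negligible by localization. We then split the integral into $\rho(x,y)<\delta$ and $\rho(x,y)\ge\delta$. On the near region we use a Taylor expansion of $ff_0$ of order $\gamma$, through the definition of $W_\gamma$ in Definition~\ref{def:manifold_smoothness}, together with the moment properties. This gives
\[
\|\sigma_n(ff_0)-ff_0\|_\XX\lesssim n^{-\gamma}\|ff_0\|_{W_\gamma}.
\]

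\textbf{Variance term.} For fixed $x$ the summands $z_j\Phi_{n,q}(x\cdot y_j)$ are i.i.d. They are bounded by $\|z\|\,n^q$. Their variance is at most
\[
\|z\|^2\int|\Phi_{n,q}|^2 f_0\,d\mu^*\lesssim\|z\|^2n^q,
\]
using localization and the volume estimate $\mu^*(B(x,r))\sim r^q$. Bernstein's inequality then gives a deviation of order $\|z\|\sqrt{n^q\log(\cdot)/M}$ at a single point.

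To pass to the supremum over $\XX$, we note that $F_n-\mathbb{E}F_n$ is a spherical polynomial of degree $<n$ on $\SS^Q$. We take a finite $\epsilon$-net of $\XX$ with $\epsilon\sim n^{-2}$ or $n^{-1}$; its cardinality is $\lesssim n^{cq}$. Bernstein (Markov) inequalities for spherical polynomials control the oscillation between net points, and a union bound finishes the step. With probability at least $1-\delta$ this yields
\[
\|F_n-\mathbb{E}F_n\|_\XX\lesssim\|z\|\sqrt{\frac{n^q\log(n/\delta)}{M}}.
\]

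\textbf{Balancing.} We choose $n\sim(M/\log M)^{1/(q+2\gamma)}$, which makes the two terms equal. This gives the rate $(\log M/M)^{\gamma/(q+2\gamma)}$ in \eqref{eq:proto_est}.

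The main obstacle is the bias estimate: showing that a univariate zonal polynomial kernel on the ambient sphere $\SS^Q$ acts on the unknown $q$-dimensional $\XX$ as a localized approximate identity of order $\gamma$. This needs the kernel to be built so that its localization and moments are governed by $q$, not $Q$. Our first attempt would be a local comparison between the kernel and a radial kernel on $\RR^q$. Concretely, we would write $\Phi_{n,q}(\cos t)$ as a function of $nt$ that is essentially a fixed Schwartz-type function. We would then check that, when integrated over a $q$-dimensional surface, the geometric distortion costs only $O(n^{-2})$ relative error, or the needed order, after rescaling.
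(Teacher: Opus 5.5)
The gap is in the bias step. Your variance step and the balancing $n^{q+2\gamma}\sim M/\log M$ are essentially the paper's (Theorem~\ref{theo:theorem2} and \eqref{eq:sample_approxest}), but the bias step, which you yourself call the main obstacle, is not proved, and the mechanism you sketch does not fit the hypothesis.

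\textbf{Taylor expansion.} Definition~\ref{def:manifold_smoothness} provides no Taylor expansion of order $\gamma$. The condition $f\in W_\gamma(\mathbb{X})$ means that every localized lift $F_{x,\phi}=(f\phi)\circ\eta_x$ to the tangent equator $\mathbb{S}_x$ satisfies $E_{2^m}(\mathbb{S}_x,F_{x,\phi})\lesssim 2^{-m\gamma}$. This is a degree-of-approximation condition of Besov/Zygmund type; for integer $\gamma$ it does not even guarantee $\gamma$ derivatives. A ``Taylor expansion plus vanishing moments'' argument would therefore need a Jackson--Bernstein characterization of $W_\gamma(\mathbb{S}_x)$ and its transfer through $\eta_x$, which you do not supply.

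\textbf{Kernel properties 2 and 3.} These are asserted, not derived. Comparing $\mathbb{X}$ with its tangent space up to a relative $O(\rho^2)$ distortion and estimating in absolute value, as you propose, gives an error of order $\int_{\mathbb{X}}|\Phi_{n,q}(x\cdot y)|\rho(x,y)^2\,d\mu^*(y)\sim n^{-2}$, not $O(n^{-S})$. So your sketch cannot support a bias bound $n^{-\gamma}$ for every $\gamma>0$, which is what you state.

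\textbf{The missing idea.} Use the definition of $W_\gamma(\mathbb{X})$ as it stands and never expand $f$ locally. Fix $x$ and a cutoff $\phi$ equal to $1$ near $x$.
\begin{enumerate}
\item The far term $\int_{\mathbb{X}}\Phi_{n,q}(x\cdot y)f(y)(1-\phi(y))\,d\mu^*(y)$ is $O(n^{-\gamma})$ by localization (Lemma~\ref{lemma:manifold_lebesgue_number}).
\item Lemma~\ref{lemma:manifold_keylemma} transports $\int_{\mathbb{X}}\Phi_{n,q}(x\cdot y)(f\phi)(y)\,d\mu^*(y)$ to $\sigma_n(\mathbb{S}_x,F_{x,\phi})(x)$ through $\eta_x$. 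It uses \eqref{eq:sph_bernstein}, $|\arccos(x\cdot\eta_x(u))-\rho(x,\eta_x(u))|\lesssim\rho^3$, and $|\sqrt{|\mathbf{g}_i|}-1|\lesssim\delta^2$ on a ball of radius $\delta$.
\item The resulting error is $\lesssim\|f\|_{\mathbb{X}}\bigl(n^q\delta^{q+2}+n^{q+1}\delta^{q+3}+(n\delta)^{q-S}\bigr)$. With $\delta=n^{-(\gamma+q+1)/(q+3)}$ and $S$ large, this is $\lesssim n^{-\gamma}$ provided $\gamma<2$.
\item On the equator the kernel reproduces $\Pi^q_{n/2}$ exactly. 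Theorem~\ref{theo:subthm} then gives $|F_{x,\phi}(x)-\sigma_n(\mathbb{S}_x,F_{x,\phi})(x)|\lesssim n^{-\gamma}\|F_{x,\phi}\|_{W_\gamma(\mathbb{S}_x)}$, and the right side is exactly what $\|f\|_{W_\gamma(\mathbb{X})}$ bounds.
\end{enumerate}
This is also where the restriction $0<\gamma<2$ in Theorem~\ref{theo:manifoldapproxmainthm} comes from, and your version of the bias bound omits it.

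\textbf{A smaller point in the variance step.} Bernstein's inequality bounds the oscillation of the polynomial $F_n-\mathbb{E}F_n$ by its supremum over all of $\mathbb{S}^Q$, not over $\mathbb{X}$. So a net of $\mathbb{X}$ at mesh $n^{-1}$ or $n^{-2}$ does not close the argument. There are two fixes:
\begin{itemize}
\item net all of $\mathbb{S}^Q$ at mesh $\sim 1/n$ with $\sim n^Q$ points, as in Lemma~\ref{lemma:supnorm_concentration}; or
\item net $\mathbb{X}$ at mesh $\sim n^{-(q+1+\gamma)}$ and use the deterministic Lipschitz bound \eqref{eq:sph_bernstein} on each summand.
\end{itemize}
Either way the logarithm of the cardinality is $O(\log n)$, so your variance bound survives.
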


We note some mathematical features of our construction and theorem which we find interesting.
\begin{enumerate}
\item The usual approach in machine learning is to construct the approximation using an optimization procedure, usually involving a regularization term.
The setting up of this optimization problem, especially the regularization term, requires one to assume that the function belongs to some special function class, such as a reproducing kernel Hilbert/Banach space.
Thus, the constructions are not explicit nor universal.
In contrast, our construction \eqref{eq:proto_const} does not require  a prior on the function in order to use our model.
Of course, the theorem and its high-probability convergence rates do require various assumptions on $\tau$, the marginal distribution, the dimension of the manifold, the smoothness of the target function, etc. 
The point is that the construction itself does not require any assumptions.
\item 
A major problem in manifold learning is one of out of sample extension; i.e., extending the approximation to outside the manifold. 
A usual procedure for this in the context of approximation using the eigenstructure of the Laplace-Beltrami operator on the manifold is the Nystr\"om extension \cite{coifman2006geometric}.
However, this extension is no longer in terms of any orthogonal system on the ambient space, and hence there is no guarantee of the quality of approximation even if the function is known outside the manifold.
In contrast, the point $x$ in \eqref{eq:proto_const} is not restricted to the manifold, but rather freely chosen from $\mathbb{S}^Q$. 
That is, our construction defines an out of sample extension in terms of spherical polynomials on the ambient sphere, whose approximation capabilities are well studied.
\item In terms of $M$, the estimate in \eqref{eq:proto_est} depends asymptotically on the dimension $q$ of the manifold rather than the dimension $Q$ of the ambient space.
\item We do not need to know \textbf{anything} about the manifold (e.g., eigendecomposition or atlas estimate) itself apart from its dimension in order to prove our theorem.
There are several papers in the literature for estimating the dimension from the data, for example \cite{liao2016learning, liao2016adaptive, manoni2020effective}. 
However, the simplicity of our construction allows us to treat the dimension $q$ as a tunable parameter to be determined by the usual division of the data into training, validation, and test data.
\end{enumerate}

There are several other approaches superficially similar to our constructions. 
We will comment on some of these in Section~\ref{sec:relatedmanifold}.
We describe the main idea behind our proofs in Section~\ref{sec:overview}.
The results require an understanding of the approximation properties of spherical polynomials. Accordingly, we describe some background on the spherical polynomials, our localized kernels, and their use in approximation theory on subspheres of the ambient sphere
in Section~\ref{sec:approxthy}. 
The main theorems for approximation on the unknown manifold are given in Section~\ref{sec:manifoldapprox}. 
The theorems are illustrated with three numerical examples in Section~\ref{sec:numericalmanifold}. 
One of these examples is closely related to an important problem in magnetic resonance relaxometry, in which one seeks to find the proportion of water molecules in the myelin covering in the brain based on a model that involves inversion of the Laplace transform.
The proofs of the main theorems are given in Section~\ref{sec:manifoldapproxproofs}.

We would like to thank Dr. Richard Spencer at the National Institute of Aging (NIH) for his helpful comments, especially on Section~\ref{subsec:spencerdata}, verifying that our simulation is consistent with what is used in the discipline of magnetic resonance relaxometry.

\section{Related Ideas}\label{sec:relatedmanifold}

Since our method is based on a highly localized kernel, it is expected to be comparable to the simple nearest neighbor algorithm.
However, rather than specifying the number of neighbors to consider in advance, our method allows the selection of neighbors adaptively  for each test point, controlled by the parameter $n$.
Also, rather than taking a simple averaging, our method is more sophisticated, designed to give an optimal order of magnitude of the approximation error.

One of the oldest ideas for data based function approximation is the so-called Nadaraya-Watson estimator (NWE), given by
$$
NW_h(x)=\frac{\sum_{j=1}^M z_jK(|x-y_j|/h)}{\sum_{j=1}^M K(|x-y_j|/h)},
$$
where $K$ is a kernel with an effectively small support---the Gaussian kernel $K(t)=\exp(-t^2)$, as a common example---and $h$ is a scaling parameter.
Another possible choice is a $B$-spline (including Bernstein polynomials) which has a compact support.
This construction is designed to work on a Euclidean space by effectively shrinking the support of $K$ using the scaling parameter $h\to 0$, analogously to spline approximation.
The degree of approximation of such methods is measured in terms of $h$.
It is well known (e.g., \cite{de2006approximation}) that  the use of a positive kernel $K$  suffers from the so-called saturation phenomenon: the degree of approximation cannot be smaller than $\mathcal{O}(h^2)$ unless the function is a trivial one in some sense.

Radial basis function (RBF) networks and neural networks are used widely for function approximation, using either interpolation or least square fit. 
Standard RBF networks, such as Gaussian networks or thin plate spline networks, use a fixed, scaled kernel.
Typically, the matrices involved in either interpolation or least square approximation are very ill-conditioned, and the approximation is not highly localized. 

Restricted to the sphere, both of the notions are represented by a zonal function (ZF) network. 
A \emph{zonal function} on a sphere is a function of the form $x\mapsto g(x\cdot x_0)$. 
A ZF network is a linear combination of finitely many zonal functions.
One may notice that
$$
g(x\cdot x_0)=g\left(1-\frac{|x-x_0|^2}{2}\right),
$$
so we can see that a ZF network is also a neural/RBF network. 
Conversely, a neural/RBF network restricted to the sphere is a ZF network. 
The same observations about RBF networks hold for ZF networks as well.
We note that all the papers we are aware of which deal with approximation by ZF networks actually end up approximating a spherical polynomial by the networks in question.

Rather than working with a fixed, scaled kernel, in this chapter we deal with a sequence of highly localized polynomial kernels.
We do not need to solve any system of equations or do any optimization to arrive at our construction.
RBF networks and NWE were developed for approximation on Euclidean domains instead of unknown manifolds. Both have a single hyperparameter $h$ and work analogously to the spline approximation.
In contrast, our method is designed for approximation on unknown manifolds without having to learn anything about the manifold besides the dimension. It has two integer hyperparameters ($n$ and $q$) and yields a polynomial approximation.

If one chooses $h$ small enough relative to a fixed $n$ then NWE may be able to outperform our method as measured in terms of a global error bound, such as the root mean square (RMS) error. If one instead chooses $n$ large enough relative to a fixed $h$ then our method may be able to outperform NWE. So in order to give a fair comparison in Example~\ref{ex:nswvsphin}, we force the RMS error of both methods to be approximately equivalent and investigate the qualitative differences of the errors produced by each method. We additionally show that both methods in the example outperform an interpolatory RBF network.

\begin{example}\label{ex:nswvsphin}
This example serves to illustrate two points.
The first point is to compare the performance our method with NWE and an RBF interpolant. In doing so, we show that the error associated with our method is localized to singularities of the target function, whereas the other methods do not exhibit this behavior.
The second point is that using a global error estimate such as RMS can be misleading. 
Even if the RMS error with a given method might be large, the percentage of test data points at which it is smaller than a threshold could be substantially higher due to the local effects in the target function.

To ensure fair comparison, we use each of the three methods for approximation on $\SS^1=\{(\cos\theta,\sin\theta): \theta\in (-\pi,\pi]\}$, where the Gaussian kernel can be expressed in the form of a zonal function as explained above.

We consider the function
\be\label{eq:toyexample}
f(\theta)=1+\abs{\cos\theta}^{7/2}\sin(\cos\theta+\sin\theta)/2, \qquad \theta\in (-\pi,\pi].
\ee
We note that the function is analytic except at $\theta=\pm\pi/2$, where it has a discontinuity in the 4th order derivative. 
Our training data consists of $2^{13}$ equidistantly spaced $y_j$'s along the circle. We set $z_j=f(y_j)$, and examine the resulting error on a test data consisting of $2^{11}$ points chosen randomly according to the uniform distribution on $\SS^1$.  

We consider three approximation processes : (1)  Nadaraya-Watson estimator NW$_h$ with\\
 $K_h(t)=\exp(-t^2/h^2)$, (2) interpolatory approximation by the RBF network of the form $\sum a_k\exp(-|\circ-y_j|^2/h^2)$, (3) our method with the kernel $\Phi_{50,1}$.

We experimentally determined the optimal $h$ value in NWE to be $\approx 7.45e\text{-4}$ (effectively simulating the minimization of the actual generalization error on the test data), resulting in an RMS error of 1.8462e-7. 
The same value of $h$ was used for interpolation with the Gaussian RBF network, yielding a RMS error of 2.2290e-4. 
We then chose $n$ so as to yield a (comparable to NWE) RMS error of 1.8594e-7 (though we note that in this case our method continues to provide a better approximation if $n$ is further increased).

The detailed results are summarized in Figure~\ref{fig:nwecomp} below.
\begin{figure}[!ht]
\centering
\begin{tabular}{cc}
    \includegraphics[width=.45\textwidth]{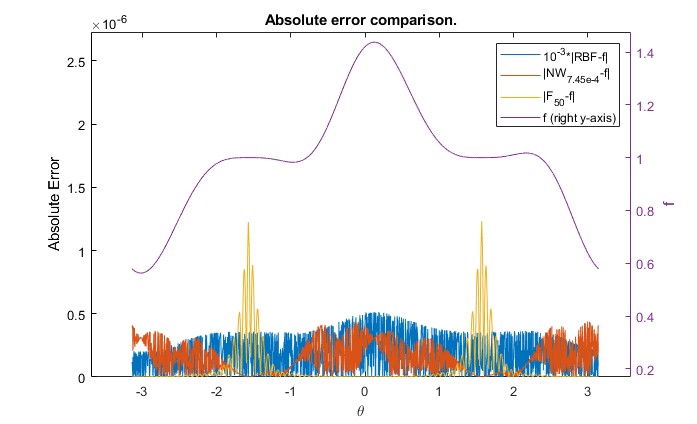}
    &
    \includegraphics[width=.45\textwidth]{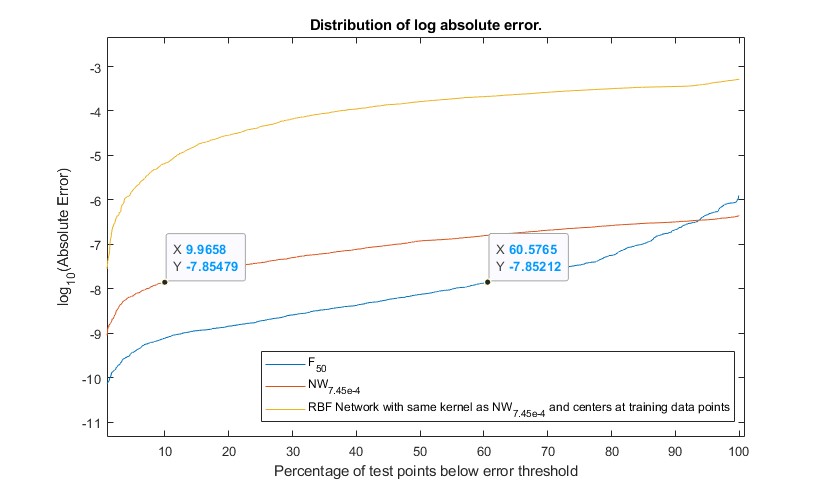}
\end{tabular}
    \caption{Error comparison between our method, the Nadaraya-Watson estimator, and an interpolatory RBF network. (Left) Comparison of absolute errors between the methods with the target function plotted on the right $y$-axis for benefit of the viewer. We note that the error from the RBF method is scaled by $10^{-3}$ so as to not dominate the figure. (Right) Percent point plot of the log absolute error for all three methods.}
        \label{fig:nwecomp}

\end{figure}

In the left plot in Figure~\ref{fig:nwecomp},  we can see a clear difference between the errors of the three methods. The (scaled by $10^{-3}$) error from the RBF network jumps throughout the whole domain, signaling the ill-conditioned nature of the matrix.
The error from the Nadaraya-Watson estimator exhibits some oscillation across the whole domain as well. The error with our method is localized to the two singularity points of the function. 
In other words, our method exhibits 1) sensitivity to the singularities of a function and 2) error adapting to the local smoothness of the function.
In comparison, RBF networks and NWE do not always exhibit such behavior. 
On the right plot of Figure~\ref{fig:nwecomp}, we give a percent point plot of the log absolute error for all three methods. There are three curves corresponding to the three methods being compared. Each point $(x,y)$ along a given curve indicates that the corresponding method approximated $x\%$ of test points with absolute error below $10^{y}$. This plot can also be thought of as the inverse CDF for the random variable of the resulting log absolute error for a test point sampled uniformly at random.
For example, whereas the Nadaraya-Watson estimator yields an error below $\approx 10^{-7.85}$ for only about $10\%$ of the tested points, our method exhibits the same error or below for about $60\%$ of the test points. Our method has the higher uniform error, but lower error for over $90\%$ of the test points.
Although the overall RMS error is roughly the same, our method exhibits a quicker decay from the uniform error.
This illustrates, in particular, that measuring the performance using a global measure for the error, such as the uniform or RMS error can be misleading.
The interpolatory RBF network performs the worst of the three methods as the right plot of Figure~\ref{fig:nwecomp} shows clearly. \qed
\end{example}

There are some efforts \cite{fuselier2012scattered,lehmann2019ambient} to do function approximation on manifolds using tensor product splines or RBF networks defined on an ambient space by first extending the target function to the ambient space.
A locally adaptive polynomial approach is used in \cite{soberleastsquares} for accomplishing function approximation on manifolds using the data. 
All these works require that the manifold be known.

In \cite{mhaskar2020deep}, we have suggested a direct approach to function approximation on an unknown submanifold of a Euclidean space using a localized kernel based on Hermite polynomials.
This construction was used successfully in predicting diabetic sugar episodes \cite{gaussian_diabetes} and recognition of hand gestures \cite{mason2021manifold}.
In  particular, in \cite{gaussian_diabetes}, we constructed our approximation based on one clinical data set and used it to predict the episodes based on another clinical data set.
In order to extend the applicability of such results to wearable devices, it is important that the approximation should be encoded by a hopefully small number of real numbers, which can then be hardwired or used for a simpler approximation process \cite{valeriyasmartphone}.
However, the construction in \cite{mhaskar2020deep} is a linear combination of kernels of the form $\Psi(|\circ-y_j|)$, where $\Psi(t)=P(t)\exp(-t^2/2)$ is a univariate kernel utilizing a judiciously chosen polynomial $P$. 
This means that we get a good approximation, but the space from which the approximation takes place changes with the point at which the approximation is desired.
This does not allow us to encode the approximation using finitely many real numbers.
In contrast, the method proposed in this chapter allows us to encode the approximation using  coefficients of the target function in the spherical harmonic expansion (defined in a distributional sense), computed empirically.
Moreover, the degree of the polynomials involved in  \cite{mhaskar2020deep} to obtain same the rate of convergence in terms of the number of samples is $\mathcal{O}(n^2)$, while the degree of the polynomials involved in this chapter is $\mathcal{O}(n)$. 
We note that the construction in both involve only univariate polynomials, so that the dimension of the input space enters only linearly in the complexity of the construction.

\section{An Overview of the Proof}\label{sec:overview}

\begin{figure}[!ht]
\centering
    \includegraphics[width=0.4\textwidth]{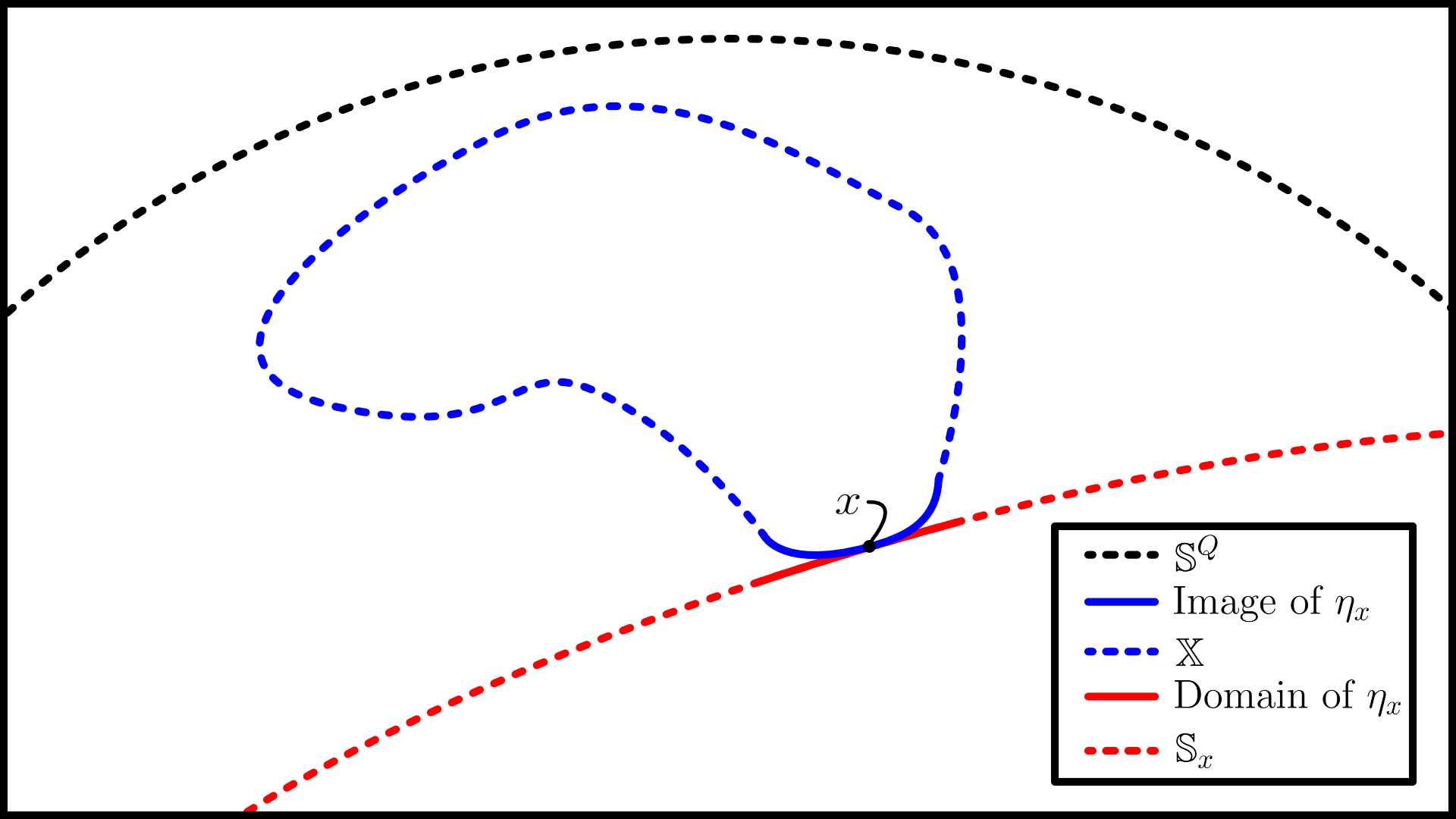}
    \caption{Visualization of our approximation approach. Here, $\mathbb{X}$ is a submanifold of the sphere $\mathbb{S}^Q$. The map $\eta_x$, analogous to the exponential map, allows us to relate the part of the integral in \eqref{eq:manifold_summabilityop} near $x$ with an integral on the tangent sphere at $x$ via a change of variables (solid curves). The localization of the kernels in our method allow for the approximation to be extended over $\XX$ and the tangent sphere $\mathbb{S}_x$ (dotted curves).}
        \label{fig:method}

\end{figure}

We can think of $F_n(\mathcal{D};x)$ defined in \eqref{eq:approximation} as an empirical approximation for an expected value with respect to the data distribution $\tau$:
\be\label{eq:approx_vs_expect}
\mathbb{E}_\tau(F_n(\mathcal{D};x))=\int z\Phi_{n,q}(x\cdot y)d\tau(y). 
\ee
Assuming that the marginal distribution of $\tau$ on $\XX$ is absolutely continuous with respect to the Riemannian volume measure $\mu^*$ on  $\XX$; i.e., given by $f_0d\mu^*$ for some smooth function $f_0$, we have
\be\label{eq:expect_vs_volumemeasure}
\mathbb{E}_\tau(F_n(\mathcal{D};x))=\int_\XX f(y)f_0(y)\Phi_{n,q}(x\cdot y)d\mu^*(y).
\ee
Accordingly, we define an \emph{integral reconstruction operator} by
\be\label{eq:manifold_summabilityop}
\sigma_n(\mathbb{X},f)(x)\coloneqq \int_{\mathbb{X}}\Phi_{n,q}(x\cdot y)f(y)d\mu^*(y), \qquad f\in C(\mathbb{X}), \ x\in\XX,
\ee
study the approximation properties of this operator, and use it with $ff_0$ in place of $f$.
The approximation properties of the operator $\sigma_n$ in the case of when $\XX$ is the $q$-dimensional sphere $\SS^q$ are well known (Proposition~\ref{prop:sphapprox}), and can be easily transferred to a $q$-dimensional equator of the ambient sphere $\SS^Q$ (Section~\ref{subsec:eqapprox}, Theorem~\ref{theo:subthm}).
We introduce a local exponential map $\eta_x$ at $x\in\XX$ between $\XX$ and the tangent equatorial sphere $\SS_x$ (i.e., a rotated version of $\SS^q$ that shares the tangent space with $\XX$ at $x$). We give an illustration of this setup in Figure~\ref{fig:method}.
Locally, a change of variable formula and the properties of this map allow us to compare the integral over a small manifold ball with that of its image on $\SS_x$ (cf. \eqref{eq:lemma_7_1_main_est}).
We keep track of the errors using the Bernstein inequality for spherical polynomials (cf. \eqref{eq:sph_bernstein}) and standard approximations between geodesic distances and volume elements on the manifold by those on $\mathbb{S}_x$.
This constitutes the main part of the proof of the critical Lemma~\ref{lemma:manifold_keylemma}.
We use the high localization property of our kernel $\Phi_{n,q}$ to lift the rest of the integral in \eqref{eq:manifold_summabilityop} on $\mathbb{X}$ at any point $x\in\XX$ to the rest of $\SS_x$ with small error (cf. \eqref{eq:lemma_7_1_awaymanifold},~\eqref{eq:lemma_7_1_awaysphere}).   
After this, we can use known results from the theory of approximation on the sphere by spherical polynomials (cf. Proposition~\ref{prop:sphapprox} and Theorem~\ref{theo:subthm}).
A partition of unity argument is used often in the proof.

Having obtained the approximation result for the integral reconstruction operator, we then discretize the integral and keep track of the errors using concentration inequalities.

\section{Background}
\label{sec:manifoldbackground}

In this section, we outline some important details about spherical harmonics (Section~\ref{subsec:sphharm}) which leads to the construction of the kernels of interest in this chapter (Section~\ref{subsec:LocalizedKernels}). We then review some classical approximation results using these kernels on spheres (Section~\ref{subsec:sphapprox}) and equators of spheres (Section~\ref{subsec:eqapprox}).

\subsection{Spherical Harmonics}
\label{subsec:sphharm}
The material in this section is based on \cite{mullerbk, steinweissbk}.
Let $0\le q\le Q$ be integers. We define the \textit{$q$-dimensional sphere} embedded in $Q+1$-dimensional space as follows
\begin{equation}\label{eq:sphere}
    \mathbb{S}^q\coloneqq\{(x_1,\dots,x_{q+1},\underbrace{0,\dots,0}_{Q-q}): x_1^2+\cdots+x_{q+1}^2=1\}.
\end{equation}
Observe that $\mathbb{S}^q$ is a $q$-dimensional compact manifold with geodesic defined by $\rho(x,y)=\arccos(x\cdot y)$. 

 Let $\mu_q^*$ denote the normalized volume measure on $\mathbb{S}^q$. 
 By representing a point $x\in \mathbb{S}^q$ as $(x' \sin\theta,\cos\theta)$ for some $x'\in \mathbb{S}^{q-1}$, one has the recursive formula for measures
\be\label{eq:recursivemeasure}
\frac{\omega_q}{\omega_{q-1}} d\mu^*_q(x)=\sin^{q-1}(\theta)d\theta d\mu^*_{q-1}(x'),
\ee
where $\omega_q$ denotes the surface volume of $\mathbb{S}^q$. One can write $\omega_q$ recursively by
\be\label{eq:volumerecurs}
\omega_{q}= \frac{2\pi^{(q+1)/2}}{\Gamma((q+1)/2)}=
\begin{cases}
2\pi, &\mbox{ if $q=1$},\\[1ex]
\displaystyle\sqrt{\pi}\frac{\Gamma(q/2)}{\Gamma(q/2+1/2)}\omega_{q-1}, &\mbox{ if $q\ge 2$,}
\end{cases}
\ee
where $\Gamma$ denotes the Gamma function. 

The restriction of a homogenous harmonic polynomial in $q+1$ variables to the $q$-dimensional unit sphere $\mathbb{S}^q$
is called a spherical harmonic. 
The space of all spherical harmonics of degree $\ell$ in $q+1$ variables will be denoted  by $\mathbb{H}_\ell^q$. 
The  space of the restriction of  all $q+1$ variable polynomials of degree $< n$ to $\mathbb{S}^q$ will be denoted by $\Pi_n^q$. 
We extend this notation for an arbitrary real value $x>0$ by writing  $\Pi_x^q\coloneqq \Pi_{\floor{x}}^q$.
It is known that $\mathbb{H}_\ell^q$ is orthogonal to $\mathbb{H}_j^q$ in $L^2(\mu_q^*)$ whenever $j\neq \ell$, and $\disp\Pi_n^q=\bigoplus_{\ell=0}^{n-1   }\mathbb{H}_\ell^q$.
In particular, $\disp L^2(\mu^*_q)=\bigoplus_{\ell=0}^{\infty}\mathbb{H}_\ell^q$.

 If we let $\{Y_{\ell,k}\}_{k=1}^{\dim (\mathbb{H}_\ell^q)}$ be an orthonormal basis for $\mathbb{H}_\ell^q$ with respect to $\mu^*_q$, we can define
\begin{equation}\label{eq:reproduce}
    K_{q,\ell}(x,y)\coloneqq \sum_{k=1}^{\dim (\mathbb{H}^q_\ell)} Y_{\ell,k}(x)Y_{\ell,k}(y).
\end{equation}
In \cite[Theorem 2]{mullerbk} and \cite[Theorem 2.14]{steinweissbk}, it is shown that 
\be\label{eq:summation}
K_{q,\ell}(x,y)=\frac{\omega_{q}}{\omega_{q-1}}p_{q,\ell}(1)p_{q,\ell}(x\cdot y),
\ee
where $p_{q,\ell}$ denotes the orthonormalized ultraspherical polynomial of dimension $q$ and degree $\ell$. These ultraspherical polynomials satisfy the following orthogonality condition.
\begin{equation}\label{j-orth}
    \int_{-1}^1 (1-x^2)^{(q/2-1)} p_{q,m}(x)p_{q,n}(x)dx=\delta_{m,n}.
\end{equation}
Computationally, it is customary to use the following recurrence relation:
\begin{equation}\label{eq:recurrence}
\begin{aligned}
\sqrt{\frac{(n+1)(n+q-1)}{(2n+q-1)(2n+q+1)}}&p_{q,n+1}(x)=xp_{q,n}(x)-\sqrt{\frac{n(n+q-2)}{(2n+q-1)(2n+q-3)}}p_{q,n-1}(x), \qquad n\ge 1,\\
& p_{q,0}(x)=p_{q,0}=2^{1/2-q/2}\frac{\Gamma(q-1)}{\Gamma(q/2)},\qquad p_{q,1}(x)=2^{1/2-q/2}\frac{\sqrt{\Gamma(q)\Gamma(q+1)}}{\Gamma(q/2)}x.
\end{aligned}
\end{equation}
We note further that
\begin{equation}\label{eq:pnat1}
    p_{q,n}(1)=\frac{2^{1/2-q/2}}{\Gamma(q/2)}\sqrt{\frac{\Gamma(n+q-1)(2n+q-1)}{\Gamma(n+1)}}.
\end{equation}

\begin{rem}{\rm
Many notations have been used for ultraspherical polynomials in the past. 
For example, \cite{szegopoly} uses the notation of $P_n^{(\lambda)}$ for the Gegenbauer polynomials, also commonly denoted by $C_n^{(\lambda)}$. 
It is also usual to use a normalization, which we will denote by $R_n^q$ in this remark, given by $R_n^q=p_{q,n}/p_{q,n}(1)$.
Ultraspherical polynomials are also simply a special case of the Jacobi polynomials $P_n^{(\alpha,\beta)}$ where $\alpha=\beta$. 
Setting
\be\label{eq:normalization_coeff}
h_{q,n}\coloneqq 2^{q-1}\frac{\Gamma(n+q/2)^2}{n!\Gamma(n+q-1)(2n+q-1)},
\ee
 we have the following connection between these notations:
\be\label{eq:notation_relation}
\ba
p_{q,n}(x)&=h_{q,n}^{-1/2}P_{n}^{(q/2-1,q/2-1)}(x)=\frac{\Gamma(q-1)}{\Gamma(q/2)}\sqrt{\frac{n!(2n+q-1)}{2^{q-1}\Gamma(n+q-1)}}C_n^{(q/2-1/2)}(x)\\
&=\frac{2^{1/2-q/2}}{\Gamma(q/2)}\sqrt{\frac{\Gamma(n+q-1)\Gamma(2n+q-1)}{\Gamma(n+1)}}R_n^q.
\ea
\ee
\qed}
\end{rem}

Furthermore, the ultraspherical polynomials for the sphere of dimension $d_1$ can be represented by those for the sphere of dimension $d_2$ in the following manner
\be\label{eq:dimchange}
p_{d_1,n}=\sum_{\ell=0}^n C_{d_2,d_1}(\ell,n)p_{d_2,\ell}.
\ee
The coefficients $C$ have been studied, and explicit formulas are given in \cite[Equation 7.34]{askeypoly} and \cite[Equation 4.10.27]{szegopoly}.

\subsection{Localized Kernels}
\label{subsec:LocalizedKernels}

Let $h$ be an infinitely differentiable function supported on $[0,1]$ where $h(x)=1$ on $[0,1/2]$. 
This function will be fixed in the rest of this chapter, and its mention will be omitted from the notation.
Then we define the following univariate kernel for $t\in [-1,1]$:
\begin{equation}\label{eq:kernel}
\Phi_{n,q}(t)\coloneqq \Phi_{n,q}(h;t)=\sum_{\ell=0}^n h\left(\frac{\ell}{n}\right)K_{\ell,q}(t)=\frac{\omega_q}{\omega_{q-1}}\sum_{\ell=0}^{n}h\left(\frac{\ell}{n}\right)p_{q,\ell}(1)p_{q,\ell}(t).
\end{equation}
The following proposition lists some technical properties of these kernels which we will often use, sometimes without an explicit mention.
\begin{proposition}\label{prop:kernelprop}
Let $x,y\in \mathbb{S}^Q$. For any $S>0$, the kernel $\Phi_{n,q}(x,y)$ satisfies the \textbf{localization bound}
\begin{equation}
\label{eq:sphkernloc}
    |\Phi_{n,q}(x\cdot y)|\lesssim \frac{n^q}{\max(1,n\arccos(x\cdot y))^S},
\end{equation}
where the constant involved may depend upon $S$.
Further, we have the Lipschitz condition:
\begin{equation}
\label{eq:sph_bernstein}
|\Phi_{n,q}(x\cdot y)-\Phi_{n,q}(x\cdot y')|\lesssim n^{q+1}|\arccos(x\cdot y)-\arccos(x\cdot y')|,\qquad y'\in \mathbb{S}^Q.
\end{equation}
%and
%
\end{proposition}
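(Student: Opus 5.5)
The plan is to derive both estimates from known facts about ultraspherical (Jacobi) polynomial expansions, applied in the variable $\theta=\arccos t$. For the localization bound \eqref{eq:sphkernloc}, first observe that $\Phi_{n,q}(x\cdot y)$ depends only on $t=x\cdot y\in[-1,1]$. It therefore suffices to prove
\[
|\Phi_{n,q}(\cos\theta)|\lesssim n^q\min\bigl(1,(n\theta)^{-S}\bigr),\qquad \theta\in[0,\pi].
\]

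The trivial part of the localization bound, $|\Phi_{n,q}(t)|\lesssim n^q$, comes from $|p_{q,\ell}(t)|\le p_{q,\ell}(1)$ and \eqref{eq:pnat1}. Together these give $p_{q,\ell}(1)^2\sim \ell^{q-1}$, so
\[
|\Phi_{n,q}(t)|\le \frac{\omega_q}{\omega_{q-1}}\sum_{\ell\le n}p_{q,\ell}(1)^2\lesssim n^q .
\]

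For the decay part, I would write $\Phi_{n,q}(t)=\sum_\ell h(\ell/n)K_{\ell,q}(t)$ and apply summation by parts $S$ times (or more, as needed). The tool is the Christoffel–Darboux/recurrence identity \eqref{eq:recurrence}, which expresses $(1-t)\sum_{\ell\le m}K_{\ell,q}(t)$ as a combination of Jacobi polynomials with shifted parameters. Each summation by parts produces a finite difference $\Delta^k h(\ell/n)=O(n^{-k})$, which is supported on $\ell\in[n/2,n]$, and a factor of roughly $(1-t)^{-1}\sim\theta^{-2}$. The sums then involve Jacobi polynomials of parameter $(q/2-1+k,\,q/2-1)$. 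I would bound these by the standard estimate
\[
|P^{(\alpha,\beta)}_\ell(\cos\theta)|\lesssim \ell^{-1/2}\theta^{-\alpha-1/2}\quad\text{for }\theta\gtrsim 1/\ell,
\]
with the analogous estimate near $\pi$.

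Counting powers: each summation step gains about $(n\theta)^{-1}$ in the bulk. Iterating and choosing the number of steps large relative to $S$ gives the $(n\theta)^{-S}$ factor. This is exactly the known localization result for such kernels (Mhaskar, Narcowich, Ward; Petrushev–Xu), which I would cite rather than reprove. I expect the main obstacle to be the endpoint behavior near $\theta=\pi$, where the measure weight degenerates. There I would use parity, $p_{q,\ell}(-t)=(-1)^\ell p_{q,\ell}(t)$, to treat that endpoint as a smooth cutoff of an alternating sum, which again decays rapidly.

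For the Lipschitz condition \eqref{eq:sph_bernstein}, set $g(\theta)=\Phi_{n,q}(\cos\theta)$. This is a trigonometric (cosine) polynomial of degree at most $n$ in $\theta$, with $\|g\|_\infty\lesssim n^q$ by the bound above. Bernstein's inequality for trigonometric polynomials then gives $\|g'\|_\infty\le n\|g\|_\infty\lesssim n^{q+1}$. By the mean value theorem,
\[
|g(\theta)-g(\theta')|\lesssim n^{q+1}|\theta-\theta'|.
\]
Taking $\theta=\arccos(x\cdot y)$ and $\theta'=\arccos(x\cdot y')$ yields \eqref{eq:sph_bernstein}.
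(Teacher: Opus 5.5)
Your proposal is correct and follows the paper's proof: the paper also simply cites the localization estimate (\cite[Lemma~4.9]{mhaskarpoly}), and it derives the Lipschitz bound exactly as you do, applying Bernstein's inequality to the degree-$<n$ trigonometric polynomial $\theta\mapsto\Phi_{n,q}(\cos\theta)$ together with $\|\Phi_{n,q}\|_\infty\lesssim n^q$. Your direct proof of that sup bound via $|p_{q,\ell}(t)|\le p_{q,\ell}(1)$ and $p_{q,\ell}(1)^2\sim\ell^{q-1}$ is a harmless extra. The summation-by-parts sketch is only heuristic (for instance, the $(1-t)^{-1}$ bookkeeping and the parity argument near $\theta=\pi$), but it is not needed once the decay bound is cited.
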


\begin{proof}
The estimate \eqref{eq:sphkernloc} is proved in \cite[Lemma 4.9]{mhaskarpoly}.
Since $\theta\mapsto \Phi_{n,q}(\cos\theta)$ is a trigonometric polynomial of degree $< n$, the Bernstein inequality for the derivatives of trigonometric polynomials implies that
$$
|\Phi_{n,q}(\cos\theta)-\Phi_{n,q}(\cos\phi)|\le n\|\Phi_{n,q}\|_\infty |\theta-\phi|\ls n^{q+1}|\theta-\phi|.
$$
This leads easily to \eqref{eq:sph_bernstein}.
%
% For Equation~\eqref{K3}, 
%let $P\in \Pi_{n/2}^q$, write $P(x)=\sum_{\ell=0}^{n/2}\sum_{k}^{\dim (\mathbb{H}^q_{\ell})} a_{\ell,k} Y_{\ell,k}(x)$, and see that $\sigma_n(P)$ takes the same form.
%\begin{align}
%\sigma_n(f)=&\sum_{\ell=0}^{n/2}h\left(\frac{\ell}{n}\right)\sum_{k}^{\dim (\mathbb{H}_\ell^q)}\sum_{i=0}^n \sum_{j=0}^{\dim (\mathbb{H}_{\ell}^q)} Y_{i,j}(x)a_{\ell,k}\left[\int_{\mathbb{S}^q} Y_{i,j}(y)Y_{\ell,k}(y)d\mu_q^*(y)\right]\\
%    =&\sum_{\ell=0}^{n/2}\sum_{k}^{\dim (\mathbb{H}_\ell^q)}\sum_{i=0}^n \sum_{j=0}^{\dim (\mathbb{H}_{\ell}^q)} Y_{i,j}(x)a_{\ell,k}\delta_{i,\ell}\delta_{j,k}\\
%    =&\sum_{\ell=0}^{n/2}\sum_{k}^{\dim (\mathbb{H}_\ell^q)} a_{\ell,k}Y_{\ell,k}(x)\\
%    =&P(x),
%\end{align}
%which shows (\ref{K3}).
\end{proof}

\subsection{Approximation on Spheres}
\label{subsec:sphapprox}

Methods of approximating functions on $\mathbb{S}^q$ have been studied in, for example, \cite{mhaskarsphere,rustamovsphere} and some details are summarized in Proposition~\ref{prop:sphapprox}. 

For a compact set $A$, let $C(A)$ denote the space of continuous functions on $A$, equipped with the supremum norm $\norm{f}_{A}=\max_{x\in A}|f(x)|$. 
We define the degree of approximation for a function $f\in C(\mathbb{S}^q)$ to be
\begin{equation}\label{eq:degapproxmanifold}
E_n(f)\coloneqq\inf_{P\in \Pi_n^q} \norm{f-P}_{\mathbb{S}^q}.
\end{equation}
Let $W_\gamma(\mathbb{S}^q)$ be the class of all $f\in C(\mathbb{S}^q)$ such that
\begin{equation}\label{eq:sphsoboldef}
||f||_{W_\gamma(\mathbb{S}^q)}\coloneqq ||f||_{\mathbb{S}^q}+\sup_{n\geq 0}2^{n\gamma}E_{2^n}(f)<\infty.
\end{equation}
We note that an alternative smoothness characterized in terms of constructive properties of $f$ is explored by many authors; some examples are given in  \cite{daiapprox}. 
We define the approximation operator for $\mathbb{S}^q$ by
\be\label{eq:sphapproxop}
\sigma_n(f)(x)\coloneqq\sigma_n(\mathbb{S}^q,f)(x)= \int_{\mathbb{S}^q}\Phi_{n,q}(x\cdot u)f(u)d\mu^*_q(u).
\ee
With this setup, we now review some bounds on how well $\sigma_n(f)$ approximates $f$.
\begin{proposition}[{\cite[Proposition 4.1]{mhaskarsphere}}]\label{prop:sphapprox} Let $n\geq 1$.\newline
{\rm (a)} For  all $P\in \Pi_{n/2}^q$, we have $\sigma_n(P)=P$.\\
{\rm (b)}
For any $f\in C(\mathbb{S}^q)$, we have
\begin{equation}\label{eq:sphgoodapprox}
    E_n(f)\leq \norm{f-\sigma_n(f)}_{\mathbb{S}^q}\lesssim E_{n/2}(f).
\end{equation}
In particular, if $\gamma>0$ then $f\in W_\gamma(\mathbb{S}^q)$ if and only if
\be
\norm{f-\sigma_n(f)}_{\mathbb{S}^q}\ls \norm{f}_{W_\gamma(\mathbb{S}^q)}n^{-\gamma}.
\ee
\end{proposition}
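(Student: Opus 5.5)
The plan follows the standard route for summability operators with a filter $h$ that equals $1$ on $[0,1/2]$. The three ingredients are the reproducing property, a uniform bound on the Lebesgue constant, and a comparison with the best approximation.

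For part (a), expand $P\in\Pi_{n/2}^q$ as $P=\sum_{\ell<n/2}\sum_k a_{\ell,k}Y_{\ell,k}$. Substituting \eqref{eq:kernel} and \eqref{eq:reproduce} into \eqref{eq:sphapproxop}, orthonormality of the $Y_{\ell,k}$ gives
\[
\sigma_n(P)=\sum_{\ell}h(\ell/n)\sum_k a_{\ell,k}Y_{\ell,k}.
\]
Since $\ell<n/2$ forces $h(\ell/n)=1$, we get $\sigma_n(P)=P$.

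For the lower bound in part (b), note that $\sigma_n(f)$ is a polynomial of degree at most $n$, because $h(\ell/n)=0$ once $\ell\ge n$. Hence it lies in $\Pi_n^q$ up to the harmless boundary index, and $E_n(f)\le\|f-\sigma_n(f)\|$ is immediate. If the index convention is off by one, I would absorb it by noting $h(1)=0$.

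The main step is the uniform bound on the Lebesgue constant,
\[
\sup_x\int_{\mathbb{S}^q}|\Phi_{n,q}(x\cdot u)|\,d\mu_q^*(u)\lesssim 1.
\]
I would derive it from the localization bound \eqref{eq:sphkernloc} with some $S>q$. Using \eqref{eq:recursivemeasure}, the integral reduces to $\int_0^\pi n^q\max(1,n\theta)^{-S}\sin^{q-1}\theta\,d\theta$. On $\theta\le 1/n$ this is $\lesssim n^q\cdot n^{-q}$. On $\theta>1/n$ it is $\lesssim n^{q-S}\int_{1/n}^\pi\theta^{q-1-S}\,d\theta\lesssim 1$. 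This yields $\|\sigma_n(f)\|\lesssim\|f\|$.

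Now take $P^*\in\Pi_{n/2}^q$ attaining $E_{n/2}(f)$; it exists by compactness in a finite-dimensional space. By part (a),
\[
\|f-\sigma_n(f)\|=\|(f-P^*)-\sigma_n(f-P^*)\|\lesssim\|f-P^*\|=E_{n/2}(f).
\]

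For the characterization of $W_\gamma(\mathbb{S}^q)$, first suppose $f\in W_\gamma$. Pick $m$ with $2^m\le n/2<2^{m+1}$. Then $E_{n/2}(f)\le E_{2^m}(f)\le 2^{-m\gamma}\|f\|_{W_\gamma}\lesssim n^{-\gamma}\|f\|_{W_\gamma}$, and for small $n$ one uses $\|f-\sigma_n f\|\lesssim\|f\|$. Conversely, if $\|f-\sigma_n(f)\|\lesssim n^{-\gamma}$, the lower bound gives $E_{2^m}(f)\le\|f-\sigma_{2^m}(f)\|\lesssim 2^{-m\gamma}$, so $\sup_m 2^{m\gamma}E_{2^m}(f)<\infty$.

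The only real obstacle is the localization estimate, which is already given in Proposition~\ref{prop:kernelprop}. What remains is to carry out the radial integration correctly.
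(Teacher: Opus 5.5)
Your proposal is correct and is the standard argument behind the cited result. It uses reproduction of $\Pi_{n/2}^q$ because $h\equiv 1$ on $[0,1/2]$, then a uniform bound on $\int_{\mathbb{S}^q}|\Phi_{n,q}(x\cdot u)|\,d\mu_q^*(u)$ from the localization estimate \eqref{eq:sphkernloc} with $S>q$, and finally the comparison with a best approximation from $\Pi_{n/2}^q$. The paper does not reprove this proposition (it only cites \cite{mhaskarsphere}), but your steps match its proofs of the manifold and equator analogues (Lemma~\ref{lemma:manifold_lebesgue_number}, where the Lebesgue constant is bounded via dyadic annuli rather than your polar integration, and Theorem~\ref{theo:subthm}).
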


\begin{rem}\label{rem:sphapprox}
Part (a) is known as a \textit{reproduction} property, which shows that polynomials up to degree $<n/2$ are unchanged when passed through the  operator $\sigma_n$. Part (b) demonstrates that $\sigma_n$ yields what we term a \textit{good approximation}, where its approximation error is no more than some constant multiple of the degree of approximation. Part (c)  not only gives the approximation bounds in terms of the smoothness parameter $\gamma$, but shows also that the rate of decrease of the approximation error obtained by $\sigma_n(f)$ \textbf{determines} the smoothness $\gamma$.\qed
\end{rem}

\subsection{Approximation on Equators}
\label{subsec:eqapprox}

Let $SO(Q+1)$ denote group of all unitary $(Q+1)\times (Q+1)$ matrices with determinant equal to $1$. 
A \emph{$q$-dimensional equator} of $\mathbb{S}^Q$ is a set of the form $\mathbb{Y}=\{\mathcal{R}u:u\in\mathbb{S}^q\}$ for some $\mathcal{R}\in SO(Q+1)$.
 The goal in the remainder of this section is to give approximation results for equators. 
 
 Since there exist infinite options for $\mathcal{R}\in \operatorname{SO}(Q+1)$ to generate the set $\mathbb{Y}$, we first give a definition of degree of approximation in terms of spherical polynomials that is invariant to the choice of $\mathcal{R}$.

Fix $\mathbb{Y}$ to be a given $q$-dimensional equator of $\mathbb{S}^Q$ and let $\mathcal{R},\mathcal{S}\in \operatorname{SO}(Q+1)$ mapping $\mathbb{S}^q$ to $\mathbb{Y}$. Observe that if $P\in \Pi_n^q$, then $P(\mathcal{R}^T\mathcal{S}\circ)\in \Pi_n^q$ and vice versa. As a result, the functions $F_\mathcal{R}=f(\mathcal{R}\circ)$ and $F_\mathcal{S}=f(\mathcal{S}\circ)$ defined on $\mathbb{S}^q$ satisfy
\be
E_n(F_\mathcal{R})=E_n(F_\mathcal{S}).
\ee
Since the degree of approximation in this context is invariant to the choice of $\mathcal{R}\in \operatorname{SO}(Q+1)$, we may simply choose any such matrix that maps $\mathbb{S}^q$ to $\mathbb{Y}$, drop the subscript $\mathcal{R}$ from $F_\mathcal{R}$, and define
\be
E_n(\mathbb{Y},f)\coloneqq E_n(F).
\ee
This allows us to define the space $W_\gamma(\mathbb{Y})$ as the class of all $f\in C(\mathbb{Y})$ such that
\begin{equation}\label{eq:spheresmooth}
||f||_{W_\gamma(\mathbb{Y})}\coloneqq ||f||_{\mathbb{Y}}+\sup_{n\geq 0} 2^{n\gamma}E_{2^n}(\mathbb{Y},f)<\infty.
\end{equation}
We can also define the approximation operator on the set $\mathbb{Y}$ as
\begin{equation}\label{eq:sphapproxop2}
\sigma_n(\mathbb{Y},f)(x)\coloneqq \int_\mathbb{Y} \Phi_{n,q}(x\cdot y)f(y)d\mu^*_\mathbb{Y}(y),
\end{equation}
where $\mu^*_\mathbb{Y}(y)$ is the probability volume measure on $\mathbb{Y}$. Let $F_\mathcal{R}\in C(\mathbb{S}^q)$ satisfy $F_\mathcal{R}=f\circ \mathcal{R}$. We observe that
\begin{equation}
\begin{aligned}
\sigma_n(\mathbb{Y},f)(x)=&\int_{\mathbb{S}^q}\Phi_{n,q}(x\cdot \mathcal{R}u)f(\mathcal{R}u)d\mu_q^*(u)\\
=&\int_{\mathbb{S}^q}\Phi_{n,q}(\mathcal{R}^Tx\cdot u)F_\mathcal{R}(u)d\mu^*_q(u)\\
=&\sigma_n(\mathbb{S}^q,F_\mathcal{R})(\mathcal{R}^Tx).
\end{aligned}
\end{equation}

We now give an analogue of Proposition~\ref{prop:sphapprox} for approximation on equators.

\begin{theorem}\label{theo:subthm}
Let $f\in C(\mathbb{Y})$. \\
{\rm (a)} We have
\begin{equation}\label{eq:subthm1}
    E_n(\mathbb{Y},f)\leq\norm{\sigma_n(\mathbb{Y},f)-f}_\mathbb{Y}\lesssim E_{n/2}(\mathbb{Y},f).
\end{equation}
{\rm (b)} If $\gamma>0$, then $f\in W_\gamma(\mathbb{Y})$ if and only if
\begin{equation}\label{eq:subthm2}
\norm{\sigma_n(\mathbb{Y},f)-f}_\mathbb{Y}\lesssim n^{-\gamma}\norm{f}_{W_\gamma(\mathbb{Y})}.
\end{equation}

\end{theorem}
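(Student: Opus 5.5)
The plan is to transport everything to $\mathbb{S}^q$ through a fixed rotation and then apply Proposition~\ref{prop:sphapprox}. Fix $\mathcal{R}\in SO(Q+1)$ with $\mathcal{R}(\mathbb{S}^q)=\mathbb{Y}$ and set $F=F_\mathcal{R}=f\circ\mathcal{R}\in C(\mathbb{S}^q)$. The map $u\mapsto \mathcal{R}u$ is a bijective isometry from $\mathbb{S}^q$ onto $\mathbb{Y}$. Hence $\norm{g}_\mathbb{Y}=\norm{g\circ\mathcal{R}}_{\mathbb{S}^q}$ for every $g\in C(\mathbb{Y})$. By definition, $E_n(\mathbb{Y},f)=E_n(F)$.

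The computation preceding the theorem gives $\sigma_n(\mathbb{Y},f)(x)=\sigma_n(\mathbb{S}^q,F)(\mathcal{R}^Tx)$. It relies on the pushforward of $\mu_q^*$ under $\mathcal{R}$ being $\mu^*_\mathbb{Y}$, which holds because $\mathcal{R}$ is an isometry and both measures are normalized volume measures. Substituting $x=\mathcal{R}u$ with $u\in\mathbb{S}^q$ gives
\[
\sigma_n(\mathbb{Y},f)(\mathcal{R}u)-f(\mathcal{R}u)=\sigma_n(F)(u)-F(u).
\]
Taking suprema over $u\in\mathbb{S}^q$ yields $\norm{\sigma_n(\mathbb{Y},f)-f}_\mathbb{Y}=\norm{\sigma_n(F)-F}_{\mathbb{S}^q}$.

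For part (a), I apply \eqref{eq:sphgoodapprox} to $F$, which gives $E_n(F)\le\norm{F-\sigma_n(F)}_{\mathbb{S}^q}\lesssim E_{n/2}(F)$. Translating each quantity back through the identities above gives \eqref{eq:subthm1}. The left inequality also needs $\sigma_n(\mathbb{Y},f)\circ\mathcal{R}\in\Pi_n^q$. This holds because $u\mapsto\Phi_{n,q}(u\cdot v)$ is a polynomial of degree at most $n$ in $u$, and the weight $h(\ell/n)$ vanishes at $\ell=n$, so the degree is actually $<n$. In any case, this inequality is already part of Proposition~\ref{prop:sphapprox}.

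For part (b), note that $\norm{f}_{W_\gamma(\mathbb{Y})}=\norm{F}_{W_\gamma(\mathbb{S}^q)}$, since both the sup-norm and the degrees of approximation coincide. So $f\in W_\gamma(\mathbb{Y})$ iff $F\in W_\gamma(\mathbb{S}^q)$. By the last statement of Proposition~\ref{prop:sphapprox}, this holds iff $\norm{F-\sigma_n(F)}_{\mathbb{S}^q}\lesssim\norm{F}_{W_\gamma(\mathbb{S}^q)}n^{-\gamma}$, and that is \eqref{eq:subthm2} after translation.

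I do not expect a real obstacle. The only points to check carefully are the measure-invariance in the change of variables and that $E_n(\mathbb{Y},f)$ does not depend on the choice of $\mathcal{R}$, which the paper has already verified. If one instead wanted to prove the converse direction of (b) directly, the route would be the standard one: from (a), $E_{2^n}(F)\le\norm{\sigma_{2^n}(F)-F}$, which gives the $W_\gamma$ bound immediately.
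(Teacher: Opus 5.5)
Your proposal is correct and follows essentially the paper's argument: transport to $\mathbb{S}^q$ via $F=f\circ\mathcal{R}$, use $\sigma_n(\mathbb{Y},f)(x)=\sigma_n(\mathbb{S}^q,F)(\mathcal{R}^Tx)$ to identify the uniform errors and the degrees of approximation, and invoke Proposition~\ref{prop:sphapprox}, with (b) following from (a) and the definitions. Your extra checks spell out two facts the paper uses without comment: that the pushforward of $\mu_q^*$ under $\mathcal{R}$ is $\mu^*_{\mathbb{Y}}$, and that $h(1)=0$, so $\sigma_n(\mathbb{S}^q,F)$ has degree $<n$ and lies in $\Pi_n^q$.
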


\begin{proof} Let $F(\circ)=f(\mathcal{R}\circ)$ for some $\mathcal{R}\in \operatorname{SO}(Q+1)$ with $\mathbb{Y}=\{\mathcal{R}u:u\in\mathbb{S}^q\}$. To see \eqref{eq:subthm1}, we check using Proposition~\ref{prop:sphapprox} that
\begin{equation}
\norm{\sigma_n(\mathbb{Y},f)-f}_\mathbb{Y}=\norm{\sigma_n(\mathbb{S}^q,F)(\mathcal{R}^T\circ)-F(\mathcal{R}^T\circ)}_\mathbb{Y}=\norm{\sigma_n(\mathbb{S}^q,F)-F}_{\mathbb{S}^q}\lesssim E_{n/2}(F)=E_{n/2}(\mathbb{Y},f).
\end{equation}
Additionally, $E_n(\mathbb{Y},f)\leq \norm{\sigma_n(\mathbb{Y},f)-f}_\mathbb{Y}$ since $\sigma_n(\mathbb{Y},f)=\sigma_n(\mathbb{S}^q,F)(\mathcal{R}^Tx)\in \Pi_n^q$.
Part (b) can be seen from part (a) and the definitions.
\end{proof}

\section{Function Approximation on Manifolds}
\label{sec:manifoldapprox}

In this section, we develop the notion of \textit{smoothness} for the target function defined on a manifold, and state our main theorem: Theorem~\ref{theo:manifoldapproxmainthm}. For a brief introduction to manifolds and some results we will be using in this chapter, see Appendix~\ref{sec:manifoldintro}.

Let $Q\geq q\geq 1$ be integers and $\mathbb{X}$ be a $q$-dimensional, compact, connected, submanifold of $\mathbb{S}^Q$ without boundary. 
Let $\rho$ denote the geodesic distance and $\mu^*$ be the normalized volume measure (that is, $\mu^*(\mathbb{X})=1$). 
For any $x\in \mathbb{X}$, observe that the tangent space $\mathbb{T}_x(\mathbb{X})$ is a $q$-dimensional vector space tangent to $\mathbb{S}^Q$. We define $\mathbb{S}_x=\mathbb{S}_x(\mathbb{X})$ to be the $q$-dimensional equator of $\mathbb{S}^Q$ passing through $x$ whose own tangent space at $x$ is also $\mathbb{T}_x(\mathbb{X})$. 
 As an important note, $\mathbb{S}_x$ is also a $q$-dimensional compact manifold.

In this chapter we will consider many spaces, and need to define balls on each of these spaces, which we list in Table~\ref{tab:balldef} below.

\begin{table}[ht]

\begin{center}
\begin{tabular}{|c|c|c|}
\hline
Space & Description & Definition\\
\hline
Ambient space & Euclidean ball & $B_{Q+1}(x,r)=\{y\in \mathbb{R}^{Q+1}:\norm{x-y}_2\leq r\}$\\
\hline Ambient sphere & Spherical cap & $S^Q(x,r)=\{y\in \mathbb{S}^Q:\arccos(x\cdot y) \le r\}$\\
\hline
Tangent space & Tangent ball & $B_{\mathbb{T}}(x,r)=\{y\in \mathbb{T}_x(\mathbb{X}): ||x-y||_{2}\le r\}$\\
\hline
Tangent sphere & Tangent cap & $\mathbb{S}_x(r)=\{y\in \mathbb{S}_x:\arccos(x\cdot y)\le r\}$ \\
\hline
Manifold & Geodesic ball & $\mathbb{B}(x,r)=\{y\in \mathbb{X}:\rho(x,y)\le  r\}$\\
\hline
\end{tabular}\end{center}
\caption{Definition and description of balls in different spaces.}
\label{tab:balldef}
\end{table}

We also need to define the smoothness classes we will be considering for functions on $\mathbb{X}$. 
Let $C(\mathbb{X})$ denote the space of all continuous functions on $\mathbb{X}$, and $C^\infty(\mathbb{X})\subset C(\mathbb{X})$ denote the space of all infinitely differentiable functions on $\mathbb{X}$. 
Let $\overline{\varepsilon}_x$ be the exponential map at $x$ for $\mathbb{S}_x$ and $\varepsilon_x$ be the exponential map at $x$ for $\mathbb{X}$.
Since both $\mathbb{X}$ and $\mathbb{S}_x$ are compact, we have some $\iota_1,\iota_2$ such that $\varepsilon_x,\overline{\varepsilon}_x$ are defined on $B_{\mathbb{T}}(x,\iota_1),B_{\mathbb{T}}(x,\iota_2)$ respectively for any $x$. 
We write $\iota^*=\min\{1, \iota_1,\iota_2\}$ and define $\eta_x :\mathbb{S}_x(\iota^*)\to \XX$ by 
 $\eta_x: \varepsilon_x \circ\overline{\varepsilon}_x^{-1}$.
 Thus,
\be\label{eq:rho_sphdist}
\rho(x,\eta_x(y))=\arccos(x\cdot y), \qquad x\in \XX, \ y\in \mathbb{S}_x(\iota^*).
\ee

\begin{definition}\label{def:manifold_smoothness}
We say that $f\in C(\mathbb{X})$ is \textbf{$\gamma$-smooth} for some $\gamma>0$, or also that $f\in W_\gamma(\mathbb{X})$, if for every $x\in \mathbb{X}$ and $\phi\in C^\infty(\mathbb{X})$ supported on $\mathbb{B}(x,\iota^*)$, the function $F_{x,\phi}:\mathbb{S}_x\to \mathbb{R}$ defined by $F_{x,\phi}\coloneqq f(\eta_x(u))\phi(\eta_x(u))$ belongs to $W_\gamma(\mathbb{S}_x)$ as outlined in Section~\ref{subsec:sphapprox} (in particular, Equation~\eqref{eq:spheresmooth}). 
We also define
\be\label{eq:manifold_sobnorm}
\norm{f}_{W_\gamma(\mathbb{X})}\coloneqq \sup_{x\in\mathbb{X},\norm{\phi}_{W_\gamma(\mathbb{S}_x)}\leq 1} \norm{F_{x,\phi}}_{W_\gamma(\mathbb{S}_x)}.
\ee
\end{definition}

Our main theorem, describing the approximation of $ff_0$ (the target function weighted by the density of data points) by the operator defined in \eqref{eq:proto_const}, is the following.
We note that approximation of $ff_0$ includes local approximation on $\XX$ in the sense that when the training data is sampled only from a subset of $\XX$, this fact can be encoded by $f_0$ being supported on this subset.

\begin{theorem}\label{theo:manifoldapproxmainthm}
We assume that
\begin{equation}\label{eq:ballmeasure}
    \sup_{x\in \mathbb{X},r>0}\frac{\mu^*(\mathbb{B}(x,r))}{r^q}\ls 1.
\end{equation}
Let $\mathcal{D}=\{(y_j,z_j)\}_{j=1}^M$ be a random sample  from a joint distribution $\tau$. 
We assume that the  marginal distribution of $\tau$ restricted to $\mathbb{X}$ is absolutely continuous with respect to $\mu^*$ with density $f_0$, and that the random variable $z$ has a bounded range. We say $z\in [-\norm{z},\norm{z}]$.
Let
\begin{equation}\label{eq:fdef}
    f(y)\coloneqq \mathbb{E}_\tau(z|y),
\end{equation}
and
\begin{equation}\label{eq:approximation}
    F_{n}(\mathcal{D};x)\coloneqq \frac{1}{M}\sum_{j=1}^M z_j\Phi_{n,q}(x\cdot y_j),\qquad x\in \mathbb{S}^Q,
\end{equation}
where $\Phi_{n,q}$ is defined in \eqref{eq:kernel}.

Let $0<\gamma<2$ and $ff_0\in W_\gamma(\mathbb{X})$.
 Then for every $n\geq 1$, $0<\delta<1/2$ and 
 \be\label{eq:Mncond}
  M\gtrsim n^{q+2\gamma}\log(n/\delta),
  \ee
   we have with $\tau$-probability (i.e., probability over the distribution $\tau$) $\geq 1-\delta$:
\begin{equation}\label{eq:approxest}
    \norm{F_n(\mathcal{D};\circ)-ff_0}_\mathbb{X}\lesssim \frac{\sqrt{\norm{f_0}_{\mathbb{X}}}\norm{z}+\norm{ff_0}_{W_\gamma(\mathbb{X})}}{n^\gamma}.
\end{equation}
Equivalently, for integer $M\ge 2$ and $n$ satisfying \eqref{eq:Mncond}, we have with $\tau$-probability $\geq 1-\delta$:
\be\label{eq:sample_approxest}
 \norm{F_n(\mathcal{D};\circ)-ff_0}_\mathbb{X}\lesssim \left\{\sqrt{\norm{f_0}_{\mathbb{X}}}\norm{z}+\norm{ff_0}_{W_\gamma(\mathbb{X})}\right\}\left(\frac{\log(M/\delta^{q+2\gamma})}{M}\right)^{\gamma/(q+2\gamma)}.
\ee
\end{theorem}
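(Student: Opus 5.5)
The proof splits into a bias term and a variance term,
\[
\|F_n(\mathcal{D};\circ)-ff_0\|_{\XX}\le \|F_n(\mathcal{D};\circ)-\sigma_n(\XX,ff_0)\|_{\XX}+\|\sigma_n(\XX,ff_0)-ff_0\|_{\XX},
\]
where $\sigma_n(\XX,\cdot)$ is the integral reconstruction operator \eqref{eq:manifold_summabilityop}. By \eqref{eq:expect_vs_volumemeasure}, $\mathbb{E}_\tau F_n(\mathcal{D};x)=\sigma_n(\XX,ff_0)(x)$. So the first term is a pure concentration estimate and the second is a deterministic approximation estimate on the unknown manifold.

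\textbf{Deterministic term.} I will prove a key lemma: for $g\in W_\gamma(\XX)$ with $0<\gamma<2$, $\|\sigma_n(\XX,g)-g\|_{\XX}\ls n^{-\gamma}\|g\|_{W_\gamma(\XX)}$.

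1. Fix $x\in\XX$ and a smooth partition-of-unity function $\phi$ with $\phi\equiv1$ near $x$ and support in $\mathbb{B}(x,\iota^*)$. Write $g=g\phi+g(1-\phi)$.
2. On the support of $1-\phi$, $\rho(x,y)\gtrsim 1$. The localization bound \eqref{eq:sphkernloc} with large $S$, together with \eqref{eq:ballmeasure}, makes that part of the integral $\ls n^{q-S}\|g\|$, which is negligible.
3. For the $g\phi$ part, change variables via $\eta_x$ to the tangent equator $\mathbb{S}_x(\iota^*)$. We have $\rho(x,\eta_x(u))=\arccos(x\cdot u)$, and the Jacobian of $\eta_x$ relative to $\mu^*_{\mathbb{S}_x}$ is $c(1+O(\arccos(x\cdot u)^2))$ after normalizing constants. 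Since $\eta_x(u)\in\mathbb{S}^Q$, the chordal quantities satisfy $|\arccos(x\cdot\eta_x(u))-\arccos(x\cdot u)|=O(\arccos(x\cdot u)^3)$.
4. Apply the Bernstein/Lipschitz estimate \eqref{eq:sph_bernstein} and the localization bound to compare $\int_{\XX}\Phi_{n,q}(x\cdot y)(g\phi)(y)\,d\mu^*(y)$ with $\sigma_n(\mathbb{S}_x,F_{x,\phi})(x)$. The error is bounded by
\[
\int \min\bigl(n^{q+1}t^3,\,n^q(nt)^{-S}\bigr)\,t^{q-1}\,dt+\int n^q(nt)^{-S}t^2\,t^{q-1}\,dt\ls n^{-2}.
\]
5. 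Then Theorem~\ref{theo:subthm} gives $|\sigma_n(\mathbb{S}_x,F_{x,\phi})(x)-F_{x,\phi}(x)|\ls n^{-\gamma}\|g\|_{W_\gamma(\XX)}$, and $F_{x,\phi}(x)=g(x)$.

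The restriction $\gamma<2$ is what lets the geometric $O(n^{-2})$ error be absorbed into $n^{-\gamma}$. The normalization constant relating $\mu^*$ and $\mu^*_{\mathbb{S}_x}$ (a ratio of total volumes) must be tracked, and I would fold it into the density convention. I expect this step to be the main obstacle: the delicate part is the Jacobian/distance comparison for $\eta_x$, uniform in $x$, which I would take from standard normal-coordinate expansions on compact manifolds (Appendix).

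\textbf{Probabilistic term.}
1. For fixed $x$, the summands $z_j\Phi_{n,q}(x\cdot y_j)$ are bounded by $\|z\|n^q$.
2. Their variance is at most $\|z\|^2\int\Phi_{n,q}^2(x\cdot y)f_0(y)\,d\mu^*(y)\ls\|z\|^2\|f_0\|_{\XX}n^q$, using localization and \eqref{eq:ballmeasure}.
3. Bernstein's inequality then gives a deviation $\ls\|z\|\sqrt{\|f_0\|_{\XX}}\sqrt{n^q\log(n/\delta)/M}$ with high probability.
4. To get uniformity in $x$, take an $n^{-2}$-net of $\XX$ (or of the relevant subset of $\mathbb{S}^Q$) of cardinality $\ls n^{c}$ and use a union bound. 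Pass from the net to all of $\XX$ via the Lipschitz bound \eqref{eq:sph_bernstein}, since $F_n$ and $\sigma_n$ are polynomials of degree $\le n$; this costs $\ls n^{q+1}n^{-2}$ times $\|z\|$, or a finer net can be used. The $\log$ of the cardinality is $\ls\log(n/\delta)$.
5. Under \eqref{eq:Mncond}, the deviation is $\ls n^{-\gamma}\sqrt{\|f_0\|_{\XX}}\|z\|$. The $\|z\|n^q\log/M$ term of Bernstein's inequality is also dominated in this regime.

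\textbf{Conclusion.} Combining the two estimates gives \eqref{eq:approxest}. Choosing $n\sim(M/\log(M/\delta^{q+2\gamma}))^{1/(q+2\gamma)}$, which satisfies \eqref{eq:Mncond}, yields \eqref{eq:sample_approxest}.
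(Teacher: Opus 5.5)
Your proposal follows the paper's proof essentially step for step: the split through $\sigma_n(\XX,ff_0)$, the localization away from $x$, the lifting to the tangent equator $\SS_x$ via $\eta_x$ with the cubic distance estimate and the $1+O(t^2)$ volume comparison, and Bernstein's inequality with a union bound over a net as in Theorem~\ref{theo:theorem2}; your integrated error profile plays the role of the paper's cutoff $\delta=n^{-(\gamma+q+1)/(q+3)}$ in Lemma~\ref{lemma:manifold_keylemma}, and with the crude $\min$ bound it gives $n^{-2+2(q+3)/(S+3)}$ rather than $n^{-2}$, which still suffices for $\gamma<2$ once $S$ is large.

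Two touch-ups are needed. First, an $n^{-2}$-net leaves an error of order $n^{q-1}\|z\|$, so take mesh about $n^{-(q+1+\gamma)}$ (still polynomial cardinality, since both $F_n$ and $\sigma_n(\XX,ff_0)=\int z\,\Phi_{n,q}(\circ\cdot y)\,d\tau$ are $\|z\|n^{q+1}$-Lipschitz), or, as in Lemma~\ref{lemma:supnorm_concentration}, apply the polynomial Bernstein inequality to $F_n(\mathcal{D};\circ)-\sigma_n(\XX,ff_0)$ on $\SS^Q$ so that a $1/n$-net loses only a factor $2$. Second, your remark that the ratio of total volumes must be folded into the density convention is necessary, not just bookkeeping: \eqref{eq:changevars} silently drops the factor $\omega_q/\mathrm{vol}(\XX)$ between the probability measures $\mu^*$ and $\mu^*_{\SS_x}$, so \eqref{eq:approxest} holds as stated only if $\mu^*$ is normalized as $\mathrm{vol}_{\XX}/\omega_q$; otherwise $F_n$ approximates $\omega_q ff_0/\mathrm{vol}(\XX)$.
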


We discuss two corollaries of this theorem, which demonstrate how the theorem can be used for both estimation of the probability density $f_0$ and the approximation of the function $f$ in the case when the training data is sampled from the volume measure on $\XX$.

The first corollary is a result on function approximation in the case when the marginal distribution of $y$ is $\mu^*$; i.e., $f_0\equiv 1$.

\begin{corollary}\label{cor:maincor}
Assume the setup of Theorem~\ref{theo:manifoldapproxmainthm}. Suppose also that the marginal distribution of $\tau$ restricted to $\mathbb{X}$ is uniform. Then we have with $\tau$-probability $\geq 1-\delta$:
\begin{equation}\label{eq:fnapprox}
    \norm{F_n(\mathcal{D};\circ)-f}_\mathbb{X}\lesssim \frac{\norm{z}+\norm{f}_{W_\gamma(\mathbb{X})}}{n^\gamma}.
\end{equation}
\end{corollary}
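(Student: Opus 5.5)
This is a direct specialization of Theorem~\ref{theo:manifoldapproxmainthm}, and I would prove it by plugging $f_0\equiv 1$ into that theorem. If the marginal distribution of $\tau$ on $\mathbb{X}$ is the uniform (normalized volume) measure $\mu^*$, its density with respect to $\mu^*$ is $f_0\equiv 1$. Hence $ff_0=f$ and $\norm{f_0}_{\mathbb{X}}=1$. All other hypotheses of Theorem~\ref{theo:manifoldapproxmainthm} are assumed in the setup:
\begin{itemize}
\item the ball-measure condition \eqref{eq:ballmeasure};
\item boundedness of $z$;
\item $0<\gamma<2$;
\item $M\gtrsim n^{q+2\gamma}\log(n/\delta)$.
\end{itemize}
The only hypothesis needing a remark is $ff_0\in W_\gamma(\mathbb{X})$, which here reads $f\in W_\gamma(\mathbb{X})$. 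I interpret the phrase ``the setup of Theorem~\ref{theo:manifoldapproxmainthm}'' as imposing exactly this, with $f_0=1$.

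Applying \eqref{eq:approxest} then gives, with $\tau$-probability at least $1-\delta$,
$$
\norm{F_n(\mathcal{D};\circ)-f}_\mathbb{X}\lesssim \frac{\sqrt{1}\,\norm{z}+\norm{f}_{W_\gamma(\mathbb{X})}}{n^\gamma},
$$
which is \eqref{eq:fnapprox}.

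The only point to check is that the density really equals the constant $1$ rather than some other constant. This holds because $\mu^*$ is normalized with $\mu^*(\mathbb{X})=1$. So "uniform" means $d\nu=d\mu^*$, and the density is identically $1$.

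There is no real obstacle here; all the work is contained in Theorem~\ref{theo:manifoldapproxmainthm}. If desired, one can also record the sample-size version. It follows in the same way from \eqref{eq:sample_approxest} with $f_0\equiv1$.
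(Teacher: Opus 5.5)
Your proposal is correct and is exactly the paper's argument: the corollary follows by setting $f_0\equiv 1$ in Theorem~\ref{theo:manifoldapproxmainthm}, so that $ff_0=f$ and $\sqrt{\norm{f_0}_{\mathbb{X}}}=1$ in \eqref{eq:approxest}. Your remark that the normalization $\mu^*(\mathbb{X})=1$ makes the uniform density identically $1$ is the only point to verify, and you handled it.
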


The second corollary is obtained by setting $f\equiv 1$, to point out that our theorem gives a method for density estimation. 
In practice, one may not have knowledge of $f_0$ (or even the manifold $\mathbb{X}$). So, the following corollary can be applied to estimate this critical quantity. We use this fact in our numerical examples in Section~\ref{sec:numericalmanifold}.

Typically, a positive kernel is used for the problem of density estimation in order to ensure that the approximation is also a positive measure.
It is well known in approximation theory that this results in a saturation for the rate of convergence.
Our method does not use positive kernels, and does not suffer from such saturation.

\begin{corollary}\label{cor:densityest}
Assume the setup of Theorem~\ref{theo:manifoldapproxmainthm}. Then
we have with $\tau$-probability $\geq 1-\delta$:
\be\label{eq:densityest}
\norm{\ \ \left|\frac{1}{M}\sum_{j=1}^M\Phi_{n,q}(\circ\cdot y_j)\right|-f_0}_\mathbb{X}\lesssim \frac{\norm{f_0}_{W_\gamma(\mathbb{X})}}{n^\gamma}.
\ee
\end{corollary}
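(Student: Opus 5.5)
The plan is to derive Corollary~\ref{cor:densityest} directly from Theorem~\ref{theo:manifoldapproxmainthm} by choosing the degenerate response variable $z\equiv 1$. In that case the conditional expectation in \eqref{eq:fdef} is $f(y)=\mathbb{E}_\tau(z|y)\equiv 1$. The product $ff_0$ is then just $f_0$, and the estimator \eqref{eq:approximation} becomes
\[
F_n(\mathcal{D};x)=\frac{1}{M}\sum_{j=1}^M\Phi_{n,q}(x\cdot y_j).
\]
The hypotheses of the theorem are easy to check. The range of $z$ is bounded with $\norm{z}=1$, and the smoothness hypothesis $ff_0\in W_\gamma(\mathbb{X})$ reduces to $f_0\in W_\gamma(\mathbb{X})$, which is the assumption implicit in "the setup". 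So under \eqref{eq:Mncond}, with probability at least $1-\delta$,
\[
\norm{F_n(\mathcal{D};\circ)-f_0}_{\mathbb{X}}\lesssim \frac{\sqrt{\norm{f_0}_{\mathbb{X}}}+\norm{f_0}_{W_\gamma(\mathbb{X})}}{n^\gamma}.
\]

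Two remaining steps turn this into \eqref{eq:densityest}.

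\textbf{Absorbing the square root term.} First I would check that $\norm{f_0}_{\mathbb{X}}$ is bounded below by a constant. Since $f_0$ is a probability density with respect to the probability measure $\mu^*$, we have $1=\int_{\mathbb{X}} f_0\,d\mu^*\le\norm{f_0}_{\mathbb{X}}$. Hence $\sqrt{\norm{f_0}_{\mathbb{X}}}\le\norm{f_0}_{\mathbb{X}}$. Next I would show $\norm{f_0}_{\mathbb{X}}\lesssim\norm{f_0}_{W_\gamma(\mathbb{X})}$. This follows from Definition~\ref{def:manifold_smoothness}: take a smooth bump $\phi$ near a point where $|f_0|$ attains its maximum, normalised so that $\norm{\phi}_{W_\gamma(\mathbb{S}_x)}\le 1$ while $\phi$ stays of order one at that point. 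Then the sup-norm part of $\norm{F_{x,\phi}}_{W_\gamma(\mathbb{S}_x)}$ controls $|f_0|$ there, up to a constant depending only on the fixed bump. Combining the two bounds, the right-hand side is $\lesssim \norm{f_0}_{W_\gamma(\mathbb{X})}/n^\gamma$.

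\textbf{Passing to the absolute value.} Since $f_0\ge 0$, the reverse triangle inequality gives, pointwise on $\mathbb{X}$,
\[
\big||F_n|-f_0\big|=\big||F_n|-|f_0|\big|\le|F_n-f_0|.
\]
So taking the absolute value of the estimator can only decrease the error, and \eqref{eq:densityest} follows.

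The only mildly delicate point is the absorption step, namely controlling $\sqrt{\norm{f_0}_{\mathbb{X}}}$ by $\norm{f_0}_{W_\gamma(\mathbb{X})}$. If the bump-function argument turned out awkward, I would simply keep $\norm{f_0}_{\mathbb{X}}$ as an additional term, or allow the constant in $\lesssim$ to depend on $f_0$. Everything else is a direct specialization of Theorem~\ref{theo:manifoldapproxmainthm}.
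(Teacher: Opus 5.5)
Your proposal is correct and matches the paper's proof. The paper also sets $z\equiv 1$ in Theorem~\ref{theo:manifoldapproxmainthm}, so that $f\equiv 1$ and $ff_0=f_0$, and then absorbs $\sqrt{\norm{f_0}_{\mathbb{X}}}$ into $\norm{f_0}_{W_\gamma(\mathbb{X})}$. You spell out the steps the paper leaves implicit: $\norm{f_0}_{\mathbb{X}}\ge 1$, then $\norm{f_0}_{\mathbb{X}}\lesssim\norm{f_0}_{W_\gamma(\mathbb{X})}$, and the reverse triangle inequality $\bigl||F_n|-f_0\bigr|\le|F_n-f_0|$, which uses $f_0\ge 0$.
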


\section{Numerical Examples}
\label{sec:numericalmanifold}

In this section, we illustrate our theory with some numerical experiments.
In Section~\ref{subsec:expiecewise}, we consider the approximation of a piecewise differentiable function, and demonstrate how the localization of the kernel leads to a determination of the locations of the singularities. 
The example in Section~\ref{subsec:spencerdata} is motivated by magnetic resonance relaxomety. 
Since it is relevant to our method for practical applications, we have included some discussion and results about how $q$ effects the approximation in this example. 
The example in Section~\ref{subsec:darcyflow} illustrates how our method can be used for inverse problems in the realm of differential equations.
In all the examples, we will examine how the accuracy of the approximation depends on the maximal  degree $n$ of the polynomial, the number $M$ of samples, and the level of noise.

\subsection{Piecewise Differentiable Function}
\label{subsec:expiecewise}

In this example only we define the function to be approximated as
\be
f(\theta)\coloneqq 1+\abs{\cos\theta}^{1/2}\sin(\cos\theta+\sin\theta)/2,
\ee 
defined on the ellipse
\be
E=\{(3\cos\theta,6\sin\theta): \theta\in (-\pi,\pi]\}.
\ee
We project $E$ to the sphere $\mathbb{S}^2$ using an inverse stereographic projection defined by
\be
\mathbf{P}(\vec{x})=\frac{(\vec{x},1)}{\norm{(\vec{x},1)}_2},
\ee 
and call $\mathbb{X}=\mathbf{P}(E)$. Each $\vec{x}\in\mathbb{X}$ is associated with the value $\theta_\vec{x}$ satisfying $\vec{x}=\mathbf{P}\big((3\cos\theta_\vec{x},6\sin\theta_\vec{x})\big)$, so that $f(\vec{x})\coloneqq f(\theta_\vec{x})$ is a continuous function on $\mathbb{X}$.

We generate our data points by taking $\vec{y}_j=\mathbf{P}\big((3\cos\theta_j,6\sin\theta_j)\big)$, where $\theta_j$ are each sampled uniformly at random from $(-\pi,\pi]$. We then define $z_j=f(\vec{y}_j)+\epsilon_j$ where $\epsilon_j$ are sampled from some mean-zero normal noise. 
Our data set is thus $\mathcal{D}\coloneqq\{(\vec{y}_j,f(\vec{y}_j)+\epsilon_j)\}_{j=1}^M$.
We will measure the magnitude of noise using the signal-to-noise ratio (SNR), defined by 
\be\label{eq:snr}
20\log_{10}\Big(\norm{\big(f(\vec{y}_1),\dots,f(\vec{y}_M)\big)}_2\Big/\norm{\big(\epsilon_1,\dots,\epsilon_M\big)}_2\Big).
\ee  
Since $f_0\neq 1$ in this case, we could calculate $f_0$ from the projection, or we may estimate it using Corollary~\ref{cor:densityest}. That is,
\be
f_0(\vec{x})\approx\frac{1}{M}\sum_{j=1}^M\Phi_{n,1}(\vec{x}\cdot \vec{y}_j).
\ee
This option may be desirable in cases where $f_0$ is not feasible to compute (i.e. if the underlying domain of the data is unknown or irregularly shaped). Our approximation is thus:
\be
F_n(\mathcal{D};\vec{x})= \sum_{j=1}^M (f(\vec{y}_j)+\epsilon_j)\Phi_{n,1}(\vec{x}\cdot \vec{y}_j)\left/\left(\sum_{j=1}^M \Phi_{n,1}(\vec{x}\cdot \vec{y}_j)\right).\right.
\ee

\begin{figure}[!ht]
\centering
    \includegraphics[width=.4\textwidth]{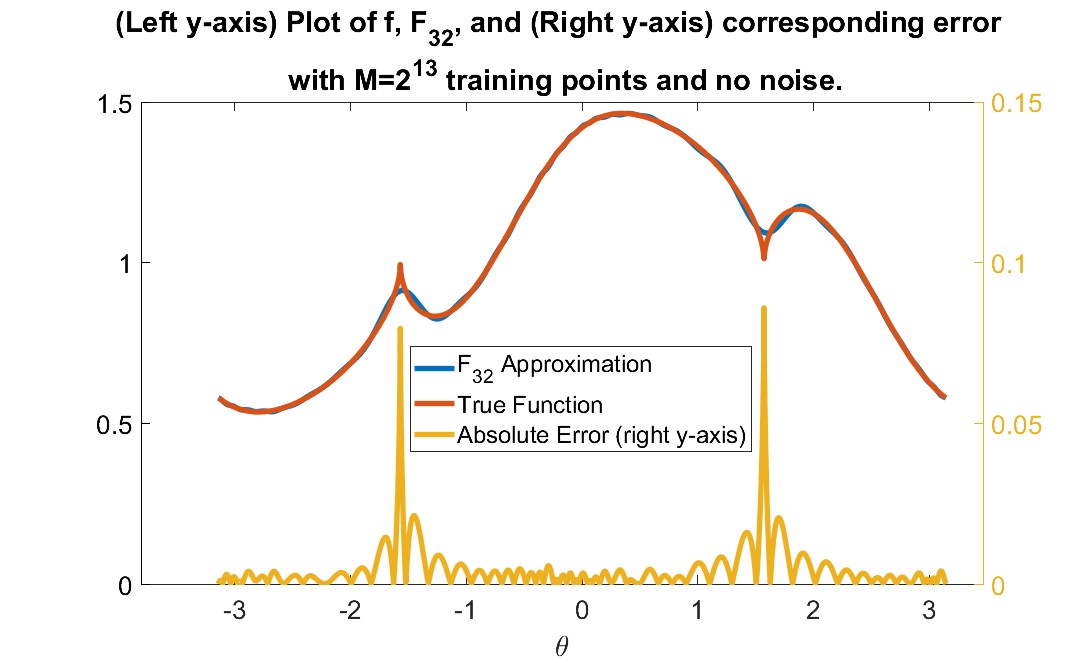}\\
    \caption{Left y-axis: Plot of the true function $f$ compared with $F_{32}$ constructed by $2^{13}$ noiseless training points. Right y-axis: Plot of $\abs{f-F_{32}}$.}
    \label{fig:errorplot}
\end{figure}

\begin{figure*}[!ht]
\centering
\begin{tabular}{ccc}
    \includegraphics[width=.3\textwidth]{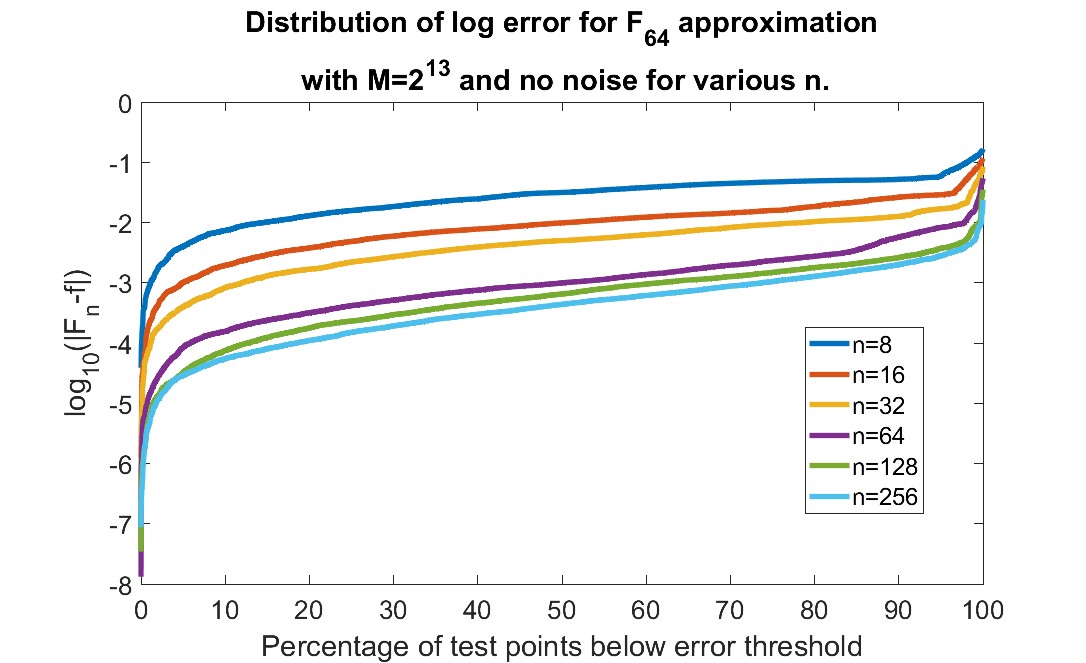}
&
    \includegraphics[width=.3\textwidth]{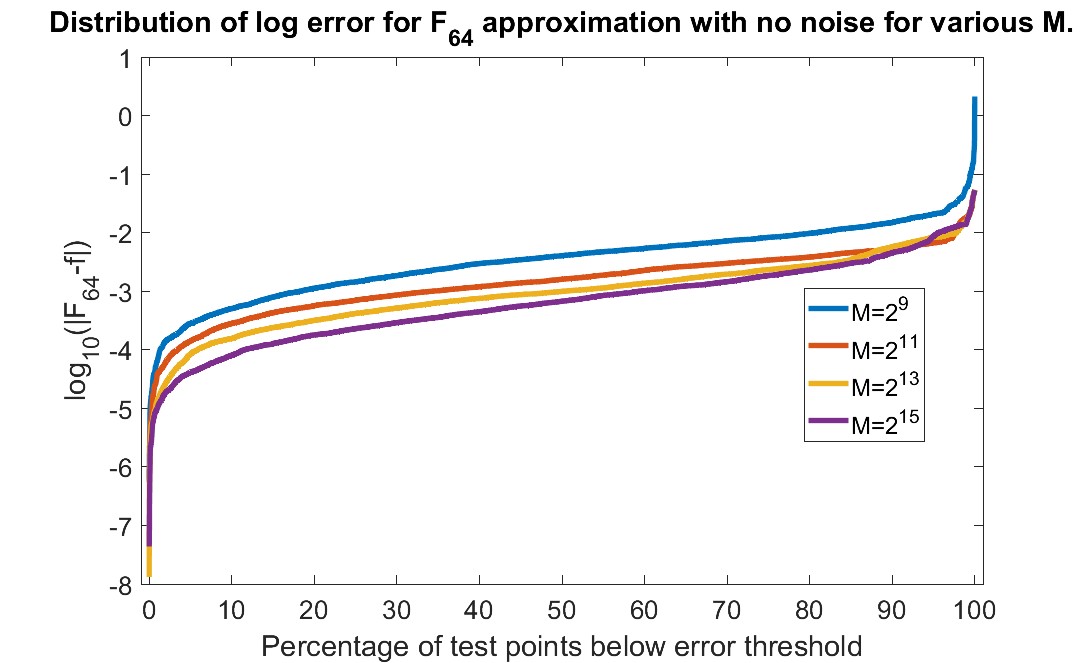}
&
    \includegraphics[width=.3\textwidth]{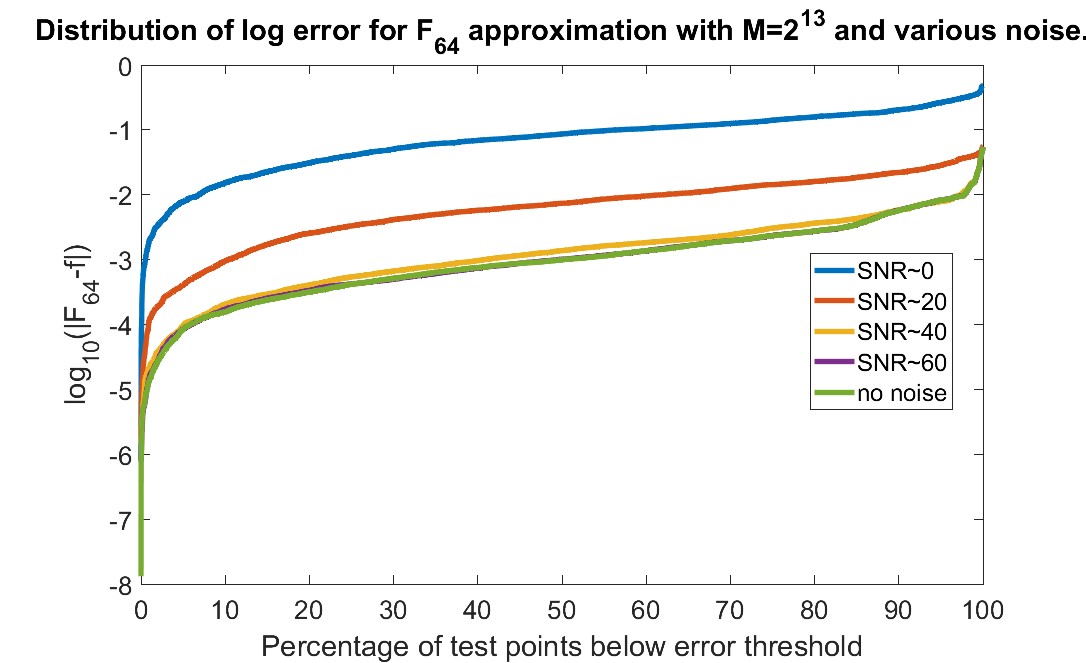}
\end{tabular}
    \caption{(Left) Percent point plot of log absolute error for various $n$ with $M=2^{13}$ training points and no noise. (Center) Percent point plot of log absolute error for various choices of $M$ with no noise. (Right) Percent point plot of log absolute error for various noise levels with $M=2^{13}$ training points.}
        \label{fig:3plot}
\end{figure*}

Figure~\ref{fig:errorplot} shows a plot of the true function and our approximation on the left y-axis and the absolute error on the right y-axis. The plot demonstrates that the approximation achieves much lower error than the uniform error bound at points where the function is relatively smooth, and only spikes locally at the singularities of the function ($\theta=\pm \pi/2$). Figure~\ref{fig:3plot} displays three percent point plots illustrating how the distribution of $\log_{10}|F_n-f|$ behaves for various choices of $n,M,\epsilon$. Each point $(x,y)$ on a curve indicates that $x\%$ of test points were approximated by our method with absolute error below $10^{y}$ for the $n$, $M$, and $\epsilon$ value associated with the curve.
The first graph shows the trend for various $n$ values.  As we increase $n$, we see consistent drop in log error.
The second graph shows various values of $M$. We again see a decrease in the overall log error as $M$ is increased.
The third graph shows how the log error decreases as the noise decreases. We can see that the approximation is much worse for low SNR values, but nearly indistinguishable from the noiseless case when the SNR is above 60.

\subsection{Parameter Estimation in Bi-exponential Sums}
\label{subsec:spencerdata}

This example is motivated by magnetic resonance relaxometry, in which the proton nuclei of water are first
excited with radio frequency pulses and then exhibit an exponentially decaying electromagnetic signal.
When one may assume the presence of two water compartments undergoing slow exchange, with signal corrupted by additive Gaussian noise, the
 signal is modeled typically as a bi-exponential decay function  \eqref{eq:ex2eq} (cf. \cite{spencer-biexponential}):
$$
 F(t)=c_1\exp(-t/T_{2,1})+c_2\exp(-t/T_{2,2})+E(t),
 $$
 where $E$ is the noise, $T_{2,1}, T_{2,2}>0$, and the time $t$ is typically sampled at equal intervals.
The problem is to determine $c_1, c_2, T_{2,1}, T_{2,2}$. 
The problem appears also in many other medical applications, such as intravoxel incoherent motion studies in magnetic resonance.
 An accessible survey of these applications is given in \cite{Istratov1999}.
 
 Writing $t=j\delta$, $\lambda_1=\delta/T_{2,1}$, $\lambda_2=\delta/T_{2,2}$, we may reformulate the data as
\be\label{eq:ex2eq}
f(j)\coloneqq f(\lambda_1,\lambda_2,j)=c_1 e^{-\lambda_1j}+c_2 e^{-\lambda_2j}+\epsilon(j),
\ee 
where $\epsilon(j)$ are samples of mean-zero normal noise.

In this example, suggested by Dr. Spencer at the National Institute of Aging (NIH), we  consider the case where $c_1=.7,c_2=.3$ and use our method to  
 determine the values $\lambda_1,\lambda_2$, given data of the form
\be\label{eq:ex2eq2}
\tilde{\mathbf{y}}(\lambda_1,\lambda_2)\coloneqq(f(1),f(2),\dots,f(100)).
\ee
 We ``train" our approximation process with $M$ samples of $(\lambda_1,\lambda_2)\in [.1,.7]\times [1.1,1.7]$ chosen uniformly at random and then plugging those values into~\eqref{eq:ex2eq} to generate vectors of the form shown in~\eqref{eq:ex2eq2}. 
 The dimension of the input data is $Q=100$, however (in the noiseless case) the data lies on a $q=2$ dimensional manifold, so we will use $\Phi_{n,2}$ to generate our approximations. 
 
 We note that our method is agnostic to the particular model \eqref{eq:ex2eq2} used to generate the data. 
 We treat $\lambda_1, \lambda_2$ as functions of $\tilde{\mathbf{y}}$ without a prior knowledge of this function.
 In the noisy case, this problem does not perfectly fit the theory studied in this chapter since the noise is applied to the input values $f(t)$ meaning we cannot assume they lie directly on an unknown manifold anymore. 
 Nevertheless, we can see some success with our method. 
We define the operators
\be
\mathbf{T}(\tilde{\mathbf{y}})=1000\tilde{\mathbf{y}}-(380,189,116,0,\dots,0),\qquad \mathbf{P}(\circ)=\frac{(\circ, 100)}{\norm{(\circ, 100)}_2}
\ee
and denote $\mathbf{y}=\mathbf{P}(\mathbf{T}(\tilde{\mathbf{y}}))$.
 In practice, the values used to define $\mathbf{T}$ and $\mathbf{P}$   need to be treated as hyperparameters of the model. In this example, we did not conduct a rigorous grid search.
 We use the same density estimation as in Section~\ref{subsec:expiecewise}:
\be
\operatorname{DE}\big(\x(\lambda_1,\lambda_2)\big)=\sum_{j=1}^M\Phi_{n,2}\big(\vec{x}(\lambda_1,\lambda_2)\cdot \mathbf{y}(\lambda_{1,j}, \lambda_{2,j})\big).
\ee
As a result, our approximation process looks like:
\be
\begin{bmatrix}\lambda_1\\
\lambda_2\end{bmatrix}\approx \vec{F}_n\big(\vec{x}(\lambda_1,\lambda_2)\big)=\sum_{j=1}^M\begin{bmatrix}\lambda_{1,j}\\\lambda_{2,j}\end{bmatrix}\Phi_{n,2}\big(\x(\lambda_1,\lambda_2)\cdot \mathbf{y}(\lambda_{1,j}, \lambda_{2,j})\big)\Big/\operatorname{DE}\big(\x(\lambda_1,\lambda_2)\big).
\ee 

Similar to Example~\ref{subsec:expiecewise}, we will include figures showing how the results are effected as $n,M,\epsilon$ are adjusted. We measure noise using the signal-to-noise ratio (SNR) defined by
\be
20\log_{10}\left(\norm{\tilde{\vec{y}}}_2\Big/\norm{\big(\epsilon(1),\dots,\epsilon(100)\big)}_2\right).
\ee
Unlike Example~\ref{subsec:expiecewise}, we will now be considering percent approximation error instead of uniform error as it is more relevant in this problem. We define the \textit{combined error} to be
\be
\sum_{j=1}^2 \frac{\abs{\lambda_{j,\text{true}}-\lambda_{j,\text{approx}}}}{\lambda_{j,\text{true}}}.
\ee

\begin{figure*}[!ht]
\centering
\begin{tabular}{ccc}
    \includegraphics[width=.3\textwidth]{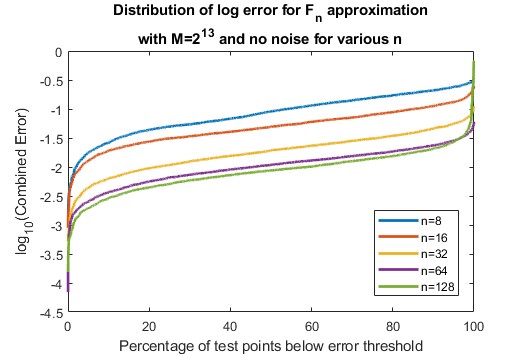}
&
    \includegraphics[width=.3\textwidth]{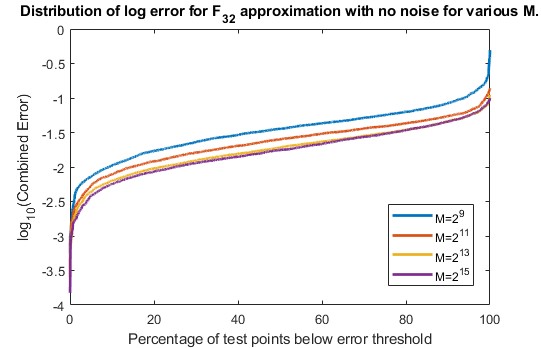}
&
    \includegraphics[width=.3\textwidth]{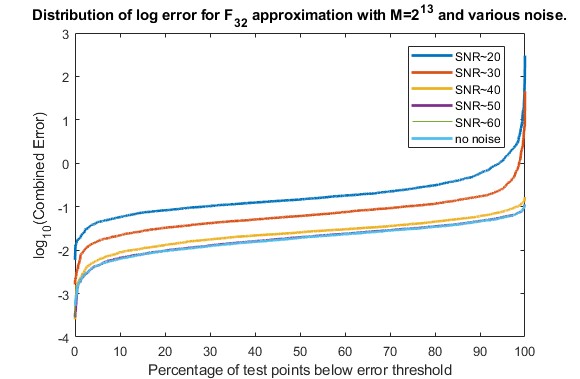}
\end{tabular}
    \caption{(Left) Percent point plot of log combined error for various $n$ with $M=2^{13}$ training points, and no noise. (Center) Percent point plot of log combined error for fixed $n=32$, various choices of $M$, and no noise. (Right) Percent point plot of log combined error for fixed $n=32$, fixed $M=2^{13}$ training points, and various noise levels.}
        \label{fig:ex23plot}
\end{figure*}

\begin{figure*}[!ht]
\centering
    \includegraphics[width=.45\textwidth]{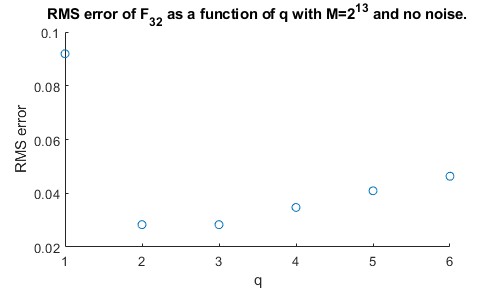}
    \caption{Plot of RMS error for approximation by $F_{32}$ for various $q$ values with $M=2^{13}$ and no noise.}
        \label{fig:ex2q}
\end{figure*}

Figure~\ref{fig:ex23plot} depicts three percent point plots showing the distribution of sorted $\log_{10}$(Combined Error) points for various $n,M,\epsilon$. Each point $(x,y)$ on a curve indicates that $x\%$ of test points were approximated by our method with combined error below $10^{y}$ for the $n$, $M$, and $\epsilon$ associated with the curve.
In the first graph, we see the distribution of for various choices of $n$. 
As $n$ increases, the overall log error decreases. 
An interesting phenomenon occurring in this figure is with the $n=128$ case where the uniform error is actually higher than the $n=64$ case. 
This is likely due to the fact  that overfitting can occur if $n$ gets too large relative to a fixed $M$. 
The second graph illustrates how the approximation improves as $M$ is increased. 
As expected, we see log error decay as we include more and more training points. In the third graph, we see that the approximation improves up to a limit as the noise decreases. 
There is very little noticeable difference between the noiseless case and any case where SNR$>50$.

Another question that may arise when utilizing our method on various data is what value of $q$ to use. While the theory predicts that $q$ should be associated with the intrinsic dimension of the manifold underlying the data, in practice this can only be estimated and so $q$ should be treated as a hyperparameter. In Figure~\ref{fig:ex2q}, we explore how changing $q$ effects the approximation in this example. In this case, the intrinsic dimension is $2$, and when $q=2,3$ the approximation does well. If $q$ is chosen too high or two low, the approximation yields a greater error.

\subsection{Darcy Flow Problem}
\label{subsec:darcyflow}

In this section we will look at a numerical example from the realm of PDE inverse problems. Steady-state Darcy flow is given by the following PDE (see for example,  \cite[Eq. (4.7)]{raonic2024convolutional}):
\be
-\nabla\cdot(a\nabla y)=f,
\ee
defined on a domain $D$ with the property that $y|_{\partial D}=0$. The problem is to predict the \textit{diffusion coefficient} $a$ and \textit{forcing term} $f$ given some noisy samples of $y$ on $D$. In this chapter we consider a 1-dimensional version and suppose that $a=e^{-st}$ and $f=pe^{-st}$ for some $p,s$. We take noisy samples of $y(p,s;\circ)=y$ satisfying the following boundary value problem:
\be\label{eq:ex3difeq}
-\frac{d}{dt}(e^{-st}y'(t))=pe^{-st},\qquad y(1)=0, y(0)=1.
\ee
In this sample, we take a similar approach to that of Example~\ref{subsec:spencerdata} by ``training'' our model with a data set of the form $\{\vec{y}_j,(p_j,s_j)\}_{j=1}^M$, where $(p_j,s_j)\in [.1,.25]\times [1.5,2.5]$ are sampled uniformly at random for each $j$. Letting $y_j$ denote the $y$ satisfying \eqref{eq:ex3difeq} with $p=p_j,s=s_j$, then $\vec{y}_j=\mathbf{P}(y_j(t_1),y_j(t_2),\dots,y_j(t_{100}))$, where $t_1,t_2,\dots,t_{100}$ are sampled uniformly from $[0,1]$ and $\mathbf{P}$ is the projection to the sphere. In this example, the projection first consists of finding the center $C$ and maximum spread over a single feature $r$ of the data. That is,
\be
\ba
C=&\left(\max_{j}y_j(t_1)+\min_j y_j(t_1),\dots,\max_j y_j(t_{100})+\min_j y_j(t_{100})\right)\Big/2,\\ r=&\max_{t_i}\left(\max_{j}y_j(t_1)-\min_j y_j(t_1),\dots,\max_j y_j(t_{100})-\min_j y_j(t_{100})\right).
\ea
\ee
Then, we define
\be
\mathbf{P}(\circ)=\frac{(\circ-C,r)}{\norm{(\circ-C,r)}_2}.
\ee
Our approximation process then looks like:
\be
\begin{bmatrix}p\\
s\end{bmatrix}\approx \vec{F}_n\big(\vec{y}\big)=\sum_{j=1}^M\begin{bmatrix}p_j\\s_j\end{bmatrix}\Phi_{n,2}(\vec{y}\cdot \mathbf{y}_j)\Big/\operatorname{DE}(\vec{y}),
\ee 
where
\be
\operatorname{DE}(\vec{y})=\sum_{j=1}^M \Phi_{n,2}(\vec{y}\cdot \vec{y}_j).
\ee
Also similar to Example~\ref{subsec:spencerdata}, we use the same notion of SNR and evaluate the success of our model using a \textit{combined error}, now defined to be
\be
\left(\abs{\frac{p_{\text{true}}-p_{\text{approx}}}{p_{\text{true}}}}+\abs{\frac{s_{\text{true}}-s_{\text{approx}}}{s_{\text{true}}}}\right).
\ee

\begin{figure*}[!ht]
\centering
\begin{tabular}{ccc}
    \includegraphics[width=.3\textwidth]{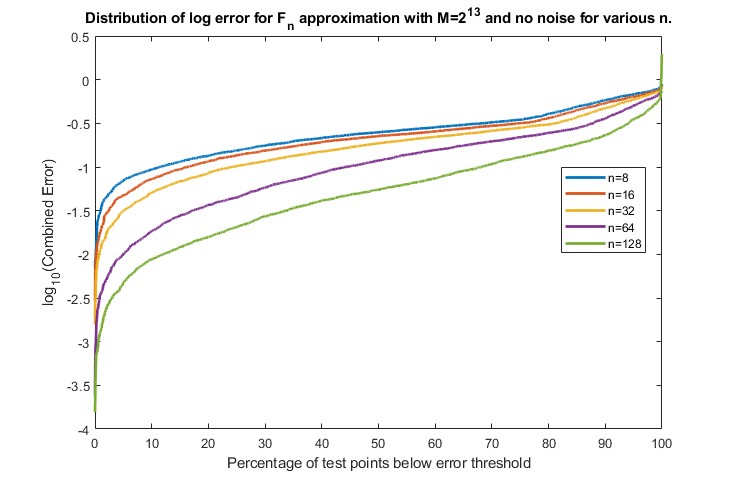}
&
    \includegraphics[width=.3\textwidth]{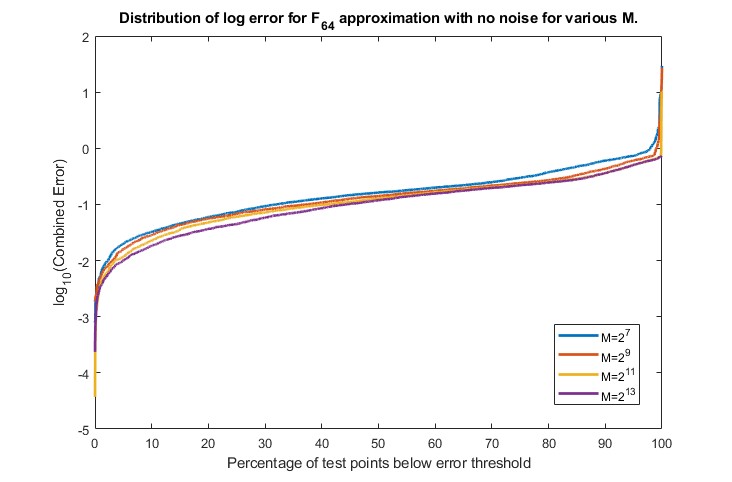}
&
    \includegraphics[width=.3\textwidth]{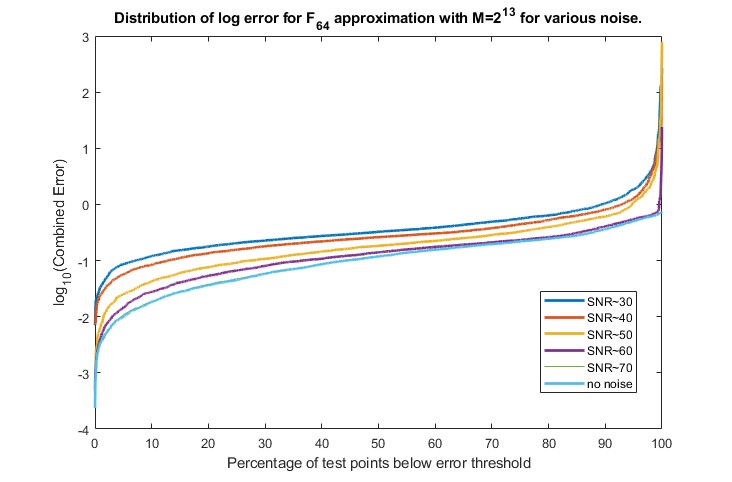}
\end{tabular}
\caption{(Left) Percent point plot of log combined error for various $n$, fixed $M=2^{13}$, and no noise. (Center) Percent point plot of log combined error for fixed $n=64$, various $M$, and no noise. (Right) Percent point plot of log combined error for fixed $n=64$, fixed $M=2^{13}$, and various noise levels.}
        \label{fig:ex3plot}
\end{figure*}

In Figure~\ref{fig:ex3plot}, we provide some percent point plots from using our method on this data. Each point $(x,y)$ on a curve indicates that $x\%$ of test points were approximated by our method with combined error below $10^{y}$ for the $n$, $M$, and $\epsilon$ associated with the curve. We see in the left-most plot that as we increase $n$, the error tends to decrease. In contrast to previous examples, the middle plot does not show much improvement by increasing M. This may be an indication of the fact that we have chosen a tight parameter space in this example (as compared to \ref{subsec:spencerdata}) and not many samples are needed to sufficiently cover the space. On the right-most plot, we see a decrease in error with the decrease of noise as expected, with convergence appearing to occur around the SNR=70 mark, as indicated by the green and light-blue lines being so close together.

\section{Proofs}\label{sec:manifoldapproxproofs}

The purpose of this section is to prove Theorem~\ref{theo:manifoldapproxmainthm}. 

In Section~\ref{bhag:summabilityop}, we study the approximation properties of the integral reconstruction operator  defined in \eqref{eq:manifold_summabilityop} (Theorem~\ref{theo:manifold_summop}).
In Section~\ref{bhag:resultsproof}, we use this theorem with $ff_0$ in place of $f$, and  use the Bernstein concentration inequality (Proposition~\ref{prop:berncon}) to discretize the integral expression in \eqref{eq:manifold_summabilityop} and complete the proof of Theorem~\ref{theo:manifoldapproxmainthm}.

\subsection{Integral Reconstruction Operator}\label{bhag:summabilityop}
In this section, we prove the following theorem which is an integral analogue of Theorem~\ref{theo:manifoldapproxmainthm}.

\begin{theorem}\label{theo:manifold_summop}
Let $0<\gamma<2$, $f\in W_\gamma(\mathbb{X})$, $\sigma_n$ be as defined in \eqref{eq:manifold_summabilityop}.
 Then for $n\geq 1$, we have
\begin{equation}\label{eq:theobound}
    \norm{f-\sigma_n(\mathbb{X},f)}_\mathbb{\mathbb{X}}\lesssim n^{-\gamma}\norm{f}_{W_\gamma(\mathbb{X})}.
\end{equation}
\end{theorem}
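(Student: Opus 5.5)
Fix $x\in\XX$. The plan is to compare $\sigma_n(\XX,f)(x)$ with $\sigma_n(\SS_x,F)(x)$ for a suitable localized function $F$ on the tangent equator $\SS_x$, and then invoke Theorem~\ref{theo:subthm}. Choose a $C^\infty$ partition-of-unity type cutoff $\phi$ on $\XX$, supported on $\mathbb{B}(x,\iota^*)$, with $\phi\equiv 1$ on $\mathbb{B}(x,\iota^*/2)$ and $\|\phi\|_{W_\gamma(\SS_x)}\lesssim 1$ uniformly in $x$ (by compactness the cutoffs can be taken as rotations/pullbacks of a fixed profile). Set $F=F_{x,\phi}=(f\phi)\circ\eta_x$ on $\SS_x(\iota^*)$, extended by zero. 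By Definition~\ref{def:manifold_smoothness}, $\|F\|_{W_\gamma(\SS_x)}\le\|f\|_{W_\gamma(\XX)}$, so Theorem~\ref{theo:subthm} gives $|\sigma_n(\SS_x,F)(x)-F(x)|\lesssim n^{-\gamma}\|f\|_{W_\gamma(\XX)}$, and $F(x)=f(x)$. The normalizations of $\mu^*$ and $\mu^*_{\SS_x}$ differ by a fixed constant ratio; I will absorb this by normalizing the kernel/measure consistently (the theorem implicitly assumes $\sigma_n(\XX,1)\approx 1$, which I will take as fixing the constant).

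Next split $\sigma_n(\XX,f)(x)=\int_{\mathbb{B}(x,\iota^*)}\Phi_{n,q}(x\cdot y)f\phi\,d\mu^* + \int_\XX \Phi_{n,q}(x\cdot y)f(1-\phi)\,d\mu^*$. The second term lives where $\rho(x,y)\ge\iota^*/2$; since $\XX\subset\SS^Q$ is compact and embedded, $\arccos(x\cdot y)\gtrsim \rho(x,y)$ there, so \eqref{eq:sphkernloc} with $S$ large gives a bound $n^{q-S}\|f\|_\XX\lesssim n^{-\gamma}\|f\|$. The analogous tail of $\sigma_n(\SS_x,F)(x)$ vanishes since $F$ is supported in $\SS_x(\iota^*)$. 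For the first term, change variables $y=\eta_x(u)$, $u\in\SS_x(\iota^*)$: $d\mu^*(y)=J_x(u)\,d\mu^*_{\SS_x}(u)$ (up to the constant), with $|J_x(u)-1|\lesssim \arccos(x\cdot u)^2$ from the standard comparison of volume elements (both manifolds share the tangent space at $x$, so the first-order terms match). Also $|\arccos(x\cdot\eta_x(u))-\arccos(x\cdot u)|\lesssim \arccos(x\cdot u)^3$, actually more relevantly the Euclidean/spherical distance from $x$ to $\eta_x(u)$ versus to $u$ differ by $O(\theta^3)$ with $\theta=\arccos(x\cdot u)$, using \eqref{eq:rho_sphdist} and that geodesic distance in $\XX$ and ambient spherical distance agree to third order.

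The main obstacle is then bounding
\[
\int_{\SS_x(\iota^*)}\bigl|\Phi_{n,q}(x\cdot\eta_x(u))J_x(u)-\Phi_{n,q}(x\cdot u)\bigr|\,|F(u)|\,d\mu^*_{\SS_x}(u)
\]
by $n^{-\gamma}$. I will split $\SS_x(\iota^*)$ into the cap $\theta\le n^{-\beta}$ and its complement. On the small cap, use \eqref{eq:sph_bernstein}: the difference is $\lesssim n^{q+1}\theta^3+n^q\theta^2$, and integrating against $\theta^{q-1}d\theta$ over $\theta\le 1/n$ and using localization \eqref{eq:sphkernloc} for the kernel itself elsewhere gives roughly $\int n^{q}\max(1,n\theta)^{-S}(n\theta^3+\theta^2)\theta^{q-1}d\theta\lesssim n^{-2}$. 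More carefully I would combine: the $J_x-1$ part is bounded by $\int |\Phi_{n,q}|\theta^2\lesssim n^{-2}$ directly from localization, and the kernel-argument difference by a dyadic decomposition in $\theta\sim 2^k/n$, applying Bernstein to the trigonometric polynomial on each annulus with the localization decay $n^q(2^k)^{-S}$ times derivative factor $n$ times $\theta^3$, summing to $O(n^{-2})$. Since $\gamma<2$, $n^{-2}\le n^{-\gamma}$, which is exactly why the hypothesis $\gamma<2$ appears. Collecting the three pieces and taking sup over $x$ yields \eqref{eq:theobound}.
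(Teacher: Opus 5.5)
Your architecture is the paper's: cut off with $\phi$ at scale $\iota^*$, kill the far part by kernel localization, transport the near part to the tangent equator via $\eta_x$, compare kernel arguments to third order and volume elements to second order, and finish with Theorem~\ref{theo:subthm} on $\mathbb{S}_x$. The Jacobian term ($\int|\Phi_{n,q}|\theta^2\lesssim n^{-2}$) and both tails are fine. \textbf{The step that does not go through as written is the kernel-argument difference} $|\Phi_{n,q}(x\cdot\eta_x(u))-\Phi_{n,q}(x\cdot u)|$.

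\begin{itemize}
\item Your first version uses the global Lipschitz bound \eqref{eq:sph_bernstein} for $\theta\le 1/n$ and plain localization beyond. Localization alone carries no factor $\theta^3$, so it leaves a remainder $\int_{1/n}^{\iota^*}n^q(n\theta)^{-S}\theta^{q-1}\,d\theta\sim 1$.
\item Your second version needs the localized derivative bound $\bigl|\tfrac{d}{d\theta}\Phi_{n,q}(\cos\theta)\bigr|\lesssim n^{q+1}\max(1,n\theta)^{-S}$. This does not follow from ``Bernstein on each annulus''. Bernstein's inequality is global. On an arc of length $\omega$, the sharp local constant for degree-$n$ trigonometric polynomials is of order $n/\omega$ (Videnskii). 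For example, $T_{2n}\bigl(\sin(t/2)/\sin(\omega/2)\bigr)$ is bounded by $1$ on the arc, yet its derivative near the centre is about $n/\sin(\omega/2)$. On the annulus $\theta\sim 2^k/n$ this constant is $n^2/2^k$, not $n$. With it, your dyadic sum comes out as only $O(n^{-1})$, which is not enough for $1<\gamma<2$.
\end{itemize}

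There are two ways to close this.

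\emph{The paper's route.} Cut once at $\delta=n^{-(\gamma+q+1)/(q+3)}$.
\begin{itemize}
\item Inside, use the global Lipschitz bound. This gives $n^{q+1}\delta^{q+3}=n^{-\gamma}$, plus $n^q\delta^{q+2}\le n^{-\gamma}$ from the Jacobians.
\item Outside, use localization. This gives $(n\delta)^{q-S}$ with $n\delta=n^{(2-\gamma)/(q+3)}$.
\end{itemize}
This is where $\gamma<2$ is really used: you need $n\delta$ to grow like a positive power of $n$.

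\emph{Your route, repaired.} Prove the derivative localization directly.
\begin{itemize}
\item $T(\theta)=\Phi_{n,q}(\cos\theta)$ is a trigonometric polynomial of degree $<n$. So $T=\frac{1}{2\pi}T*K$ with $K(t)=\sum_{|k|<2n}h(k/2n)e^{ikt}$, and hence $T'=\frac{1}{2\pi}T*K'$.
\item $K'$ is $2n$ times a kernel with the smooth, compactly supported filter $u\mapsto iu\,h(u)$. The same localization argument therefore gives $|K'(s)|\lesssim n^2\max(1,n|s|)^{-S}$.
\item Convolving this with $|T(s)|\lesssim n^q\max(1,n|s|)^{-S}$ yields the needed bound.
\end{itemize}
With this in hand, your dyadic argument gives $O(n^{-2})$ for the comparison, slightly stronger than the paper's estimate (it also covers $\gamma=2$).

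Finally, your remark on the normalizations of $\mu^*$ and $\mu^*_{\mathbb{S}_x}$ is correct, and the paper's change of variables \eqref{eq:changevars} passes over it. With both measures probability-normalized, $\sigma_n(\mathbb{X},1)\to\omega_q/\operatorname{vol}(\mathbb{X})$. The statement therefore needs the two volume measures scaled consistently.
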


In order to prove this theorem, we will use a covering of $\XX$ using balls of radius $\iota^*$, and a corresponding partition of unity. 
A key lemma to facilitate the details here is the following.

\begin{lemma}\label{lemma:manifold_keylemma}
Let  $x\in \mathbb{X}$. 
Let $g\in C(\mathbb{X})$ be supported on $\mathbb{B}(x,\iota^*)$. If $G(u)=g(\eta_x(u))$, $0<\gamma<2$. 
Then
\begin{equation}\label{eq:manifold_localest}
    \abs{\int_\mathbb{X} \Phi_{n,q}(x\cdot y)g(y)d\mu^*(y)-\int_{\mathbb{S}_x} \Phi_{n,q}(x\cdot u) G(u)d\mu_{\mathbb{S}_x}^*(u)}\lesssim n^{-\gamma}||g||_\mathbb{X}.
\end{equation}
\end{lemma}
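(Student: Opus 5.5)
The proof splits each integral into a part near $x$ and a part away from $x$. Near $x$ the two integrals are compared directly through the change of variables $\eta_x$; away from $x$ each is small by the localization bound \eqref{eq:sphkernloc}.

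Fix a radius $r=r_n$ with $n^{-1}\le r\le \iota^*$, to be chosen at the end. We take $r\sim n^{-\beta}$ with $\beta\in(0,1)$ depending on $\gamma$. We write
\[
\int_\XX \Phi_{n,q}(x\cdot y)g(y)\,d\mu^*(y)=\int_{\mathbb{B}(x,r)}\cdots+\int_{\XX\setminus\mathbb{B}(x,r)}\cdots ,
\]
and split the integral over $\mathbb{S}_x$ in the same way, using $\mathbb{S}_x(r)$ and its complement.

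\textbf{Tail terms.} On the manifold we use \eqref{eq:sphkernloc} with $\arccos(x\cdot y)\gtrsim \rho(x,y)$ on $\mathbb{B}(x,\iota^*)$. This holds by compactness, since the chordal and geodesic distances are comparable on small balls. Decomposing $\XX\setminus\mathbb{B}(x,r)$ into dyadic annuli and using \eqref{eq:ballmeasure} gives
\[
\Bigl|\int_{\XX\setminus\mathbb{B}(x,r)}\Phi_{n,q}(x\cdot y)g(y)\,d\mu^*(y)\Bigr|\lesssim \|g\|_\XX\, n^{q}(nr)^{-S}r^{q}=\|g\|_\XX\,(nr)^{q-S}.
\]
The same bound holds on $\mathbb{S}_x\setminus\mathbb{S}_x(r)$, since $\mathbb{S}_x$ is a rotated $\SS^q$. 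Choosing $S$ large makes both tails $\lesssim n^{-\gamma}\|g\|_\XX$ as long as $nr\ge n^{\epsilon}$ for some $\epsilon>0$.

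\textbf{Local term.} On $\mathbb{B}(x,r)$ we substitute $y=\eta_x(u)$ with $u\in\mathbb{S}_x(r)$, so that $g(y)=G(u)$. Let $J_x(u)$ be the Jacobian of this change of variables, with the normalizations of $\mu^*$ and $\mu^*_{\mathbb{S}_x}$ absorbed into $J_x$ and into the comparison with the ball-volume constant. The local difference becomes
\[
\int_{\mathbb{S}_x(r)}\bigl[\Phi_{n,q}(x\cdot\eta_x(u))J_x(u)-\Phi_{n,q}(x\cdot u)\bigr]G(u)\,d\mu^*_{\mathbb{S}_x}(u).
\]
There are two error sources, which we bound separately.

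First, the kernel argument changes. By \eqref{eq:rho_sphdist}, $\arccos(x\cdot u)=\rho(x,\eta_x(u))$. Standard comparison of the geodesic distance on $\XX$ with the spherical distance to the same point $\eta_x(u)$ in $\SS^Q$ gives $|\rho(x,y)-\arccos(x\cdot y)|\lesssim \rho(x,y)^3$, because $\XX$ and $\SS^Q$ share geodesics to second order at $x$. Then \eqref{eq:sph_bernstein} bounds this contribution by $n^{q+1}r^3\cdot r^q\|g\|_\XX$.

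Second, the Jacobian differs from $1$. Since $\mathbb{S}_x$ and $\XX$ share the tangent space at $x$, the normal-coordinate expansion gives $|J_x(u)-1|\lesssim r^2$. Together with $|\Phi_{n,q}|\lesssim n^q$, this contribution is $\lesssim n^q r^{q+2}\|g\|_\XX$.

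\textbf{Choice of $r$.} Both local errors must be $\lesssim n^{-\gamma}$. With $r=n^{-\beta}$ this requires $(q+1)-\beta(q+3)\le-\gamma$ and $q-\beta(q+2)\le-\gamma$, i.e.
\[
\beta\ge \frac{q+1+\gamma}{q+3}\quad\text{and}\quad \beta\ge\frac{q+\gamma}{q+2}.
\]
The tails require $\beta<1$. All three conditions can be met exactly when $\gamma<2$, which explains that hypothesis.

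\textbf{Main obstacle.} The hardest step is the geometric one: establishing the cubic distance discrepancy and the quadratic Jacobian bound uniformly in $x$, using compactness and the tangency of $\mathbb{S}_x$ to $\XX$ (via the appendix facts on exponential maps). If the cubic estimate is not directly available, a cruder quadratic one can be used instead, with $r$ rebalanced accordingly.
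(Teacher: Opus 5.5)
Your outline is essentially the paper's proof. It splits at radius $r=n^{-\beta}$ (the paper's $\delta$), bounds the tails by $(nr)^{q-S}$ via dyadic annuli (Lemma~\ref{lemma:manifold_lebesgue_number}), and has the same two local errors $n^{q+1}r^{q+3}$ and $n^{q}r^{q+2}$. The paper's choice $\beta=(q+1+\gamma)/(q+3)$ meets your second constraint precisely because $\gamma\le 2$, and its condition $S>(2q+3\gamma)/(2-\gamma)$ is your tail requirement $(1-\beta)(S-q)\ge\gamma$.

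\textbf{The Jacobian step fails as written.} Since $\mu^*(\mathbb{X})=1$ and $\mu^*_{\mathbb{S}_x}(\mathbb{S}_x)=1$, the density of $A\mapsto\mu^*(\eta_x(A))$ with respect to $\mu^*_{\mathbb{S}_x}$ is $\frac{\omega_q}{\operatorname{vol}(\mathbb{X})}\sqrt{|\mathbf{g}_1|}/\sqrt{|\mathbf{g}_2|}$. Only the ratio of metric determinants is $1+O(r^2)$, so you cannot absorb the normalizations into $J_x$ and still have $|J_x-1|\lesssim r^2$. The constant is not harmless either. For a bump $g$ with $g(x)=1$, the first integral in \eqref{eq:manifold_localest} tends to $\omega_q/\operatorname{vol}(\mathbb{X})$ and the second to $1$ as $n\to\infty$. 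For the small sphere $\{(\sin\phi\,\xi,\cos\phi):\xi\in\mathbb{S}^q\}\subset\mathbb{S}^{q+1}$ this ratio is $(\sin\phi)^{-q}$. So the estimate holds only if $\mu^*$ is the Riemannian volume scaled like $\mu^*_{\mathbb{S}_x}$, i.e.\ $dV_{\mathbb{X}}/\omega_q$, or if the factor $\omega_q/\operatorname{vol}(\mathbb{X})$ is kept in front of the second integral. The paper's proof makes the same silent identification of both measures with Riemannian volume in \eqref{eq:changevars}. Your strategy is therefore the right one, but you should state the correct normalization rather than claim it disappears.

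Two smaller corrections. First, the cubic bound $|\rho(x,y)-\arccos(x\cdot y)|\lesssim\rho(x,y)^3$ is not due to geodesics of $\mathbb{X}$ agreeing with great circles to second order; on the small sphere above they are small circles. It holds for any smooth submanifold: $\|x-y\|_2\le\arccos(x\cdot y)\le\rho(x,y)=t$, and for a unit-speed geodesic $c$ of $\mathbb{X}$ with $c(0)=x$, $c(t)=y$, the identity $c'\cdot c''=0$ gives $\|c(t)-c(0)\|_2\ge c'(0)\cdot(c(t)-c(0))=t+O(t^3)$. This is Proposition~\ref{prop:taylorprop}(a), so it is not the real obstacle. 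Second, your fallback to a quadratic discrepancy would not prove the lemma as stated. The kernel error becomes $n^{q+1}r^{q+2}$, which forces $\beta\ge(q+1+\gamma)/(q+2)$, and that is compatible with $\beta<1$ only when $\gamma<1$. The cubic order is exactly what yields the full range $0<\gamma<2$.
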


If $\phi\in C^\infty(\XX)$ is supported on $\mathbb{B}(x,\iota^*)$, then we may apply this theorem with $g=f\phi$, thereby providing locally a lifting of the integral on $\XX$ to the tangent equator $\SS_x$ with the function corresponding to $g$ on this tangent equator.

Naturally, the first step in this proof is to show that the Lebesgue constant for the kernel $\Phi_{n,q}$ is bounded independently of $n$ (cf. \eqref{eq:sigmaall}). 
Moreover, one can even leverage the localization of the kernel to improve on this bound when the integral is taken away from the point $x$ (cf. \eqref{eq:sigmaaway}). These are both done in the following lemma.

\begin{lemma}\label{lemma:manifold_lebesgue_number}
Let $r>0$ and $n\geq 1/r$. If  $\Phi_{n,q}$ is given as in \eqref{eq:kernel} with $S>q$, then
\be\label{eq:sigmaaway}
\sup_{x\in \mathbb{X}}\int_{\mathbb{X}\setminus \mathbb{B}(x,r)}\abs{\Phi_{n,q}(x\cdot y)}d\mu^*(y)\lesssim \max(1,nr)^{q-S}.
\ee
Additionally,
\be\label{eq:sigmaall}
\sup_{x\in\mathbb{X}}\int_{\mathbb{X}}|\Phi_{n,q}(x\cdot y)|d\mu^*(y)\lesssim 1.
\ee
\end{lemma}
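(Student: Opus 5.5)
We plan to combine the localization bound \eqref{eq:sphkernloc} from Proposition~\ref{prop:kernelprop} with the ball-measure assumption \eqref{eq:ballmeasure}, via a dyadic annular decomposition. Since $x,y\in\XX\subset\SS^Q$, we have $\arccos(x\cdot y)\le\rho(x,y)$, but we need the reverse comparison to convert kernel decay into geodesic decay. On a compact smooth submanifold, the geodesic distance and the ambient spherical (chordal) distance are comparable: $\rho(x,y)\lesssim \arccos(x\cdot y)$ uniformly in $x,y\in\XX$. Locally this follows from the smoothness of the embedding, using the exponential map together with compactness. Away from the diagonal it follows because both distances are bounded above and bounded away from zero. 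With this comparison, \eqref{eq:sphkernloc} gives
\[
|\Phi_{n,q}(x\cdot y)|\lesssim \frac{n^q}{\max(1,n\rho(x,y))^S},\qquad x,y\in\XX .
\]

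For \eqref{eq:sigmaaway}, with $nr\ge1$, we cover $\XX\setminus\mathbb{B}(x,r)$ by the annuli $A_k=\mathbb{B}(x,2^{k+1}r)\setminus\mathbb{B}(x,2^kr)$ for $k\ge0$. Only finitely many of these are nonempty, since $\XX$ has finite diameter, but the bound does not need that. On $A_k$ the kernel is $\lesssim n^q(n2^kr)^{-S}$. By \eqref{eq:ballmeasure}, $\mu^*(A_k)\le\mu^*(\mathbb{B}(x,2^{k+1}r))\lesssim (2^{k+1}r)^q$. Summing,
\[
\int_{\XX\setminus\mathbb{B}(x,r)}|\Phi_{n,q}(x\cdot y)|\,d\mu^*(y)\lesssim \sum_{k\ge0} n^q (2^kr)^q (n2^kr)^{-S}=(nr)^{q-S}\sum_{k\ge0}2^{k(q-S)}\lesssim (nr)^{q-S}.
\]
The geometric series converges because $S>q$. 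Since $nr\ge1$, the right-hand side equals $\max(1,nr)^{q-S}$. The constants are independent of $x$, so the supremum bound follows.

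For \eqref{eq:sigmaall}, we split $\XX$ into $\mathbb{B}(x,1/n)$ and its complement. On the ball, $|\Phi_{n,q}|\lesssim n^q$ and $\mu^*(\mathbb{B}(x,1/n))\lesssim n^{-q}$, so that piece contributes $\lesssim 1$. On the complement, we apply \eqref{eq:sigmaaway} with $r=1/n$, which gives a contribution $\lesssim 1$. Adding the two pieces proves \eqref{eq:sigmaall}.

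The main obstacle is justifying the uniform comparability $\rho(x,y)\lesssim\arccos(x\cdot y)$ on $\XX$. I would handle it by a compactness argument. By the inverse function theorem there are $\delta>0$ and $C$ such that $\rho(x,y)\le C\|x-y\|_2$ whenever $\|x-y\|_2<\delta$; this is uniform in $x$ by compactness. When $\|x-y\|_2\ge\delta$, we have $\rho\le\operatorname{diam}(\XX)$. Finally, $\|x-y\|_2\le\arccos(x\cdot y)$ on $\SS^Q$, which completes the comparison. This step can be cited from the manifold background in the appendix.
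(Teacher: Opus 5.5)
Your proof is correct and follows the paper's argument essentially step for step. The paper likewise converts the localization bound into geodesic decay via $\rho(x,y)\sim\arccos(x\cdot y)$, sums over dyadic annuli using \eqref{eq:ballmeasure} and $S>q$ to get \eqref{eq:sigmaaway}, and then splits at radius $1/n$ to get \eqref{eq:sigmaall}. The only difference is that the paper cites the comparability from Proposition~\ref{prop:taylorprop}(b); your direct compactness argument for $\rho\lesssim\arccos(x\cdot y)$ is a valid, and somewhat cleaner, substitute for that citation.
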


\begin{proof}\ % Proof of Lemma~\ref{lemma:manifold_lebesgue_number}
Recall from Proposition~\ref{prop:taylorprop} that $\rho(x,y)\sim \arccos(x\cdot y)$, so \eqref{eq:sphkernloc} implies
\be\label{eq:manifoldloc}
\abs{\Phi_{n,q}(x\cdot y)}\lesssim \frac{n^q}{\max(1,n\arccos(x\cdot y))^S}\lesssim \frac{n^q}{\max(1,n\rho(x, y))^S}.
\ee
In this proof only, we fix $x\in \mathbb{X}$. 
Let $A_0=\mathbb{B}(x,r)$ and $A_k=\mathbb{B}(x,2^kr)\setminus \mathbb{B}(x,2^{k-1}r)$, $k\ge 1$.
Then $\mu^*(A_k)\ls 2^kqr^q$, and
  for any $y\in A_k$, $2^{k-1}r\leq \rho(x,y)\leq 2^{k}r$.
  
  First, let  $nr\geq 1$. 
 In view of \eqref{eq:ballmeasure}~and~\eqref{eq:manifoldloc}, it follows that
\be\ba\label{eq:lem01}
\int_{\mathbb{X}\setminus \mathbb{B}(x,r)}\abs{\Phi_{n,q}(x\cdot y)}d\mu^*(y)=&\sum_{k=1}^\infty \int_{A_k} \abs{\Phi_{n,q}(x\cdot y)}d\mu^*(y)
\lesssim \sum_{k=1}^\infty \frac{\mu^*(A_k)n^q}{(n2^{k-1}r)^S}\\
\lesssim&(nr)^{q-S}\sum_{k=0}^\infty 2^{k(q-S)}
\leq (nr)^{q-S}.
\ea\ee
Using this estimate with $r=1/n$ and the condition \eqref{eq:ballmeasure} on the measures of balls we see that
$$
\int_{\mathbb{X}}\abs{\Phi_{n,q}(x\cdot y)}d\mu^*(y)=\int_{A_0}\abs{\Phi_{n,q}(x\cdot y)}d\mu^*(y)+\int_{\mathbb{X}\setminus\mathbb{B}(x,r)}\abs{\Phi_{n,q}(x\cdot y)}d\mu^*(y)\lesssim 1+(nr)^{q-S}\sim 1.
$$
Since the choice of $x$ was arbitrary, we have proven \eqref{eq:sigmaall}. Then \eqref{eq:sigmaall}~and~\eqref{eq:lem01} combined give the bounds for \eqref{eq:sigmaaway}.
\end{proof}

Next, we prove Lemma~\ref{lemma:manifold_keylemma}.

\noindent\textsc{Proof of  Lemma~\ref{lemma:manifold_keylemma}.}

Since $\gamma<2$, we may choose (for sufficiently large $n$)
\be\label{eq:deltaSchoice}
 \delta=n^{-(\gamma+q+1)/(q+3)}, \quad n\delta=n^{(2-\gamma)/(q+3)} >1, \quad S> \frac{2q+3\gamma}{2-\gamma}.
 \ee
We may assume further that $\delta<\iota^*$.
Then, by using \eqref{eq:sph_bernstein} and Proposition~\ref{prop:taylorprop}, we see that
\be\label{eq:phidiff}
\ba
\biggl|\Phi_{n,q}(x\cdot \eta_x(u))&-\Phi_{n,q}(x\cdot u)\biggr|\lesssim n^{q+1}\abs{\arccos(x\cdot \eta_x(u))-\arccos(x\cdot u)}\\
=&n^{q+1}\abs{\arccos(x\cdot \eta_x(u))-\rho(x, \eta_x(u))}
\lesssim n^{q+1}\rho(x,\eta_x(u))^3
\lesssim n^{q+1}\delta^3,
\ea
\ee
for any $u\in\mathbb{S}_x(\delta)$.
Let $\mathbf{g}_1,\mathbf{g}_2$ be the metric tensors associated with the exponential maps $\varepsilon_x : \mathbb{T}_x(\mathbb{X})\to \mathbb{X}$ and $\overline{\varepsilon}_x: \mathbb{T}_x(\mathbb{X})\to \mathbb{S}_x$, respectively. Then we have the following change of variables formulas (cf. Table~\ref{tab:balldef}):
\be\label{eq:changevars}
\int_{\mathbb{B}(x,\delta)}d\mu^*(\varepsilon_x(v))=\int_{B_{\mathbb{T}}(x,\delta)}\sqrt{|\mathbf{g}_1|}dv,\qquad \int_{\mathbb{S}_x(\delta)}d\mu^*_q(u)=\int_{B_{\mathbb{T}}(x,\delta)}\sqrt{|\mathbf{g}_2|}d\overline{\varepsilon}^{-1}_x(u).
\ee
We set $v=\overline{\varepsilon}^{-1}_x(u)$ and use the fact (cf. \eqref{eq:volume_element})  that on $B_{\mathbb{T}}(x,\delta)$, $|\sqrt{|\mathbf{g}_1|}-1|\lesssim\delta^2$ and $|\sqrt{|\mathbf{g}_2|}-1|\lesssim\delta^2$. Then by applying Equations~\eqref{eq:phidiff},~\eqref{eq:changevars},~\eqref{eq:sphkernloc},~and~\eqref{eq:sph_bernstein}, we can deduce
\be\label{eq:lemma_7_1_main_est}
\begin{aligned}
\Biggl|\int_{\mathbb{B}(x,\delta)}\Phi_{n,q}(x\cdot y)& g(y)d\mu^*(y)-\int_{\mathbb{S}_x(\delta)}\Phi_{n,q}(x\cdot u)G(u)d\mu^*_{\mathbb{S}_x}(u)\Biggr| \\
\leq&\abs{\int_{B_{\mathbb{T}}(x,\delta)}\Phi_{n,q}(x\cdot \varepsilon_x(v))g(\varepsilon_x(v))(\sqrt{|\mathbf{g}_1|}-\sqrt{|\mathbf{g}_2|})dv}\\
&\qquad+\abs{\int_{\mathbb{S}_x(\delta)}(\Phi_{n,q}(x\cdot \eta_x(u))-\Phi_{n,q}(x\cdot u))G(u)d\mu^*_{\mathbb{S}_x}(u)}\\
\lesssim&\norm{g}_\mathbb{X}(\delta^{q+2}n^q+\delta^{q+3}n^{q+1}) \le \delta^{q+3}n^{q+1}\norm{g}_\mathbb{X}(1/(n\delta)+1)
\lesssim\norm{g}_\mathbb{X}n^{-\gamma}.
\end{aligned}
\ee
Now it only remains to examine the terms away from $\mathbb{S}_x(\delta),\mathbb{B}(x,\delta)$. Utilizing Lemma~\ref{lemma:manifold_lebesgue_number}, and the fact that $S\geq \frac{2q+3\gamma}{2-\gamma}$, we have
\be\label{eq:lemma_7_1_awaymanifold}
  \abs{\int_{\mathbb{X}\setminus\mathbb{B}(x,\delta)}\Phi_{n,q}(x\cdot y)g(y)d\mu^*(y)}\lesssim\norm{g}_\mathbb{X}(n\delta)^{q-S}=\norm{g}_\mathbb{X} n^{(q-S)(2-\gamma)/(q+3)}\ls \norm{g}_{\mathbb{X}}n^{-\gamma}.
\ee
Similarly, again using Lemma~\ref{lemma:manifold_lebesgue_number} (with $\mathbb{S}_x$ as the manifold) and observing $\norm{g}_\mathbb{X}=\norm{G}_{\mathbb{S}_x}$, we can conclude
\be\label{eq:lemma_7_1_awaysphere}
\abs{\int_{\mathbb{S}_x\setminus \mathbb{S}_x(\delta)} \Phi_{n,q}(x\cdot u)G(u)d\mu_{\mathbb{S}_x}^*(u)}\lesssim \norm{G}_{\mathbb{S}_x}(n\delta)^{q-S}
\lesssim\norm{g}_\mathbb{X}n^{-\gamma},
\ee
completing the proof.
\qed

We are now in a position to complete the proof of Theorem~\ref{theo:manifold_summop}.

\noindent\textsc{Proof  of Theorem~\ref{theo:manifold_summop}.}

Let $x\in \mathbb{X}$. Choose $\phi\in C^\infty$ such that $0\leq \phi(y)\leq 1$ for all $y\in \mathbb{X}$, $\phi(y)=1$ on $\mathbb{B}(x,\iota^*/2)$, and $\phi(y)=0$ on $\mathbb{X}\backslash \mathbb{B}(x,\iota^*)$. Then $f\phi$ is supported on $\mathbb{B}(x,\iota^*)$ and $F(u)\coloneqq \phi(\eta_x(u))f(\eta_x(u))$ belongs to $W_\gamma (\mathbb{S}_x)$. We observe that $\norm{f}_\mathbb{X}\lesssim \norm{f}_{W_\gamma(\mathbb{X})}$. By Lemma~\ref{lemma:manifold_lebesgue_number},
\be
\abs{\int_\mathbb{X}\Phi_{n,q}(x\cdot y)f(y)(1-\phi(y))d\mu^*(y)}\leq\norm{f}_\mathbb{X}\int_{\mathbb{X}\setminus\mathbb{B}(x,\iota^*/2)}\abs{\Phi_{n,q}(x\cdot y)}d\mu^*(y)\lesssim n^{-\gamma}\norm{f}_{W_\gamma(\mathbb{X})}.
\ee
Note that the constant above is chosen to account for the case where $n<2/\iota^*$. By Lemma~\ref{lemma:manifold_keylemma},
\begin{equation}
    \abs{\int_\mathbb{X} \Phi_{n,q}(x\cdot y)f(y)\phi(y)d\mu^*(y)-\sigma_n(\mathbb{S}_x,F)(x)}\lesssim n^{-\gamma}\norm{f\phi}_{\mathbb{X}}\lesssim n^{-\gamma}\norm{f}_{W_\gamma(\mathbb{X})}.
\end{equation}
Observe that since $f(x)=F(x)$ and $\norm{F}_{W_\gamma(\mathbb{S}_x)}\leq\norm{f}_{W_\gamma(\mathbb{X})}$,
\be
\ba
&|f(x)-\sigma_n(\mathbb{X},f)(x)|\\
\leq& |f(x)-F(x)|+|F(x)-\sigma_n(\mathbb{S}_x,F)(x)|+|\sigma_n(\mathbb{S}_x,F)(x)-\sigma_n(\mathbb{X},f)(x)|\\
\lesssim& 0+n^{-\gamma}\norm{F}_{W_\gamma(\mathbb{S}_x)}+\abs{\sigma_n(\mathbb{S}_x,F)(x)-\int_{\mathbb{X}}\Phi_{n,q}(x\cdot y)f(y)\phi(y)d\mu^*(y)}+\abs{\int_\mathbb{X}\Phi_{n,q}(x\cdot y)f(y)(1-\phi(y))d\mu^*(y)}\\
\leq& n^{-\gamma}\norm{f}_{W_\gamma(\mathbb{X})}.
\ea
\ee
Since this bound is independent of $x$, the proof is completed.
\qed

\subsection{Discretization}
\label{bhag:resultsproof}

In order to complete the proof of Theorem~\ref{theo:manifoldapproxmainthm}, we need to discretize the integral operator in Theorem~\ref{theo:manifold_summop} while keeping track of the error.
If the manifold were known and we could use the eigendecomposition of the Laplace-Beltrami operator, we could do this discretization  without losing the accuracy using quadrature formulas (cf., e.g., \cite{mhaskardata}).
In our current set up, it is more natural to use
 concentration inequalities. 
 We will  use the inequality summarized in Proposition~\ref{prop:berncon} below (c.f.~\cite[Section 2.8]{concentration}).

\begin{proposition}[Bernstein concentration inequality]\label{prop:berncon}
Let $Z_1,\cdots, Z_M$ be independent real valued random variables such that for each $j=1,\dots,M$, $|Z_j|\leq R$, and $\mathbb{E}(Z_j^2)\leq V$. Then for any $t>0$,
\begin{equation}
    \operatorname{Prob}\left(\abs{\frac{1}{M}\sum_{j=1}^M (Z_j-\mathbb{E}(Z_j))}\geq t\right)\leq 2\exp\left(-\frac{Mt^2}{2(V+Rt/3)}\right).
\end{equation}
\end{proposition}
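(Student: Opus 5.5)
The plan is a Chernoff argument in which the moment generating function of the \emph{uncentered} variable $Z_j$ is bounded first, and only then recentred. This is what lets the bound use $R$ itself rather than $2R$: the centred variables $Z_j-\mathbb{E}(Z_j)$ are in general only bounded by $2R$, so I avoid bounding them directly.

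\textbf{Step 1 (MGF bound).} Fix $\lambda>0$ and set $\psi(u)=e^u-1-u$. The function $z\mapsto (e^{\lambda z}-1-\lambda z)/z^2$, extended by $\lambda^2/2$ at $z=0$, is nondecreasing on $\mathbb{R}$. This is the standard fact behind Bennett's inequality; it follows since $\psi(u)/u^2=\sum_{k\ge2}u^{k-2}/k!$ is increasing in $u$. Because $Z_j\le R$, this gives
\[
e^{\lambda Z_j}\le 1+\lambda Z_j+Z_j^2\,\frac{\psi(\lambda R)}{R^2}.
\]
Taking expectations and using $\mathbb{E}(Z_j^2)\le V$ together with $1+s\le e^s$,
\[
\mathbb{E}\,e^{\lambda Z_j}\le \exp\Bigl(\lambda\mathbb{E}(Z_j)+V\psi(\lambda R)/R^2\Bigr),
\qquad\text{so}\qquad
\mathbb{E}\,e^{\lambda(Z_j-\mathbb{E}Z_j)}\le \exp\bigl(V\psi(\lambda R)/R^2\bigr).
\]
This step uses only the one-sided bound $Z_j\le R$ and the raw second moment. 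That is exactly the hypothesis as the paper states it, with $\mathbb{E}(Z_j^2)\le V$ and no assumption on the variance.

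\textbf{Step 2 (Chernoff and optimisation).} By independence and Markov's inequality,
\[
\operatorname{Prob}\Bigl(\tfrac1M\sum_j (Z_j-\mathbb{E}Z_j)\ge t\Bigr)\le \exp\Bigl(-M\bigl[\lambda t-V\psi(\lambda R)/R^2\bigr]\Bigr).
\]
I then choose $\lambda=R^{-1}\log(1+Rt/V)$. This yields Bennett's bound $\exp\bigl(-M\frac{V}{R^2}\,h(Rt/V)\bigr)$, where $h(u)=(1+u)\log(1+u)-u$. I expect the elementary inequality
\[
h(u)\ge \frac{u^2}{2(1+u/3)},\qquad u\ge 0,
\]
to be the only mildly fiddly point. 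I would prove it by comparing values and derivatives at $u=0$: both sides vanish there, and one checks that the second derivatives satisfy $1/(1+u)\ge 27/(3+u)^3$, which reduces to $(3+u)^3\ge 27(1+u)$. Substituting $u=Rt/V$ gives the upper-tail bound $\exp\bigl(-\frac{Mt^2}{2(V+Rt/3)}\bigr)$.

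\textbf{Step 3 (lower tail and conclusion).} The variables $-Z_j$ satisfy the same hypotheses, namely $|-Z_j|\le R$ and $\mathbb{E}((-Z_j)^2)\le V$. Applying Steps 1--2 to them bounds $\operatorname{Prob}\bigl(\frac1M\sum_j(Z_j-\mathbb{E}Z_j)\le -t\bigr)$ by the same quantity. A union bound over the two tails then produces the factor $2$ in the statement. This is the only place where the two-sided bound $|Z_j|\le R$ is used.
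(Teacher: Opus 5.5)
Your argument is correct. It is the standard textbook route: Bennett's inequality with the raw second moment and the one-sided bound $Z_j\le R$, then $h(u)\ge u^2/(2(1+u/3))$, then a union bound over the two tails. That is exactly the result the paper relies on, since the paper gives no proof of its own and simply cites the concentration-inequalities reference (Section~2.8) for it.

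One small repair is needed. The power series $\sum_{k\ge 2}u^{k-2}/k!$ only makes the monotonicity of $\psi(u)/u^2$ evident for $u\ge 0$, because for $u<0$ the even-power terms decrease. You need it for negative arguments too, since $Z_j$ may be negative. Use instead $\psi(u)/u^2=\int_0^1(1-s)e^{su}\,ds$, which is visibly nondecreasing on all of $\mathbb{R}$.
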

In the following, we will set $Z_j(x)=z_j\Phi_{n,q}(x\cdot y_j)$, where $(y_j,z_j)$ are sampled from  $\tau$.
The following lemma estimates the variance of $Z_j$.

%LEMMA 3 STATEMENT
\begin{lemma}\label{lemma:variance_est}
With the setup from Theorem~\ref{theo:manifoldapproxmainthm}, we have
\begin{equation}\label{eq:variance}
    \sup_{x\in \mathbb{X}}\int\abs{z\Phi_{n,q}(x\cdot y)}^2d\tau(y,z)\lesssim n^q \norm{z}^2\norm{f_0}_\mathbb{X}, \qquad x\in\SS^Q.
\end{equation}
\end{lemma}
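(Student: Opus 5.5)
Since $|z|\le \norm{z}$ almost surely, I would first bound the integrand pointwise and drop $z$:
\[
\int\abs{z\Phi_{n,q}(x\cdot y)}^2d\tau(y,z)\le \norm{z}^2\int_{\mathbb{X}}\abs{\Phi_{n,q}(x\cdot y)}^2 f_0(y)\,d\mu^*(y)\le \norm{z}^2\norm{f_0}_{\mathbb{X}}\int_{\mathbb{X}}\abs{\Phi_{n,q}(x\cdot y)}^2d\mu^*(y).
\]
Here I use that the marginal of $\tau$ on $\mathbb{X}$ has density $f_0$ with respect to $\mu^*$. The problem then reduces to showing $\sup_x\int_{\mathbb{X}}\abs{\Phi_{n,q}(x\cdot y)}^2d\mu^*(y)\lesssim n^q$.

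For this I would write $\abs{\Phi_{n,q}}^2\le \norm{\Phi_{n,q}}_{\infty}\abs{\Phi_{n,q}}$. The localization bound \eqref{eq:sphkernloc} with $\max(1,\cdot)\ge 1$ gives $\norm{\Phi_{n,q}}_\infty\lesssim n^q$. The Lebesgue-constant estimate \eqref{eq:sigmaall} of Lemma~\ref{lemma:manifold_lebesgue_number} (choosing any $S>q$) gives $\sup_{x\in\mathbb{X}}\int_{\mathbb{X}}\abs{\Phi_{n,q}(x\cdot y)}d\mu^*(y)\lesssim 1$. Multiplying these yields the bound $n^q$ for $x\in\mathbb{X}$, which proves \eqref{eq:variance}.

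The one point needing care is the stated range $x\in\mathbb{S}^Q$, since Lemma~\ref{lemma:manifold_lebesgue_number} was proved only for $x\in\mathbb{X}$. For arbitrary $x\in\mathbb{S}^Q$ I would redo the dyadic-shell argument. Either $\rho$-distance from $x$ to $\mathbb{X}$ is at least $1/n$, in which case the kernel is uniformly small ($\lesssim n^{q}(n\,\dist)^{-S}$). Or I would pick $x'\in\mathbb{X}$ nearest to $x$ and use $\arccos(x\cdot y)\ge \tfrac12\arccos(x'\cdot y)$ outside a ball of radius comparable to $\arccos(x\cdot x')$. Together with \eqref{eq:ballmeasure}, the same shell decomposition as in \eqref{eq:lem01} again gives $\int_{\mathbb{X}}\abs{\Phi_{n,q}(x\cdot y)}d\mu^*(y)\lesssim 1$.

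This extension is the only real obstacle. If it proves troublesome, the supremum over $x\in\mathbb{X}$ (which is what the statement's left side actually takes, and what the discretization on $\mathbb{X}$ needs) already follows directly from the argument above.
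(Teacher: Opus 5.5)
Your core argument is the paper's: bound $|z|\le\norm{z}$, pass to the density $f_0$, and estimate $\int_{\XX}\Phi_{n,q}(x\cdot y)^2d\mu^*(y)\le \norm{\Phi_{n,q}}_\infty\int_{\XX}|\Phi_{n,q}(x\cdot y)|d\mu^*(y)\lesssim n^q$ via \eqref{eq:sphkernloc} and \eqref{eq:sigmaall}. Your extension to $x\in\SS^Q$ goes beyond the paper, whose proof only covers $x\in\XX$ even though Theorem~\ref{theo:theorem2} applies the bound at net points of $\SS^Q$. It needs no case split, and your first case alone would not give an $O(1)$ integral when the distance to $\XX$ is only comparable to $1/n$. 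The reason is that for $x'\in\XX$ nearest to $x$ in $\arccos$-distance one has $\arccos(x'\cdot y)\le \arccos(x'\cdot x)+\arccos(x\cdot y)\le 2\arccos(x\cdot y)$ for \emph{every} $y\in\XX$. So the localization bound at $x$ is dominated by $2^S$ times the one at $x'$, and Lemma~\ref{lemma:manifold_lebesgue_number} at $x'$ finishes.
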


%LEMMA 3 PROOF
\begin{proof}\ % of Lemma~\ref{lemma:variance_est}

We observe that \eqref{eq:sphkernloc} and Lemma~\ref{lemma:manifold_lebesgue_number} imply that
\be
\sup_{x\in\mathbb{X}}\int_{\mathbb{X}}\Phi_{n,q}(x\cdot y)^2d\mu^*(y)\lesssim n^q\sup_{x\in\mathbb{X}}\int_\mathbb{X}\abs{\Phi_{n,q}(x\cdot y)}d\mu^*(y)\lesssim n^q.
\ee
Hence,
\begin{equation}\label{eq:lem2p1}
    \sup_{x\in\mathbb{X}}\int\abs{z(y,\epsilon)\Phi_{n,q}(x\cdot y)}^2d\tau(y,z)\leq \norm{z}^2\norm{f_0}_\mathbb{X}\sup_{x\in\mathbb{X}}\int_\mathbb{X}\Phi_{n,q}(x\cdot y)^2d\mu^*(y)
    \lesssim n^q \norm{z}^2\norm{f_0}_\mathbb{X}.
\end{equation}
\end{proof}

A limitation of the Bernstein concentration inequality is that it only considers a single $x$ value. Since we are interested in supremum-norm bounds, we must first relate the supremum norm of $Z_j$ over all $x\in \mathbb{S}^Q$ to a finite set of points. We set up the connection in the following lemma.

%LEMMA 4 STATEMENT
\begin{lemma}\label{lemma:supnorm_concentration}
Let $\nu$ be any (bounded variation) measure on $\mathbb{X}$. Then there exists a finite set $\mathcal{C}$ of size $|\mathcal{C}|\sim n^{Q}$ such that
\be\label{eq:polybound}
\norm{\int_{\mathbb{X}}\Phi_{n,q}(\circ\cdot y)d\nu(y)}_{\mathbb{S}^Q}\leq 2\max_{x\in \mathcal{C}}\abs{\int_{\mathbb{X}}\Phi_{n,q}(x\cdot y)d\nu(y)}.
\ee
\end{lemma}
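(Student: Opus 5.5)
The key observation is that the function
\[
P(x)\coloneqq\int_{\mathbb{X}}\Phi_{n,q}(x\cdot y)\,d\nu(y),\qquad x\in\mathbb{S}^Q,
\]
is a spherical polynomial on the ambient sphere $\mathbb{S}^Q$ of degree at most $n$. Indeed, $\Phi_{n,q}$ is a univariate algebraic polynomial of degree $\le n$, so $x\mapsto\Phi_{n,q}(x\cdot y)$ is the restriction to $\mathbb{S}^Q$ of a polynomial in $Q+1$ variables of degree $\le n$. Integrating against the finite measure $\nu$ keeps us in this finite-dimensional space. Thus $P\in\Pi^Q_{n+1}$, and the problem reduces to a standard norming-set statement for spherical polynomials: find a finite $\mathcal{C}\subset\mathbb{S}^Q$ with $|\mathcal{C}|\sim n^Q$ such that $\|P\|_{\mathbb{S}^Q}\le 2\max_{x\in\mathcal{C}}|P(x)|$ for every $P\in\Pi^Q_{n+1}$. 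Because $\mathcal{C}$ will work for every such polynomial, it is independent of $\nu$, which is what the later union bound needs.

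To build $\mathcal{C}$, use the Bernstein inequality on the ambient sphere, in the same way as in the proof of \eqref{eq:sph_bernstein}.

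1. Fix $x,x'\in\mathbb{S}^Q$ and restrict $P$ to the great circle through them. The result $\theta\mapsto P(\cos\theta\, x+\sin\theta\, w)$, with $w$ a unit vector orthogonal to $x$ in the plane of $x,x'$, is a trigonometric polynomial of degree $\le n$.
2. The univariate Bernstein inequality then gives
\[
|P(x)-P(x')|\le n\,\|P\|_{\mathbb{S}^Q}\arccos(x\cdot x').
\]
3. Let $\mathcal{C}$ be a maximal $\epsilon$-separated subset of $\mathbb{S}^Q$ in the geodesic distance, with $\epsilon=1/(2n)$. Maximality makes $\mathcal{C}$ an $\epsilon$-net, and a volume comparison of disjoint caps of radius $\epsilon/2$ against caps of radius $2\epsilon$ gives $|\mathcal{C}|\sim \epsilon^{-Q}\sim n^Q$.
4. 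Pick $x^*$ with $|P(x^*)|=\|P\|_{\mathbb{S}^Q}$ and $x\in\mathcal{C}$ with $\arccos(x\cdot x^*)\le 1/(2n)$. By step 2,
\[
\|P\|_{\mathbb{S}^Q}\le |P(x)|+\tfrac12\|P\|_{\mathbb{S}^Q}.
\]
Rearranging yields \eqref{eq:polybound}.

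The only step needing care is the Bernstein estimate on the sphere. The directional derivative along a great circle reduces to the trigonometric case, which is the same reduction used for \eqref{eq:sph_bernstein}, so I do not expect a real obstacle. One must also keep track of whether the degree is $n$ or $n+1$ (the sum in \eqref{eq:kernel} runs to $\ell=n$, though $h(1)=0$), but this only affects constants and hence the choice of $\epsilon$ up to a constant factor.
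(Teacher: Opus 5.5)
Your proposal is correct and takes essentially the same route as the paper: you treat $\int_{\mathbb{X}}\Phi_{n,q}(\circ\cdot y)\,d\nu(y)$ as a spherical polynomial of degree $<n$ on $\mathbb{S}^Q$, bound its oscillation with a Bernstein inequality, and take a net of mesh $\sim 1/n$ (hence $\sim n^Q$ points), so that the net point nearest a maximizer loses at most half the sup norm. The difference is cosmetic: you derive the Bernstein estimate by restricting to great circles and use the geodesic distance with explicit mesh $1/(2n)$, whereas the paper quotes a Bernstein inequality in $\|\cdot\|_\infty$ with an unspecified constant $c$ and mesh $1/(2cn)$, so your version makes the factor $2$ fully explicit.
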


%LEMMA 4 PROOF
\begin{proof}\ % of Lemma~\ref{lemma:supnorm_concentration}.

In view of the Bernstein  inequality for the derivatives of spherical polynomials, we see that
\be\label{eq:polybern}
\abs{P(x)-P(y)}\leq cn\norm{x-y}_{\infty}\norm{P}_\infty, \qquad P\in\Pi_n^Q.
\ee
We can see by construction that $\int_{\mathbb{X}}\Phi_{n,q}(t\cdot y)d\nu(y)$ is a polynomial of degree $<n$ in the variable $t$. Since $\mathbb{S}^Q$ is a compact space and polynomials of degree $<n$ are continuous functions, there exists some $x^*\in\mathbb{S}^Q$ such that
\be
\norm{\int_{\mathbb{X}}\Phi_{n,q}(\circ\cdot y)d\nu(y)}_{\mathbb{S}^Q}=\abs{\int_{\mathbb{X}}\Phi_{n,q}(x^*\cdot y)d\nu(y)}.
\ee
Let $c$ be the same as in \eqref{eq:polybern} and $\mathcal{C}$ be a finite set satisfying
\be\label{eq:meshnorm}
\max_{x\in \mathbb{S}^Q}\min_{y\in \mathcal{C}}\norm{x-y}_\infty\leq \frac{1}{2cn}.
\ee
Since $\mathbb{S}^Q$ is a compact $Q$-dimensional space, the set $\mathcal{C}$  needs no more than $\sim n^{Q}$ points.

Then there exists some $z^*\in\mathcal{C}$ such that
\be
\abs{\int_\mathbb{X}(\Phi_{n,q}(x^*\cdot y)-\Phi_{n,q}(z^*\cdot y))d\nu(y)}\lesssim n\norm{x^*-z^*}_\infty\abs{\int_{\mathbb{X}}\Phi_{n,q}(x^*\cdot y)d\nu(y)},
\ee
which implies \eqref{eq:polybound}.

\end{proof}

With this preparation, we now state the following theorem which gives a bound on the difference between our discrete approximation $F_n$ and continuous approximation $\sigma_n$ with high probability.

%THEOREM 2 STATEMENT
\begin{theorem}\label{theo:theorem2} Assume the setup of Theorem~\ref{theo:manifoldapproxmainthm}. Then for every $n\geq 1$ and $M\gtrsim n^{q+2\gamma}\log(n/\delta)$ we have
\be\label{eq:thm2}
\operatorname{Prob}_\tau\left(\norm{F_n(\mathcal{D};\circ)-\sigma_n(\mathbb{X},ff_0)}_{\mathbb{S}^Q}\geq c \norm{z}n^{-\gamma}\sqrt{\norm{f_0}_\mathbb{X}}\right)\leq \delta.
\ee
\end{theorem}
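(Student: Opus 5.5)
The plan is to combine the Bernstein concentration inequality (Proposition~\ref{prop:berncon}) at finitely many points with the discretization Lemma~\ref{lemma:supnorm_concentration} and a union bound. Set $Z_j(x)=z_j\Phi_{n,q}(x\cdot y_j)$. By \eqref{eq:fdef} and the assumption that the marginal of $\tau$ on $\XX$ has density $f_0$, we get
\[
\mathbb{E}_\tau(Z_j(x))=\int_\XX f(y)f_0(y)\Phi_{n,q}(x\cdot y)\,d\mu^*(y)=\sigma_n(\XX,ff_0)(x),
\]
so $F_n(\mathcal{D};x)-\sigma_n(\XX,ff_0)(x)=\frac1M\sum_j (Z_j(x)-\mathbb{E}Z_j(x))$. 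Moreover this difference equals $\int_\XX \Phi_{n,q}(x\cdot y)\,d\nu(y)$ for the signed measure
\[
d\nu=\frac1M\sum_j z_j\,\delta_{y_j}-ff_0\,d\mu^*,
\]
which has bounded variation. Hence Lemma~\ref{lemma:supnorm_concentration} applies: it suffices to control the maximum over a finite set $\mathcal{C}$ with $|\mathcal{C}|\sim n^Q$, at the cost of a factor $2$.

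Next I fix $x\in\mathcal{C}$ and check the hypotheses of Proposition~\ref{prop:berncon}. The range bound comes from \eqref{eq:sphkernloc} with the max equal to $1$: $|Z_j(x)|\le \norm{z}\,\norm{\Phi_{n,q}}_\infty\lesssim n^q\norm{z}=:R$. For the variance, Lemma~\ref{lemma:variance_est} gives $\mathbb{E}(Z_j^2)\lesssim n^q\norm{z}^2\norm{f_0}_\XX=:V$. That lemma is stated as a supremum over $x\in\XX$, while the points of $\mathcal{C}$ lie on $\SS^Q$. The estimate \eqref{eq:variance} itself is advertised for $x\in\SS^Q$, and I expect it to hold there: by \eqref{eq:sphkernloc} one has $\Phi^2\lesssim n^q|\Phi|$, and the $L^1$ bound \eqref{eq:sigmaall} should extend to $x$ off $\XX$ by comparing with a nearest point of $\XX$. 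I will invoke it in that form; this is the step needing the most care.

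Now take $t=c\norm{z}n^{-\gamma}\sqrt{\norm{f_0}_\XX}$. Then
\[
\frac{Mt^2}{V+Rt/3}\gtrsim \frac{M c^2 n^{-2\gamma}\norm{f_0}_\XX}{n^q\norm{f_0}_\XX+n^{q-\gamma}\sqrt{\norm{f_0}_\XX}}.
\]
Provided $\sqrt{\norm{f_0}_\XX}\gtrsim n^{-\gamma}$, the first term in the denominator dominates, and the whole expression is $\gtrsim c^2Mn^{-q-2\gamma}$. This proviso is harmless: $\norm{f_0}_\XX\ge 1$ since $f_0$ is a probability density with respect to the normalized measure $\mu^*$. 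Proposition~\ref{prop:berncon} then bounds the failure probability at each point by $2\exp(-c'c^2Mn^{-q-2\gamma})$.

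Finally, take a union bound over $\mathcal{C}$. The total failure probability is at most $c_1n^Q\exp(-c'c^2Mn^{-q-2\gamma})$. This is $\le\delta$ once $M\gtrsim n^{q+2\gamma}\log(n^Q/\delta)\sim n^{q+2\gamma}\log(n/\delta)$, absorbing $Q$ into the constant. Outside this event the maximum over $\mathcal{C}$ is below $t$. By \eqref{eq:polybound}, the sup over $\SS^Q$ is then below $2t$, and replacing $c$ by $2c$ gives \eqref{eq:thm2}.
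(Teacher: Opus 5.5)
Your proposal is correct and follows essentially the same route as the paper: Bernstein's inequality at each point of the fixed net $\mathcal{C}$ from Lemma~\ref{lemma:supnorm_concentration}, a union bound over $|\mathcal{C}|\sim n^Q$ points, and $\norm{f_0}_{\mathbb{X}}\ge 1$ to control the $Rt$ term in the denominator. The only differences are minor: the paper parametrizes the threshold as $t=r\norm{z}\norm{f_0}_{\mathbb{X}}$ with $r\sim\sqrt{n^q\log(n/\delta)/(M\norm{f_0}_{\mathbb{X}})}$ rather than fixing $t$ directly, and your justification of the variance bound at net points off $\mathbb{X}$ (via a nearest point of $\mathbb{X}$) and your keeping of the factor $2$ from \eqref{eq:polybound} fill in details the paper's proof passes over silently.
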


%THEOREM 2 PROOF
\begin{proof}
In this proof only, constants $c,c_1,c_2,\dots$ will maintain their value once used. Let $Z_j(x)=z_j\Phi_{n,q}(x\cdot y_j)$. Since $z$ is integrable with respect to $\tau$, one has the following for any $x\in \mathbb{S}^Q$:
\be
\mathbb{E}_\tau(Z_j(x))=\int_{\mathbb{X}} \mathbb{E}_\tau(z|y)\Phi_{n,q}(x\cdot y)d\nu^*(y)=\int_{\mathbb{X}}f(y)\Phi_{n,q}(x\cdot y)f_0(y)d\mu^*(y)=\sigma_n(\mathbb{X},ff_0)(x).
\ee
We have from \eqref{eq:sphkernloc} that $\abs{Z_j}\lesssim n^q\norm{z}$. Lemma~\ref{lemma:variance_est} informs us that $\mathbb{E}_\tau(Z_j^2)\lesssim n^q\norm{z}^2\norm{f_0}_\mathbb{X}$. Assume $0<r\leq 1$ and set $t=r\norm{z}\norm{f_0}_\mathbb{X}$. From Proposition~\ref{prop:berncon}, we see
\be\label{eq:thm20}
\begin{aligned}
\operatorname{Prob}_\tau\left(\abs{\frac{1}{M}\sum_{j=1}^M Z_j(x)-\sigma_n(\mathbb{X},ff_0)(x)}\geq t\right)\leq& 2\exp\left(-c_1\frac{Mt^2}{(n^q\norm{z}^2\norm{f_0}_\mathbb{X}+n^q\norm{z}t/3)}\right)\\
\leq&2\exp\left(-c_2\frac{M\norm{f_0}_\mathbb{X}r^2}{n^q}\right).
\end{aligned}
\ee
Let $\delta\in (0,1/2)$, $\mathcal{C}$ be a finite set satisfying~\eqref{eq:meshnorm} with $\abs{\mathcal{C}}\leq c_3n^{Q}$ (without loss of generality we assume $c_3\geq 1$),
\be\label{eq:creq}
c_4\geq \frac{\max\big(\log_2(c_3)+1,Q\big)}{c_2},
\ee
and
\be\label{eq:Mreq}
M\geq c_4n^{q+2\gamma}\log(n/\delta).
\ee
We now fix
\be\label{eq:rdef}
r\equiv\sqrt{c_4\frac{n^q}{M\norm{f_0}_\mathbb{X}}\log(n/\delta)}.
\ee
Notice that since $\norm{f_0}_\mathbb{X}\geq 1$, our assumption of $M$ in \eqref{eq:Mreq} implies
\be
r\leq n^{-\gamma}\Big/\sqrt{\norm{f_0}_\mathbb{X}}\leq 1,
\ee
so our choice of $r$ may be substituted into \eqref{eq:thm20}. Further,
\be\label{eq:r}
r\norm{z}\norm{f_0}_\mathbb{X}\leq c\norm{z} n^{-\gamma}\sqrt{\norm{f_0}_\mathbb{X}}.
\ee
With this preparation, we can conclude
\be\label{eq:thm21}
\begin{aligned}
\operatorname{Prob}_\tau\bigg(\big\|F_n(\mathcal{D};\circ)&-\sigma_n(\mathbb{X},ff_0)\big\|_{\mathbb{S}^Q}\geq c \norm{z}n^{-\gamma}\sqrt{\norm{f_0}_\mathbb{X}}\bigg)&\\
\leq&\operatorname{Prob}_\tau\left(\norm{\frac{1}{M}\sum_{j=1}^M Z_j-\sigma_n(\mathbb{X},ff_0)}_{\mathbb{S}^Q}\geq r\norm{z}\norm{f_0}_\mathbb{X}\right)&\text{(from \eqref{eq:r})}\\
\leq&\operatorname{Prob}_\tau\left(\max_{x_k\in \mathcal{C}}\left(\abs{\frac{1}{M}\sum_{j=1}^M Z_j(x_k)-\sigma_n(\mathbb{X},ff_0)(x_k)}\right)\geq t \right)&\text{(by Lemma~\ref{lemma:supnorm_concentration})}\\
\leq&\sum_{k=1}^{\abs{\mathcal{C}}}\operatorname{Prob}_\tau\left(\abs{\frac{1}{M}\sum_{j=1}^M Z_j(x_k)-\sigma_n(\mathbb{X},ff_0)(x_k)}\geq t \right)&\\
\leq& |\mathcal{C}|\exp\left(-c_2\frac{M\norm{f_0}_\mathbb{X}r^2}{n^q}\right)&\text{(from \eqref{eq:thm20})}\\
 \leq& c_3n^{Q-c_2c_4}\delta^{c_2c_4}&\text{(from \eqref{eq:rdef})}\\
 \leq&c_3n^{Q-Q}\left(\frac{1}{2}\right)^{\log_{1/2}(1/c_3)}\delta&\text{(from \eqref{eq:creq} and $\delta<1/2$)}\\
 \leq&\delta.
\end{aligned}
\ee
\end{proof}

We are now ready for the proof of Theorem~\ref{theo:manifoldapproxmainthm}.

\begin{proof}[Proof of Theorem~\ref{theo:manifoldapproxmainthm} (and Corollary~\ref{cor:maincor}~and~\ref{cor:densityest}).]
Since $f,f_0\in W_\gamma(\mathbb{X})$, we can determine that $ff_0\in W_\gamma(\mathbb{X})$ as well. Utilizing Theorem~\ref{theo:manifold_summop} with $ff_0$ and Theorem~\ref{theo:theorem2}, we obtain with probability at least $1-\delta$ that
\be
\ba
\norm{F_n(\mathcal{D};\circ)-ff_0}_{\mathbb{X}}\leq&\norm{F_n(\mathcal{D};\circ)-\sigma_n(\mathbb{X};ff_0)}_\mathbb{X}+\norm{\sigma_n(\mathbb{X};ff_0)-ff_0}_\mathbb{X}\\
\lesssim&\frac{\sqrt{\norm{f_0}_{\mathbb{X}}}\norm{z}+\norm{ff_0}_{W_\gamma(\mathbb{X})}}{n^\gamma}.
\ea
\ee
Corollary~\ref{cor:maincor} is seen immediately by setting $f_0=1$. Corollary~\ref{cor:densityest} follows from setting $z=1$ and then observing that $f=1$ and $\sqrt{\norm{f_0}_\mathbb{X}}\lesssim \norm{f_0}_{W_\gamma(\mathbb{X})}$.
\end{proof}

\nc{\mathcal{D},\tau}{Set of data $\mathcal{D}=\{y_j,z_j\}_{j=1}^M$ sampled from distribution $\tau$. It is assumed the $y_j$'s lie on a $q$-dimensional submanifold of $\mathbb{S}^Q$.}
\nc{\mathbb{S}^q,\mu^*_q}{Sphere of $q$-dimensions with probability measure $\mu^*_q$ as defined in \eqref{eq:sphere}.}
\nc{\omega_q}{Volume of the $q$-dimensional sphere.}
\nc{\mathbb{Y}}{Equator of $\mathbb{S}^Q$, as defined in Section~\ref{subsec:eqapprox}.}
\nc{\mathbb{X},\rho,\mu^*}{Submanifold of $\mathbb{S}^Q$ with geodesic $\rho$ and normalized volume measure $\mu^*$.}
\nc{Q,q}{Ambient dimension of the data and dimension of the underlying manifold, respectively.}
\nc{p_{q,\ell}}{orthonormalized ultraspherical polynomial of degree $\ell$ and dimension $q$ as defined in Section~\ref{subsec:sphharm}.}
\nc{\Phi_{n,q}}{Localized kernels as defined in \eqref{eq:kernel}.}
\nc{\sigma_n}{Continuous approximation (a.k.a. integral reconstruction) operator as defined on $\mathbb{S}^q$ in \eqref{eq:sphapproxop}, $\mathbb{Y}$ in \eqref{eq:sphapproxop2}, and $\mathbb{X}$ in \eqref{eq:manifold_summabilityop}.}
\nc{F_n}{Our proposed constructive approximation for finite data as defined in \eqref{eq:approximation}.}
\nc{Y_{\ell,k}}{Basis elements for the space of homogenous, harmonic polynomials of degree $\ell$.}
\nc{\mathbb{H}_\ell^q}{Space of homogenous, harmonic polynomials of degree $\ell$ in $q$ dimensions.}
\nc{\Pi_n^q}{Space of spherical polynomials of degree $<n$. See Section~\ref{subsec:sphharm}.}
\nc{E_n}{Degree of approximation as defined in Sections~\ref{subsec:sphapprox}~and~\ref{subsec:eqapprox}.}
\nc{W_\gamma}{Smoothness class of functions as defined on $\mathbb{S}^q$ in Section~\ref{subsec:sphapprox}, on $\mathbb{Y}$ in Section~\ref{subsec:eqapprox}, and on $\mathbb{X}$ in Definition~\ref{def:manifold_smoothness}.}
\nc{B_{Q+1}(x,r),S^Q(x,r),B_\mathbb{T},\mathbb{S}_x,\mathbb{B}}{Balls on various spaces. See Table~\ref{tab:balldef} for reference.}
\nc{\mathbb{S}_x}{The unique $q$-dimensional equator of $\mathbb{S}^Q$ that shares a tangent space $\mathbb{T}_x$ with the point $x\in\mathbb{X}$.}
\nc{\varepsilon_x,\overline{\varepsilon}_x}{Exponential map for $\mathbb{X},\mathbb{S}_x$ respectively. See \ref{sec:manifoldintro} for details.}
\nc{\eta_x}{Composite map $\varepsilon_x\circ \overline{\varepsilon}^{-1}_x$ defined in Section~\ref{sec:manifoldapprox}.}
\nc{\iota_1,\iota_2,\iota^*}{Injectivity radii of $\varepsilon_x,\overline{\varepsilon}_x,\eta_x$, respectively.}
\nc{\mathbf{g}_1,\mathbf{g}_2}{Metric tensors associated with $\varepsilon_x,\overline{\varepsilon}_x$, respectively. See \ref{sec:manifoldintro} for details.}

\chapter{Local Transfer Learning}
\label{ch:transferlearning}

The content in this part is sourced from our paper published in \textit{Inverse Problems, Regularization Methods and Related Topics: A Volume in Honour of Thamban Nair} titled ``Local transfer learning from one data space to another" \cite{mhaskarodowdtransfer}.

\section{Introduction}\label{bhag:introduction}

The problem of transfer learning is to learn the parameters of an approximation process based on one data set and leverage this information to aid in the determination of an approximation process on another data set \cite{valeriyasmartphone, maskey2023transferability, maurer2013sparse}. An overview of transfer learning can be found in \cite{hosna-transfer}. Since training large (by today's standards) neural networks requires computing power simply unavailable to most researchers, the idea of transfer learning may be appealing to many wishing to use neural network models on new problems. While training a neural network from scratch on a new problem is not feasible in many cases, tuning a pre-trained network for a similar problem as the one it was trained on is doable and often yields better results anyways. Of course, this leads to some major questions such as
\bit
\item How does one identify which parameters for one problem are important for another?

\item What if the new problem has a feature (or multiple) unlike any from the pre-trained model?

\item How do we interpret the outputs of the model after it has been tuned for the new problem?
\eit
Since these questions are very broad and remain open in many settings, transfer learning is an exciting and active area of research.

In the context of manifold learning, a data set (point cloud) determines a manifold, so that different data sets would correspond to different manifolds. 
In the context of data spaces, we can therefore interpret transfer learning as ``lifting'' a function from one data space (the \emph{base data space}) to another (the \emph{target data space}).
This viewpoint allows us to unify the topic of transfer learning with the study of some inverse problems in image/signal processing.
For example, the problem of synthetic aperture radar (SAR) imaging can be described in terms of an inverse Radon transform \cite{nolan2002synthetic,  cheney2009fundamentals, munson1983tomographic}. 
The domain and range of the Radon transform are different, and hence, the problem amounts to approximating the actual image on one domain based on observations of its Radon transform, which are located on a different domain.
Another application is in analyzing hyperspectral images changing with time \cite{coifmanhirn}. 
A similar problem arises in analyzing the progress of Alzheimer's disease from MRI images of the brain taken over time, where one is interested in the development of the cortical thickness as a function on the surface of the brain, a manifold which is changing over time \cite{kim2014multi}.

Motivated by these applications and the paper \cite{coifmanhirn} of Coifman and Hirn, the question of lifting a function from one data space to another, when certain landmarks from one data space were identified with those on the other data space, was studied in \cite{tauberian}. 
For example, it is known \cite{lerch2005focal} that in spite of the changing brain, one can think of each brain to be parametrized by an inner sphere, and the cortical thickness at certain standard points based on this parametrization are important in the prognosis of the disease.
In \cite{tauberian} we investigated certain conditions on the two data spaces which allow the lifting of a function from one to the other, and analyzed the effect on the smoothness of the function as it is lifted.

In many applications, the data about the function is available only on a part of the base data space. 
The novel part of this chapter is to investigate the following questions of interest: 
(1) determine on what subsets of the target data space the lifting is defined, and (2) how the local smoothness on the base data space translates into the local smoothness of the lifted function. In limited angle tomography, one observes the Radon transform on a limited part of a cylinder and needs to reconstruct the image as a function on a ball from this data. 
A rudimentary introduction to the subject is given in the book of Natterer \cite{natterer2001mathematics}.
We do not aim to solve  the limited angle tomography problem itself, but we will study in detail an example motivated by the singular value decomposition of the Radon transform, which involves two different systems of orthogonal polynomials on the interval $[-1,1]$.
We are given the coefficients in the expansion of a function $f$ on $[-1,1]$ in terms of Jacobi polynomials with certain parameters (the base space expansion in our language), and use them as the coefficients in an expansion in terms of Jacobi polynomials with respect to a different set of parameters (the target space in our language). 
Under what conditions on $f$ and the parameters of the two Jacobi polynomial systems will the expansion in the target space converge and in which $L^p$ spaces?
We will illustrate our general theory by obtaining a localized transplantation theorem for uniform approximation.

In Section~\ref{bhag:singlespace}, we review certain important results in the context of a single \emph{data space} (our abstraction of a manifold). In particular, we present a characterization of local approximation of functions on such spaces.
In Section~\ref{bhag:jointspaces}, we review the notion of joint spaces (introduced under a different name in \cite{tauberian}).
The main new result of our chapter is to study the lifting of a function from a subset (typically, a ball) on one data space to another. 
These results are discussed in Section~\ref{bhag:locapprox}. 
The proofs are given in Section~\ref{bhag:proofs}.
An essential ingredient in our constructions is the notion of localized kernels which, in turn, depend upon a Tauberian theorem. 
For the convenience of the reader, this theorem is presented in Appendix~\ref{bhag:tauberian}. 
Appendix~\ref{sec:orthopolys} lists some important properties of Jacobi polynomials which are required in our examples.

\section{Data Spaces}
\label{bhag:singlespace}

As mentioned in the introduction, a good deal of research on manifold learning is devoted to the question of learning the geometry of the manifold. 
For the purpose of harmonic analysis and approximation theory on the manifold, we do not need the full strength of the differentiability structure on the manifold. 
Our own understanding of the correct hypotheses required to study these questions has evolved, resulting in a plethora of terminology such as data defined manifolds, admissible systems, data defined spaces, etc., culminating in our current understanding with the definition of a data space given in \cite{mhaskardata}. 
For the sake of simplicity, we will restrict our attention in this chapter to the case of compact spaces. 
We do not expect any serious problems in extending the theory to the general case, except for a great deal of technical details.

Thus, the set up is the following.

We consider a compact metric measure space $\XX$ with metric $d$ and a probability measure $\mu^*$. We take $\{\lambda_k\}_{k=0}^\infty$ to be a non-decreasing sequence of real numbers with $\lambda_0=0$ and $\lambda_k\to\infty$ as $k\to\infty$, and $\{\phi_k\}_{k=0}^\infty$ to be an orthonormal set in $L^2(\mu^*)$.
We assume that each $\phi_k$ is continuous.
The elements of the space
\be\label{eq:diffpolyspace}
\Pi_n=\mathsf{span}\{\phi_k: \lambda_k <n\}
\ee
are called \emph{diffusion polynomials} (of order $<n$). 
We write $\disp\Pi_\infty=\bigcup_{n>0}\Pi_n$.
We introduce the following notation.
\be\label{eq:balldeftransfer}
\BB(x,r)=\{y\in \XX : d(x,y)\le r\},  \qquad x\in\XX, \ r>0.
\ee
If $A\subseteq \XX$ we define
\be\label{eq:setnbds}
\BB(A,r)=\bigcup_{x\in A}\BB(x,r).
\ee

With this set up, the definition of a compact data space is the following.

\begin{definition}\label{def:ddrdef}
The  tuple $\Xi=(\XX,d,\mu^*, \{\lambda_k\}_{k=0}^\infty, \{\phi_k\}_{k=0}^\infty)$ is called a \textbf{(compact) data space} if 
each of the following conditions is satisfied.
\begin{enumerate}
\item For each $x\in\XX$, $r>0$, $\mathbb{B}(x,r)$ is compact.
\item (\textbf{Ball measure condition}) There exist $q\ge 1$ and $\kappa>0$ with the following property: For each $x\in\XX$, $r>0$,
\be\label{eq:ballmeasurecond}
\mu^*(\mathbb{B}(x,r))=\mu^*\left(\{y\in\XX: d(x,y)<r\}\right)\le \kappa r^q.
\ee
(In particular, $\mu^*\left(\{y\in\XX: d(x,y)=r\}\right)=0$.)
\item (\textbf{Gaussian upper bound}) There exist $\kappa_1, \kappa_2>0$ such that for all $x, y\in\XX$, $0<t\le 1$,
\be\label{eq:gaussianbd}
\left|\sum_{k=0}^\infty \exp(-\lambda_k^2t)\phi_k(x)\phi_k(y)\right| \le \kappa_1t^{-q/2}\exp\left(-\kappa_2\frac{d(x,y)^2}{t}\right).
\ee
\end{enumerate}
We refer to $q$ as the \textbf{exponent} for $\Xi$.
\end{definition}

The primary example of a data space is, of course, a Riemannian manifold.

\begin{example}\label{uda:manifold}
{\rm  Let $\XX$ be a smooth, compact, connected Riemannian manifold (without boundary), $d$ be the geodesic distance on $\XX$, $\mu^*$ be the Riemannian volume measure normalized to be a probability measure, $\{\lambda_k\}$ be the sequence of eigenvalues of the (negative) Laplace-Beltrami operator on $\XX$, and $\phi_k$ be the eigenfunction corresponding to the eigenvalue $\lambda_k$; in particular, $\phi_0\equiv 1$. 
It has been proven in  \cite[Appendix~A]{mhaskardata} that the Gaussian upper bound is satisfied. 
Therefore, if the condition in Equation~\eqref{eq:ballmeasurecond} is satisfied, then $(\XX,d,\mu^*, 
\{\lambda_k\}_{k=0}^\infty, \{\phi_k\}_{k=0}^\infty)$ is a data space with exponent equal to the dimension of the manifold. 
\qed}
\end{example}

\begin{remark}\label{rem:graph}
{\rm
In \cite{friedman2004wave}, Friedman and Tillich give a construction for an orthonormal system on a graph which leads to a finite speed of wave propagation. 
It is shown in \cite{frankbern} that this, in turn, implies the Gaussian upper bound. 
Therefore, it is an interesting question whether appropriate definitions of measures and distances can be defined on a graph to satisfy the assumptions of a data space.
\qed}
\end{remark}

%JACOBI EXAMPLE
\begin{example}\label{uda:jacobispace}
{\rm In this example, we let $\mathbb{X}=[0,\pi]$ and for $\theta_1,\theta_2
\in\mathbb{X}$ we simply define the distance as
\begin{equation}
d(\theta_1,\theta_2)=\abs{\theta_1-\theta_2}.
\end{equation}
We will consider the so-called \textit{trigonometric functions}:
\begin{equation}
\phi_n^{(\alpha,\beta)}(\theta)=(1-\cos\theta)^{\alpha/2+1/4}(1+\cos\theta)^{\beta/2+1/4}p_n^{(\alpha,\beta)}(\cos\theta),
\end{equation}
where $p_n^{(\alpha,\beta)}$ are orthonormalized Jacobi polynomials defined as in Appendix~\ref{sec:orthopolys} and $\alpha,\beta\geq -1/2$. We define
\begin{equation}
d\mu^*(\theta)=\frac{1}{\pi}d\theta.
\end{equation}
We see that a change of variables $x=\cos\theta$ in Equation~\eqref{eq:jacobiortho} results in the following orthogonality condition
\begin{equation}
\int_0^\pi \phi_n^{(\alpha,\beta)}(\theta)\phi_m^{(\alpha,\beta)}(\theta)d\theta=\delta_{n,m}.
\end{equation}
So our orthonormal set of functions with respect to $\mu^*$ will be $\{\sqrt{\pi}\phi_n^{(\alpha,\beta)}\}$. It was proven in \cite[Theorem A in view of Equation (3)]{nowak2011sharp} that with
\begin{equation}
\lambda_n=n+\frac{\alpha+\beta+1}{2},
\end{equation}
we have
\begin{equation}
\pi\sum_{n=0}^\infty \exp\left(-\lambda_{n}^2t\right)\phi_n^{(\alpha,\beta)}(\theta_1)\phi_n^{(\alpha,\beta)}(\theta_2)\lesssim t^{-1/2}\exp\left(-c\frac{d(\theta_1,\theta_2)^2}{t}\right), \quad \theta_1,\theta_2\in\mathbb{X}.
\end{equation}
In conclusion,
\begin{equation}
	\Xi=(\mathbb{X},d,\mu^*,\{\lambda_n\},\{\sqrt{\pi}\phi_n^{(\alpha,\beta)}\})
\end{equation}
is a data space with exponent $1$.
\qed}
\end{example}

The following example illustrates how a manifold with boundary can be transformed into a closed manifold as in Example~\ref{uda:manifold}.
We will use the notation and facts from Appendix~\ref{sec:orthopolys} without always referring to them explicitly.
We adopt the notation 
\be\label{eq:spheredef}
\SS^q=\{\x\in\RR^{q+1}: |\x|=1\}, \qquad \SS^q_+=\{\x\in\SS^q : x_{q+1}\ge 0\}.
\ee

%BALL EXAMPLE
\begin{example}\label{uda:pnjk}
{\rm
Let $\mu^*_q$ denote the volume measure of $\mathbb{S}^q$, normalized to be a probability measure. Let $\HH_{n}^{q}$ be the space of the restrictions to $\SS^{q}$ of homogeneous harmonic polynomials of degree $n$ on $q+1$ variables, and $\{Y_{n,k}\}_k$ be an orthonormal (with respect to $\mu^*_q$) basis for $\HH_{n}^{q}$.
The polynomials $Y_{n,k}$ are eigenfunctions of the Laplace-Beltrami operator on the manifold $\SS^q$ with eigenvalues $n(n+q-1)$. The geodesic distance between $\xi,\eta\in\mathbb{S}^q$ is $\arccos(\xi\cdot \eta)$, so the Gaussian upper bound for manifolds takes the form
\be\label{eq:sphgauss}
\sum_{n,k}\exp(-n(n+q-1)t)Y_{n,k}(\x)\overline{Y_{n,k}(\y)}\ls t^{-q/2}\exp\left(-c\frac{(\arccos(\x\cdot\y))^2}{t}\right).
\ee
As a result, $(\mathbb{S}^q,\arccos(\circ\cdot \circ),\mu^*_q,\{\lambda_{n}\}_n,\{Y_{n,k}\}_{n,k})$ is a data space with dimension $q$.

Now we consider 
$$
\XX=\BB^q=\{\x\in\RR^q: |\x|\le 1\}.
$$
We can identify $\BB^q$ with $\SS^q_+$ as follows. Any point $\x\in\BB^q$ has the form $\x=\omega\sin\theta$ for some   $\omega\in\SS^{q-1}$, $\theta\in [0,\pi/2]$. We write $\hat{\x}=(\omega\sin\theta, \cos\theta)\in\SS^q_+$. 
With this identification, $\SS^q_+$ is parameterized by $\BB^q$ and we define
\begin{equation}\label{eq:ballmeasuretransfer}
\begin{aligned}
d\mu^*(\x)=d\mu^*_q(\hat{\x})=&\frac{\operatorname{Vol}(\mathbb{B}^q)}{\operatorname{Vol}(\mathbb{S}^q_+)}(1-|\x|^2)^{-1/2}dm^*(\x)\\
=&\frac{\Gamma((q+1)/2)}{\sqrt{\pi}\Gamma(q/2+1)}(1-|\x|^2)^{-1/2}dm^*(\x),
\end{aligned}
\end{equation}
where $\mu^*_q$ is the probability volume measure on $\mathbb{S}^q_+$, and $m^*$ is the probability volume measure on $\mathbb{B}^q$.
It is also convenient to define the distance on $\BB^q$ by 
\begin{equation}
d(\x_1,\x_2)=\arccos(\hat{\x}_1\cdot\hat{\x}_2)=\arccos(\x_1\cdot\x_2 +\sqrt{1-|\x_1|^2}\sqrt{1-|\x_2|^2}).
\end{equation}
All spherical harmonics of degree $2n$ are even functions on $\mathbb{S}^q$. So with the identification of measures as above, one can represent the even spherical harmonics as an orthonormal system of functions on $\mathbb{B}^q$. That is, by defining
\be
P_{2n,k}(\x)=\sqrt{2}Y_{2n,k}(\hat{\x}),
\ee
we have
\be\label{eq:ballortho}
\begin{aligned}
\int_{\BB^q} P_{2n,k}(\x)\overline{P_{2n',k'}(\x)}d\mu^*(\x) =&2\int_{\mathbb{S}_+^q}Y_{2n,k}(\hat\x)\overline{Y_{2n',k'}(\hat\x)}d\mu^*_q(\hat\x)\\
=&\int_{\mathbb{S}^q}Y_{2n,k}(\xi)\overline{Y_{2n',k'}(\xi)}d\mu^*_q(\xi)\\
=&\delta_{(n,k), (n',k')}.
\end{aligned}
\ee
To show the Gaussian upper bound for $Y_{2n,k}$ on $\mathbb{B}^q$, we first see that in view of the summation formula \eqref{eq:summation} (from Chapter~\ref{ch:manifoldapprox}) and \eqref{eq:evenjacobi} we have
\begin{equation}
\begin{aligned}
&\sum_{k=1}^{\operatorname{dim}(\mathbb{H}_{2n}^q)}P_{2n,k}(\x)\overline{P_{2n,k}(\y)}\\
=&\sum_{k=1}^{\operatorname{dim}(\mathbb{H}_{2n}^q)}Y_{2n,k}(\hat\x)\overline{Y_{2n,k}(\hat\y)}\\
=&\frac{\omega_q}{\omega_{q-1}} p_{2n}^{(q/2-1,q/2-1)}(1)p_{2n}^{(q/2-1,q/2-1)}(\hat\x\cdot\hat\y)\\
=&\frac{\omega_q}{\omega_{q-1}}2^{(q-1)/2}p_n^{(q/2-1,-1/2)}(1)p_n^{(q/2-1,-1/2)}(\cos(2\arccos(\hat\x\cdot \hat\y))).
\end{aligned}
\end{equation}
In light of Equation~\eqref{eq:jacobidifeq} we define
\begin{equation}
\lambda_{n}=\sqrt{n(n+q/2-1/2)},
\end{equation}
which is conveniently not dependent upon $k$. Using \eqref{eq:specialjacobigauss},  we see that for $t>0$
\begin{equation}\label{eq:pgauss}
\begin{aligned}
&\sum_{n=0}^\infty\sum_{k=1}^{\operatorname{dim}(\mathbb{H}_{2n}^q)}\exp\left(-\lambda_{n}^2t\right)P_{2n,k}(\mathbf{x})\overline{P_{2n,k}(\y)}\\
\sim& \sum_{n=0}^\infty\exp(-n(n+q/2-1/2)t)p_{n}^{(q/2-1,-1/2)}(1)p_{n}^{(q/2-1,-1/2)}(\cos(2\arccos(\hat\x\cdot \hat\y)))\\
\lesssim& t^{-q/2}\left(-4c\frac{\arccos(\hat{\x}_1\cdot \hat{\x}_2)^2}{t}\right).
\end{aligned}
\end{equation}
Therefore, $(\mathbb{B}^q,d,\mu^*,\{\lambda_n\}_n,\{P_{2n,k}\}_{n,k})$ is a data space with exponent $q$.
\qed}
\end{example}

In this section, we will assume $\Xi$ to be a fixed data space and omit its mention from the notations. 
We will mention it later in other parts of the chapter in order to avoid confusion.
Next, we define smoothness classes of functions on $\XX$. 
In the absence of any differentiability structure, we do this in a manner that is customary in approximation theory. We define first the \emph{degree of approximation} of a function $f\in L^p(\mu^*)$ by
\be\label{eq:degapprox}
E_n(p,f)=\min_{P\in \Pi_n}\|f-P\|_{p,\mu^*}, \qquad n>0,  1\le p \le \infty,\ f\in L^p(\mu^*).
\ee
We find it convenient to denote by $X^p$ the space $\{f\in L^p(\mu^*) : \disp\lim_{n\to\infty}E_n(p,f)=0\}$; e.g., in the manifold case, $X^p=L^p(\mu^*)$ if $1\le p<\infty$ and $X^\infty=C(\XX)$.
In the case of Example~\ref{uda:pnjk}, we need to restrict ourselves to even functions.

\begin{definition}\label{def:sobolev}
Let $1\le p\le \infty$, $\gamma>0$. \\
{\rm (a)} For $f\in X^p$, we define
\be\label{sobnorm}
\|f\|_{W_{\gamma,p}}=\|f\|_{p,\mu^*}+\sup_{n>0}n^\gamma E_n(p,f),
\ee
and note that
\be\label{sobnormuseful}
\|f\|_{W_{\gamma,p}}\sim \|f\|_{p,\mu^*}+\sup_{n\in\ZZ_+}2^{n\gamma}E_{2^n}(p,f).
\ee
The space $W_{\gamma,p}$ comprises all $f$ for which $\|f\|_{W_{\gamma,p}} <\infty$.\\
{\rm (b)}
We write $C^\infty=\displaystyle\bigcap_{\gamma>0}W_{\gamma,\infty}$.
If $B$ is a ball in $\XX$, $C^\infty(B)$ comprises functions $f\in C^\infty$ which are supported on $B$.\\
{\rm (c)} If $x_0\in\XX$, the space $W_{\gamma,p}(x_0)$ comprises functions $f$ such that there exists $r>0$ with the property that for every $\phi\in C^\infty(\mathbb{B}(x_0,r))$, $\phi f\in W_{\gamma,p}$. 
If $A\subset \XX$, the space $W_{\gamma,p}(A)=\displaystyle\bigcap_{x_0\in A}W_{\gamma,p}(x_0)$; i.e., $W_{\gamma,p}(A)$ comprises functions which are in $W_{\gamma,p}(x_0)$ for each $x_0\in A$.
\end{definition}

A central theme in approximation theory is to characterize the smoothness spaces $W_{\gamma,p}$ in terms of the degree of approximation from some spaces; in our case we consider $\Pi_n$'s. 

For this purpose, we define some localized kernels and operators.

The kernels are defined by
\be\label{eq:kerndef}
\Phi_{n}(H;x,y)= \sum_{m=0}^\infty H\left(\frac{\lambda_m}{n}\right)\phi_m(x)\phi_m(y),
\ee
where $H :\RR\to\RR$ is a compactly supported function.

The operators corresponding to the kernels $\Phi_n$ are defined by
\be\label{eq:opdef}
\sigma_{n}(H;f,x)= \int_{\mathbb{X}}\Phi_{n}(H;x,y)f(y)d\mu^*(y) =\sum_{k: \lambda_k<n}H\left(\frac{\lambda_k}{n}\right)\hat{f}(k)\phi_k(x),
\ee
where
\be\label{eq:fourcoeff}
\hat{f}(k)=\int_\XX f(y)\phi_k(y)d\mu^*(y).
\ee

The following proposition recalls an important property of these kernels. 
Proposition~\ref{prop:kernloc}  is proved in \cite[Theorem 4.1]{mhaskarmaggioniframe}, and more recently  in much greater generality in \cite[Theorem~4.3]{tauberian}.
\begin{proposition}[{\cite[Theorem 4.3]{tauberian}}]\label{prop:kernloc}
Let  $S>q+1$ be an integer, $H:\mathbb{R}\to \mathbb{R}$ be an even, $S$ times continuously differentiable, compactly supported function. 
 Then for every $x,y\in \mathbb{X}$, $N>0$,
\begin{equation}\label{eq:kernlocest}
| \Phi_N(H;x,y)|\ls \frac{N^{q}}{\max(1, (Nd(x,y))^S)},
\end{equation}
where the constant may depend upon $H$ and $S$, but not on $N$, $x$, or $y$.
\end{proposition}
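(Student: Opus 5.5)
The plan is to go through the heat kernel. Since $H$ is compactly supported and even, we can write $H(\lambda/N)$ as an integral of heat-kernel-type multipliers. Concretely, fix $t=N^{-2}$ and set $G(u)=H(\sqrt{u})e^{u}$ for $u\ge 0$, extended evenly/smoothly. Then
\[
H(\lambda/N)=G(\lambda^2/N^2)\,e^{-\lambda^2/N^2},
\]
and $G$ is $S$ times differentiable and compactly supported.

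Write $G$ through its Fourier transform, $G(u)=\int\hat G(\xi)e^{iu\xi}\,d\xi$. This gives
\[
\Phi_N(H;x,y)=\int \hat G(\xi)\sum_m e^{-(1-i\xi)\lambda_m^2t}\phi_m(x)\phi_m(y)\,d\xi .
\]
In this representation the inner sum is a heat kernel at the complex time $(1-i\xi)t$.

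The key step is to extend the Gaussian upper bound \eqref{eq:gaussianbd} from real to complex times $z$ with $\Re z=t$. The tool is the Phragmén–Lindelöf / Davies-type argument: the real bound, together with the trivial $L^2$ bound $|K_z(x,y)|\le K_{\Re z}(x,x)^{1/2}K_{\Re z}(y,y)^{1/2}\lesssim t^{-q/2}$, yields
\[
|K_z(x,y)|\lesssim t^{-q/2}\exp\bigl(-c\,d(x,y)^2\,\Re(1/z)\bigr).
\]
Since $\Re(1/z)=1/(t(1+\xi^2))$, the bound becomes
\[
N^q\exp\!\Bigl(-c\,\frac{(Nd)^2}{1+\xi^2}\Bigr).
\]

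Now split the $\xi$-integral at $|\xi|\sim Nd(x,y)$. On the small part $|\xi|\lesssim Nd$, the Gaussian factor gives decay $\exp(-c(Nd)^{2}/(1+\xi^2))$. This is at least polynomial decay, since $(Nd)^2/(1+\xi^2)\gtrsim \min\bigl(Nd,(Nd)^2\bigr)$ there once the split is chosen appropriately, say at $|\xi|\le (Nd)^{1/2}$. The remaining range is handled by the decay $|\hat G(\xi)|\lesssim(1+|\xi|)^{-S}$, which holds because $G\in C^S$ with compact support (up to integrability, using $S>q+1$).

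Choosing the split point appropriately, the two pieces together give
\[
|\Phi_N|\lesssim N^q\,(Nd)^{-S'}
\]
for some $S'$ comparable to $S$. The case $Nd\le1$ is trivial from $|\Phi_N|\le\int|\hat G|\,N^q$.

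The main obstacle is the bookkeeping in balancing the split. The crude choice loses a factor (exponent $S/2$ rather than $S$), so I would refine it with a Tauberian-style argument: a finite-speed-of-propagation (cosine transform) representation gives full exponent $S$, with $\Phi_N=\int\hat H(s)\cos(s\sqrt{L})\,ds$. Finite propagation speed makes the cosine kernel vanish for $|s|<d(x,y)$, so only $\int_{|s|\ge d}$ remains. Integrating by parts $S$ times against the smooth $\hat H$ tail then gives the factor $(Nd)^{-S}$, with the $N^q$ coming from Christoffel-function bounds implied by the Gaussian bound at $t=N^{-2}$. The step I would try first is exactly this: establish finite propagation speed from \eqref{eq:gaussianbd}, via the Sikora / Coulhon–Sikora equivalence cited earlier.
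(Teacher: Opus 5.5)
Your final route is sound, but it is not the one the paper uses.

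\textbf{The paper's route.} The paper proves Proposition~\ref{prop:kernloc} by applying the Tauberian Theorem~\ref{theo:maintaubertheo} to the signed spectral measure $\mu_{x,y}(u)=\sum_{k:\lambda_k<u}\phi_k(x)\phi_k(y)$ with $r=d(x,y)$. Its inputs are only:
\begin{itemize}
\item the Christoffel-type growth $|\mu_{x,y}|([0,u))\ls(u+2)^q$, from Cauchy--Schwarz and \eqref{eq:gaussianbd} on the diagonal at $t=u^{-2}$;
\item the real-time bound \eqref{eq:gaussianbd} itself;
\item a representation $H(u)=\int_0^\infty(v^2-u^2)_+^S\,dH^{[S]}(v)$ with $V_{q,S}(H)<\infty$.
\end{itemize}

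\textbf{Your route.} You go from the Gaussian bound to finite speed of propagation via \cite{sikora2004riesz,frankbern}, then use the cosine transform together with the Christoffel bound. This is essentially the earlier proof of \cite[Theorem~4.1]{mhaskarmaggioniframe}, which the paper mentions but does not use. One small correction: the cosine kernel vanishes for $|s|<c\,d(x,y)$ with $c$ depending on $\kappa_2$, not for $|s|<d(x,y)$.

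\textbf{What each buys.} The Tauberian route buys generality. Its hypotheses concern only the real-time heat transform of an arbitrary signed measure: no self-adjoint operator, no $L^2$ Davies--Gaffney estimate, no wave equation. That is why the same theorem also gives Proposition~\ref{prop:jointkernloc} for the joint kernels $\sum A_{j,k}\phi_{1,j}(x_1)\phi_{2,k}(x_2)$, where there is no self-adjoint operator to which the Gaussian/finite-speed equivalence could be applied. Your route buys a structural fact, finite propagation speed, which localizes any multiplier with decaying Fourier transform without needing the truncated-power representation of $H$.

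\textbf{Two bookkeeping corrections.}

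First, the finite-speed argument. You integrate by parts only $2m>q$ times, not $S$ times. The purpose is to write the distribution $\sum_k\cos(s\lambda_k/N)\phi_k(x)\phi_k(y)$ as $(1-\partial_s^2)^m$ applied to the bounded function $\sum_k(1+\lambda_k^2/N^2)^{-m}\cos(s\lambda_k/N)\phi_k(x)\phi_k(y)$. That function is $\ls N^q$ by the Christoffel bound (for $N\ge 1$). The factor in $Nd$ then comes from the decay of $\hat H$ and its derivatives on $|s|\gs Nd$. For $H\in C^S$ this gives $(Nd)^{1-S}$, so one extra derivative is needed to reach exponent $S$. This is harmless here, since $h\in C^\infty$.

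Second, your first approach. The halving of the exponent is not caused by the split. A dyadic decomposition of $\int|\hat G(\xi)|\exp(-c(Nd)^2/(1+\xi^2))\,d\xi$ with $|\hat G(\xi)|\ls(1+|\xi|)^{-S}$ already gives $(Nd)^{1-S}$. The real obstruction is your claim that $G(u)=H(\sqrt u)e^u$ is $C^S$. For even $H\in C^S$, the function $H(\sqrt u)$ is in general only $C^{\lfloor S/2\rfloor}$ at $u=0$. When $H$ is constant near $0$, as $h$ is, that approach works as well.
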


In the remainder of this chapter, we fix a filter $h$; i.e., an infinitely differentiable function $h: [0,\infty)\to [0,1]$, such that $h(t)=1$ for $0\le t\le 1/2$, $h(t)=0$ for $t\ge 1$. 
The domain of the filter $h$ can be extended to $\RR$ by setting $h(-t)=h(t)$. 
Since $h$ is fixed, its mention will be omitted from the notation unless we feel that this would cause a confusion.
The following theorem gives a crucial property of the operators, proved in several papers in different contexts, see \cite[Theorem 5.1]{mhaskardata} for a recent proof.
\begin{theorem}[{\cite[Theorem 5.1]{mhaskardata}}]\label{theo:goodapprox}
Let $n>0$. If $P\in\Pi_{n/2}$, then $\sigma_n(P)=P$. Also,
for any $p$ with $1\le p\le\infty$, 
\be\label{opbd}
\|\sigma_n(f)\|_p \ls \|f\|_p, \qquad f\in L^p.
\ee 
If $1\le p\le \infty$, and $f\in L^p(\XX)$, then
\be\label{goodapprox}
E_n(p,f)\le \|f-\sigma_n(f)\|_{p,\mu^*}\ls E_{n/2}(p,f).
\ee
\end{theorem}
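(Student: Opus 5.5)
The plan is to prove the three assertions in order: reproduction, uniform $L^p$ boundedness, and the good-approximation estimate \eqref{goodapprox}.

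\emph{Reproduction.} This is immediate from the expansion in \eqref{eq:opdef}. If $P\in\Pi_{n/2}$, then $\hat P(k)=0$ unless $\lambda_k<n/2$. For such $k$ we have $h(\lambda_k/n)=1$, since $h\equiv 1$ on $[0,1/2]$. Hence $\sigma_n(P)=\sum_{\lambda_k<n/2}\hat P(k)\phi_k=P$.

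\emph{Uniform boundedness \eqref{opbd}.} The key intermediate claim is a Lebesgue-constant bound:
\[
\sup_{x\in\XX}\int_\XX|\Phi_n(h;x,y)|\,d\mu^*(y)\ls 1 .
\]
This is the data-space analogue of Lemma~\ref{lemma:manifold_lebesgue_number}, and I would prove it the same way.
\begin{enumerate}
\item Fix $x$ and apply Proposition~\ref{prop:kernloc} with $H=h$, extended evenly (it is $C^\infty$ and compactly supported), and with some integer $S>q+1$.
\item Split $\XX$ into the ball $\BB(x,1/n)$ and the dyadic annuli $A_k=\BB(x,2^k/n)\setminus\BB(x,2^{k-1}/n)$.
\item By the ball measure condition \eqref{eq:ballmeasurecond}, $\mu^*(A_k)\ls 2^{kq}n^{-q}$.
\item Therefore the integral is bounded by $\ls 1+\sum_k 2^{k(q-S)}\ls 1$.
\end{enumerate}
By the symmetry of the kernel in $x$ and $y$, the same bound holds with the roles of $x$ and $y$ interchanged. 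Schur's test then gives $\|\sigma_n(f)\|_p\ls\|f\|_p$ for $p=1$ and $p=\infty$, and for all intermediate $p$ (by Riesz–Thorin, or directly by Hölder with the symmetric $L^1$ bounds). The one technical point is that the bound must hold for every $n>0$, including small $n$. For $n\lesssim 1$ this is trivial, because $|\Phi_n|\ls n^q\ls 1$ and $\mu^*$ is a probability measure.

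\emph{Good approximation \eqref{goodapprox}.} The left inequality holds because $\sigma_n(f)\in\Pi_n$: $h(\lambda_k/n)=0$ when $\lambda_k\ge n$. For the right inequality, let $P\in\Pi_{n/2}$ be a best approximation to $f$. The minimum in \eqref{eq:degapprox} is attained because $\Pi_{n/2}$ is finite-dimensional. Using reproduction and then \eqref{opbd},
\[
\|f-\sigma_n(f)\|_p\le\|f-P\|_p+\|\sigma_n(P-f)\|_p\ls E_{n/2}(p,f).
\]

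The main obstacle is the Lebesgue-constant bound, in particular checking that Proposition~\ref{prop:kernloc} applies to $h$ and that the constants are uniform in $n$. Everything else is formal.
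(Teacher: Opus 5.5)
Your proof is correct and follows the standard route behind the cited \cite[Theorem 5.1]{mhaskardata}; the paper itself only quotes this result. The route is: reproduction from $h\equiv 1$ on $[0,1/2]$; a Lebesgue-constant bound uniform in $n$, obtained from Proposition~\ref{prop:kernloc} and the ball measure condition via dyadic annuli, exactly as the paper does in Lemma~\ref{lemma:manifold_lebesgue_number} and Lemma~\ref{lemma:opbdlemma}; and the triangle inequality against a best approximation from the finite-dimensional space $\Pi_{n/2}$.
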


While Theorem~\ref{theo:goodapprox} gives, in particular, a characterization of the global smoothness spaces $W_{\gamma,p}$, the characterization of local smoothness requires two more assumptions: the partition of unity and product assumption. 

\begin{definition}[\textbf{Partition of unity}] We say that a set $X$ has a partition of unity if for every $r>0$, there exists a countable family $\mathcal{F}_r=\{\psi_{k,r}\}_{k=0}^\infty$ of $C^\infty$ functions with the following properties:
\begin{enumerate}
    \item Each $\psi_{k,r}\in \mathcal{F}_r$ is supported on $\mathbb{B}(x_k,r)$ for some $x_k\in X$.
    \item For every $\psi_{k,r}\in\mathcal{F}_r$ and $x\in X$, $0\leq \psi_{k,r}(x)\leq 1$.
    \item For every $x\in X$ there exists a finite subset $\mathcal{F}_r(x)\subseteq \mathcal{F}_r$ (with cardinality bounded independently of $x$) such that for all $y\in\mathbb{B}(x,r)$
    \begin{equation}
        \sum_{\psi_{k,r}\in\mathcal{F}_r(x)}\psi_{k,r}(y)=1.
    \end{equation}
\end{enumerate}
\end{definition}

\begin{definition}[\textbf{Product assumption}]\label{def:prod}
    We say that a data space $\Xi$ satisfies the product assumption if there exists $A^*\geq 1$ and a family $\{R_{j,k,n}\in \Pi_{A^*n}\}$ such that for every $S>0$,
    \begin{equation}
        \lim_{n\to \infty}n^S\left(\max_{\lambda_k,\lambda_j<n}\norm{\phi_k\phi_j-R_{j,k,n}}_\mathbb{X}\right)=0.
    \end{equation}
If instead for every $n>0$ and $P,Q\in \Pi_n$ we have $PQ\in \Pi_{A^*n}$, then we say that $\Xi$ satisfies the \textbf{strong product assumption}.
\end{definition}

In the most important manifold case, the partition of unity assumption is always satisfied \cite[Chapter~0, Theorem~5.6]{docarmo}. 
It is shown in \cite{geller2011band, modlpmz} that the strong product assumption is satisfied if $\phi_k$'s are eigenfunctions of certain differential equations on a Riemannian manifold and the $\lambda_k$'s are the corresponding eigenvalues.
We do not know of any example where this property does not hold, yet cannot prove that it holds in general. 
Hence, we have listed it as an assumption.

Our characterization of local smoothness (\cite{compbio, heatkernframe, mhaskardata}) is the following.
\begin{theorem}\label{theo:paleywiener}
Let $1\le p\le \infty$, $\gamma>0$, $f\in X^p$, $x_0\in\XX$. We assume the partition of unity and the  product assumption. 
Then the following are equivalent.\\
{\rm (a)} $f\in W_{\gamma,p}(x_0)$.\\
{\rm (b)}  There exists a ball $\BB$ centered at $x_0$ such that
\be\label{eq:op_implies_sobol}
\sup_{n\ge 0}2^{n\gamma}\|f-\sigma_{2^n}(f)\|_{p,\mu^*,\BB} <\infty.
\ee
\end{theorem}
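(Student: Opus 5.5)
We prove the two implications separately, using the localization estimate of Proposition~\ref{prop:kernloc}, the good approximation property of Theorem~\ref{theo:goodapprox}, the partition of unity, and the product assumption.

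\textbf{(a)$\Rightarrow$(b).} Let $r>0$ be the radius supplied by the definition of $W_{\gamma,p}(x_0)$. Choose $\phi\in C^\infty(\BB(x_0,r))$ with $\phi\equiv1$ on $\BB(x_0,r/2)$, and set $\BB=\BB(x_0,r/4)$. Write $f=\phi f+(1-\phi)f$. Since $\phi f\in W_{\gamma,p}$, Theorem~\ref{theo:goodapprox} gives
\[
\|\phi f-\sigma_{2^n}(\phi f)\|_{p}\ls E_{2^{n-1}}(p,\phi f)\ls 2^{-n\gamma}.
\]
For the remaining piece, $(1-\phi)f$ vanishes on $\BB(x_0,r/2)$, so on $\BB$ we have $(1-\phi)f=0$. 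Hence on $\BB$ the error reduces to $|\sigma_{2^n}((1-\phi)f)(x)|$, which is at most $\int_{d(x,y)\ge r/4}|\Phi_{2^n}(x,y)||f(y)|\,d\mu^*(y)$. Apply Proposition~\ref{prop:kernloc} with $S$ large, together with the ball measure condition and a dyadic annulus decomposition exactly as in Lemma~\ref{lemma:manifold_lebesgue_number}. The off-diagonal kernel $\sup_{x\in\BB}\int_{d(x,y)\ge r/4}|\Phi_{2^n}(x,y)|\,d\mu^*(y)$ and its transpose are then $\ls (2^nr)^{q-S}$. A Schur test gives an $L^p$ bound $\ls 2^{-n(S-q)}\|f\|_p$. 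Taking $S>q+\gamma$ yields (b).

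\textbf{(b)$\Rightarrow$(a).} Take $\BB=\BB(x_0,r)$ from (b), let $\phi\in C^\infty(\BB(x_0,r/2))$, and write $g=\phi f$. It suffices to show $E_{2^{n}}(p,g)\ls 2^{-n\gamma}$. Use the telescoping decomposition
\[
f=\sigma_1(f)+\sum_{m\ge0}\tau_m, \qquad \tau_m=\sigma_{2^{m+1}}(f)-\sigma_{2^m}(f)\in\Pi_{2^{m+1}}.
\]
By (b), $\|\tau_m\|_{p,\BB}\ls2^{-m\gamma}$. Then $\phi f=\sum_m \phi\tau_m$, and the approximant of degree $A^*2^{n}$ is taken to be $\sum_{m\le n}\phi\tau_m$ with $\phi$ replaced by a polynomial approximant $\phi_N$. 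Since $\phi\in C^\infty$, we can choose $\phi_N$ with $\|\phi-\phi_N\|_\infty\ls N^{-S}$ for every $S$.

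Two difficulties arise in this direction.
\begin{enumerate}
\item Products $\phi_N\tau_m$ are not in general polynomials. This is handled through the product assumption, expanding in the $\phi_k$'s and replacing each product $\phi_k\phi_j$ by $R_{j,k,n}$ at super-polynomially small cost. The strong product assumption would make this step trivial, but the coefficient growth must be controlled in the weaker setting.
\item The tail $\sum_{m>n}\phi\tau_m$ is controlled only through the local norms $\|\tau_m\|_{p,\BB}$, and this is legitimate because $\phi$ is supported in $\BB$. The cross term $(\phi-\phi_N)\tau_m$ requires the global bound $\|\tau_m\|_p\ls\|f\|_p$, which follows from (\ref{opbd}); this is harmless because of the $N^{-S}$ factor.
\end{enumerate}

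I expect the main obstacle to be this (b)$\Rightarrow$(a) step. A polynomial such as $\phi_N\tau_m$ is not localized in $\BB$, so its values outside $\BB$ are controlled only through $\|\tau_m\|_p$. The plan is to avoid polynomial multipliers near the boundary of the support. Instead, estimate $E_{A^*2^{n+1}}(p,\phi f)$ by
\[
\|\phi f-\sigma_{2^{n}}(\phi f)\|_p ,
\]
and show directly that
\[
\sigma_{2^n}(\phi f)-\phi\,\sigma_{2^n}(f)
\]
is small. This commutator should be estimated by a localization argument: using the kernel bound and the smoothness of $\phi$, which is a Lipschitz-type estimate with $\phi\in W_{\gamma',\infty}$ for every $\gamma'$, together with the product assumption, it should be bounded by $O(2^{-n\gamma})\|f\|_p$. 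Then
\[
\|\phi f-\sigma_{2^{n}}(\phi f)\|_p\le\|\phi(f-\sigma_{2^n}(f))\|_{p}+\text{commutator}\ls 2^{-n\gamma},
\]
where the first term is supported in $\BB$ and is bounded by (b).
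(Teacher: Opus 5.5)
The (a)$\Rightarrow$(b) argument is correct. The partition of unity supplies the cutoff $\phi$ equal to $1$ near $x_0$, and localization kills $\sigma_{2^n}((1-\phi)f)$ on the smaller ball.

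The gap is in (b)$\Rightarrow$(a). The route you finally commit to needs $\|\sigma_{2^n}(\phi f)-\phi\,\sigma_{2^n}(f)\|_p\ls 2^{-n\gamma}\|f\|_p$, and this is false for every $\gamma>1$. A Lipschitz bound on $\phi$ combined with kernel localization gives only $O(2^{-n})\|f\|_p$, and that order is sharp. Take the circle (a data space by Example~\ref{uda:manifold}, with $\lambda_k=|k|$), a smooth $\phi$ with $\phi'\not\equiv 0$, and $f=e_k$, $e_k(x)=e^{ikx}$, with $k=\lfloor t_0N\rfloor$ for a point $t_0\in(1/2,1)$ where $h'(t_0)\neq 0$. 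Taylor-expanding $h((k+j)/N)-h(k/N)$ gives
\[
\sigma_N(\phi e_k)-\phi\,\sigma_N(e_k)=\frac{h'(k/N)}{iN}\,\phi'\,e_k+O(N^{-2}).
\]
This has $L^p$ norm $\sim N^{-1}$ while $\|e_k\|_p=1$. So no operator-norm bound in terms of $\|f\|_p$ alone can beat first order. The commutator becomes small only once hypothesis (b) is used, for instance by splitting $f=\sigma_{2^{n-c}}(f)+(f-\sigma_{2^{n-c}}(f))$, and that leads straight back to your first sketch.

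That first sketch was sound, and the obstacle you raised against it is not real; your own item 2 already disposes of it. The telescoping sum is not even needed. For $\phi\in C^\infty(\BB)$, hypothesis (b) gives $\|\phi f-\phi\,\sigma_{2^n}(f)\|_p\le\|\phi\|_\infty\|f-\sigma_{2^n}(f)\|_{p,\mu^*,\BB}\ls 2^{-n\gamma}$. Put $\phi_N=\sigma_{2^n}(\phi)$ and split
\[
\phi\,\sigma_{2^n}(f)=\phi_N\,\sigma_{2^n}(f)+(\phi-\phi_N)\,\sigma_{2^n}(f),\qquad \|(\phi-\phi_N)\,\sigma_{2^n}(f)\|_p\le\|\phi-\phi_N\|_\infty\|\sigma_{2^n}(f)\|_p\ls 2^{-nS}\|f\|_p,
\]
where the last bound uses \eqref{opbd}. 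The values of $\sigma_{2^n}(f)$ outside $\BB$ enter only multiplied by the uniformly small factor $\phi-\phi_N$.

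The remaining term $\phi_N\sigma_{2^n}(f)$ is a combination of products $\phi_k\phi_j$ with $\lambda_k,\lambda_j<2^n$. The product assumption places it within $o(2^{-nS})\|f\|_p$ of $\Pi_{A^*2^n}$. The coefficient control you left vague is handled as follows. The Gaussian upper bound on the diagonal with $t=n^{-2}$ gives $\#\{k:\lambda_k<n\}\ls n^q$ and $\|\phi_k\|_\infty\ls n^{q/2}$ for $\lambda_k<n$. So the expansion coefficients of both factors are polynomially bounded in $n$ (times $\|\phi\|_1$ or $\|f\|_p$), and the super-polynomial decay in the product assumption absorbs them.

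Hence $E_{A^*2^n}(p,\phi f)\ls 2^{-n\gamma}$ for every $\phi\in C^\infty(\BB)$, which is (a) with $r$ equal to the radius of $\BB$. (The thesis quotes this theorem from earlier work and gives no proof of its own.)
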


A direct corollary is the following.
\begin{corollary}\label{cor:set_loc_smooth}
Let $1\le p\le \infty$, $\gamma>0$, $f\in X^p$, $A$ be a compact subset of $\XX$. We assume the partition of unity and the product assumption. 
Then the following are equivalent.\\
{\rm (a)} $f\in W_{\gamma,p}(A)$.\\
{\rm (b)} There exists $r>0$ such that
\be\label{eq:op_implies_sobol_bis}
\sup_{n\ge 0}2^{n\gamma}\|f-\sigma_{2^n}(f)\|_{p,\mu^*,\BB(A,r)} <\infty.
\ee
\end{corollary}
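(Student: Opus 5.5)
The plan is to deduce Corollary~\ref{cor:set_loc_smooth} from Theorem~\ref{theo:paleywiener}, using compactness of $A$ to pass between the pointwise statements and a uniform statement on a neighbourhood of $A$.

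\textbf{(b) $\Rightarrow$ (a).} Suppose (b) holds with some $r>0$ and fix $x_0\in A$. Since $\BB(x_0,r)\subseteq\BB(A,r)$, restricting the norm in \eqref{eq:op_implies_sobol_bis} to this smaller set gives
\[
\sup_{n\ge0}2^{n\gamma}\|f-\sigma_{2^n}(f)\|_{p,\mu^*,\BB(x_0,r)}<\infty .
\]
This is condition (b) of Theorem~\ref{theo:paleywiener} with the ball $\BB(x_0,r)$. That theorem then gives $f\in W_{\gamma,p}(x_0)$. As $x_0\in A$ was arbitrary, $f\in W_{\gamma,p}(A)$.

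\textbf{(a) $\Rightarrow$ (b).} For each $x\in A$, Theorem~\ref{theo:paleywiener} gives a radius $r_x>0$ and a constant $K_x<\infty$ such that
\[
\sup_n 2^{n\gamma}\|f-\sigma_{2^n}(f)\|_{p,\mu^*,\BB(x,r_x)}\le K_x .
\]
The open balls $\{y: d(x,y)<r_x/2\}$, $x\in A$, cover the compact set $A$, so finitely many of them, centred at $x_1,\dots,x_N$, already cover $A$. Set $r=\min_i r_{x_i}/2$. If $y\in\BB(A,r)$, pick $z\in A$ with $d(y,z)\le r$, and pick $i$ with $d(z,x_i)<r_{x_i}/2$. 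Then $d(y,x_i)<r_{x_i}$, so $\BB(A,r)\subseteq\bigcup_i\BB(x_i,r_{x_i})$.

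The sup-norm case and the $L^p$ case are then handled separately.
\begin{itemize}
\item For $p=\infty$, the norm over a finite union of sets is the maximum of the norms over the pieces, so the left side of \eqref{eq:op_implies_sobol_bis} is at most $\max_i K_{x_i}$.
\item For $p<\infty$, we use
\[
\|g\|^p_{p,\mu^*,\bigcup_i B_i}\le\sum_i\|g\|^p_{p,\mu^*,B_i}.
\]
This bounds the left side of \eqref{eq:op_implies_sobol_bis} by $\bigl(\sum_i K_{x_i}^p\bigr)^{1/p}$.
\end{itemize}
In both cases the quantity is finite, which is (b).

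The only step needing care is the choice of $r$ via the half-radius cover, which ensures that the whole neighbourhood $\BB(A,r)$ lies inside the finitely many good balls. The rest is immediate from Theorem~\ref{theo:paleywiener}, whose partition of unity and product assumptions are assumed in the corollary.
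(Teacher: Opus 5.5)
Your proof is correct and follows the paper's approach. The paper gives no separate proof and simply calls the result a direct corollary of Theorem~\ref{theo:paleywiener}. Your restriction argument for (b)$\Rightarrow$(a) and your half-radius finite-subcover argument for (a)$\Rightarrow$(b), which also handles the finite-union norm estimate for both $p=\infty$ and $p<\infty$, supply the omitted compactness details.
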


%JOINT DATA SPACES SECTION
\section{Joint Data Spaces}\label{bhag:jointspaces}

In order to motivate our definitions in this section, we first consider  a couple of examples.

\begin{example}\label{uda:coifmanhirn}
{\rm
Let $\Xi_j=(\XX_j, d_j, \mu_j^*, \{\lambda_{j,k}\}_{k=0}^\infty, \{\phi_{j,k}\}_{k=0}^\infty)$, $j=1,2$ be two data spaces with exponent $q$. 
We denote the heat kernel in each case by
$$
K_{j,t}(x,y)=\sum_{k=0}^\infty \exp(-\lambda_{j,k}^2t)\phi_{j,k}(x)\phi_{j,k}(y), \qquad j=1,2, \ x,y\in \XX, \ t>0,
$$
In the paper \cite{coifmanhirn}, Coifman and Hirn assumed that $\XX_1=\XX_2=\XX$, $\mu_1^*=\mu_2^*=\mu^*$, and proposed the diffusion distance between points $x_1, x_2$ to be the square root of
$$
K_{1,2t}(x_1,x_2)+K_{2,2t}(x_2,x_2)-2\int_\XX K_{1,t}(x_1,y)K_{2,t}(y,x_2)d\mu^*(y).
$$
Writing, in this example only, 
\be\label{eq:coif_hirn_conn}
A_{j,k}=\int_\XX \phi_{1,j}(y)\phi_{2,k}(y)d\mu^*(y),
\ee
we get 
\be\label{eq:coif_hirn_heat}
\int_\XX K_{1,t}(x_1,y)K_{2,t}(y,x_2)d\mu^*(y)=\sum_{j,k}\exp\left(-(\lambda_{1,j}^2+\lambda_{2,k}^2)t\right)A_{j,k}\phi_{1,j}(x_1)\phi_{2,k}(x_2).
\ee
Furthermore,  the Gaussian upper bound conditions imply that
\be\label{eq:coif_hirn_joint}
\begin{aligned}
\int_\XX K_{1,t}(x_1,y)K_{2,t}(y,x_2)d\mu^*(y)&\ls t^{-q}\int_\XX \exp\left(-c\frac{d_1(x_1,y)^2+d_2(y,x_2)^2}{t}\right)d\mu^*(y)\\
&\ls t^{-q}\exp\left(-c\frac{\left(\min_{y\in\XX}\left(d_1(x_1,y)+d_2(y,x_2)\right)\right)^2}{t}\right).
\end{aligned}
\ee
Writing, in this example only, 
$$
d_{1,2}(x_1,x_2)= \min_{y\in\XX}\left(d_1(x_1,y)+d_2(y,x_2)\right)=d_{2,1}(x_2,x_1),
$$
we observe that for any $x_1,x_1', x_2,x_2'\in\XX$,
$$
\begin{aligned}
d_{1,2}(x_1,x_2)&\le d_{1,2}(x_1',x_2)+d_1(x_1,x_1'),\\
d_{1,2}(x_1,x_2)&\le d_{1,2}(x_1,x_2')+d_1(x_2,x_2').
\end{aligned}
$$
\qed}
\end{example}

\begin{example}\label{uda:jacobispace2}
{\rm
In this example we let $\alpha_i, \beta_i \ge -1/2$ for $i=1,2$ and assume that $a=\abs{\alpha_1-\alpha_2}/2,b=\abs{\beta_1-\beta_2}/2\in \mathbb{N}$. Then we select the following two data spaces as defined in Example~\ref{uda:jacobispace}
\begin{equation}
\Xi_i=([0,\pi],d_i,\frac{1}{\pi}d\theta,\{\lambda_{i,n}\},\{\sqrt{\pi}\phi^{(\alpha_i,\beta_i)}_n\}).
\end{equation}
Since both spaces already have the same distance, we will define a joint distance for the systems accordingly:
\begin{equation}
d_{1,2}(\theta_1,\theta_2)=d_1(\theta_1,\theta_2)=d_2(\theta_1,\theta_2)=|\theta_1-\theta_2|.
\end{equation}
Similar to Example~\ref{uda:coifmanhirn} above, we are considering two data spaces with the same underlying space and measure. However, we now proceed in a different manner. Let us denote
\begin{equation}
\varOmega(\theta)=(1-\cos\theta)^{a}(1+\cos\theta)^{b}.
\end{equation}
Let $\overline{\alpha}=\max(\alpha_1,\alpha_2)$ and $\overline{\beta}=\max(\beta_1,\beta_2)$. Then we define
\begin{equation}
\begin{aligned}
A_{m,n}=&\int_{0}^\pi \phi_m^{(\alpha_1,\beta_1)}(\theta)\phi_n^{(\alpha_2,\beta_2)}(\theta)\varOmega(\theta)d\theta\\
=&\int_0^\pi p_m^{(\alpha_1,\beta_1)}(\cos\theta)p_n^{(\alpha_2,\beta_2)}(\cos\theta)(1-\cos\theta)^{\overline{\alpha}+1/2}(1+\cos\theta)^{\overline{\beta}+1/2}d\theta.
\end{aligned}
\end{equation}
The orthogonality of the Jacobi polynomials tells us that $A_{m,n}=0$ at least when $m>n+2a+2b$ or $n>m+2a+2b$. Furthermore, we have the following two sums
\begin{equation}\label{eq:orthosums}
\sum_{n}A_{m,n}\phi_{n}^{(\alpha_2,\beta_2)}(\theta)=\varOmega(\theta)\phi_{m}^{(\alpha_1,\beta_1)}(\theta),\hspace{5pt}\sum_{m}A_{m,n}\phi_{m}^{(\alpha_1,\beta_1)}(\theta)=\varOmega(\theta)\phi_{n}^{(\alpha_2,\beta_2)}(\theta).
\end{equation}
We define $\ell_{m.n}=\sqrt{\lambda_{1,m}^2+\lambda_{2,n}^2}$, utilize the Gaussian upper bound property for $\Xi_i$ and Equation~\eqref{eq:orthosums} to deduce as in Example~\ref{uda:coifmanhirn} that
\begin{equation}
\begin{aligned}
&\abs{\pi\sum_{m,n}\exp\left(-\ell_{m,n}^2t\right)A_{m,n}\phi_m^{(\alpha_1,\beta_1)}(\theta_1)\phi_n^{(\alpha_2,\beta_2)}(\theta_2)}\\
=&\abs{\int_0^\pi K_{1,t}(\theta_1,\phi)K_{2,t}(\phi,\theta_2)\varOmega(\phi)d\phi}\\
\lesssim& t^{-1}\exp\left(-c\frac{d_{1,2}(\theta_1,\theta_2)^2}{t}\right).
\end{aligned}
\end{equation}
We note (cf.  \cite[Lemma 5.2]{mhaskardata}) that
\begin{equation}\begin{aligned}
&\pi\sum_{m,n:\ell_{m,n}<N}\abs{A_{m,n}\phi_m^{(\alpha_1,\beta_1)}(\theta_1)\phi_n^{(\alpha_2,\beta_2)}(\theta_2)}\\
\lesssim& \norm{\varOmega}_{[0,\pi]}\sum_{m:\lambda_{1,m}<N}\abs{\phi_m^{(\alpha_1,\beta_1)}(\theta_1)\phi_m^{(\alpha_1,\beta_1)}(\theta_2)}\\
\lesssim& N.
\end{aligned}
\end{equation}
\qed}
\end{example}

Motivated by these examples, we now give a series of definitions, culminating in Definition~\ref{def:jointspace}.
First, we define the notion of a joint distance.

\begin{definition}\label{def:jointdist}
Let $\XX_1$, $\XX_2$ be  metric  spaces, with each $\XX_j$ having a metric $d_j$.
 A function $d_{1,2}:\mathbb{X}_1\times \mathbb{X}_2\to [0,\infty)$ will be called a \textbf{joint distance} if the following \textbf{generalized triangle inequalities} are satisfied for $x_1,x_1'\in\mathbb{X}_1$ and $x_2,x_2'\in\mathbb{X}_2$:
\be\label{eq:joint_triangle}
\begin{aligned}
d_{1,2}(x_1,x_2) &\le d_1(x_1,x_1')+d_{1,2}(x_1',x_2),\\
d_{1,2}(x_1,x_2) &\le  d_{1,2}(x_1,x_2')+d_{2}(x_2',x_2).\\
\end{aligned}
\ee
\end{definition}

For convenience of notation we denote $d_{2,1}(x_2,x_1)= d_{1,2}(x_1,x_2)$. 
Then for $r>0$, $x_1\in\mathbb{X}_1$, $x_2\in \mathbb{X}_2$, $A_1\subset \XX_1$, $A_2\subset \XX_2$, we  define
\begin{align*}
    \mathbb{B}_1(x_1,r)&=\{z\in \mathbb{X}_1:d_1(x_1,z)\leq r\}, &\mathbb{B}_2(x_2,r)&=\{z\in \mathbb{X}_2:d_2(x_2,z)\leq r\},\\
    \mathbb{B}_{1,2}(x_1,r)&=\{z\in \mathbb{X}_2:d_{1,2}(x_1,z)\leq r\}, &\mathbb{B}_{2,1}(x_2,r)&=\{z\in \mathbb{X}_1:d_{2,1}(x_2,z)\leq r\},
 \end{align*}
\be\label{eq:jointballs}
\begin{aligned}
    d_{1,2}(A_1,x_2)=&\inf_{x\in A_1\subseteq \mathbb{X}_1}d_{1,2}(x,x_2),\\
    d_{1,2}(x_1,A_2)=d_{2,1}(A_2,x_1)=&\inf_{y\in A_2\subseteq \mathbb{X}_2}d_{2,1}(y,x_1).
\end{aligned}
\ee
We recall here that an infimum over an empty set is defined to be $\infty$. 

\begin{definition}\label{def:jointheatkern}
Let $\mathbf{A}=(A_{j,k})_{j,k=0}^\infty$ (\textbf{connection coefficients}) and $\mathbf{L}=(\ell_{j,k})_{j,k=0}^\infty$ (\textbf{joint eigenvalues}) be bi-infinite matrices. For $x_1\in\XX_1$, $x_2\in\XX_2$, $t>0$, the \textbf{joint heat kernel} is defined formally by
\be\label{eq:jointheatkerndef}
\begin{aligned}
K_t(\Xi_1,\Xi_2;x_1,x_2)=&K_t(\Xi_1,\Xi_2;\mathbf{A}, \mathbf{L}; x_1,x_2)\\
=&\sum_{j,k=0}^\infty \exp(-\ell_{j,k}^2 t)A_{j,k}\phi_{1,j}(x_1)\phi_{2,k}(x_2)\\
=&\lim_{n\to\infty}\sum_{j,k:\ell_{j,k}<n} \exp(-\ell_{j,k}^2 t)A_{j,k}\phi_{1,j}(x_1)\phi_{2,k}(x_2).
\end{aligned}
\ee
\end{definition}
\begin{definition}\label{def:jointspace}
For $m=1,2$, let $\Xi_m=\left(\XX_m, d_m,\mu_m^*, \{\lambda_{m,k}\}_{k=0}^\infty, \{\phi_{m,k}\}_{k=0}^\infty\right) $ be compact data spaces.
With the notation above, assume each $\ell_{j,k}\ge 0$ and that for any $u>0$, the set $\{(j,k): \ell_{j,k}<u\}$ is finite.
A \textbf{joint (compact) data space} $\Xi$ is  a tuple 
$$
(\Xi_1,  \Xi_2, d_{1,2}, \mathbf{A}, \mathbf{L}),
$$
where each of the following conditions is satisfied for some $Q>0$:
\begin{enumerate}
\item (\textbf{Joint regularity})
There exist $q_1, q_2>0$ such that
\be\label{eq:ballmeasurecond1}
\mu_1^*(\BB_{2,1}(x_2,r))\le cr^{q_1}, \quad \mu_2^*(\BB_{1,2}(x_1,r))\le cr^{q_2}, \qquad x_1\in\XX_1,\ x_2\in\XX_2,\  r>0.
\ee
\item (\textbf{Variation bound)}
For each $n>0$,
\begin{equation}\label{eq:variation}
    \sum_{j,k:\ell_{j,k}<n}\abs{A_{j,k}\phi_{1,j}(x_1)\phi_{2,k}(x_2)}\lesssim n^Q, \qquad x_1\in\XX_1, \ x_2\in\XX_2.
\end{equation}
\item (\textbf{Joint Gaussian upper bound})
The limit in \eqref{eq:jointheatkerndef} exists for all $x_1\in\XX_1$, $x_2\in\XX_2$, and
\be\label{eq:jointoffdiagbd}
|K_t(\Xi_1,\Xi_2;x_1,x_2)| \le c_1t^{-c}\exp\left(-c_2\frac{d_{1,2}(x_1,x_2)^2}{t}\right), 
\qquad x_1\in\XX_1,\ x_2\in\XX_2.
\ee
\end{enumerate}
We refer to $(Q, q_1, q_2)$ as the \textbf{(joint) exponents} of the joint data space.
\end{definition}

The kernel corresponding to the one defined in Equation~\eqref{eq:kerndef} is the following, where $H: [0,\infty)\to[0,\infty)$ is a compactly supported function.
\begin{equation}\label{eq:jointkernel}
\Phi_n(H,\Xi_1,\Xi_2; x_1,x_2)=\sum_{j,k=0}^\infty H\left(\frac{\ell_{j,k}}{n}\right)A_{j,k}\phi_{1,j}(x_1)\phi_{2,k}(x_2).
\end{equation}
For $f\in L^1(\mu^*_2)+L^\infty(\mu^*_2)$ and $x_1\in \mathbb{X}_1$, we also define
\begin{equation}\begin{aligned}\label{eq:jointopdef}
\sigma_{n}(H, \Xi_1,\Xi_2;f)(x_1)=&\int_{\mathbb{X}_2}f(x_2)\Phi_n(H,\Xi_1,\Xi_2;x_1,x_2)d\mu^*_2(x_2)\\
=&\sum_{j,k=0}^\infty H\left(\frac{\ell_{j,k}}{n}\right)A_{j,k}\hat{f}(\Xi_2;k)\phi_{1,j}(x_1).
\end{aligned}
\end{equation}

The localization property of the kernels is given in the following proposition.
\begin{proposition}[{\cite[Equation (4.5)]{tauberian}}]\label{prop:jointkernloc}
Let  $S>Q+1$ be an integer, $H:\mathbb{R}\to \mathbb{R}$ be an even, $S$ times continuously differentiable, compactly supported function. 
 Then for every $x_1\in \mathbb{X}_1$, $x_2\in \mathbb{X}_2$, $N>0$,
\begin{equation}\label{eq:jointkernlocest}
| \Phi_N(H,\Xi_1,\Xi_2; x_1,x_2)|\ls \frac{N^Q}{\max(1, (Nd_{1,2}(x_1,x_2))^S)},
\end{equation}
where the constant involved may depend upon $H$,  and $S$, but not on $N$, $x_1$, $x_2$. 
\end{proposition}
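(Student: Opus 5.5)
The plan is to reduce Proposition~\ref{prop:jointkernloc} to the Tauberian theorem recorded in Appendix~\ref{bhag:tauberian}. That theorem is the same tool behind Proposition~\ref{prop:kernloc}. It says that a measure whose Laplace-type transform has a Gaussian off-diagonal bound, and whose total variation grows polynomially, integrates smooth compactly supported filters with the decay in \eqref{eq:kernlocest}.

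\textbf{Setting up the measure.} Fix $x_1\in\XX_1$, $x_2\in\XX_2$ and define the discrete signed measure on $[0,\infty)$
$$
\mu_{x_1,x_2}=\sum_{j,k}A_{j,k}\phi_{1,j}(x_1)\phi_{2,k}(x_2)\,\delta_{\ell_{j,k}} .
$$
This is well defined on bounded sets because each set $\{(j,k):\ell_{j,k}<u\}$ is finite. With this notation,
$$
\Phi_N(H,\Xi_1,\Xi_2;x_1,x_2)=\int H(u/N)\,d\mu_{x_1,x_2}(u), \qquad K_t(\Xi_1,\Xi_2;x_1,x_2)=\int e^{-u^2t}\,d\mu_{x_1,x_2}(u),
$$
where the second identity uses that the limit in \eqref{eq:jointheatkerndef} exists by hypothesis. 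The variation bound \eqref{eq:variation} says exactly that $|\mu_{x_1,x_2}|([0,u))\lesssim u^Q$, uniformly in $x_1,x_2$. The joint Gaussian upper bound \eqref{eq:jointoffdiagbd} gives $\bigl|\int e^{-u^2t}d\mu_{x_1,x_2}(u)\bigr|\le c_1t^{-c}\exp(-c_2 d_{1,2}(x_1,x_2)^2/t)$ for $0<t\le 1$. These are precisely the two hypotheses of the Tauberian theorem, with $d=d_{1,2}(x_1,x_2)$ playing the role of the distance.

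\textbf{Two cases.} Choose $A$ with $\operatorname{supp}H\subseteq[-A,A]$.
\begin{enumerate}
\item If $Nd_{1,2}(x_1,x_2)\le 1$, the trivial estimate suffices: $|\Phi_N|\le\|H\|_\infty|\mu_{x_1,x_2}|([0,AN])\lesssim N^Q$.
\item If $Nd_{1,2}(x_1,x_2)>1$, apply the Tauberian theorem to get $|\int H(u/N)d\mu_{x_1,x_2}(u)|\lesssim N^Q(Nd)^{-S}$. The theorem needs $H$ even and $S$ times continuously differentiable, with $S>Q+1$; this is where those assumptions are used.
\end{enumerate}
Since all constants come from the theorem and depend only on $H$, $S$ and the joint exponents, the bound is uniform in $N,x_1,x_2$, which gives \eqref{eq:jointkernlocest}.

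\textbf{Main obstacle.} The delicate step is checking that the hypotheses match, in particular that the power $t^{-c}$ in \eqref{eq:jointoffdiagbd} is harmless. I expect it is, because in the Tauberian argument the power of $t$ is evaluated at $t\sim N^{-2}$ and is absorbed against the Gaussian factor, while the $N^Q$ growth comes only from the variation bound.

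If the appendix version is not directly applicable, I would redo its core argument. Write $H(u)=G(u^2)$ with $G$ smooth and compactly supported, and express $G(v)e^{v}$ through its Fourier integral, so that $H(u/N)=\int \hat g(\xi)e^{-(1-i\xi)u^2/N^2}d\xi$. The task is then to control the resulting complex-time heat transform of $\mu_{x_1,x_2}$. Use the Gaussian bound on the part $u\lesssim$ (a suitable multiple of $N$), the variation bound on the tail, and the rapid decay of $\hat g$ to gain the factor $(Nd)^{-S}$. This is the standard Phragm\'en--Lindel\"of/Fourier step, and it is where the regularity requirement $S>Q+1$ enters.
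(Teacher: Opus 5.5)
Your proposal is correct and matches the paper, which proves Proposition~\ref{prop:jointkernloc} simply by applying the Tauberian Theorem~\ref{theo:maintaubertheo} to $\mu_{x_1,x_2}(u)=\sum_{j,k:\ell_{j,k}<u}A_{j,k}\phi_{1,j}(x_1)\phi_{2,k}(x_2)$ with $r=d_{1,2}(x_1,x_2)$, using the variation bound to get $\tn\mu\tn_Q\lesssim 1$ and the joint Gaussian upper bound for the heat-transform hypothesis. The power $t^{-c}$ you flag is explicitly allowed by the constant $C$ in that theorem, and strictly one takes $r=\sqrt{c_2}\,d_{1,2}(x_1,x_2)$, which only changes constants; your case split is unnecessary for $N\ge 1$, since the theorem already yields $\max(1,(Nr)^S)$, and for $N<1$ your trivial bound $|\Phi_N|\lesssim N^Q$ together with the boundedness of $d_{1,2}$ on the compact spaces suffices.
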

In the sequel, we will fix $H$ to be the filter $h$ introduced in Section~\ref{bhag:singlespace}, and will omit its mention from all notations. 
Also, we take $S>\max(Q, q_1, q_2)+1$ to be fixed, although we may put additional conditions on $S$ as needed.
As before, all constants may depend upon $h$ and $S$.

In the remainder of this chapter, we will take $p=\infty$, work only with continuous functions on $\XX_1$ or $\XX_2$, and use $\|f\|_K$ to denote the supremum norm of $f$ on a set $K$. Accordingly, we will omit the index $p$ from the notation for the smoothness classes; e.g., we will write $W_\gamma(\Xi_1;B)$ instead of $W_{\gamma,\infty}(\Xi_1;B)$. The results in the sequel are similar in the case where $p<\infty$ due to the Riesz-Thorin interpolation theorem, but more notationally exhausting without adding any apparent new insights.

We end the section with a condition on the operator defined in Equation~\eqref{eq:jointopdef} that is useful for our purposes.

\begin{definition}[\textbf{Polynomial preservation condition}]\label{def:polypres} Let  $(\Xi_1,\Xi_2,d_{1,2},\mathbf{A},\mathbf{L})$ be a joint data space. We say the \textbf{polynomial preservation condition} is satisfied if there exists some $c^*>0$ with the property that if $P_{n}\in \Pi_{n}(\Xi_2)$, then $\sigma_m(\Xi_1,\Xi_2;P_n)=\sigma_{c^*n}(\Xi_1,\Xi_2;P_n)$ for all $m\geq c^*n$.
\end{definition}

\begin{remark}\label{rem:polypres}
The polynomial preservation condition is satisfied if, for any $n>0$, we have the following inclusion:
\begin{equation}\label{eq:polyprescon}
\{(i,j):A_{i,j}\neq 0,\lambda_{2,j}<n\}\subseteq \{(i,j):\ell_{i,j}\leq c^*n,\lambda_{1,i}<c^*n\}.
\end{equation}
\end{remark}

\begin{example}\label{uda:jacobispace3}
{\rm
We utilize the same notation as in Examples~\ref{uda:jacobispace}~and~\ref{uda:jacobispace2}. We now see, in light of Definition~\ref{def:jointspace}, that $(\Xi_1,\Xi_2,d_{1,2},\mathbf{A},\mathbf{L})$ is a joint data space with exponents $(1,1,1)$. It is clear that both the partition of unity and strong product assumption hold in these spaces. One may also recall that $A_{m,n}=0$ at least whenever $m>n+2a+2b$, so there exists $c^*$ such that Equation~\eqref{eq:polyprescon} is satisfied. As a result, we conclude the polynomial preservation condition holds.
\qed}
\end{example}

%THEORY SECTION
\section{Local Approximation in Joint Data Spaces}\label{bhag:locapprox}
 In this section, we assume a fixed joint data space as in Section~\ref{bhag:jointspaces}.  
 We are interested in the following questions. 
 Suppose $f\in C(\XX_2)$, and we have information about $f$ only in the neighborhood of a compact set $A\subseteq\XX_2$.
 Under what conditions on $f$ and a subset $B\subseteq \XX_1$ can $f$ be lifted to a function $\mathcal{E}(f)$ on $B$? 
 Moreover, how does the local smoothness of $\mathcal{E}(f)$ on $B$ depends upon the local smoothness of $f$ on $A$? We now give definitions for $\mathcal{E}(f),A,B$ for which we have considered these questions.

 \begin{definition}\label{def:liftedfunction} Given $f\in C(\mathbb{X}_2)$, we define the \textbf{lifted function} $\mathcal{E}(f)$ to be the limit
 \begin{equation}\label{eq:ef}
\mathcal{E}(f)=\lim_{n\to\infty}\sigma_n(\Xi_1,\Xi_2;f),
 \end{equation}
 if the limit exists.
 \end{definition}

\begin{definition}\label{def:imageset}
 Let $r,s>0$ and $A\subseteq \mathbb{X}_2$ be a compact subset with the property that there exists a compact subset $B^{-}\subset \mathbb{X}_1$ such that
\begin{equation}\label{eq:rcon}
B^-\subseteq \{x_1: d_{1,2}(x_1,\mathbb{X}_2\setminus A)\geq s+r\}
\end{equation}
for some $r>0$. 
We then define the \textbf{image set} of $A$ by 
\be\label{eq:imageset}
\mathcal{I}(r,s;A)=\BB_1(B^-,s)=\{x_1:d_1(x_1,B^-)\leq s\}.
\ee
If the set $B^{-}$ does not exist, then we define $\mathcal{I}(r,s;A)=\emptyset$.
\end{definition}
\begin{remark}\label{rem:imageset}
{\rm 
In the sequel we fix $r, s>0$ and  a compact subset $A\subseteq \mathbb{X}_2$ such that $B^-$ defined in Equation~\eqref{eq:rcon} is nonempty. We write $B= \mathcal{I}(r,s;A)$. We note that, due to the generalized triangle inequality~\eqref{eq:joint_triangle}, we have the important property
\begin{equation}
B^-\varsubsetneq B\subseteq \{x_1: d_{1,2}(x_1,\mathbb{X}_2\setminus A)\geq r\}.
\end{equation}
\qed}
\end{remark}
We now state our main theorem. Although there is no explicit mention of $B^-$ in the statement of the theorem, Remark~\ref{rem:thmclarify} and Example~\ref{uda:jacobispace4} clarify the benefit of such a construction.

%MAIN THEOREM

\begin{theorem}\label{theo:mainthm}
Let  $(\Xi_1,\Xi_2,d_{1,2},\mathbf{A},\mathbf{L})$ be a joint data space with exponents $(Q, q_1, q_2)$.
We assume that the polynomial preservation condition holds with parameter $c^*$. Suppose $\mathbb{X}_2$ has a partition of unity. \\
{\rm (a)} Let $f\in C(\mathbb{X}_2)$, satisfying 
\begin{equation}\label{eq:thmcon1}
\sum_{m=0}^\infty 2^{m(Q-q_2)}\norm{\sigma_{2^{m+1}}(\Xi_2;f)-\sigma_{2^m}(\Xi_2;f)}_A<\infty.
\end{equation}
Then $\mathcal{E}(f)$ as defined in Definition~\ref{def:liftedfunction} exists on $B$ and for $c^*r2^n\geq 1$ we have
\begin{equation}\label{eq:ef-sigma}
\begin{aligned}
\norm{\mathcal{E}(f)-\sigma_{c^*2^n}(\Xi_1,\Xi_2;f)}_{B}\lesssim& 2^{n(Q-q_2)}\norm{f-\sigma_{2^n}(\Xi_2;f)}_A+\norm{f}_{\mathbb{X}_2}2^{n(Q-S)}r^{q_2-S}\\
&+\sum_{m=n}^\infty 2^{m(Q-q_2)}\norm{\sigma_{2^{m+1}}(\Xi_2;f)-\sigma_{2^m}(\Xi_2;f)}_A.
\end{aligned}
\end{equation}
In particular, if $\Xi_1$ satisfies the strong product assumption, $\mathbb{X}_1$ has a partition of unity, and $\alpha>0$ is given such that $\alpha\ell_{j,k}\geq \lambda_{1,j}$ for all $j,k\in\mathbb{N}$, then $\sigma_{n}(\Xi_1,\Xi_2;f)\in \Pi_{\alpha n}(\Xi_1)$. \\[1ex]
{\rm (b)}
If additionally, $f\in W_{\gamma}(\Xi_2; A)$ with $Q-q_2<\gamma<S-q_2$, then $\mathcal{E}(f)$ is continuous on $B$ and for $\phi\in C^\infty(B)$, we have $\phi \mathcal{E}(f)\in W_{\gamma-Q+q_2}(\Xi_1)$.
\end{theorem}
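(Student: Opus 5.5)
The plan is to write $\mathcal{E}(f)$ as a telescoping series. Define $\tau_m(f)=\sigma_{2^{m+1}}(\Xi_2;f)-\sigma_{2^m}(\Xi_2;f)$. Then $\tau_m(f)\in\Pi_{2^{m+1}}(\Xi_2)$ and $f=\sigma_{2^n}(\Xi_2;f)+\sum_{m\ge n}\tau_m(f)$ uniformly on $\XX_2$, since $f\in C(\XX_2)$ and Theorem~\ref{theo:goodapprox} applies.

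By the polynomial preservation condition, $\sigma_N(\Xi_1,\Xi_2;P)=\sigma_{c^*2^{m+1}}(\Xi_1,\Xi_2;P)$ for every $P\in\Pi_{2^{m+1}}(\Xi_2)$ and every $N\ge c^*2^{m+1}$. Hence, formally,
\[
\mathcal{E}(f)=\sigma_{c^*2^n}(\Xi_1,\Xi_2;\sigma_{2^n}(\Xi_2;f))+\sum_{m\ge n}\sigma_{c^*2^{m+1}}(\Xi_1,\Xi_2;\tau_m(f)).
\]
The whole task is therefore to bound $|\sigma_N(\Xi_1,\Xi_2;g)(x_1)|$ for $x_1\in B$ and $g\in C(\XX_2)$, with $N\sim 2^m$.

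The key estimate splits the integral in \eqref{eq:jointopdef} into the part over $A$ and the part over $\XX_2\setminus A$. For $x_1\in B$, Remark~\ref{rem:imageset} gives $d_{1,2}(x_1,\XX_2\setminus A)\ge r$. On $\XX_2\setminus A$, a dyadic shell decomposition about $x_1$, as in Lemma~\ref{lemma:manifold_lebesgue_number}, uses Proposition~\ref{prop:jointkernloc} together with the joint regularity $\mu_2^*(\BB_{1,2}(x_1,\rho))\lesssim \rho^{q_2}$. This bounds that part by $\|g\|_{\XX_2}N^{Q}(Nr)^{-S}r^{q_2}=\|g\|_{\XX_2}N^{Q-S}r^{q_2-S}$, valid when $Nr\ge1$. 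The same shell argument over all of $\XX_2$ gives $\int_{\XX_2}|\Phi_N(x_1,\cdot)|\,d\mu_2^*\lesssim N^{Q-q_2}$, so the part over $A$ is at most $N^{Q-q_2}\|g\|_A$. Combining,
\[
\|\sigma_N(\Xi_1,\Xi_2;g)\|_B\lesssim N^{Q-q_2}\|g\|_A+N^{Q-S}r^{q_2-S}\|g\|_{\XX_2}.
\]

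I would apply this with $g=\tau_m(f)$. Since $\|\tau_m(f)\|_{\XX_2}\lesssim\|f\|_{\XX_2}$ by \eqref{opbd}, the global terms sum geometrically, because $S>Q$, to $\|f\|_{\XX_2}2^{n(Q-S)}r^{q_2-S}$. The local terms give the tail series in \eqref{eq:ef-sigma}. Hypothesis \eqref{eq:thmcon1} then shows that the series converges uniformly on $B$, so $\mathcal{E}(f)$ exists there.

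For the difference $\mathcal{E}(f)-\sigma_{c^*2^n}(\Xi_1,\Xi_2;f)$, I would apply the key estimate with $g=f-\sigma_{2^n}(\Xi_2;f)=\sum_{m\ge n}\tau_m(f)$. This produces the first term $2^{n(Q-q_2)}\|f-\sigma_{2^n}(\Xi_2;f)\|_A$ in \eqref{eq:ef-sigma}, plus another global term of the same form. Convergence of the full limit $N\to\infty$, rather than only along the subsequence $c^*2^n$, follows by taking $2^n\le N/c^*<2^{n+1}$ and repeating the argument with $N$ in place of $c^*2^n$.

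The polynomial claim is immediate. If $\alpha\ell_{j,k}\ge\lambda_{1,j}$, then $h(\ell_{j,k}/n)\ne0$ forces $\ell_{j,k}<n$, hence $\lambda_{1,j}<\alpha n$, so $\sigma_n(\Xi_1,\Xi_2;f)\in\Pi_{\alpha n}(\Xi_1)$.

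For (b), I would use Theorem~\ref{theo:paleywiener} (via Corollary~\ref{cor:set_loc_smooth}) on $\XX_2$, with $A$ enlarged slightly, which is possible since the hypothesis is local smoothness on the compact set $A$. This gives $\|f-\sigma_{2^m}(\Xi_2;f)\|_A\lesssim2^{-m\gamma}$, hence $\|\tau_m(f)\|_A\lesssim2^{-m\gamma}$. Because $\gamma>Q-q_2$, hypothesis \eqref{eq:thmcon1} holds, and \eqref{eq:ef-sigma} yields $\|\mathcal{E}(f)-\sigma_{c^*2^n}(\Xi_1,\Xi_2;f)\|_B\lesssim2^{-n(\gamma-Q+q_2)}+2^{n(Q-S)}$. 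The condition $\gamma<S-q_2$ makes the first term dominant. Continuity on $B$ follows from uniform convergence of the continuous $\sigma_N(\Xi_1,\Xi_2;f)$.

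Now take $\phi\in C^\infty(B)$. I would approximate $\phi$ by $\sigma_{2^n}(\Xi_1;\phi)$, which has error $O(2^{-nK})$ for every $K$. The product $\sigma_{2^n}(\Xi_1;\phi)\,\sigma_{c^*2^n}(\Xi_1,\Xi_2;f)$ lies in $\Pi_{c2^n}(\Xi_1)$ by the polynomial claim and the strong product assumption. Its distance from $\phi\mathcal{E}(f)$ is $O(2^{-n(\gamma-Q+q_2)})$, so $E_{c2^n}(\phi\mathcal{E}(f))\lesssim2^{-n(\gamma-Q+q_2)}$, which gives $\phi\mathcal{E}(f)\in W_{\gamma-Q+q_2}(\Xi_1)$.

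The main obstacle is this last step. The bound on $\mathcal{E}(f)-\sigma$ holds only on $B$, while the norm $\|\sigma_{c^*2^n}(\Xi_1,\Xi_2;f)\|$ off $B$ may grow like $2^{n(Q-q_2)}$. The approximant must therefore be localized, and the error of $\sigma_{2^n}(\Xi_1;\phi)$ outside $\operatorname{supp}\phi\subset B$ must beat this growth. That works because $\phi$ is $C^\infty$, but it requires care near the boundary of $B$, using the extra room between $B^-$ and $B$.
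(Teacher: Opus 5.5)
Your proposal is correct and is essentially the paper's proof. The steps match the paper's: the telescoping of $\sigma_{c^*2^{m+1}}(\Xi_1,\Xi_2;\sigma_{2^{m+1}}(\Xi_2;f)-\sigma_{2^m}(\Xi_2;f))$ via polynomial preservation, the split of the kernel integral over $A$ and $\XX_2\setminus A$ (the paper's two preparatory lemmas), and in (b) multiplying $\sigma_{c^*2^n}(\Xi_1,\Xi_2;f)$ by a polynomial approximant of $\phi$; your handling of the full limit $N\to\infty$ also tidies up a point the paper only treats along $c^*2^n$.

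The boundary concern in your last paragraph is unfounded. Since $\phi$ vanishes off $B$, the paper simply bounds $\|R-\phi\|_{\XX_1}\|\sigma_{c^*2^n}(\Xi_1,\Xi_2;f)\|_{\XX_1}\lesssim 2^{-n\gamma}2^{n(Q-q_2)}$ using the global estimate $\int_{\XX_2}|\Phi_n|\,d\mu_2^*\lesssim n^{Q-q_2}$. The room between $B^-$ and $B$ is never needed.
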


\begin{remark}\label{rem:thmclarify}
{\rm
Given the assumptions of Theorem~\ref{theo:mainthm}, $\mathcal{E}(f)$ is not guaranteed to be continuous on the entirety of $\mathbb{X}_1$ (or even defined outside of $B$). As a result, in the setting of \ref{theo:mainthm}(b) we cannot say $\mathcal{E}(f)$ belongs to any of the smoothness classes defined in this chapter. However we can still say, for instance, that 
\begin{equation}\label{eq:b-inf}
\inf_{P\in \Pi_{2^n}(\Xi_1)}\norm{\mathcal{E}(f)-P}_{B^-}\lesssim 2^{-n(\gamma-Q+q_2)}
\end{equation}
(this can be seen directly by taking $\phi\in C^\infty(\Xi_1)$ such that $\phi(x)=1$ when $x\in B^-$ and $\phi(x)=0$ when $x\in \mathbb{X}_1\setminus B$). Consequently, if it happens that $\mathcal{E}(f)\in C(\Xi_1)$, then $\mathcal{E}(f)\in W_{\gamma-Q+q_2}(\Xi_1;B^-)$.
\qed}
\end{remark}

\begin{example}\label{uda:jacobispace4}
{\rm We now conclude the running examples from \ref{uda:jacobispace}, \ref{uda:jacobispace2}, and \ref{uda:jacobispace3} by demonstrating how one may utilize Theorem~\ref{theo:mainthm}. We assume the notation given in each of the prior examples listed.
First, we find the image set for
 $A=\mathbb{B}_2(\theta_0,r_0)$ given some $\theta_0\in [0,\pi]$ and $r_0>0$. We let $r=s=r_0/8$ in correspondence to Definition~\ref{def:imageset} and define
\begin{equation}\begin{aligned}
  B^-=&\mathbb{B}_1\left(\theta_0,\frac{3r_0}{4}\right)\\
	 =&\left\{\theta_1\in[0,\pi]: d_1(\theta_1,[0,\pi]\setminus \mathbb{B}_1(\theta_0,r_0))\geq \frac{r_0}{4}\right\}\\
=&\left\{\theta_1\in[0,\pi]: d_{1,2}(\theta_1,[0,\pi]\setminus A)\geq r+s\right\}.
\end{aligned}\end{equation}
Then we can let $B=\mathbb{B}_1\left(\theta_0,\frac{7r_0}{8}\right)=\mathbb{B}_1(B^-,r)$. By Theorem~\ref{theo:mainthm}(a), $f\in C([0,\pi])$ can be lifted to $\mathbb{B}_1\left(\theta_0,\frac{7r_0}{8}\right)$ (where we note that Equation~\eqref{eq:thmcon1} is automatically satisfied due to $Q=q_2=1$). Since $\ell_{m,n}= \lambda_{1,n}$, we have $\sigma_{n}(\Xi_1,\Xi_2;f)\in \Pi_{n}(\Xi_1)$. If we suppose $f\in W_\gamma(\Xi_2;A)$for some $\gamma>0$ (with $h$ chosen so $S$ is sufficiently large), then Theorem~\ref{theo:mainthm}(b) informs us that $\phi\mathcal{E}(f)\in W_{\gamma}(\Xi_1)$ for $\phi\in C^\infty(B)$. Lastly, as a result of Equation~\eqref{eq:b-inf}, we can conclude
\begin{equation}
\inf_{P\in \Pi_{2^n}(\Xi_1)}\norm{\mathcal{E}(f)-P}_{\mathbb{B}_1\left(\theta_0,\frac{3r_0}{r}\right)}\lesssim 2^{-n\gamma}.
\end{equation}
\qed}
\end{example}

%PROOFS
\section{Proofs}\label{bhag:proofs}
 In this section, we give a proof of Theorem~\ref{theo:mainthm} after proving some preperatory results. We assume that $(\Xi_1,\Xi_2,d_{1,2}, \mathbf{A}, \mathbf{L})$ is a joint data space with exponents $Q, q_1, q_2$.
 
\begin{lemma}\label{lemma:opbdlemma}
Let $x_1\in\XX_1$, $r>0$.
We have
\begin{equation}\label{eq:lockeraway}
    \int_{\mathbb{X}_2\setminus \mathbb{B}_{1,2}(x_1,r)}\abs{\Phi_n(\Xi_1,\Xi_2;x_1,x_2)}d\mu_2^*(x_2)\lesssim n^{Q-q_2}(\max(1, nr))^{q_2-S}.
\end{equation}
In particular, 
\begin{equation}\label{eq:globker}
    \int_{\mathbb{X}_2}\abs{\Phi_n(\Xi_1,\Xi_2;x_1,x_2)}d\mu_2^*(x_2)\lesssim n^{Q-q_2}.
\end{equation}
\end{lemma}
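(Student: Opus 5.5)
The plan is to repeat the dyadic-annulus argument from the proof of Lemma~\ref{lemma:manifold_lebesgue_number}, now with the joint localization estimate of Proposition~\ref{prop:jointkernloc} and the joint regularity condition~\eqref{eq:ballmeasurecond1}. Fix $x_1\in\XX_1$. By Proposition~\ref{prop:jointkernloc},
\[
|\Phi_n(\Xi_1,\Xi_2;x_1,x_2)|\ls \frac{n^Q}{\max(1,(nd_{1,2}(x_1,x_2))^S)}.
\]
The second inequality in~\eqref{eq:ballmeasurecond1} gives $\mu_2^*(\BB_{1,2}(x_1,\rho))\le c\rho^{q_2}$ for all $\rho>0$. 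These are the only two inputs needed. The metric structure of $\XX_2$ plays no role, because the sets $\BB_{1,2}(x_1,\rho)$ are defined through $d_{1,2}$ and are nested in $\rho$.

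First suppose $nr\ge 1$. Set $A_k=\BB_{1,2}(x_1,2^kr)\setminus\BB_{1,2}(x_1,2^{k-1}r)$ for $k\ge1$; these cover $\XX_2\setminus\BB_{1,2}(x_1,r)$. On $A_k$ we have $nd_{1,2}(x_1,x_2)\ge n2^{k-1}r\ge1$, so the kernel is bounded by $n^Q(n2^{k-1}r)^{-S}$, while $\mu_2^*(A_k)\ls (2^kr)^{q_2}$. Summing over $k$ gives
\[
\sum_{k\ge1} n^Q (2^k r)^{q_2}(n2^{k-1}r)^{-S}\ls n^{Q-q_2}(nr)^{q_2-S}\sum_{k\ge1}2^{k(q_2-S)},
\]
and the geometric series converges because $S>\max(Q,q_1,q_2)+1>q_2$. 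This proves~\eqref{eq:lockeraway} when $nr\ge1$.

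For~\eqref{eq:globker}, take $r=1/n$. On $\BB_{1,2}(x_1,1/n)$ use the trivial bound $|\Phi_n|\ls n^Q$ together with $\mu_2^*(\BB_{1,2}(x_1,1/n))\ls n^{-q_2}$, which contributes $\ls n^{Q-q_2}$. The complement contributes $\ls n^{Q-q_2}$ by the previous step. Hence the whole integral is $\ls n^{Q-q_2}$.

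It remains to treat $nr<1$ in~\eqref{eq:lockeraway}. Here $\max(1,nr)=1$, so the claimed bound is $n^{Q-q_2}$, and this follows at once from~\eqref{eq:globker} since the integrand is nonnegative and the domain only shrinks. The constants depend only on $h$, $S$, and the constants in~\eqref{eq:ballmeasurecond1}, so the bounds are uniform in $x_1$. I expect no step to be a real obstacle. The one point to check is that the annuli argument needs nothing beyond the measure bound on the $d_{1,2}$-balls, which is exactly what joint regularity supplies.
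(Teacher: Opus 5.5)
Your proof is correct and follows essentially the same route as the paper: dyadic $d_{1,2}$-annuli combined with Proposition~\ref{prop:jointkernloc} and the joint regularity bound~\eqref{eq:ballmeasurecond1} for $nr\ge 1$, then $r=1/n$ plus the trivial bound $n^Q\mu_2^*(\BB_{1,2}(x_1,1/n))$ for~\eqref{eq:globker}. The case $nr<1$ then follows from~\eqref{eq:globker}, exactly as in the paper.
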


\begin{proof}\ % of Lemma~\ref{lemma:opbdlemma}
In this proof only, define
\begin{equation}
    A_0=\mathbb{B}_{1,2}(x_1,r),\qquad A_m=\mathbb{B}_{1,2}(x_1,r2^m)\setminus \mathbb{B}_{1,2}(x_1,r2^{m-1})\text{ for all $m\in \mathbb{N}$.}
\end{equation}
Then the joint regularity condition \eqref{eq:ballmeasurecond1} implies
$
    \mu^*_2(A_m)\lesssim (r2^m)^{q_2},
$
for each $m$. 
We can also see by definition that when $x\in A_m$, then $d_{1,2}(x_1,x)>r2^{m-1}$. 
Since $S>q_2$,  we deduce that for $r n\ge 1$, 
\be\begin{aligned}\label{eq:pf1eqn1}
    \int_{\mathbb{X}_2\setminus A_0}\abs{\Phi_n(\Xi_1,\Xi_2; x_1,x_2)}d\mu_2^*(x_2)\lesssim& \sum_{m=1}^\infty \frac{n^Q\mu_2^*(A_m)}{(rn2^{m-1})^S}\\
    \lesssim& r^{q_2-S}n^{Q-S}\sum_{m=1}^\infty 2^{m(q_2-S)}\\
    \lesssim& n^{Q-q_2}(nr)^{q_2-S}.
\end{aligned}\ee
This completes the proof of \eqref{eq:lockeraway} when $nr\ge 1$.
The joint regularity condition and Proposition~\ref{prop:jointkernloc} show further that 
\be\label{eq:pf1eqn2}
\int_{A_0} \abs{\Phi_n(\Xi_1,\Xi_2;x_1,x_2)}d\mu_2^*(x_2)\ls n^Q\mu_2^*(A_0)\ls n^Qr^{q_2}=n^{Q-q_2}(nr)^{q_2}.
\ee
We use $r=1/n$ in the estimates \eqref{eq:pf1eqn1} and \eqref{eq:pf1eqn2} and add the estimates to arrive at both \eqref{eq:globker} and the case $r\le 1/n$ of \eqref{eq:lockeraway}.
\end{proof}

The next lemma gives a local bound on the kernels $\sigma_n$ defined in \eqref{eq:jointopdef}.

\begin{lemma}\label{lemma:jointop_loc_lemma}
Let $A$ and $B$ be as defined in Remark~\ref{rem:imageset}.
For a continuous $f:A\to\RR$, we have
\be\label{eq:joint_loc_opbd}
\|\sigma_n(\Xi_1,\Xi_2;f)\|_B \ls n^{Q-q_2}\left\{\|f\|_A +\|f\|_{\XX_2}(\max(1, nr))^{q_2-S}\right\}.
\ee
\end{lemma}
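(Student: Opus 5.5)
Fix $x_1\in B$ and split the integral defining $\sigma_n(\Xi_1,\Xi_2;f)(x_1)$ in \eqref{eq:jointopdef} into the part over $A$ and the part over $\XX_2\setminus A$. Here $f$ is understood as a continuous function on $\XX_2$, so that $\|f\|_{\XX_2}$ makes sense; $\|f\|_A$ is its restriction norm. This gives
\[
|\sigma_n(\Xi_1,\Xi_2;f)(x_1)|\le \|f\|_A\int_{A}|\Phi_n(\Xi_1,\Xi_2;x_1,x_2)|\,d\mu_2^*(x_2)+\|f\|_{\XX_2}\int_{\XX_2\setminus A}|\Phi_n(\Xi_1,\Xi_2;x_1,x_2)|\,d\mu_2^*(x_2).
\]

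For the first integral, enlarge the domain to all of $\XX_2$ and apply the global estimate \eqref{eq:globker} of Lemma~\ref{lemma:opbdlemma}. This yields the bound $\ls n^{Q-q_2}\|f\|_A$.

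For the second integral, use the geometry of $B$. By Remark~\ref{rem:imageset}, every $x_1\in B$ satisfies $d_{1,2}(x_1,\XX_2\setminus A)\ge r$. Hence every $x_2\in\XX_2\setminus A$ has $d_{1,2}(x_1,x_2)\ge r$. Up to the boundary set where equality holds, this gives $\XX_2\setminus A\subseteq \XX_2\setminus\BB_{1,2}(x_1,r')$ for any $r'<r$. Working with $r'<r$ avoids the issue of the closed ball; alternatively one notes that the dyadic proof of \eqref{eq:lockeraway} works verbatim with the strict inequality. Applying \eqref{eq:lockeraway}, the second integral is $\ls n^{Q-q_2}(\max(1,nr'))^{q_2-S}$. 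Letting $r'\uparrow r$, or taking $r'=r/2$ and absorbing the factor $2^{S-q_2}$ into the constant, gives $\ls n^{Q-q_2}(\max(1,nr))^{q_2-S}$.

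Adding the two bounds and taking the supremum over $x_1\in B$ yields \eqref{eq:joint_loc_opbd}. The constants depend only on $h$, $S$ and the joint data space, not on $x_1$, $n$, $r$, or $f$.

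The only delicate point is the containment $\XX_2\setminus A\subseteq\XX_2\setminus\BB_{1,2}(x_1,r)$, which needs the generalized triangle inequality as recorded in Remark~\ref{rem:imageset}. I expect this to be the main thing to check. The closed versus open ball issue is handled by the factor-of-two adjustment described above.
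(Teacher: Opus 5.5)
Your proposal is correct and follows the paper's proof essentially line for line: you split the integral over $A$ and $\XX_2\setminus A$, bound the first piece by \eqref{eq:globker}, and bound the second by \eqref{eq:lockeraway} using the containment $B\subseteq\{x_1: d_{1,2}(x_1,\XX_2\setminus A)\ge r\}$ from Remark~\ref{rem:imageset}. Your extra care with the closed ball $\BB_{1,2}(x_1,r)$ (passing to $r'=r/2$ at the cost of a constant $2^{S-q_2}$), and your reading of $f$ as continuous on all of $\XX_2$, fill small details that the paper glosses over.
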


\begin{proof}\ %of Lemma~\ref{lemma:jointop_loc_lemma}
Let $x_1\in B$.
In view of the joint triangle inequality \eqref{eq:joint_triangle}, we have $d_{1,2}(x_1,x_2)\ge r$ for all $x_2\in \XX_2\setminus A$.
Therefore, Lemma~\ref{lemma:opbdlemma} shows that
\begin{equation}
\begin{aligned}
|\sigma_n(\Xi_1,\Xi_2;f)(x_1)|\le& \int_{\XX_2} |f(x_2)\Phi_n(\Xi_1,\Xi_2;x_1,x_2)|d\mu_2^*(x_2)\\
=& \int_A |f(x_2)\Phi_n(\Xi_1,\Xi_2;x_1,x_2)|d\mu_2^*(x_2)\\
&+\int_{\XX_2\setminus A}|f(x_2)\Phi_n(\Xi_1,\Xi_2;x_1,x_2)|d\mu_2^*(x_2) \\
\ls& n^{Q-q_2}\|f\|_A +\|f\|_{\XX_2}\int_{\XX_2\setminus \BB_{1,2}(x_1,r)}|\Phi_n(\Xi_1,\Xi_2;x_1,x_2)|d\mu_2^*(x_2)\\
\ls& n^{Q-q_2}\left\{\|f\|_A +\|f\|_{\XX_2}(\max(1, nr))^{q_2-S}\right\}.
\end{aligned}
\end{equation}
\end{proof}

\begin{lemma}\label{lemma:E}
We assume  the polynomial preservation condition with parameter $c^*$. Let $f\in C(\XX_2)$ satisfy \eqref{eq:thmcon1}.
Then
\begin{equation}
    \mathcal{E}(f)= \lim_{n\to \infty}\sigma_{2^n}(\Xi_1,\Xi_2;f)
\end{equation}
exists on $B$. 
Furthermore, when $c^*2^n>1/r$, we have
\begin{equation}
\begin{aligned}
    &\norm{\mathcal{E}(f)-\mathcal{E}(\sigma_{2^n}(\Xi_2;f))}_B\\
    \lesssim& \sum_{m=n}^\infty 2^{m(Q-q_2)}\norm{f-\sigma_{2^m}(\Xi_2;f)}_A+\norm{f}_{\mathbb{X}_2}n^{Q-S}r^{q_2-S}.
\end{aligned}
\end{equation}
\end{lemma}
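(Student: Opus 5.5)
The plan is a telescoping argument built on the polynomial preservation condition, with Lemma~\ref{lemma:jointop_loc_lemma} as the only quantitative tool. Define
\[
g_m=\sigma_{2^{m+1}}(\Xi_2;f)-\sigma_{2^m}(\Xi_2;f)\in\Pi_{2^{m+1}}(\Xi_2),\qquad m\ge 0 .
\]
By Theorem~\ref{theo:goodapprox} (the bound \eqref{opbd} with $p=\infty$), we have $\|g_m\|_{\XX_2}\ls\|f\|_{\XX_2}$. Since $f\in C(\XX_2)=X^\infty$, \eqref{goodapprox} gives $\sigma_{2^m}(\Xi_2;f)\to f$ uniformly. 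Hence $f-\sigma_{2^j}(\Xi_2;f)=\sum_{m\ge j}g_m$ uniformly, and in particular $\|f-\sigma_{2^j}(\Xi_2;f)\|_A\le\sum_{m\ge j}\|g_m\|_A$.

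\emph{Step 1: lifting polynomials.} For any $P\in\Pi_{N}(\Xi_2)$, the polynomial preservation condition (Definition~\ref{def:polypres}) says $\sigma_k(\Xi_1,\Xi_2;P)$ is constant for $k\ge c^*N$. So $\mathcal{E}(P)$ exists on all of $\XX_1$ and equals $\sigma_{c^*N}(\Xi_1,\Xi_2;P)$. Apply this to $P=g_m$ with $N=2^{m+1}$ and bound it on $B$ by Lemma~\ref{lemma:jointop_loc_lemma}:
\[
\|\mathcal{E}(g_m)\|_B\ls 2^{m(Q-q_2)}\Big\{\|g_m\|_A+\|f\|_{\XX_2}(c^*2^{m}r)^{q_2-S}\Big\},
\]
valid once $c^*2^m r\ge1$. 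Summing over $m\ge n$, the first part is controlled by \eqref{eq:thmcon1}. The second part is a geometric series, since $S>Q$, and is $\ls\|f\|_{\XX_2}2^{n(Q-S)}r^{q_2-S}$. Hence $\sum_{m\ge n}\mathcal{E}(g_m)$ converges uniformly on $B$.

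\emph{Step 2: existence of the limit.} Fix $k$ and let $j=j(k)$ be the largest integer with $c^*2^j\le 2^k$. Split
\[
\sigma_{2^k}(\Xi_1,\Xi_2;f)=\sigma_{2^k}\big(\Xi_1,\Xi_2;\sigma_{2^j}(\Xi_2;f)\big)+\sigma_{2^k}\big(\Xi_1,\Xi_2;f-\sigma_{2^j}(\Xi_2;f)\big).
\]
By Step~1 and linearity, the first term equals $\mathcal{E}(\sigma_{2^n}(\Xi_2;f))+\sum_{m=n}^{j-1}\mathcal{E}(g_m)$. For the second term, Lemma~\ref{lemma:jointop_loc_lemma} gives on $B$ the bound
\[
2^{k(Q-q_2)}\Big\{\sum_{m\ge j}\|g_m\|_A+\|f\|_{\XX_2}(2^kr)^{q_2-S}\Big\}\ls\sum_{m\ge j}2^{m(Q-q_2)}\|g_m\|_A+\|f\|_{\XX_2}2^{k(Q-S)}r^{q_2-S}.
\]
This uses $2^k\sim2^j$ and, assuming $Q\ge q_2$, $2^{j(Q-q_2)}\le 2^{m(Q-q_2)}$ for $m\ge j$. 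The right-hand side tends to $0$ as $k\to\infty$. Hence $\mathcal{E}(f)$ exists on $B$, and
\[
\mathcal{E}(f)-\mathcal{E}(\sigma_{2^n}(\Xi_2;f))=\sum_{m\ge n}\mathcal{E}(g_m).
\]

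\emph{Step 3: the estimate.} The bound from Step~1 gives the claimed estimate, with $2^{n(Q-S)}$ in the tail term, which is what the statement's $n^{Q-S}$ should read. For the first part, use $\|g_m\|_A\le\|f-\sigma_{2^{m+1}}(\Xi_2;f)\|_A+\|f-\sigma_{2^m}(\Xi_2;f)\|_A$ and reindex; this costs only a constant factor $2^{Q-q_2}$.

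I expect the main obstacle to be the bookkeeping with the constant $c^*$. The lifted pieces $\mathcal{E}(g_m)$ live at level $c^*2^{m+1}$ rather than $2^m$, so the hypothesis $c^*2^n r\ge1$ is needed to apply the localized part of Lemma~\ref{lemma:jointop_loc_lemma} at every level $m\ge n$. One must also check that $\mathcal{E}$ commutes with the infinite sum, which is handled by the uniform convergence on $B$ obtained in Step~1. If $Q<q_2$, the factor $2^{m(Q-q_2)}$ is simply bounded by a constant and the same argument goes through.
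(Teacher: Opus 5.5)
Your argument is the paper's: telescope $g_m=\sigma_{2^{m+1}}(\Xi_2;f)-\sigma_{2^m}(\Xi_2;f)$, lift each $g_m$ exactly through the polynomial preservation condition at level $c^*2^{m+1}$, bound it on $B$ by Lemma~\ref{lemma:jointop_loc_lemma}, and sum the geometric tail (you are right that the last term should read $2^{n(Q-S)}$). Your Step~2, which identifies the telescoped series with $\lim_k\sigma_{2^k}(\Xi_1,\Xi_2;f)$, supplies a step the paper's proof only asserts (the corresponding comparison appears later, in \eqref{eq:pf2eqn1}); note that the bound $\|f-\sigma_{2^j}(\Xi_2;f)\|_A\le\sum_{m\ge j}\|g_m\|_A$ used there requires $f\in X^\infty$, and the paper identifies $X^\infty$ with $C(\XX_2)$ only in the manifold case.
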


\begin{proof}\ %of Lemma~\ref{lemma:E}
In this proof only we denote $P_{n}=\sigma_{n}(\Xi_2;f)$. 
Since $P_n\in \Pi_{n}(\Xi_2)$, the condition \eqref{eq:polyprescon} implies that
\begin{equation}
\mathcal{E}(P_n)=\sigma_{c^*n}(\Xi_1,\Xi_2;P_n)=\lim_{k\to \infty}\sigma_{k}(\Xi_1,\Xi_2;P_k)
\end{equation}
is defined on $\mathbb{X}_1$.
Theorem~\ref{theo:goodapprox} and Lemma~\ref{lemma:jointop_loc_lemma} then imply that
\begin{equation}
\begin{aligned}
&\norm{\mathcal{E}(P_{2^{m+1}})-\mathcal{E}(P_{2^{m}})}_B\\
=&
\norm{\sigma_{c^*2^{m+1}}(\Xi_1,\Xi_2;P_{2^{m+1}})-\sigma_{c^*2^{m+1}}(\Xi_1,\Xi_2; P_{2^m})}_B\\
\lesssim& 2^{m(Q-q_2)}(\norm{P_{2^{m+1}}-P_{2^m}}_A+\norm{f}_{\mathbb{X}_2}(\max(1, 2^mr))^{q_2-S}).
\end{aligned}
\end{equation}
We conclude that
\begin{equation}
\begin{aligned}
&\norm{\mathcal{E}(P_1)}_B+\sum_{m=0}^\infty \norm{\mathcal{E}(P_{2^{m+1}})-\mathcal{E}(P_{2^m})}_B\\
\lesssim&\norm{P_1}+\sum_{m=0}^\infty 2^{m(Q-q_2)}\norm{P_{2^{m+1}}-P_{2^m}}_A\\
&+\norm{f}_{\mathbb{X}_2}\left(\sum_{c^*2^m\leq 1/r} 2^{m(Q-q_2)}+r^{q_2-S}\sum_{c^*2^m>1/r}2^{m(Q-S)}\right)<\infty.
 \end{aligned}
\end{equation}
Thus, 
\begin{equation}
\mathcal{E}(f)=\mathcal{E}(P_1)+\sum_{m=0}^\infty \left(\mathcal{E}(P_{2^{m+1}})-\mathcal{E}(P_{2^m})\right)
\end{equation}
is defined on $B$. In particular, when $c^*2^n\geq 1/r$ it follows
\begin{equation}
    \norm{\mathcal{E}(f)-\mathcal{E}(P_{2^n})}_B\leq \sum_{m=n}^\infty 2^{m(Q-q_2)}\norm{P_{2^{m+1}}-P_{2^m}}_A+\norm{f}_{\mathbb{X}_2}2^{n(Q-S)}r^{q_2-S}.
\end{equation}
\end{proof}

%MAIN THEOREM PROOF

Now we give the proof of Theorem~\ref{theo:mainthm}.

\begin{proof}\ %of Theorem~\ref{theo:mainthm}
In this proof only denote $P_n=\sigma_{n}(\Xi_2;f)\in\Pi_n(\Xi_2)$. 
We can deduce from Theorem~\ref{theo:goodapprox} and Lemma~\ref{lemma:jointop_loc_lemma} that for $c^*r2^n\geq 1$,
\begin{equation}
\begin{aligned}\label{eq:pf2eqn1}
&\norm{\sigma_{c^*2^n}(\Xi_1,\Xi_2;f)-\sigma_{c^*2^{n}}(\Xi_1,\Xi_2;P_{2^n})}_{B}\\
\lesssim& 2^{n(Q-q_2)}(\norm{f-P_{2^n}}_A+\norm{f-P_{2^n}}_{\mathbb{X}_2}2^{n(q_2-S)}r^{q_2-S})\\
\lesssim& 2^{n(Q-q_2)}(\norm{f-P_{2^n}}_A+\norm{f}_{\mathbb{X}_2}2^{n(q_2-S)}r^{q_2-S}).
\end{aligned}
\end{equation}
The polynomial preservation condition (Definition~\ref{def:polypres}) gives us that
\begin{equation}
\norm{\sigma_{c^*2^n}(\Xi_1,\Xi_2;P_{2^n})-\mathcal{E}(P_{2^n})}_{B}=0.
\end{equation}
Then utilizing Equation~\eqref{eq:pf2eqn1} and Lemma~\ref{lemma:E}, we see
\begin{equation}\begin{aligned}
&\norm{\sigma_{c^*2^n}(\Xi_1,\Xi_2;f)-\mathcal{E}(f)}_{B}\\
\leq&\norm{\sigma_{c^*2^n}(\Xi_1,\Xi_2;f)-\sigma_{c^*2^n}(\Xi_1,\Xi_2;P_{2^n})}_{B}\\
&+\norm{\sigma_{c^*2^n}(\Xi_1,\Xi_2;P_{2^n})-\mathcal{E}(P_{2^n})}_{B}+\norm{\mathcal{E}(P_{2^n})-\mathcal{E}(f)}_{B}\\
\lesssim& 2^{n(Q-q_2)}\norm{f-P_{2^n}}_A+\sum_{m=n}^\infty 2^{m(Q-q_2)}\norm{P_{2^{m+1}}-P_{2^m}}_A+\norm{f}_{\mathbb{X}_2}2^{n(Q-S)}r^{q_2-S}.
\end{aligned}
\end{equation}
This proves Equation~\eqref{eq:ef-sigma}.

In particular, when $\alpha\ell_{j,k}\geq \lambda_{1,j}$ and $\alpha>0$, the only $\phi_{1,j}(x_1)$ with non-zero coefficients in Equation~\eqref{eq:jointopdef} are those where $\ell_{j,k}<n$, which implies $\lambda_{1,j}<\alpha n$ and further that $\sigma_{n}(\Xi_1,\Xi_2;f)\in \Pi_{\alpha n}(\Xi_1)$.
This completes the proof of part (a).

In the proof of part (b),  we may assume without loss of generality that $\|f\|_{W_{\gamma}(\Xi_2;A)}+\norm{f}_{\mathbb{X}_2}=1$. We can see from Corollary~\ref{cor:set_loc_smooth} that for each $m$
\begin{equation}
    \norm{P_{2^{m+1}}-P_{2^m}}_A\leq \norm{P_{2^{m+1}}-f}_A+\norm{P_{2^{m}}-f}_A\lesssim 2^{-m\gamma},
\end{equation}
which implies that whenever $Q-q_2<\gamma$ we have
\begin{equation}
    \sum_{m=n}^\infty 2^{m(Q-q_2)}\norm{P_{2^{m+1}}-P_{2^m}}_A\lesssim 2^{n(Q-q_2-\gamma)}.
\end{equation}
Further, the assumption that $\gamma<S-q_2$ gives us
\begin{equation}
    2^{n(Q-S)}\lesssim 2^{n(Q-q_2-\gamma)}.
\end{equation}
Since $f\in W_\gamma(\Xi_2;A)$, we have from Corollary~\ref{cor:set_loc_smooth} that
\begin{equation}
\norm{f-P_{2^n}}_A\lesssim 2^{-n\gamma}.
\end{equation}
Using Equation~\eqref{eq:ef-sigma} from part (a), we see
\begin{equation}\label{eq:ef-sigmagamma}
\norm{\mathcal{E}(f)-\sigma_{c^*2^n}(\Xi_1,\Xi_2;f)}_{B}\lesssim (1+r^{q_2-S}\norm{f}_{\mathbb{X}_2})2^{n(Q-q_2-\gamma)}.
\end{equation}
Thus, $\{\sigma_{c^*2^n}(\Xi_1,\Xi_2;f)\}$ is a sequence of continuous functions converging uniformly to $\mathcal{E}(f)$ on $B$, so $\mathcal{E}(f)$ itself is continuous on $B$.
Let us define $R_{c^*\alpha 2^n}\in \Pi_{c^*\alpha 2^n}$ for each n such that $\norm{R_{c^*\alpha2^n}-\phi}_{\mathbb{X}_1}\lesssim 2^{-n\gamma}$. Theorem~\ref{theo:goodapprox} and the strong product assumption (Definition~\ref{def:prod}) allow us to write
\begin{equation}\label{eq:goodapproxapp}
    \sigma_{c^*A^*\alpha 2^{n+1}}(\Xi_1;R_{c^*\alpha2^n}\sigma_{c^*2^n}(\Xi_1,\Xi_2;f))=R_{c^*\alpha2^n}\sigma_{c^*2^n}(\Xi_1,\Xi_2;f).
\end{equation}
Using Equations~\eqref{eq:globker}~and~\eqref{eq:goodapproxapp}, Theorem~\ref{theo:goodapprox}, and the fact $\phi$ is supported on $B$, we can deduce
\begin{equation}\label{eq:phiRnpolybound}
\begin{aligned}
&\norm{\sigma_{c^*A^*\alpha 2^{n+1}}\big(\Xi_1;R_{c^*\alpha 2^n}\sigma_{c^*2^n}(\Xi_1,\Xi_2;f)-\phi\mathcal{E}(f)\big)}_{\mathbb{X}_1}\\
\lesssim& \norm{R_{c^*\alpha 2^n}\sigma_{c^*2^n}(\Xi_1,\Xi_2;f)-\phi\mathcal{E}(f)}_{\mathbb{X}_1}\\
\lesssim& \norm{\phi\mathcal{E}(f)-\phi\sigma_{c^*2^n}(\Xi_1,\Xi_2;f)}_{\mathbb{X}_1}+\norm{R_{c^*\alpha 2^n}-\phi}_{\mathbb{X}_1}\norm{\sigma_{c^*2^n}(\Xi_1,\Xi_2;f)}_{\mathbb{X}_1}\\
\lesssim&\norm{\mathcal{E}(f)-\sigma_{c^*2^n}(\Xi_1,\Xi_2;f)}_B+2^{n(Q-q_2-\gamma)}\norm{f}_{\mathbb{X}_2}.
\end{aligned}
\end{equation}
In view of Equations~\eqref{eq:ef-sigmagamma}~and~\eqref{eq:phiRnpolybound}, we can conclude that
\begin{equation}\begin{aligned}
&E_{c^*A^*\alpha 2^{n+1}}(\Xi_1,\phi\mathcal{E}(f))\\
    \lesssim&\norm{\phi \mathcal{E}(f)-\sigma_{c^*A^*\alpha 2^{n+1}}(\Xi_1,\phi \mathcal{E}(f))}_{\mathbb{X}_1}\\
    \leq&\norm{\phi \mathcal{E}(f)-R_{c^*\alpha 2^n}\sigma_{c^*2^n}(\Xi_1,\Xi_2;f)}_{\mathbb{X}_1}\\
    &+\norm{\sigma_{c^*A^*\alpha 2^{n+1}}\big(\Xi_1;R_{c^*\alpha 2^n}\sigma_{c^*2^n}(\Xi_1,\Xi_2;f)-\phi\mathcal{E}(f)\big)}_{\mathbb{X}_1}\\
    \lesssim& \norm{\mathcal{E}(f)-\sigma_{c^*2^n}(\Xi_1,\Xi_2;f)}_{B}+\norm{f}_{\mathbb{X}_2}2^{n(Q-q_2-\gamma)}\\
\lesssim&(1+\norm{f}_{\mathbb{X}_2}(1+r^{q_2-S})) 2^{n(Q-q_2-\gamma)}.
\end{aligned}\end{equation}
Thus, $\phi\mathcal{E}(f)\in W_{\gamma-Q+q_2}(\Xi_1)$, completing the proof of part (b).
\end{proof}

\chapter{Classification}
\label{ch:classification}

%\title{A signal separation view of classification}
%\author{H. N. Mhaskar\thanks{
%Institute of Mathematical Sciences, Claremont Graduate University, Claremont, CA 91711. 
%\textsf{email:} hrushikesh.mhaskar@cgu.edu.
%The research is  supported in part by  ONR grants N00014-23-1-2394, N00014-23-1-2790.} \and Ryan O'Dowd\thanks{Institute of Mathematical Sciences, Claremont Graduate University, Claremont, CA 91711. 
%\textsf{email:} ryan.o'dowd@cgu.edu.}}
%\date{\today}

The content in this chapter is sourced from our paper pending publication titled ``A signal separation view of classification" \cite{mhaskarodowdclassification}.

\section{Introduction}
\label{sec:intro}
A fundamental problem in machine learning is the following. Let $\{(x_j,y_j)\}_{j=1}^M$ be random samples from an \textbf{unknown} probability distribution $\tau$. 
The problem is to approximate the conditional expectation $f(x)=\mathbb{E}_\tau(y|x)$ as a function of $x$. 
Naturally, there is a huge amount of literature studying function approximation by commonly used tools in machine learning such as neural and kernel based networks. 
For example, the universal approximation theorem gives conditions under which a neural network can approximate an arbitrary \textbf{continuous} function on an arbitrary compact subset of the ambient Euclidean space.
The estimation of the complexity of the approximation process typically assumes some smoothness conditions on $f$, examples of which include, the number of derivatives, membership in various classes such as Besov spaces, Barron spaces, variation spaces, etc.

A very important problem is one of classification.
Here the values of $y_j$ can take only finitely many (say $K$) values, known as the class labels. 
In this case, it is fruitful to approximate the classification function, defined by $f(x)=\argmax_k \mathsf{Prob}(k|x)$ \cite{murphy2022probabilistic}. 
Obviously, this function is only piecewise continuous, so that the universal approximation theorem does not apply directly. 
In the case when the classes are supported on well-separated sets, one may refer to extension theorems such as Stein extension theorems \cite[Chapter 6]{stein} in order to justify the use of the various approximation theorems to this problem.

While these arguments are sufficient for pure existence theorems, they also create difficulties in an actual implementation, in particular, because these extensions are not easy to construct. 
In fact, this would be impossible if the classes are not well-separated, and might even overlap.
Even if the classes are well-separated, and  each class represents a Euclidean domain, any lack of smoothness in the boundaries of these domains is a problem.
Some recent efforts, for example,  by Petersen and Voigtl\"ander  \cite{petersen-net} deal with the question of accuracy in approximation when the class boundaries are not smooth. 
However, a popular assumption in the last twenty years or so is that the data is distributed according a probability measure supported on a low dimensional manifold of a high dimensional ambient Euclidean space.
In this case, the classes have boundary of measure $0$ with respect to the Lebesgue measure on the ambient space.
Finally, approximation algorithms, especially with deep networks, utilize a great deal of labeled data.

In this chapter, we propose a different approach as advocated in \cite{cloningercluster}.
Thus, we do not assume that $\mathsf{Prob}(k|x)$ is a function, but assume instead that the points $x_j$ in class $k$ comprise the support of a probability measure $\mu_k$. 
The marginal distribution $\mu$ 
\yadi{${\mu}$}{Data distribution of the input} 
of $\tau$ along $x$ is then a convex combination of the measures $\mu_k$\yadi{$\mu_k$}{Probability distribution for class $k$}.
The fundamental idea is to determine the \textbf{supports} of the measures $\mu_k$ rather than approximating $\mu_k$'s themselves\footnote{If $\nu$ is a positive measure on a metric space $\MM$, we define the support of  any positive measure $\nu$ by $\mathsf{supp}(\nu)=\{x\in\MM, \nu(\BB(x,r))>0 \mbox{ for all } r>0\}$, where $\BB(x,r)$ is the ball of radius $r$ centered at $x$.}.  \yadi{$\mathsf{supp}$}{Support of a measure}
This is done in an unsupervised manner, based only on the $x_j$'s with no label information.
Having done so, we may then query an oracle for the label of one point in the support of each measure, which is then necessarily the label for every other point in the support. 
Thus, we aim to achieve in theory a  perfect classification using a minimal amount of judiciously chosen labeled data.

In order to address the problem of overlapping classes, we take a hierarchical multiscale approach motivated by a paper \cite{dasgupta2010} of Chaudhury and Dasgupta.
Thus, for each value $\eta$ of the minimal separation among classes, we assume that the support of $\mu$ is a disjoint union of $K_\eta$ subsets, each representing one of $K_\eta$ classes, leaving an extra set, representing the overlapping region. 
When we decrease $\eta$, we may eventually capture all the classes, leaving only a negligible overlapping region (ideally with $\mu$-probability $0$).

In \cite{cloningercluster}, it is seen that the problem is analogous to the problem of point source signal separation. 
If each $\mu_k$ were a Dirac delta measure supported at say $\omega_k$, the point source signal separation problem is to find these point sources from finitely many observations of the Fourier transform of $\mu$. 
In the classification problem we do not have point sources and the information comprises samples from $\mu$ rather than its Fourier transform.
Nevertheless, we observed in \cite{cloningercluster} that the techniques developed for the point source signal separation problem can be adapted to the classification problem viewed as the the problem of separation of the supports of $\mu_k$.
In that paper, it was assumed only that the data is supported on a compact subset of a Eulidean space, and used a specially designed localized kernel based on Hermite polynomials \cite{chuigaussian} for this purpose. 
Since Hermite polynomials are intrinsically defined on the whole Euclidean space, this creates both numerical and theoretical difficulties.
In this chapter, we allow the data to come from an arbitrary compact metric space, and use localized trigonometric polynomial kernels instead.
We feel that this leads to a more satisfactory theory, although one of the accomplishments of this chapter is to resolve the technical difficulties required to achieve this generalization.

To summarize, the main accomplishments of this chapter are:
\begin{itemize}
\item We provide a unified approach to signal separation problems and classification problems.
\item We deal with the classification of data coming from an arbitrary metric space with no further structure, such as the manifold structure.
\item Our results suggest a multiscale approach which does not assume any constraints on class boundaries, including that the classes not overlap.
\item In theory, the number of classes at each scale is an output of the theorem rather than a prior assumption.
\item We develop an algorithm to illustrate the theory, especially in the context of active learning on hyperspectral imaging data.
\end{itemize}

Our work belongs in the general theory of active learning. 
In Section~\ref{sec:related}, we review some literature in this area which is somewhat related to the present work.
In Section~\ref{sec:signals}, we give a brief discussion of the point source signal separation problem and the use of localized trigonometric polynomial kernels to solve it.
In Section~\ref{sec:background}, we describe the background needed to formulate our theorems, which are given in Section~\ref{sec:mainresults}.
The algorithm MASC to implement these results in practice is given in Section~\ref{sec:alg}, and illustrated in the context of a simulated circle and ellipse data set, a document dataset, and two hyperspectral datasets.
The proofs of the results in Section~\ref{sec:mainresults} are given in Section~\ref{sec:proofs}.
%Our algorithm requires some function approximation on submanifolds of a Euclidean sphere.
%This theory is summarized in the Appendix.

\section{Related works}\label{sec:related}

Perhaps the most relevant work to this chapter is that of \cite{cloningercluster}. 
That paper also outlines the theory and an algorithm for a classification procedure using a thresholding set based on a localized kernel. There are three major improvements we have made relative to that work in this chapter:
\ben
\item We have constructed the kernel in this chapter in terms of trigonometric functions, whereas in \cite{cloningercluster} the kernel was constructed from Hermite polynomials. The trigonometric kernel is much faster in implementations for two reasons: 1) each individual polynomial is extremely quick to compute and 2) the trigonometric kernel deals only with trigonometric polynomials up to degree $n$, whereas the Hermite polynomial based kernel needs polynomials up to degree $n^2$ to achieve the same support estimation bounds.

\item This chapter deals with arbitrary compact metric spaces (allowing for a rescaling of the data so that the maximum distance between values is $\leq \pi$), whereas \cite{cloningercluster} dealt with compact subsets of the Euclidean space and had a requirement on the degree of the kernel dependent upon the diameter of the data in terms of Euclidean distance.

\item In \cite{cloningercluster}, an algorithm known as Cautious Active Clustering (CAC) was developed. In this chapter, we present a new algorithm with several implementation advantages over CAC. 
We discuss this topic in more depth in Section~\ref{subsec:CACcomparison}.
\een

Another pair of related works is that of the Learning by Active Nonlinear Diffusion (LAND) and Learning by Evolving Nonlinear Diffusion (LEND) algorithms \cite{murphy-land,murphy-lend}. 
Like the present work, these algorithms use a kernel-based density estimation when deciding points to query. However, LAND and LEND both use a Gaussian kernel applied on $k$ neighbors for the density estimation and weight it by a diffusion value. 
Then, the queried points are simply those with the highest of the combined weights. 
The diffusion value corresponds to a minimal diffusion distance among points with a higher density estimation. 
For the point with the maximal density estimation, a maximal diffusion distance among other data points is taken as the weight. 
This extra weighting procedure is absent from our theory and algorithm, which uses an  estimation approach based purely on a localized kernel to decide on points to query. In our algorithm, we take a multiscale approach and decide on query points at each level instead of a global listing of the data points.

In \cite{nowak-network}, an active learning approach using neural networks is developed. This work focuses on binary classification and developing models using a neural network framework such that a sufficient number of queries will achieve a desired accuracy. 

A study of two types of uncertainty in active learning problems is discussed in \cite{sharma-active}. The two critical types of uncertainty are 1) a data point is likely to belong to multiple labels, 2) a data point is not likely to belong to any label. Our work also seeks to distinguish between data points which are uncertain in the second sense, using a graph construction approach and potentially also a thresholding set for high-density points. When our algorithm encounters points which are uncertain in the first sense, it elects not to assign a label right away, instead coming back to it once the ``confident" points have been classified.

Our method and algorithm are meant to work on general data sampled from compact metric spaces. One difficulty that algorithms may face is the presence of highly imbalanced data (i.e. where some class labels dominate over others in a data set). The problem of tackling this difficulty is studied in \cite{tharwat-active}, where an approach to querying imbalanced data using a balance of two principles is employed: exploration and exploitation. During the exploration phase, the algorithm seeks out points to query in low-sought regions. During the exploitation phase, the algorithm seeks to query points in the most critical explored regions. Our algorithm works in a different fashion, by querying points which we believe to be in high-density portions of a label's support and extending the label to nearby points until it ``bumps" against points which may belong to another label.

We list the survey by Tharwat and Schenck \cite{tharwat-survey} as a resourceful survey of recent developments in active learning.

\printnomenclature

\section{Point source signal separation}\label{sec:signals}

The problem of signal separation goes back to early work of de Prony \cite{prony}, and can be stated as: estimate the coefficients $a_k$ and locations $\omega_k$ constituting $\mu=\sum_{k=1}^K a_k\delta_{\omega_k}$, from observations of the form
\be\label{eq:signal}
\hat{\mu}(x)=\hat{\mu}(x)=\sum_{k} a_k e^{-i\omega_k x}, \qquad x\in\RR.
\ee
There is much literature on methods to approach this problem, and we cite \cite{plonkafourier} as a text one can use to familiarize themselves with the topic. If we assume $\omega_k=k\Delta$ for some $\Delta\in\mathbb{R}^+$ and allow measurements for any $x\in [-\Omega,\Omega]$ for some $\Omega\in \mathbb{R}^+$, then recovery is possible so long as we are above the Rayleigh threshold, i.e. $\Omega\geq \pi/\Delta$ \cite{donoho-superres}. Since  The case where this threshold is not satisfied is known as super-resolution. Much further research has gone on to investigate the super-resolution problem, such as \cite{batenkov-superres,candes-superres,li-superres}.

We now introduce a particular method of interest for signal separation from \cite{loctrigwave} and further developed in \cite{mhaskar-kitimoon-raj}. The method takes the following approach to estimate the coefficients and locations of $\mu$, without the assumption that the $\omega_k$'s should be at grid points, and the additional restriction that only \textbf{finitely many} integer values of $x$ are allowed. 
We start with the \textbf{trigonometric moments} of $\mu$:
$$
\hat{\mu}(\ell)=\sum_{k} a_k e^{-i\omega_k \ell}, \qquad |\ell|<n,
$$
where $n\ge 1$ is an integer.
Clearly, the quantities $\hat{\mu}(\ell)$ remain the same if any $\omega_k$ is replaced by $\omega_k$ plus an integer multiple of $2\pi$. Therefore, this problem is properly treated as the recuperation of a periodic measure $\mu$ from its Fourier coefficients rather than the recuperation of a measure defined on $\RR$ from its Fourier transform.
Accordingly, we define the quotient space $\mathbb{T}=\mathbb{R}/(2\pi\ZZ)$, and denote in this context, $|x-y|=|(x-y)\mbox{ mod } 2\pi|$.\yadi{$\TT$}{Quotient space $\RR/(2\pi\ZZ)$, sometimes referred to as the unit circle}
Here and in the rest of this chapter, we consider a smooth band pass filter $h$; i.e., an even function $h\in C^\infty(\mathbb{R})$ such that $h(u)=1$ for $|u|\leq 1/2$ and $h(u)=0$ for $|u|\geq 1$. \yadi{$h$}{Smooth, bandpass filter}
We then define
\yadi{$\sigma_n$}{Reconstruction operator in different settings, \eqref{eq:sigmaintro} and \eqref{eq:sigmametricdef}}
\be\label{eq:sigmaintro}
\sigma_n(\mu)(x)\coloneqq \sum_{|\ell|<n} h\left(\frac{\ell}{n}\right)\hat{\mu}(\ell)e^{i\ell x}, \qquad x\in \mathbb{T}.
\ee
With the kernel defined by \yadi{$\Phi_n$}{Localized trigonometric polynomial kernel, \eqref{eq:trigkerndef}}
\be\label{eq:trigkerndef}
\Phi_n(t)\coloneqq \sum_{|k|<n} h\left(\frac{k}{n}\right)e^{ikt}, \qquad t\in \mathbb{T},
\ee
it is easy to deduce that
\be\label{eq:trigconv}
\sigma_n(\mu)(x)=\frac{1}{2\pi}\int_\TT \Phi_n(x-t)d\mu(t)=\sum_{k} a_k \Phi_n(x-\omega_k).
\ee
A key property of $\Phi_n$ is the \textbf{localization property} (cf. \cite{mhaskar-prestin-filbir, kitimoon2025localized}, where the notation is different): For any integer $S\ge 3$,
\be\label{eq:kernloc}
|\Phi_n(t)|\le 7\sqrt{\frac{\pi}{2}}\left\{\int_{-1}^1 |h^{(S)}(t)|dt \right\}\frac{n}{\max(1, (n|t|)^S)}.
\ee
Together with the fact that $h=1$ on $[-1/2,1/2]$, this implies that $\Phi_n$ is approximately a Dirac delta supported at $0$; in particular,
$$
\sigma_n(\mu)(x)\approx \sum_k a_k\delta_{\omega_k}(x).
$$
The theoretical details of this sentiment are described more rigorously in \cite{mhaskar-prestin-filbir, kitimoon2025localized}. 
Here, we only give two examples to illustrate.

\begin{example}\label{uda:pointsource}
{\rm
We consider the measure 
\be\label{eq:udapointsource}
\mu=5\delta_{-1}+30\delta_{2}+20\delta_{2.05},
\ee
so that the data is
\be\label{eq:udamoments}
\hat{\mu}(\ell)=5\exp(i\ell)+30\exp(-2i\ell)+20\exp(-2.05i\ell), \qquad |\ell|<n.
\ee
In Figure~\ref{fig:pointsource}, we show the graphs of the ``power spectrum'' $|\sigma_n(\mu)(x)|$ for $n=64, 256$.

\begin{figure}[h]
\begin{center}
\begin{minipage}{0.3\textwidth}
\includegraphics[width=\textwidth]{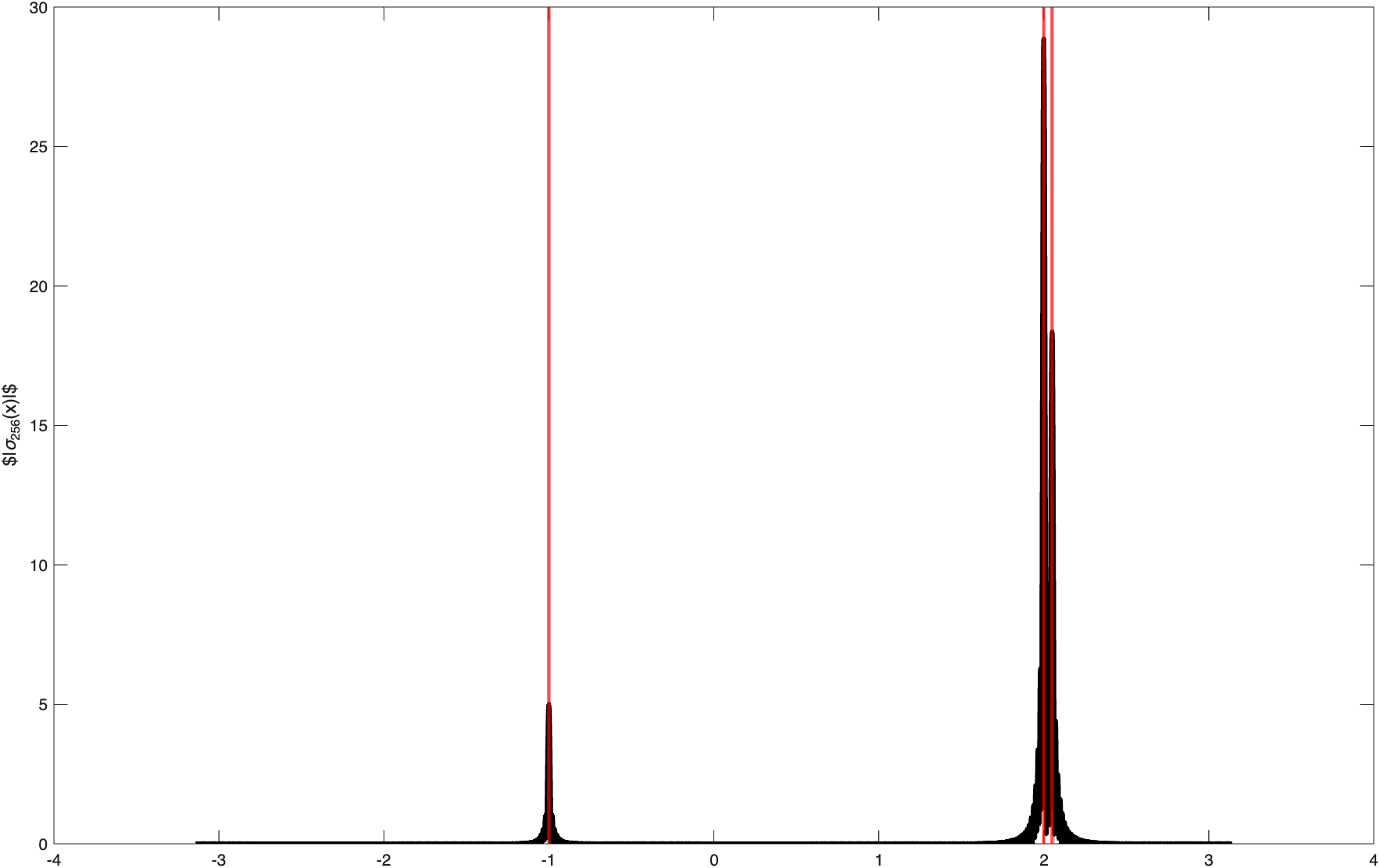} 
\end{minipage}
\begin{minipage}{0.3\textwidth}
\includegraphics[width=\textwidth]{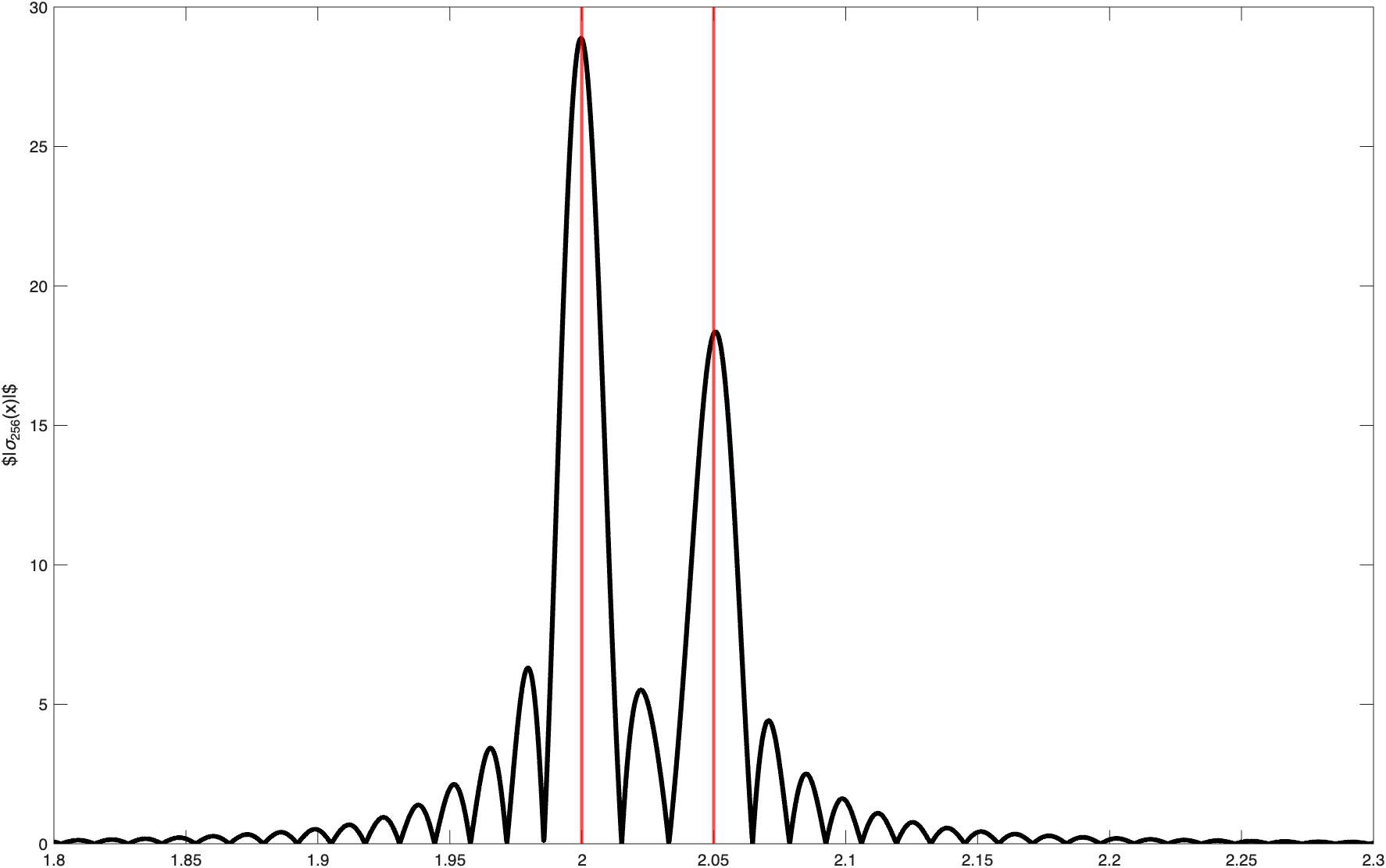} 
\end{minipage}
\begin{minipage}{0.3\textwidth}
\includegraphics[width=\textwidth]{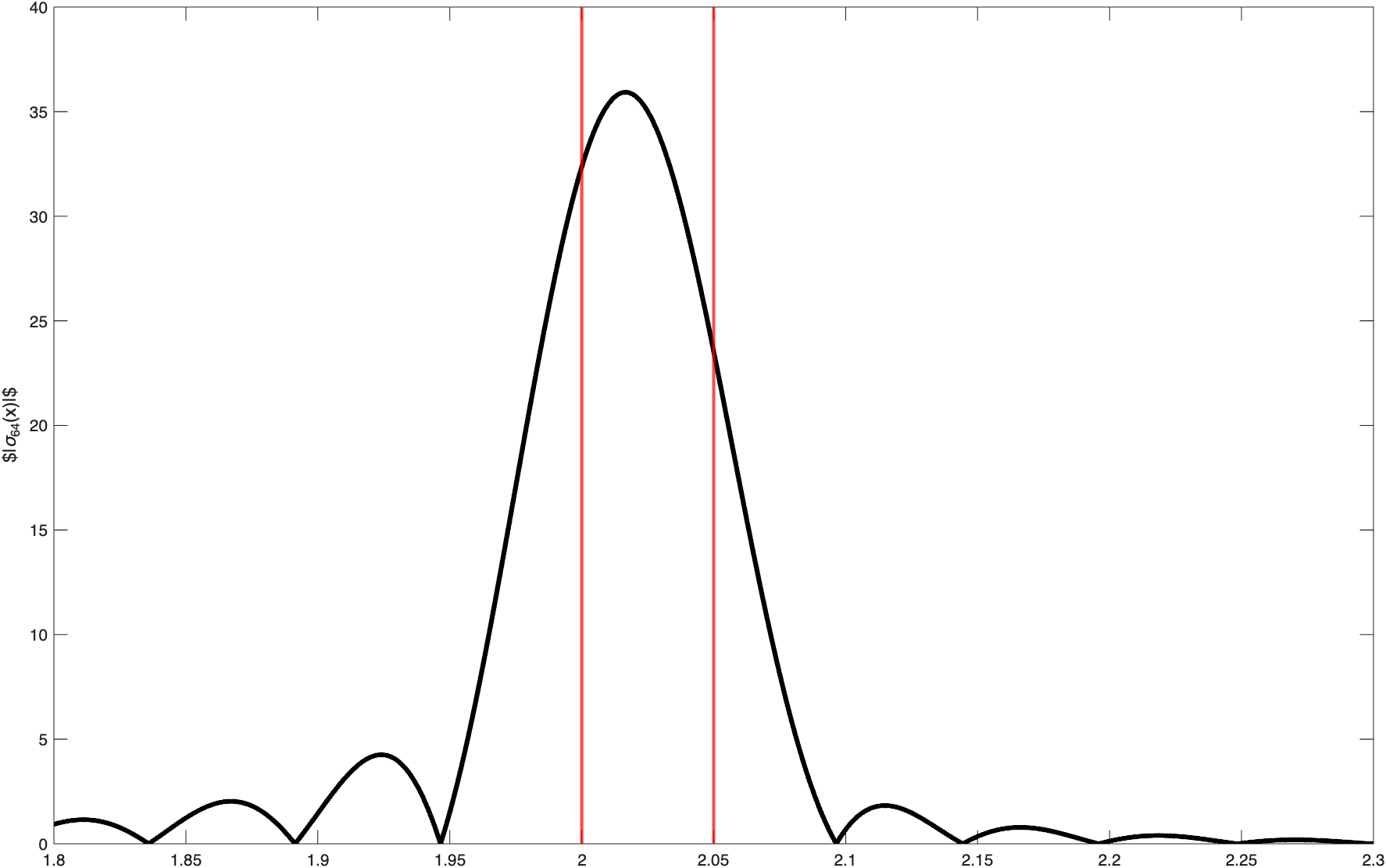} 
\end{minipage}
\end{center}
\caption{Left: $|\sigma_{256}(\mu)(x)|$ has peaks at the points $-1, 2, 2.05$, and is small everywhere else. Vertical red lines indicate the positions of these points. Middle: A close-up view of $|\sigma_{256}(\mu)(x)|$ near $x=2$ to show an accurate detection of the close-by points $2, 2.05$. Right: A close-up view of $|\sigma_{64}(\mu)(x)|$ near $x=2$ to show the non-detection of the close-by points $2, 2.05$. }
\label{fig:pointsource}
\end{figure}

We see from the leftmost figure that $\abs{\sigma_{256}}$ has peaks at approximately $-1, 2, 2.05$, and is very small everywhere else on $[-\pi,\pi]$. The middle figure is a close-up view to highlight an accurate detection of the close-by point sources $2, 2.05$. The rightmost figure shows that with $n=64$, such a resolution is not possible. 
When we wish to automate this, we need to figure out a threshold so that we should look only at peaks above the threshold. As the middle figure shows, there are sidelobes around each peak (and in fact, small sidelobes at many other places on $[-\pi,\pi]$).
In theory, this threshold is $\min_k |a_k|/2$, which we do not know in practice. If we set it too low, then we might  ``detect'' non-existent point sources near $2, 2.05$.
On the other hand, if we set it too high, then we would lose the low amplitude point source at $-1$. 
Some ideas on how to set an appropriate threshold, especially in the presence of noise are discussed in \cite{kitimoon2025localized}.
Another important quantity is the minimal separation $\eta=\min_{k\neq j}|(\omega_k-\omega_j) \mbox{ mod } 2\pi|$ among the point sources.
As the middle and right figures show, the detection of point sources which are very close-by requires the knowledge of a larger number of moments.
It is shown in \cite{mhaskar-superres} that one must have $n\gtrsim \eta^{-1}$ in order to have sufficient resolution to recover the point sources in a stable manner.
\qed}
\end{example}

Our next example is a precursor of the main results of this chapter.

\begin{example}\label{uda:measureseparation} {\rm
We define a distribution $\mu$ on $\TT$ as a convex combination of:
\bit
\item A sum of two uniform distributions each supported on $ [-0.6,-0.4]$, with a weight of $1200/3900$.

\item A normal distribution with mean $0.05$ and variance $0.04$, with a weight of $2400/3900$.

%\item A uniform distribution supported on $ [.46,.66]$, with a weight of $1200/3150$.

\item Three point-mass distributions at $-2,0.4,1.5$, with weights of $60/3900, 120/3900, 120/3900$ respectively (anomaly).
\eit

We take 3900 samples from this distribution (the number of points from each part of the distribution corresponding to the numerator of the weight). 
The samples from the distribution are visualized in Figure~\ref{fig:histogram} (a) as a normalized histogram. 
Then, we apply $\sigma_{128}$ to get an estimation of the support, as seen in Figure~\ref{fig:histogram} (b). 
Not only do we get an idea of the support of the distribution by looking at $\sigma_{128}$, but also the amplitudes of the non-atomic components of the distribution. 
 Since we are dealing with finitely many samples, we in fact are only estimating the integral in the definition of $\sigma_n$ as a Monte-Carlo type summation. That is, with data $\{u_j\}_{j=1}^M$ sampled randomly from $\mu$, we estimate
$$
\sigma_n(t)\approx \frac{1}{M}\sum_{j=1}^M\Phi_n(t- u_j).
$$

\begin{figure}[!ht]
\begin{center}
\begin{subfigure}[t]{.45\textwidth}
\begin{center}
\includegraphics[width=\textwidth]{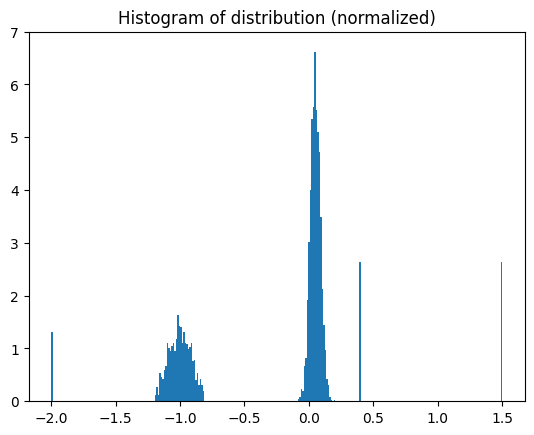}
\caption{Histogram.}
\end{center}
\end{subfigure}
\begin{subfigure}[t]{.45\textwidth}
\begin{center}
\includegraphics[width=\textwidth]{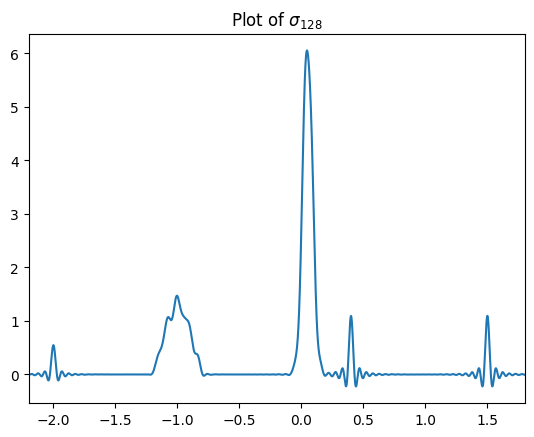}
\caption{Plot of $\sigma_{128}$.}
\end{center}
\end{subfigure}
\caption{Normalized histogram of the density of interest (left), paired with our density estimation by $\sigma_{128}$ based on $3900$ samples (right).} 
\label{fig:histogram}
\end{center}
\end{figure}
\qed}
\end{example}

In this chapter, we will use similar ideas to separate the support of probability measures $\mu_k$ (rather than $\delta_{\omega_k}$), based on random samples taken from the convex combination $\mu=\sum_k a_k\mu_k$ (rather than Fourier coefficients of a general linear combination), where all the measures are supported on a compact metric measure space. 
Table~\ref{tab:supervsclass} summarizes the similarities and differences between the signal separation problem and classification problem as studied in this chapter.

\begin{table}
\begin{center}
\begin{tabular}{|c|c|c|}
\hline
 & Signal separation & Classification\\
\hline
Measure: & $\mu=\sum_{k} a_k \delta_{\omega_k}$ & $\mu=\sum_{k}a_k\mu_k$\\
\hline
Domain: & $\mathbb{R}/(2\pi\ZZ)$ & unknown subset of a metric space\\
\hline
Data: & Fourier moments & samples from $\mu$\\
\hline
Key quantity: & $\min_{j\neq k}\abs{(\omega_j-\omega_k) \mod 2\pi}$ & $\min_{j\neq k}\ \operatorname{dist}(\operatorname{supp}(\mu_j),\operatorname{supp}(\mu_k))$\\
\hline
\end{tabular}
\caption{Comparison between traditional signal separation and our approach to machine learning classification.}
\label{tab:supervsclass}
\end{center}
\end{table}

\section{Relevant Definitions}
\label{sec:background}

In this section, we introduce many common notations and definitions used throughout the rest of this chapter.

Let $\mathbb{M}$ be a compact metric space with metric $\rho$, normalized so that $\mathsf{diam}(\mathbb{M})=\max_{x,y\in\MM}\rho(x,y) =\pi$.
\yadi{$\MM$}{Ambient space}
\yadi{$\rho$}{Metric on $\MM$}
\yadi{$\XX$}{Support of the data measure $\mu$}
This normalization facilitates our use of the $2\pi$-periodic kernel $\Phi_n$, while avoiding the possibility that points $x,y$ with $\rho(x,y)\approx 2m\pi$ for some integer $m$ would be considered close to each other. 
It is well known in approximation theory that positive kernels leads to a saturation and, hence, would not be appropriate for approximating probability measures as is commonly done.
However, in this chapter our main interest is to find supports of the measures rather than approximating the measures themselves.
So, in order to avoid cancellations, we prefer to deal with a positive kernel defined by
\be\label{eq:metrickerndef}
\Psi_n(x,y)=\Phi_n(\rho(x,y))^2,
\ee
where $\Phi_n$ is the kernel defined in \eqref{eq:trigkerndef}. \yadi{$\Psi_n$}{Localized kernel on $\MM$, \eqref{eq:metrickerndef}}
The localization property \eqref{eq:kernloc}, used with $\lceil S/2\rceil$ in place of $S$ implies that
\be\label{eq:metrickernloc}
\Psi_n(x,y) \le c\frac{n^2}{\max(1, (n\rho(x,y))^S)},
\ee
where $c>0$ is a constant depending only on $h$ and $S$.

For any point $x\in\MM$, and any sets $A,B\subseteq \MM$, we define the following notation for the balls and neighborhoods.
\be\label{eq:balldefclassification}
\ba
\dist(x,A)=&\inf_{y\in A} \rho(x,y),& \mathbb{B}(x,r)=&\{y\in\MM:\rho(x,y)\leq r\},\\
\dist(A,B)=&\inf_{y\in A}\dist(y,B),& \mathbb{B}(A,r)=&\{x\in\MM: \dist(x,A)\leq r\}.
\ea
\ee
\yadi{$\mathbb{B}(x,r)$}{Ball of radius $r$ centered at $x$}
\yadi{$\mathbb{B}(A,r)$}{$r$-neighborhood of $A$, \eqref{eq:balldefclassification}}
For any $A\subseteq\mathbb{M}$, we define $\operatorname{diam}(A)=\sup_{x,y\in A} \rho(x,y)$. \yadi{$\operatorname{diam}$}{Diameter of a subset of a metric space}

\subsection{Measures}
\label{subsec:measuredefs}

Let $\mu$ be a positive, Borel, probability measure on $\mathbb{M}$ (i.e. $\int_{\mathbb{M}}d\mu(y)=1$). 
We denote $\mathbb{X}\coloneqq \mathsf{supp}(\mu)$.
Much of this chapter focuses on $\XX$. 
However, we wish to treat $\XX$ as an \textbf{unknown} subset of a known ambient space $\MM$ rather than treating it as a metric space in its own right. 
In particular, this emphasizes the fact the data measure $\mu$  may not have a density, and may not be supported on the entire ambient space.

In the case of signal separation, we have seen that if the minimal amplitude for a certain point source is sufficiently small, we may not be able to detect that point source. 
Likewise, if the measure $\mu$ is too small on parts of $\XX$, we may not be able to detect those parts.
For this reason, we make some assumptions on the measure $\mu$ as in  \cite{cloningercluster}. 
The first property, \textit{detectability}, determines the rate of growth of the measure locally around each point in the support. 
The second property, \textit{fine-structure}, relates the measure to the classification problem by equipping the support with some well-separated (except maybe for some subset of relatively small measure) partition which may correspond to some different class labels in the data. 

\begin{definition}
\label{def:detectable}
We say a measure $\mu$ on $\mathbb{M}$ is \textbf{detectable} if there exist $\alpha\geq 0, \kappa_1, \kappa_2>0$ such that 
\be\label{eq:ballmeasurecon}
\mu(\mathbb{B}(x,r))\le \kappa_1 r^\alpha,\qquad x\in \mathbb{M},\ r>0,
\ee
and there exists $r_0>0$ such that
\be\label{eq:ballmeasurelow}
\mu(\mathbb{B}(x,r))\ge \kappa_2 r^\alpha,\qquad x\in\mathbb{X}, \ 0<r\le r_0.
\ee
\end{definition}

\begin{definition}
\label{def:finestructure}
We say a measure $\mu$ has a \textbf{fine structure} if there exists an $\eta_0$ such that for every $\eta\in (0,\eta_0]$ there is an integer $K_\eta$ and a partition $\mathbf{S}_\eta\coloneqq\{\mathbf{S}_{k,\eta}\}_{k=1}^{K_\eta+1}$ of $\mathbb{X}$ where both of the following are satisfied.
\ben
\item (\textbf{Cluster Minimal Separation}) For any $j,k=1,2,\dots,K_\eta$ with $j\neq k$ we have
\be\label{eq:minsep}
\operatorname{dist}(\mathbf{S}_{j,\eta},\mathbf{S}_{k,\eta})\geq 2\eta.
\ee
\yadi{$\eta$}{Minimal separation among classes at different levels}
\yadi{$K_\eta$}{Number of partition elements of $\XX$ separated by $2\eta$, Definition~\ref{def:finestructure}}
\item (\textbf{Exhaustion Condition}) We have
\be\label{eq:exhaustion}
\lim_{\eta\to 0^+} \mu(\mathbf{S}_{K_\eta+1,\eta})=0.
\ee
\een
We will say that $\mu$ has a \textbf{fine structure in the classical sense} if $\mu=\sum_{k=1}^K a_k\mu_k$ for some probability measures $\mu_k$, $a_k$'s are $>0$ and $\sum_k a_k=1$, and the compact subsets $\mathbf{S}_k\coloneqq \mathsf{supp}(\mu_k)$ are disjoint. 
In this case $\eta$ is the minimal separation among the supports and there is no overlap.
 
\end{definition}

\begin{remark}\label{rem:exception}
{\rm
It is possible to require the condition \eqref{eq:ballmeasurelow} on a subset of $\XX$ having measure converging to $0$ with $r$. 
This will add some difficulties in our proof of \eqref{eq:Jn} and Lemma~\ref{lemma:discprelim}. However, in the case when $\mu$ has a fine structure, this exceptional set can be absorbed in $\mathbf{S}_{K_\eta+1}$ with appropriate assumptions. 
We do not find it worthwhile to explore this further in this chapter.
\qed}
\end{remark}

\begin{example}
{\rm
Supposing that $\mu=\sum_{k=1}^K a_k \delta_{\omega_k}$ (as in the signal separation problem), then we see that $\mu$ is detectable with $\alpha=0$, $\kappa_1=\max_k |a_k|$, $\kappa_2=\min_k |a_k|$. 
It has fine structure in the classical sense  whenever $\eta<\min_{j\neq k} |\omega_j-\omega_k|$. In this sense, the theory presented in this chapter is a generalization of results for signal separation in this regime.
\qed} 
\end{example}

\begin{example}
{\rm
If $\mathbb{X}$ is a $\alpha$-dimensional, compact, connected, Riemannian manifold, then the normalized Riemannian volume measure is detectable with parameter $\alpha$. 
\qed}
\end{example}

\subsection{F-score}\label{sec:fscore}

We will give results on the theoretical performance of our measure estimating procedure by giving an asymptotic result involving the so-called F-score. 
The F-score for binary classification (true/false) problems is a measure of classification accuracy taking the form of the harmonic mean between precision and recall. 
In a predictive model, precision is defined as the fraction of true positive outputs over all the positive outputs of the model. 
Recall is the fraction of true positive outputs over all the actual positives. 
In a multi-class problem, we extend this definition as follows (cf. \cite{diagraphcluster_ohiostate_2011}). 
If $\{C_1,\dots,C_N\}$ is a partition of $\{x_j\}_{j=1}^M$ indicating the predicted output labels of a model and $\{L_1,\dots,L_K\}$ is the ground-truth partition of the data, then 
one can define the precision of $C_j$ against the true label $L_k$ by $|C_j\cap L_k|/|C_j|$ and the corresponding recall by $|C_j\cap L_k|/|L_k|$.
Taking the maximum of the harmonic means of the precisions and recalls with respect to all the ground truth labels leads to 
\be
F(C_j)=2\max_{k\in \{1,\dots,K\}} \frac{\abs{C_j\cap L_k}}{\abs{C_j}+\abs{L_k}}.
\ee
Then the F-score is given by
\be
F\left(\{C_j\}_{j=1}^N\right)=\frac{\sum_{j=1}^N \abs{C_j}F(C_j)}{\sum_{j=1}^N \abs{C_j}}.
\ee
Since we are treating the data as samples from a measure $\mu$, we replace cardinality in the above formulas with measure. Our fine structure condition gives us the true supports as $\{\mathbf{S}_{k,\eta}\}_{k=1}^{K_\eta}$ for any valid $\eta$, so we can define the F-score for the support estimation clusters $\{\mathcal{C}_{j,\eta}\}_{j=1}^N$ by
\be\label{eq:class_fscore}
\mathcal{F}_\eta(\mathcal{C}_{j,\eta})=2\max_{k\in \{1,\dots,K\}}\frac{\mu(\mathcal{C}_{j,\eta}\cap \mathbf{S}_{k,\eta})}{\mu(\mathcal{C}_{j,\eta})+\mu(\mathbf{S}_{k,\eta})},
\ee
and \yadi{$\mathcal{F}_\eta$}{$F$-score at separation $\eta$, \eqref{eq:fscoredef}}
\be\label{eq:fscoredef}
\mathcal{F}_\eta\left(\{\mathcal{C}_{j,\eta}\}_{j=1}^N\right)=\frac{\sum_{j=1}^N \mu(\mathcal{C}_{j,\eta})\mathcal{F}_\eta(\mathcal{C}_{j,\eta})}{\mu\left(\bigcup_{j=1}^N \mathcal{C}_{j,\eta}\right)}.
\ee
\begin{remark}\label{rem:fscore}
We observe that 
$$
1-2\frac{\mu(\mathcal{C}_{j,\eta}\cap \mathbf{S}_{k,\eta})}{\mu(\mathcal{C}_{j,\eta})+\mu(\mathbf{S}_{k,\eta})}=\frac{\mu(\mathcal{C}_{j,\eta}\Delta \mathbf{S}_{k,\eta})}{\mu(\mathcal{C}_{j,\eta})+\mu(\mathbf{S}_{k,\eta})},
$$
where in this remark only, $\Delta$ denotes the symmetric difference.
It follows that $0\le \mathcal{F}_\eta\le 1$.
If we estimate each support perfectly so $C_{j,\eta}=\mathbf{S}_{j,\eta}$ for all $j$ and each $C_{j,\eta}$ is $\eta$-separated from any other, then we see that $\mathcal{F}_\eta\left(\{\mathcal{C}_{j,\eta}\}_{j=1}^N\right)=1$. Otherwise, we will attain an F-score strictly lower than $1$. \qed
\end{remark}

\section{Main Results}
\label{sec:mainresults}

In this section we introduce the main theorems of this chapter, which involve the recovery of supports of a measure from finitely many samples. Theorem~\ref{thm:fullsuportdet}
 pertains to the case where we only assume the detectability of the measure. Theorem~\ref{thm:class_separation} pertains to the case where we additionally assume the fine structure condition. Before stating the results, we must introduce our discrete measure support estimator and support estimation sets. We define our  \textbf{data-based measure support estimator} by \yadi{$F_n$}{measure support estimator \eqref{eq:support_estimator_def}}
\be\label{eq:support_estimator_def}
F_n(x)\coloneqq \frac{1}{M}\sum_{j=1}^M \Psi_n(x,x_j).
\ee
This definition is then used directly in the construction of our \textbf{data-based support estimation sets}, given by \yadi{$\mathcal{G}_n$}{Support estimation set \eqref{eq:support_est_set_def}}
\be\label{eq:support_est_set_def}
\mathcal{G}_n(\Theta)\coloneqq \left\{x\in \mathbb{M}: F_n(x)\geq \Theta \max_{1\leq k\leq M} F_n(x_k)\right\}.
\ee

Our first theorem gives us bounds on how well $\mathcal{G}_n(\Theta)$ approximates the entire support $\mathbb{X}$ with the detectability assumption. 

\begin{theorem}\label{thm:fullsuportdet}
Let $\mu$ be detectable and suppose $M\gtrsim n^{\alpha}\log(n)$. Let $\{x_1,x_2,\dots,x_M\}$ be independent samples from $\mu$. There exists a constant $C>0$ such that if $\Theta<C<1$, then there exists $r(\Theta)\sim \Theta^{-1/(S-\alpha)}$ such that with probability at least $1-c_1/M^{c_2}$ we have
\be\label{eq:thm1}
\mathbb{X}\subseteq \mathcal{G}_n(\Theta)\subseteq \mathbb{B}\left(\mathbb{X},r(\Theta)/n\right).
\ee
\end{theorem}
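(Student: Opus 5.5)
We compare the empirical estimator $F_n$ with its expectation
$$
\sigma_n(x)\coloneqq \int_{\MM}\Psi_n(x,y)\,d\mu(y),
$$
and first establish two deterministic bounds on $\sigma_n$.

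\textbf{Upper bound, valid for all $x\in\MM$.} We use \eqref{eq:metrickernloc} together with the dyadic-annulus argument of Lemma~\ref{lemma:manifold_lebesgue_number}, with the ball condition \eqref{eq:ballmeasurecon} in place of \eqref{eq:ballmeasure}. Split $\MM$ into $\BB(x,1/n)$ and the annuli $\BB(x,2^k/n)\setminus\BB(x,2^{k-1}/n)$. Each annulus contributes at most $\kappa_1 n^2 (2^k/n)^\alpha 2^{-kS}$. Summing with $S>\alpha$ gives $\sigma_n(x)\lesssim n^{2-\alpha}$ for every $x\in\MM$.

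\textbf{Tail bound off a neighborhood of $\XX$.} If $\dist(x,\XX)\ge r/n$ with $r\ge1$, then every $y$ in the support satisfies $\rho(x,y)\ge r/n$. Running the same annulus sum starting at radius $r/n$ gives $\sigma_n(x)\lesssim n^{2-\alpha} r^{\alpha-S}$.

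\textbf{Lower bound on $\XX$.} For $x\in\XX$, we need $\Phi_n(t)\gtrsim n$ for $|t|\le c/n$. This follows because $\Phi_n(0)=\sum h(k/n)\sim n$, and by the Bernstein inequality $|\Phi_n'|\lesssim n^2$. Combining this with \eqref{eq:ballmeasurelow} gives
$$
\sigma_n(x)\ge \int_{\BB(x,c/n)}\Psi_n\,d\mu\gtrsim n^2\kappa_2 (c/n)^\alpha\sim n^{2-\alpha},
$$
valid once $c/n\le r_0$. So $\sigma_n\sim n^{2-\alpha}$ on $\XX$ and is much smaller far from $\XX$.

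\textbf{Discretization.} Set $Z_j=\Psi_n(x,x_j)$. Then $0\le Z_j\lesssim n^2$, and $\mathbb{E}Z_j^2\lesssim n^2\sigma_n(x)\lesssim n^{4-\alpha}$. Proposition~\ref{prop:berncon} with $t=\epsilon n^{2-\alpha}$ gives a deviation probability of at most $2\exp(-cM\epsilon^2 n^{-\alpha})$. To pass from single points to a supremum over $\MM$, note that $x\mapsto\Phi_n(\rho(x,y))^2$ is Lipschitz with constant $\lesssim n^3$ in $\rho(x,\cdot)$, again by Bernstein. Both $F_n$ and $\sigma_n$ are therefore controlled on a $\delta/n$-net of $\MM$ with error $\lesssim \delta n^{2}$.

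This net step is the main obstacle. $\MM$ is an arbitrary compact metric space, so its covering numbers are not controlled, whereas Lemma~\ref{lemma:supnorm_concentration} had a dimension $Q$ to count with. I would first try to restrict attention to a net of $\BB(\XX,R/n)$ of size $\lesssim n^\alpha$, using the upper ball-measure bound through a packing argument with $\mu$-mass. Since \eqref{eq:ballmeasurelow} holds only on $\XX$, the packing would be of $\XX$, and points of $\BB(\XX,R/n)$ inherit control through Lipschitz continuity. Points farther away need only the crude upper bound, which follows because each sample is in $\XX$ and the kernel tail estimate \eqref{eq:metrickernloc} applies deterministically to each term. With $M\gtrsim n^\alpha\log n$, a union bound over a net of cardinality polynomial in $n$ (hence in $M$) yields, with probability $\ge1-c_1M^{-c_2}$,
$$
|F_n-\sigma_n|\le \epsilon n^{2-\alpha}\quad\text{uniformly near }\XX.
$$

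\textbf{Conclusion.} We have $\max_k F_n(x_k)\sim n^{2-\alpha}$, and $F_n(x)\ge c' n^{2-\alpha}$ on $\XX$. This gives $\XX\subseteq\mathcal{G}_n(\Theta)$ whenever $\Theta<C\coloneqq c'/\max$. For the reverse inclusion, suppose $\dist(x,\XX)\ge r/n$. The deterministic bound
$$
F_n(x)\le \max_j\Psi_n(x,x_j)\lesssim n^{2}r^{-S}
$$
is too crude, so instead we use the sampled annulus sum, i.e.\ the tail bound transferred through the concentration estimate, to get $F_n(x)\lesssim n^{2-\alpha}r^{\alpha-S}$. This is below $\Theta\max F_n$ once $r\gtrsim\Theta^{-1/(S-\alpha)}$, which gives \eqref{eq:thm1} with $r(\Theta)\sim\Theta^{-1/(S-\alpha)}$.
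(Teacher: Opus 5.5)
Your deterministic estimates for $\sigma_n$ are the content of Lemma~\ref{lem:integralest}, and your net argument does give the first inclusion. Both $\XX\subseteq\mathcal{G}_n(\Theta)$ and the bound $\max_kF_n(x_k)\sim n^{2-\alpha}$ only need uniform control on $\XX$, where a packing at scale $\epsilon n^{-1-\alpha}$ has polynomial cardinality by \eqref{eq:ballmeasurelow}. The gap is in the second inclusion. It requires $F_n(x)<\Theta\max_kF_n(x_k)$ for \emph{every} $x\in\MM$ with $\dist(x,\XX)>r(\Theta)/n$. That is, you need $F_n(x)\ls n^{2-\alpha}(n\dist(x,\XX))^{\alpha-S}$ simultaneously on a region of $\MM$ whose covering numbers nothing in the hypotheses controls, since $\mu$ only sees $\XX$. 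Your net is a packing of $\XX$. Passing from it to a point at distance $s$ from $\XX$, the Lipschitz bound \eqref{eq:kernbern} only gives an error $\ls n^3s$, which for $s\sim R/n$ is of order $Rn^2\gg\epsilon n^{2-\alpha}$. So the claim that points of $\BB(\XX,R/n)$ ``inherit control through Lipschitz continuity'' does not hold. For points farther out, your conclusion appeals to a ``sampled annulus sum transferred through the concentration estimate'' that was never established there. The only bound you actually have off $\XX$ is the crude $n^2r^{-S}$, which you rightly discard because it forces $r$ to grow with $n$.

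The paper avoids this by never discretizing in $x$. It partitions $\XX$ into $N\ls\delta^{-\alpha}$ cells $Y_k$ with $\mathsf{diam}(Y_k)\ls\delta\le c/n$ and $\mu(Y_k)\sim\delta^\alpha$ (Proposition~\ref{prop:partition}). The only random quantities are then the $N$ counts $|\D_k|$, which are controlled by a union bound (Lemma~\ref{lemma:discpartition}). The function $H_n(x)=\sum_k\frac{\mu(Y_k)}{|\D_k|}\sum_{x_j\in\D_k}\Psi_n(x,x_j)$ is within $(\gamma/2)I_n$ of $\sigma_n(x)$ for all $x\in\MM$ at once, wherever the samples sit inside their cells (Lemma~\ref{lemma:discprelim}). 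This uses Lipschitz continuity in the second variable across nearby cells, localization for distant cells, and \eqref{eq:ballmeasurecon} at arbitrary centers. Positivity of $\Psi_n$ gives $|F_n-H_n|\le(\gamma/2)F_n$ pointwise. Hence $|F_n-\sigma_n|\le\beta I_n$ on all of $\MM$ (Lemma~\ref{lem:Inbounds}), and both inclusions follow from Theorem~\ref{thm:suppmu} through the sandwich of Theorem~\ref{thm:GnSnbound}.

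Your route can also be repaired, but it needs an extra idea of this kind. For $d=\dist(x,\XX)\ge 1/n$, choose $z$ in a $1/n$-net of $\XX$ with $\rho(x,z)\le 2d$. Then $\rho(x,x_j)\ge\max(d,\rho(z,x_j)/3)$ for every sample, so $F_n(x)$ is dominated by an empirical average of a decreasing radial function about $z$. It then suffices to control the empirical masses of the polynomially many balls $\BB(z,2^\ell/n)$, which is a finite union bound.
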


Our second theorem additionally assumes the fine-structure condition on the measure, and gives conditions so that for any satisfactory $\eta$, the support estimation set $\mathcal{G}_n(\Theta)$ splits into $K_\eta$ subsets each with separation $\eta$, thus solving the machine learning classification problem in theory.

\begin{theorem}\label{thm:class_separation}
Suppose, in addition to the assumptions of Theorem~\ref{thm:fullsuportdet}, that $\mu$ has a fine structure, $n\gtrsim 1/(\eta\Theta^{1/(S-\alpha)})$, and $\mu(\mathbf{S}_{K_{\eta}+1,\eta})\lesssim \Theta n^{-\alpha}$. Define
\be\label{eq:thm2-1}
\mathcal{G}_{k,\eta,n}(\Theta)\coloneqq \mathcal{G}_n(\Theta)\cap \mathbb{B}(\mathbf{S}_{k,\eta},r(\Theta)/n).
\ee
Then, with probability at least $1-c_1/M^{c_2}$, $\{\mathcal{G}_{k,\eta,n}(\Theta)\}_{k=1}^{K_\eta}$ is a partition of $\mathcal{G}_{n}(\Theta)$ such that 
\be\label{eq:thm2-2}
\operatorname{dist}(\mathcal{G}_{j,\eta,n}(\Theta),\mathcal{G}_{k,\eta,n}(\Theta))\geq \eta\qquad j\neq k,
\ee 
and in this case, there exists $c<1$ such that
\be\label{eq:thm2-3}
\mathbb{X}\cap \mathbb{B}(\mathbf{S}_{k,\eta},cr(\Theta)/n)\subseteq \mathcal{G}_{k,\eta,n}(\Theta)\subseteq \mathbb{B}\left(\mathbf{S}_{k,\eta},r(\Theta)/n\right).
\ee
\end{theorem}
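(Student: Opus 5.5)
We work on the high-probability event furnished by Theorem~\ref{thm:fullsuportdet}, so that $\mathbb{X}\subseteq\mathcal{G}_n(\Theta)\subseteq\mathbb{B}(\mathbb{X},r(\Theta)/n)$. On that event we also use the two estimates established in its proof. The first is a uniform (over a finite net, hence over $\MM$) concentration bound saying $F_n$ is close to $\int\Psi_n(\circ,y)d\mu(y)$ up to $o(n^{2-\alpha})$ when $M\gtrsim n^\alpha\log n$. The second is the lower bound $\max_k F_n(x_k)\gtrsim n^{2-\alpha}$, which comes from the detectability lower bound \eqref{eq:ballmeasurelow} applied to $\mathbb{B}(x,1/n)$ together with $\Psi_n\gtrsim n^2$ near the diagonal. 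Since $r(\Theta)\sim\Theta^{-1/(S-\alpha)}$, the hypothesis $n\gtrsim 1/(\eta\Theta^{1/(S-\alpha)})$ (with a suitably large constant) gives $r(\Theta)/n\le\eta/2$.

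The first step is separation. For $j\neq k\le K_\eta$, the generalized triangle inequality and \eqref{eq:minsep} give
$$\dist\bigl(\mathbb{B}(\mathbf{S}_{j,\eta},r/n),\mathbb{B}(\mathbf{S}_{k,\eta},r/n)\bigr)\ge 2\eta-2r/n\ge\eta.$$
Hence the sets $\mathcal{G}_{k,\eta,n}(\Theta)$ are pairwise disjoint and satisfy \eqref{eq:thm2-2}. The inner inclusion in \eqref{eq:thm2-3} is immediate from $\mathbb{X}\subseteq\mathcal{G}_n(\Theta)$, since $\mathbb{X}\cap\mathbb{B}(\mathbf{S}_{k,\eta},cr/n)\subseteq\mathcal{G}_n(\Theta)\cap\mathbb{B}(\mathbf{S}_{k,\eta},r/n)$. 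The outer inclusion holds by definition \eqref{eq:thm2-1}.

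The remaining point, and the main obstacle, is to show that these sets exhaust $\mathcal{G}_n(\Theta)$. Because $\mathbb{X}=\bigcup_{k=1}^{K_\eta+1}\mathbf{S}_{k,\eta}$, it suffices to show that no point $x$ with $\dist(x,\mathbf{S}_{k,\eta})>r/n$ for all $k\le K_\eta$ lies in $\mathcal{G}_n(\Theta)$. For such $x$ we split the expected estimator as
$$\int\Psi_n(x,y)\,d\mu(y)=\int_{\mathbf{S}_{K_\eta+1,\eta}}+\int_{\mathbb{X}\setminus\mathbf{S}_{K_\eta+1,\eta}}.$$

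1. The first integral is at most $cn^2\mu(\mathbf{S}_{K_\eta+1,\eta})\lesssim\Theta n^{2-\alpha}$, with a small implied constant coming from the hypothesis.
2. The second integral only involves points at distance $>r/n$ from $x$. The dyadic-annulus argument, using \eqref{eq:ballmeasurecon} and \eqref{eq:metrickernloc} as in Lemma~\ref{lemma:manifold_lebesgue_number}, bounds it by $\lesssim n^{2-\alpha}r^{\alpha-S}$.

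By the choice of $r(\Theta)$ (the same one used in Theorem~\ref{thm:fullsuportdet}), the second bound is a small multiple of $\Theta n^{2-\alpha}$.

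Adding the concentration error and comparing with $\Theta\max_kF_n(x_k)\gtrsim\Theta n^{2-\alpha}$ shows $F_n(x)<\Theta\max_kF_n(x_k)$, so $x\notin\mathcal{G}_n(\Theta)$. The delicate part is bookkeeping the constants so that the three small contributions (the $\mathbf{S}_{K_\eta+1,\eta}$ mass, the kernel tail, and the sampling error) together stay below the threshold. This may require shrinking the implied constant in $\mu(\mathbf{S}_{K_\eta+1,\eta})\lesssim\Theta n^{-\alpha}$ and enlarging the constant in $r(\Theta)$.
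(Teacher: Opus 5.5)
Your proposal is correct and is essentially the paper's argument. The paper routes exhaustion and separation through the continuous sets: it uses $\mathcal{G}_n(\Theta)\subseteq\mathcal{S}_n(C^*\Theta/8)$ from Theorem~\ref{thm:GnSnbound} (where $d(C^*\Theta/8)=r(\Theta)$) and then Theorem~\ref{thm:partmu}, whose proof is exactly your two-piece bound on $\sigma_n$ outside $\mathbb{B}(\bigcup_{k\le K_\eta}\mathbf{S}_{k,\eta},r/n)$. You instead inline that bound pointwise and transfer it to $F_n$ with Lemmas~\ref{lem:Inbounds} and~\ref{lem:Gnboundsprep}, which has the benefit of making explicit the covering step $\mathcal{G}_n(\Theta)=\bigcup_{k}\mathcal{G}_{k,\eta,n}(\Theta)$ that the paper leaves implicit.

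Two small corrections to how you describe the estimates you cite. First, the sampling error is $\beta I_n$ with a fixed $\beta\sim\Theta$, which forces $M\gtrsim_\Theta n^\alpha\log n$; it is not $o(n^{2-\alpha})$. Second, its uniformity over $\MM$ does not come from a net of $\MM$, whose cardinality is uncontrolled for a general compact metric space. It comes from the partition $\{Y_k\}$ of $\mathbb{X}$ and the auxiliary function $H_n$.
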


\begin{remark}\label{rem:meshnorm}
If $\mathcal{C}=\{z_1,\cdots, z_M\}$ is a random sample from $\mu$, $n\ge 1$ and $M\gs n^\alpha\log n$, then it can be shown (cf. \cite[Lemma~7.1]{mhaskardata}) that for any point $x\in\XX$, there exists some $z\in\mathcal{C}$ such that $\rho(x,z)\le 1/n$. 
Hence, the Hausdorff distance between $\XX$ and $\mathcal{C}$ is $\le 1/n$. 
If $\mu$ has a fine structure in the classical sense, and $n\gs \eta^{-1}$, then this implies that a correct clustering of $\mathcal{C}$ would give rise to a correct classification of every point in $\XX$.
This justifies our decision to construct the algorithm in Section~\ref{sec:alg} to classify only the points in $\mathcal{C}$. 
On the other hand, the use of the localized kernel as in the theorems above guide us about the choice of the points at which to query the label. \qed
\end{remark}

In Figure~\ref{fig:twomoons} we illustrate Theorem~\ref{thm:class_separation} applied to a simple two-moons data set. We see that the support estimation set, shown in yellow, covers the data points as well as their nearby area, predicting the support of the distribution from which the data came from. Furthermore, we show in the figure a motivating idea: by querying a single point in each component for its class label we can extend the label to the other points in order to classify the whole data set. This is how we utilize the active learning paradigm in our algorithm discussed in Section~\ref{sec:alg}.

\begin{figure}[!ht]
\begin{center}
\includegraphics[width=.4\textwidth]{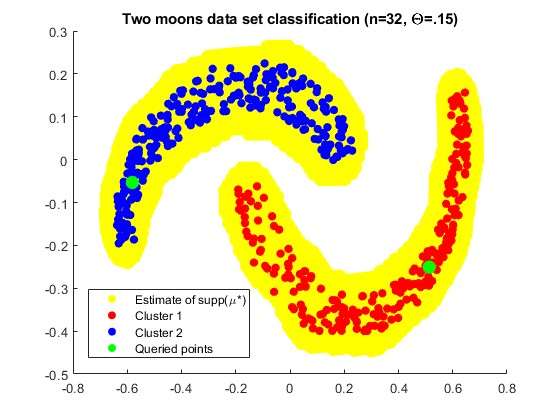}
\end{center}
\caption{Demonstration of the support estimation set $\mathcal{G}_{32}(0.15)$ (yellow) applied to a simple two-moons data set from \cite{twomoons} (blue and red). By querying one point from each component of the support estimation set and extending the label to the other points in the same component, we can classify the entire data set with 100\% accuracy.}
\label{fig:twomoons}
\end{figure}

Our final result examines the fidelity of our classification scheme in terms of the asymptotics of the F-score associated with our support estimation theorems as $\eta\to 0$.
 We show that our support estimation setup asymptotically approaches the ideal F-score of $1$.
   
\begin{theorem}\label{thm:fscore}
Suppose the assumptions of Theorem~\ref{thm:class_separation} are satisfied and that
\be
\lim_{\eta\to 0^+} \max_{0\leq k\leq K_\eta}\left(\frac{\mu(\mathbf{S}_{K_{\eta+1},\eta})}{\mu(\mathbf{S}_{k,\eta})}\right)=0.
\ee
Then, with probability at least $1-c_1/M^{c_2}$, we have
\be
\lim_{\eta\to 0^+} \mathcal{F}_\eta\left(\{\mathcal{G}_{k,\eta,n}(\Theta)\}_{k=1}^{K_\eta}\right)=1.
\ee
where $\mathcal{F}_\eta$ is the $F$-score with respect to $\mathbf{S}_\eta$.
\end{theorem}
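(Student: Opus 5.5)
We will work on the high-probability event supplied by Theorem~\ref{thm:class_separation}. On that event the sets $\mathcal{G}_{k,\eta,n}(\Theta)$, $1\le k\le K_\eta$, partition $\mathcal{G}_n(\Theta)$, and \eqref{eq:thm2-3} holds. Since every set $\mathcal{G}_{k,\eta,n}(\Theta)$ meets $\XX$ only through $\mathbb{B}(\mathbf{S}_{k,\eta},r(\Theta)/n)$, and $\XX$ is the disjoint union of the sets $\mathbf{S}_{j,\eta}$, the plan is to bound each term in \eqref{eq:fscoredef} from below by matching cluster $k$ with the true class $\mathbf{S}_{k,\eta}$. This matching gives
$$
\mathcal{F}_\eta(\mathcal{G}_{k,\eta,n})\ \ge\ 2\frac{\mu(\mathcal{G}_{k,\eta,n}\cap \mathbf{S}_{k,\eta})}{\mu(\mathcal{G}_{k,\eta,n})+\mu(\mathbf{S}_{k,\eta})}
\ =\ 1-\frac{\mu(\mathcal{G}_{k,\eta,n}\Delta\mathbf{S}_{k,\eta})}{\mu(\mathcal{G}_{k,\eta,n})+\mu(\mathbf{S}_{k,\eta})},
$$
where we use Remark~\ref{rem:fscore}. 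It therefore suffices to show that $\max_k \mu(\mathcal{G}_{k,\eta,n}\Delta\mathbf{S}_{k,\eta})/\mu(\mathbf{S}_{k,\eta})\to 0$. Once that is established, the weighted average in \eqref{eq:fscoredef} is at least $1-o(1)$, and together with $\mathcal{F}_\eta\le 1$ this yields the limit.

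The symmetric difference has two parts, and we bound each by $\mu(\mathbf{S}_{K_\eta+1,\eta})$.
\begin{itemize}
\item Points of $\mathbf{S}_{k,\eta}$ not in $\mathcal{G}_{k,\eta,n}$. By Theorem~\ref{thm:fullsuportdet}, $\XX\subseteq\mathcal{G}_n(\Theta)$. Since the balls $\mathbb{B}(\mathbf{S}_{k,\eta},r/n)$ contain $\mathbf{S}_{k,\eta}$, every point of $\mathbf{S}_{k,\eta}$ lies in $\mathcal{G}_n(\Theta)\cap\mathbb{B}(\mathbf{S}_{k,\eta},r/n)=\mathcal{G}_{k,\eta,n}$. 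So this part is empty.
\item Points of $\mathcal{G}_{k,\eta,n}$ not in $\mathbf{S}_{k,\eta}$. In $\mu$-measure such points lie in $\XX$, hence in some $\mathbf{S}_{j,\eta}$ with $j\ne k$. Because $n\gtrsim 1/(\eta\Theta^{1/(S-\alpha)})$ gives $r(\Theta)/n\lesssim\eta$, the minimal separation \eqref{eq:minsep} makes $\mathbb{B}(\mathbf{S}_{k,\eta},r/n)$ disjoint from $\mathbf{S}_{j,\eta}$ for $j\le K_\eta$, $j\neq k$. The constants must be chosen so that $r/n<2\eta$. Hence this part is contained in $\mathbf{S}_{K_\eta+1,\eta}$ up to a $\mu$-null set.
\end{itemize}
This gives $\mu(\mathcal{G}_{k,\eta,n}\Delta\mathbf{S}_{k,\eta})\le\mu(\mathbf{S}_{K_\eta+1,\eta})$, and the hypothesis of the theorem, $\max_k \mu(\mathbf{S}_{K_\eta+1,\eta})/\mu(\mathbf{S}_{k,\eta})\to0$, finishes the argument.

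The main obstacle is bookkeeping of the measure-zero and normalization issues. The sets $\mathcal{G}$ live in $\MM$, but $\mu$ only sees $\XX$, so we must replace $\mathcal{G}_{k,\eta,n}$ by $\mathcal{G}_{k,\eta,n}\cap\XX$ throughout. We must also check that the sets $\mathbf{S}_{K_\eta+1,\eta}$ intersecting several neighborhoods do not distort the weights $\mu(\mathcal{G}_{k,\eta,n})$ in \eqref{eq:fscoredef}. Since the denominator $\mu(\bigcup_k\mathcal{G}_{k,\eta,n})=\mu(\XX)=1$ and $\sum_k\mu(\mathcal{G}_{k,\eta,n})=1$ by the partition property, the average is a genuine convex combination, and the uniform bound over $k$ transfers directly. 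A subtlety is that $n$ must be allowed to depend on $\eta$ as $\eta\to0$, so that the hypotheses of Theorem~\ref{thm:class_separation} hold for each $\eta$. Accordingly, we interpret the limit along such a choice $n=n(\eta)$.
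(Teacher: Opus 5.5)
Your proposal is correct and follows essentially the same route as the paper: you match $\mathcal{G}_{k,\eta,n}(\Theta)$ with $\mathbf{S}_{k,\eta}$, use $\mathbf{S}_{k,\eta}\subseteq\mathcal{G}_{k,\eta,n}(\Theta)$ and $\mu(\mathcal{G}_{k,\eta,n}(\Theta)\setminus\mathbf{S}_{k,\eta})\le\mu(\mathbf{S}_{K_\eta+1,\eta})$ to get $\mathcal{F}_\eta(\mathcal{G}_{k,\eta,n}(\Theta))\ge 1-\mu(\mathbf{S}_{K_\eta+1,\eta})/(2\mu(\mathbf{S}_{k,\eta}))$, and you then average using $\sum_k\mu(\mathcal{G}_{k,\eta,n}(\Theta))=1$. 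The paper's proof leaves two points implicit that you make explicit: the justification of the upper measure bound through the $2\eta$-separation, and the fact that $n$ (and hence $M$ and the high-probability event) must depend on $\eta$ as $\eta\to 0^+$.
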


\section{MASC Algorithm}
\label{sec:alg}

\subsection{Algorithm Description}

In the following paragraphs we describe the motivation and intuition of the algorithm MASC (Algorithm~\ref{alg:MASC}). Throughout this section we will refer to line numbers associated with Algorithm~\ref{alg:MASC}.

One obvious way to embed a data into a metric space with diameter $\le \pi$ is just to rescale it.
If the data is a compact subset of an ambient Euclidean space $\RR^q$, we may project the data on the unit sphere $\SS^q\subset \RR^{q+1}$ by a suitable inverse stereographic projection. 
The metric space $\SS^q$, equipped with the geodesic distance $\arccos(\circ,\circ)$ has diameter $\pi$ by construction.

One of the main obstacles we must overcome in an implementation of our theory is the following. 
In practice, we often do not know the minimal separation $\eta$ of the data classes beforehand, nor do we know optimal values for $\Theta,n$.
Taking a machine learning perspective, we develop a multiscale approach to remedy these technical challenges: treat $n,\Theta$ as hyperparameters of the model and increment $\eta$. 
Firstly, MASC will threshold out any data points not belonging to $\mathcal{G}_n(\Theta)$ (line 2).
For each value of $\eta$ (initialize while loop in line 4) we construct a (unweighted) graph where an edge goes between two points $x_i,x_j$ if and only if $\rho(x_i, x_j)<\eta$ (line 5). 
At this point, we have a method for unsupervised clustering by simply examining graph components (line 6, see below for discussion on $p$). 
The idea to implement active learning is to then query a modal point of each graph component (line 11), also referred to in this section as a cluster, with respect to $\Psi_n$ and extend that label to the rest of the cluster (line 13). 
A trade-off associated with this idea is the following: if we initialize $\eta$ too small (respectively, $n$ too large) then each point in the data set will be its own cluster and we will simply query the whole data set, whereas if we initialize $\eta$ too large (respectively, $n$ too small) then the whole data set will belong to a single cluster destroying any classification accuracy. 
Therefore, we initialize $\eta$ small and introduce a minimum cluster size threshold value $p$ to avoid this issue. Any cluster of size $<p$ will be removed from consideration (line 6), so we will not query any points until $\eta$ is large enough to produce a cluster of size $p$ or greater.

After the label extension is done in each cluster of size $\ge p$, we keep track of which points we queried (line 12), increment $\eta$ (line 16), and repeat (line 4). 
Sometime after the first incrementation of $\eta$, we will experience the combination of clusters which were previously disconnected. 
When this occurs we check whether each of the previously queried points in the new cluster have the same label (line 14). If so, then we extend it to the new cluster (line 15).
 Otherwise, we halt the extension of labels for all points in that cluster. 
 In this way, the method proceeds by a cautious clustering to avoid labeling points that are either 1) in a too-low density region, or 2) within a cluster where we have queried multiple points with contradicting labels.

Once $\eta$ is large enough that the data set all belongs to a single cluster, we will not gain any new information by incrementing $\eta$ further, and hence MASC will halt the iterations of $\eta$ (lines 7 and 8). The final process is to implement a method for estimating the labels of points that did not receive a predicted label in the first part, either because they belonged to a low-density region and were thresholded out or because they belonged to a cluster with conflicting queried points. The remaining task is equivalent to the semi-supervised regime of classification and we acknowledge that there is a vast variety of semi-supervised learning methods to choose from. In MASC, we have elected to use a traditional $\overline{k}$-nearest neighbors approach.

For a data point $x_j$, we denote the set of its nearest $\overline{k}$ neighbors which already have labels $\hat{y}(x_j)$ estimated from MASC by $\mathcal{A}_{j,\overline{k}}$. The $\overline{k}$-nearest neighbors formula to estimate the label of $x_j$ is then given by:
\be
\underset{k\in [K]}{\argmax} |\{x_i\in A_{j,\overline{k}}: \hat{y}(x_j)=k\}|,
\ee
with some way to decide on the choice of $k$ in the event of a tie. In binary classification tasks, the value of $\overline{k}$ can be chosen as an odd value to prevent ties. Otherwise, a tie can be broken by choosing the label of the nearest point with a tied label, a hierarchical ordering of the labels, at random, etc. In our Python implementation of the algorithm used to produce the figures in this chapter, we use the \verb|scipy.stats.mode| function, which returns the first label in the list of tied labels upon such a tie.

MASC will collect all points which do not yet have predicted labels (line 17), and apply the nearest-neighbors approach as describe above to each of these points (lines 19 and 20). At this point, every element in the data set will have a predicted label, so the algorithm will return the list of labels (line 21).

%Once the above has been repeated for all of the values of $\eta$ in the chosen range, the final step of interest is to implement the function approximation method discussed in Appendix~\ref{sec:manifoldapprox} on the remaining points which do not yet have a predicted label (also referred to as ``uncertain" points). 
%In view of Remark~\ref{rem:density_est}, computing
%\be\label{eq:labelextension}
%\underset{k\in [K]}{\argmax}\frac{1}{\abs{\mathcal{A}_k}}\sum_{x_i\in \mathcal{A}_k}\Phi_{n,q}(x_i\cdot x_j).
%\ee
%\yadi{$\Phi_{n,q}$}{Approximation kernel for unknown submanifolds of Euclidean sphere, \eqref{eq:manifoldkernel}}
%amounts to estimating which of the estimated supports of $\mu_k$'s has the highest density at the point $x$. As such, we refer to this step as the ``density estimation extension" of the class labels to the uncertain points.

In MASC, we require defining a starting $\eta$ and $\eta_{\operatorname{step}}$. Once the matrix with entries given by $\Psi_n(x_i,x_j)$ is calculated, one may search for the range of $\eta$ values which give non-trivial clusters of size $\geq p$ with relative ease. If $\eta$ is too small, no cluster will contain a sufficient number of points and if $\eta$ is too large, every point will belong to the same cluster, both of which we consider a ``trivial" case. Then $\eta_{\operatorname{step}}$ may be chosen to satisfy some total number of iterations across this domain. The values $n,\Theta,p,\overline{k}$ are considered hyperparameters.

\begin{algorithm}
\caption{Multiscale Active Super-resolution Classification (MASC)}
\label{alg:MASC}
\KwIn{Data set $X$, kernel degree $n$, threshold parameter $\Theta$, $\eta$ initialization, step size $\eta_{\text{step}}>0$, cluster size minimum $p$, oracle $f$, neighbor parameter $\overline{k}$.}
\KwOut{Predicted labels $\hat{y}$ for all points in $X$.}

$\mathcal{A} \leftarrow \emptyset$ \Comment*{Initialize queried point set}
    $V\gets \{x_i\in X: x_i\in \mathcal{G}_n(\Theta)\}$ \Comment*{Prune data to consider only those in threshold set \eqref{eq:support_est_set_def}}
$\operatorname{STOP}\gets \operatorname{FALSE}$ \;

\While{$\operatorname{STOP}=\operatorname{FALSE}$}{
	$E\gets \{(x_i,x_j)\in V\times V:\rho(x_i,x_j)<\eta, x_i\neq x_j\}$ \Comment*{Edge set consisting of points within $\eta$ distance from each other}
    $\{C_{\eta,\ell}\}_{\ell=1}^{K_\eta}\gets$ connected components of $G=(V,E)$ with size $\ge p$ \;
     % $\text{flag}(\ell) \leftarrow 0$ for all $\ell$ \;
     \If{$|C_{\eta,1}|=|V|$}{
     $\operatorname{STOP}\gets\operatorname{TRUE}$ \Comment*{End while loop once $G$ is connected}}

    \For{$\ell = 1$ \KwTo $K_n$}{
        \If{$C_{\eta,\ell} \cap \mathcal{A} = \emptyset$}{
            $x_i \leftarrow \underset{{x \in C_{\eta,\ell}}}{\argmax} \sum_{j=1}^{M} \Psi_n( x, x_j)$ \Comment*{Locate maximizer of $F_n$ (cf. \eqref{eq:support_estimator_def}) in $C_{\eta,\ell}$ without any queried points}
            $\mathcal{A} \leftarrow \mathcal{A}\bigcup \{x_i\}$ \Comment*{Append maximizer to queried point set}
            {\small $\hat{y}(x_j) \leftarrow f(x_i)$ for all $x_j \in C_{\eta,\ell}$} \Comment*{Query point and extend label to all of $C_{\eta,\ell}$}
            % $\text{flag}(\ell) \leftarrow 1$ \;
        }
        \ElseIf{$\forall x_i,x_j \in C_{\eta,\ell} \cap \mathcal{A}, f(x_i) = f(x_j)=:c_{\eta,\ell}$}{
            {\small $\hat{y}(x_j) \leftarrow c_{\eta,\ell}$ for all $x_j \in C_{\eta,\ell}$ \Comment*{If all queried points in component have same label, extend label to entire component}} 
            % $\text{flag}(\ell) \leftarrow 1$ \;
        }
    }

        $\eta \leftarrow \eta + \eta_{\text{step}}$ \;

}

%\If{$m$ is given and $|\alpha|<m$}{
%
%}

%\For{$k\in \operatorname{range}(\hat{y})$}
%{$\mathcal{A}_k\gets \{x\in X:\hat{y}(x)=k\}$ \Comment*{Set of all points with predicted label $k$}}
%
%$\mathcal{C}_{\text{uncertain}}\gets \{x\in X : x\notin \cup_{k}\mathcal{A}_k\}$ \Comment*{Set of points without a predicted label}

$\mathcal{C}_{\text{uncertain}}\gets \{x\in X : \hat{y}(x_j)=\text{DNE}\}$ \Comment*{Set of points which do not have a predicted label}

\For{$x_j \in \mathcal{C}_{\text{uncertain}}$}{
    %$\hat{y}(x_j) \leftarrow \underset{k}{\argmax}\frac{1}{\abs{\mathcal{A}_k}} \sum_{x_i \in \mathcal{A}_k} \Phi_{m,q}( x_i\cdot x_j )$ \Comment*{Density estimation extension to predict label for remaining points}
$\mathcal{A}_{j,\overline{k}}\gets \{x\in X\setminus \mathcal{C}_{\text{uncertain}}: x \text{ is the $\overline{k}$th closest element to $x_j$ or closer}\}$ \;
    
    $\hat{y}(x_j)\gets \underset{k\in [K]}{\argmax} |\{x_i\in A_{j,\overline{k}}: y_j=k\}|$ \Comment*{$\overline{k}$-nearest neighbors approach to estimate labels for uncertain points}
}

\Return $\hat{y}$.
\end{algorithm}

\subsection{Comparison With CAC and SCALe}
\label{subsec:CACcomparison}

In \cite{cloningercluster}, a similar theoretical approach to this chapter except on the Euclidean space was developed and an algorithm we will call ``Cautious Active Clustering" (CAC) was introduced. MASC and CAC are both multiscale algorithms using $\mathcal{G}_n(\Theta)$ to threshold the data set, then constructing graphs to query points and extend labels. The main difference between the algorithms is the following. In CAC, $\eta,\Theta$ are considered hyperparameters while $n$ is incremented, whereas in MASC, $n,\Theta$ are considered hyperparameters while $\eta$ is incremented. 
This adjustment serves three purposes:
\ben
\item It connects the algorithm closer to the theory, which states that a single $n,\Theta$ value will suffice for the right value of $\eta$. We do not know $\eta$ in advance, but by incrementing $\eta$ until all of the data belongs to a single cluster, we will attain a value close to the true value at some step. At this step, we will query points belonging roughly to the ``true" clusters and that information will be carried onward to the subsequent steps.

\item Consistency in query procedure: we use the same function to decide which points to query at each level, rather than it changing as the algorithm progresses.

\item It improves computation times since computing the $\Psi_{n}$ matrix for varying values of $n$ tends to take more time than incrementing $\eta$ and checking graph components.
\een
In MASC, we have the additional parameter $p$ specifying the minimum size of the graph component to allow a query. While this is new compared to CAC, the main purpose is to reduce the total number of queries to just those that contain more information. One could implement such a change to CAC as well for similar effect. A further difference is that CAC uses a localized summability kernel approach to classify uncertain samples, whereas MASC uses a nearest-neighbors approach.

SCALe, as introduced in \cite{mhaskar-odowd-tsoukanis} is an even more similar algorithm to MASC. The main difference between MASC and SCALe is the final step, where in the present method we use a nearest-neighbors approach to extend labels to uncertain points while in SCALe the choice was to use a  function approximation technique developed in \cite{mhaskarodowd}. Both methods have their pros and cons. Compared to SCALe, the nearest-neighbors approach of MASC:
\ben
\item  works in arbitrary metric spaces, without requiring a summability kernel as in SCALe.

\item extends labels to uncertain points (sometimes much) faster, reducing computation time while usually providing comparable or better results with sufficiently many queries, but

\item reduces accuracy in extremely sparse query setting, where the function estimation method with the manifold assumption empirically seems to extend labels more consistently.
\een

\section{Numerical Examples}
\label{sec:numericalclassification}

In this section, we look at the performance of the MASC algorithm applied to 1) a synthetic data set with overlapping class supports (Section~\ref{subsec:circleellipse}), 2) a document data set (Section~\ref{sec:document}), and 3) two different hyperspectral imaging data sets: Salinas (Section~\ref{sec:salinas}) and Indian Pines (Section~\ref{sec:indianpines}). On the Hyperspectral data sets, we compare our method with two other algorithms for active learning: LAND and LEND (Section~\ref{sec:comparison}). 
%Finally, we examine how much better our method does than querying just random samples.

For hyperparameter selection on our model as well as the comparisons, we have not done any validation but rather optimized the hyperparameters for each model on the data itself. So the results should be interpreted as being near-best-possible for the models applied to the data sets in question rather than a demonstration of generalization capabilities. While this approach is non-traditional for unsupervised/supervised learning, it has been done for other active learning research (\cite{murphy-lend}, for example) so we have elected to follow the same procedure in this chapter. Further, an exhaustive grid search was not conducted but rather local minima among grid values were selected for each hyperparameter. For MASC, we looked at $n$ in powers of $2$ and $\overline{k}$ values in multiples of $5$. For LAND we looked at $K,t$ at increments of $10$, and with LEND we used the same parameters from LAND and looked at integer $J$ values and $\alpha$ values in increments of $0.1$. For $\Theta$, we tried values less rigorously, meaning that better $\Theta$ values may exist than the ones chosen. Due to the nature of the algorithm, increasing $\Theta$ will increase the number of samples that the nearest-neighbors approach has to estimate, while reducing the number of labeled neighbors it has to do so. However, increasing $\Theta$ can also reduce the number of queries used, sometimes without deterioration in accuracy. So there may be some tradeoff, but we generally see the best results when $\Theta$ is chosen to threshold a small portion of the initial data (outlier removal). In Table~\ref{tab:parameters}, we summarize the choice of parameters for each of the data sets in the subsequent sections.

\begin{table}[htbp]
    \centering
    \textbf{MASC hyperparameter selection for each data set}\\
    \begin{tabular}{|l|c|c|c|c|}
        \hline
        Dataset & $\Theta$ & $\eta$ & $p$ & $\overline{k}$ \\
        \hline
        Circle+Ellipse & 0.12 & $[0.006, 0.036]$ & 15 & 5 \\
        (Section~\ref{subsec:circleellipse})               &      & (step size 0.005) &    &   \\
        \hline
        Document & 0.51 & $[0.08, 0.15]$ & 3 & 25 \\
        (Section~\ref{sec:document})         &      & (step size 0.002) &    &   \\
        \hline
        Salinas & 0.32 & $[0.21, 0.27]$ & 3 & 25 \\
        (Section~\ref{sec:salinas})       &      & (step size 0.005) &    &   \\
        \hline
        Indian Pines & 0.08 & $[0.03, 0.13]$ & 5 & 15 \\
        (Section~\ref{sec:indianpines})             &      & (step size 0.005) &    &   \\
        \hline
    \end{tabular}
        \caption{Selected hyperparameter values for our MASC algorithm applied to the data sets in the subsequent sections.}
    \label{tab:parameters}
\end{table}

\subsection{Circle on Ellipse Data}
\label{subsec:circleellipse}

Although the theory in this chapter focuses on the case where the supports of the classes are separated (or at least satisfy a fine-structure condition), our MASC algorithm still performs well at classification tasks of data with overlapping supports in the regions without overlap. 
To illustrate this, we generated a synthetic data set of 1000 points sampled along the arclength of a circle and another 1000 sampled along the arclength of an ellipse with eccentricity 0.79. 
For each data point, normal noise with standard deviation 0.05 was additively applied independently to both components. Figure~\ref{fig:circleellipse} shows the true class label for each of the points on the left and the estimated class labels on the right. 
We can see that the misclassifications are mostly localized to the area where the supports of the two measures overlap.
 Near the intersection points of the circle and ellipse the classification problem becomes extremely difficult due to a high probability that a data point could have been sampled from either the circle or ellipse.

\begin{figure}
\begin{center}
\begin{subfigure}[t]{.45\textwidth}
\begin{center}
\includegraphics[width=\textwidth]{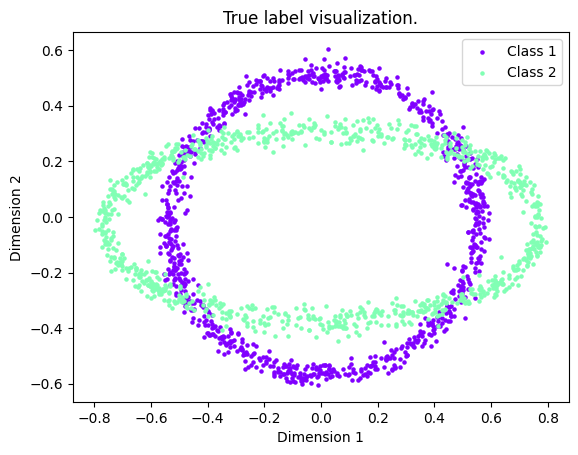}
\subcaption{True labels of the circle and ellipse data.}
\end{center}
\end{subfigure}
\begin{subfigure}[t]{.45\textwidth}
\begin{center}
\includegraphics[width=\textwidth]{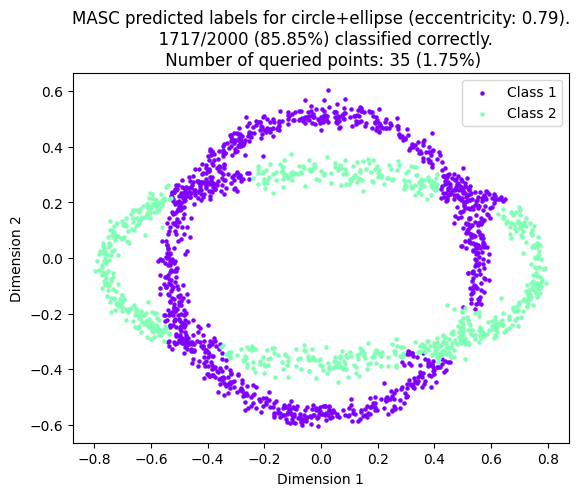}
\subcaption{Predicted labels using MASC with 35 queries, achieving 83\% accuracy.}
\end{center}
\end{subfigure}
\caption{This figure illustrates the result of applying MASC to a synthetic circle and ellipse data set. On the left are true labels of the given data, and on the right is the estimation attained by MASC.}
\label{fig:circleellipse}
\end{center}
\end{figure}

\subsection{Document Data}
\label{sec:document}

This numerical example uses the document data set provided by Jensen Baxter through Kaggle \cite{documentdata}. The data set contains 1000 documents total, 100 each belonging to a particular category from: business, entertainment, food, graphics, historical, medical, politics, space, sport, and technology. For prepossessing we run the data through the Python sklearn package's TfidfVectorizer function to convert the documents into vectors of length 1684. Then we implement MASC.

In Figure~\ref{fig:doc1} we see the results of applying MASC on the document data in two steps. On the left we see the classification task by MASC paused at line 17 of Algorithm~\ref{alg:MASC}, before labels have been extended via the nearest neighbor  portion at the end of the algorithm. On the right we see the result of the density estimation extension. In Figure~\ref{fig:doc2} we see on the left a confusion matrix for the result shown in Figure~\ref{fig:doc1}, allowing us to see which classes were classified the most accurately versus which ones had more trouble. We see the largest misclassifications had to do with documents that were truly ``entertainment" but got classified as either ``sport" or ``technology", and documents which were actually ``graphics" but got classified as ``medical". On the right of Figure~\ref{fig:doc2} we have a plot indicating the resulting accuracy vs. the number of queries which MASC was allowed to do. Naturally as the number of queries approaches 1000 this plot will gradually increase to 100\% accuracy. Lastly, in Figure~\ref{fig:doc3} we see a side-by-side comparison of the true labels for the document data set vs. the predicted labels.

\begin{figure}
\begin{center}
\begin{subfigure}[t]{.45\textwidth}
\begin{center}
\includegraphics[width=\textwidth]{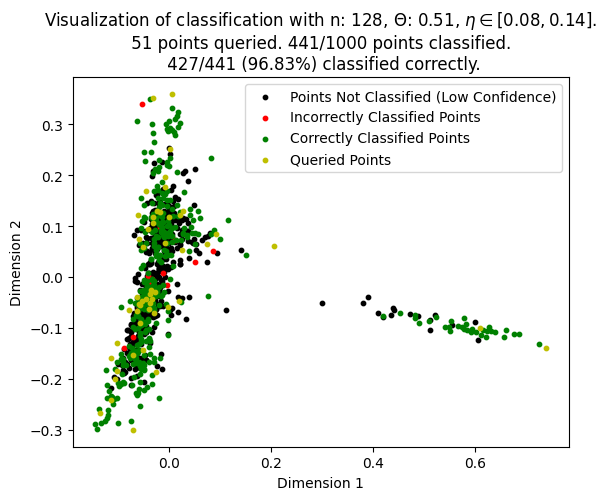}
\subcaption{Classification of certain points in MASC algorithm (before density estimation extension).}
\end{center}
\end{subfigure}
\begin{subfigure}[t]{.45\textwidth}
\begin{center}
\includegraphics[width=\textwidth]{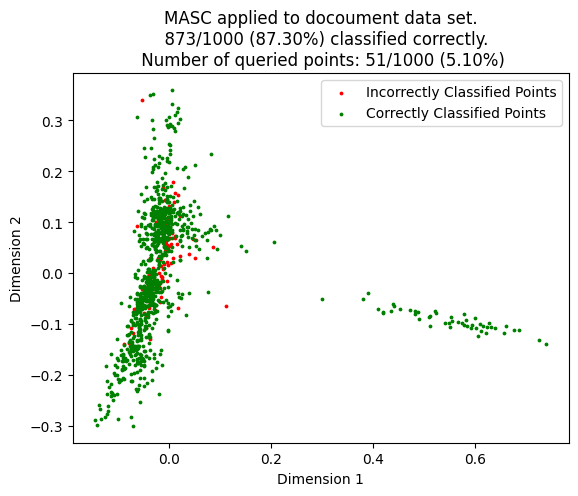}
\subcaption{Classification of remainder points using density estimation extension.}
\end{center}
\end{subfigure}
\caption{This figure illustrates the classification process undergone MASC on the document data set at two points. On the left, we see the classification of points before the $\overline{k}$-nearest neighbors extension. On the right, we see the result after $\overline{k}$-nearest neighbors extension.}
\label{fig:doc1}
\end{center}
\end{figure}

\begin{figure}[!ht]
\begin{center}
\begin{subfigure}[t]{.45\textwidth}
\begin{center}
\includegraphics[width=\textwidth]{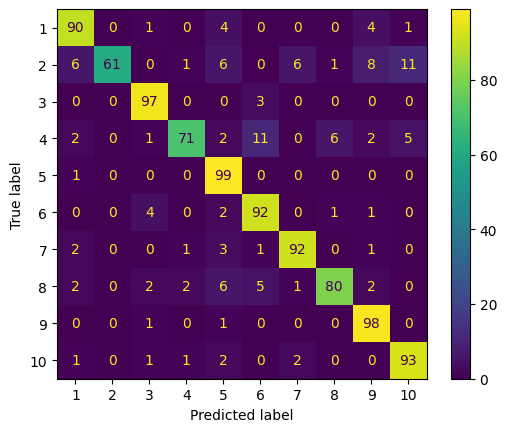}
\subcaption{Confusion matrix.}
\end{center}
\end{subfigure}
\begin{subfigure}[t]{.45\textwidth}
\begin{center}
\includegraphics[width=\textwidth]{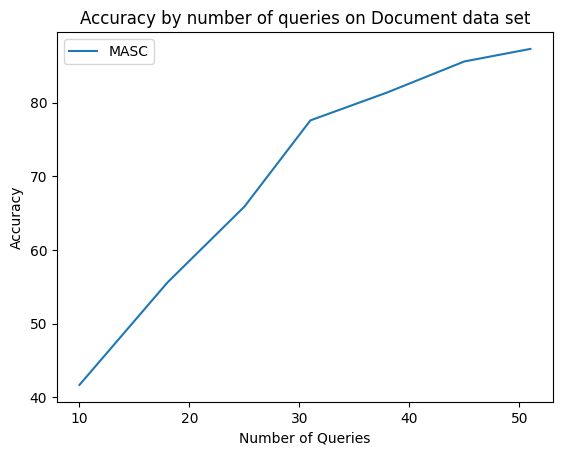}
\subcaption{Plot of MASC accuracy vs. number of allowed query points}
\end{center}
\end{subfigure}
\caption{Further details on the classification results for the document data set. (Left) Confusion matrix for single run of MASC algorithm. (Right) Accuracy of MASC algorithm vs. the number of queries used.}
\label{fig:doc2}
\end{center}
\end{figure}

\begin{figure}[!ht]
\begin{center}
\begin{subfigure}[t]{.45\textwidth}
\begin{center}
\includegraphics[width=\textwidth]{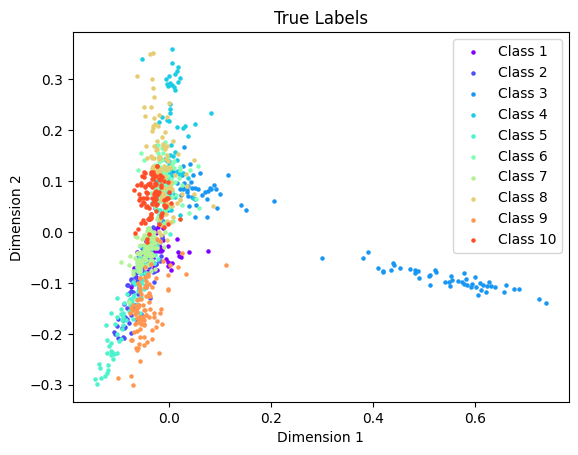}
\caption{True labels.}
\end{center}
\end{subfigure}
\begin{subfigure}[t]{.45\textwidth}
\begin{center}
\includegraphics[width=\textwidth]{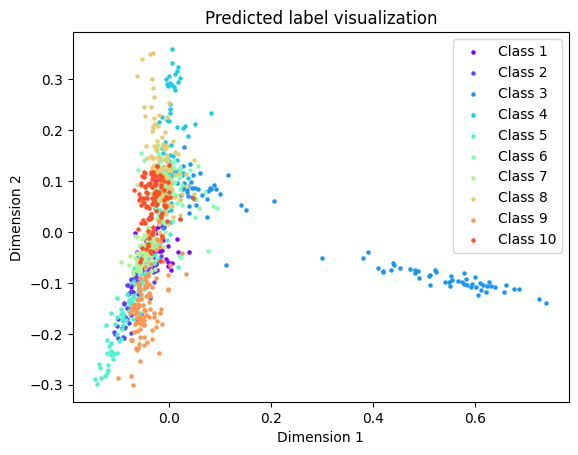}
\caption{Predicted labels.}
\end{center}
\end{subfigure}
\caption{Visual comparison of true labels (left) versus predicted labels output by the model (right) for the document data set.}
\label{fig:doc3}
\end{center}
\end{figure}

\subsection{Salinas Hyperspectral Data}
\label{sec:salinas}

This numerical example is done on a subset of the Salinas hyperspectral image data set from \cite{hsidata}. Our subset of the Salinas data set consists of 20034 data vectors of length 204 belonging to 10 classes of the 16 original classes. Specifically, we took half of the data points at random from each of the first 10 classes of the original data set. For preprocessing we ran PCA and kept the first 50 components. Then we implemented MASC.

In Figure~\ref{fig:sal1} we see the results of applying MASC on the Salinas data in two steps. 
On the left we see the classification task by MASC paused at line 17 of Algorithm~\ref{alg:MASC}, before labels have been extended via the nearest neighbor  portion at the end of the algorithm. 
At this stage, our algorithm has classified 1518 points with 99.60\% accuracy using 261 queries. 
On the right we see the result of the $\overline{k}$-nearest neighbors extension, where all 20034 points have been classified with 97.11\% accuracy. 
In Figure~\ref{fig:sal2} we see a confusion matrix for the result shown in Figure~\ref{fig:sal1}, allowing us to see which classes were classified the most accurately versus which ones had more trouble. We see the largest misclassification involved our predicted class 5, which included points from several other classes. Lastly, in Figure~\ref{fig:sal3} we see a side-by-side comparison of the true labels for the Salinas data set versus the predicted labels.

\begin{figure}[!ht]
\begin{center}
\begin{subfigure}[t]{.45\textwidth}
\begin{center}
\includegraphics[width=\textwidth]{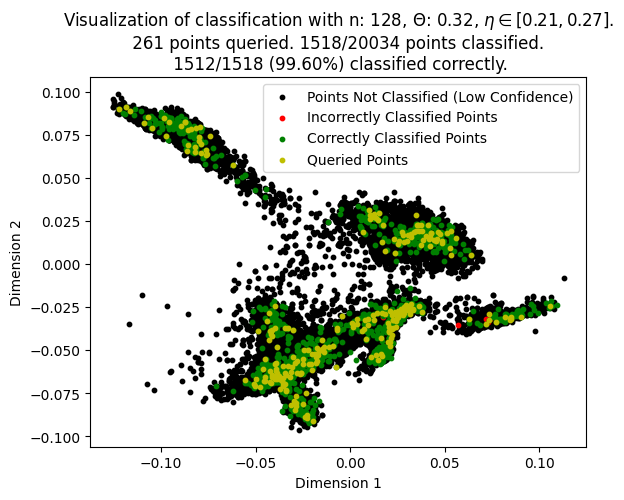}
\subcaption{Classification of certain points in MASC algorithm (before $\overline{k}$-nearest neighbors extension).}
\end{center}
\end{subfigure}
\begin{subfigure}[t]{.45\textwidth}
\begin{center}
\includegraphics[width=\textwidth]{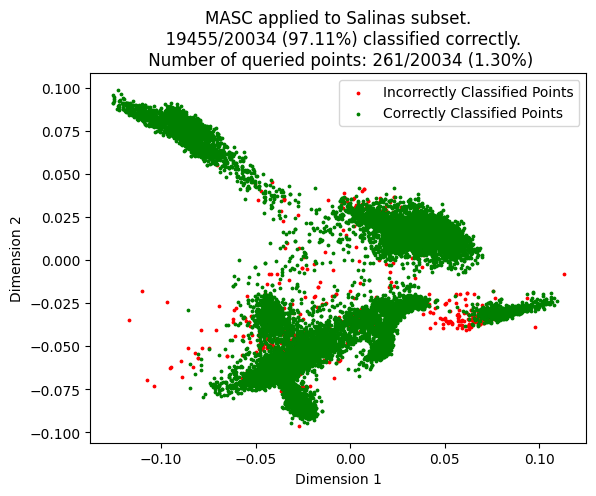}
\subcaption{Classification of remainder points using $\overline{k}$-nearest neighbors extension.}
\end{center}
\end{subfigure}
\caption{This figure illustrates the classification process undergone by MASC at two points on the Salinas hyperspectral data set. On the left, we see the classification of points before the $\overline{k}$-nearest neighbors extension. On the right, we see the result after $\overline{k}$-nearest neighbors extension.}
\label{fig:sal1}
\end{center}
\end{figure}

\begin{figure}[!ht]
\begin{center}
\includegraphics[width=.45\textwidth]{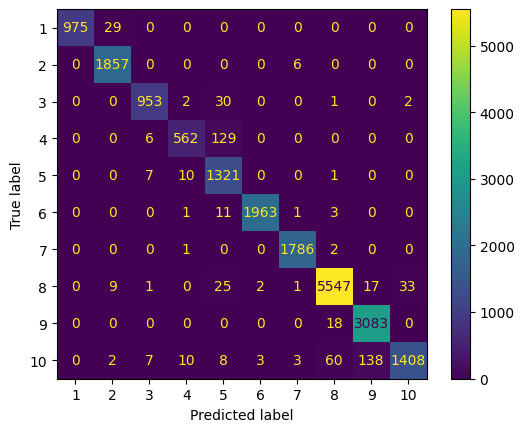}
\caption{Confusion matrix for single run of MASC algorithm on Salinas.}
\label{fig:sal2}
\end{center}
\end{figure}

\begin{figure}[!ht]
\begin{center}
\begin{subfigure}[t]{.45\textwidth}
\begin{center}
\includegraphics[width=\textwidth]{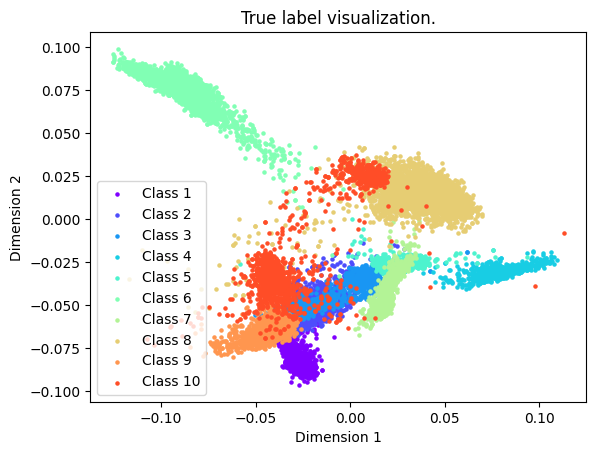}
\caption{True labels.}
\end{center}
\end{subfigure}
\begin{subfigure}[t]{.45\textwidth}
\begin{center}
\includegraphics[width=\textwidth]{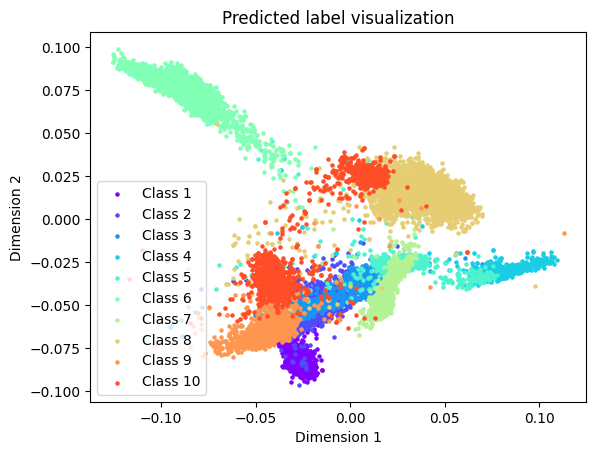}
\caption{Predicted labels.}
\end{center}
\end{subfigure}
\caption{Visual comparison of true labels (left) versus predicted labels output by the model (right) for the Salinas hyperspectral data set.}
\label{fig:sal3}
\end{center}
\end{figure}

\subsection{Indian Pines Hyperspectral Data}
\label{sec:indianpines}

This numerical example is done on a 5-class subset of the Indian Pines hyperspectral image data set from \cite{hsidata}. Our subset of the Indian Pines data set consists of 5971 data vectors of length 200 belonging to classes number 2,6,11,14,16 of the 16 original classes. For preprocessing we normalized each vector. Then we implement MASC.

In Figure~\ref{fig:ip1} we see the results of applying MASC on the Indian Pines data in two steps. On the left we see the classification task by MASC paused at line 17 of Algorithm~\ref{alg:MASC}, before labels have been extended via the nearest neighbor portion at the end of the algorithm. On the right we see the result of the $\overline{k}$-nearest neighbors extension. In Figure~\ref{fig:ip2} we see a confusion matrix for the result shown in Figure~\ref{fig:ip1}, allowing us to see which classes were classified the most accurately versus which ones had more trouble. As we can see from the confusion matrix, the largest error comes from distinguishing class 2 from 11 and vise versa. These classes correspond to portions of the images belonging to corn-notill and soybean-mintill. Lastly, in Figure~\ref{fig:ip3} we see a side-by-side comparison of the true labels for the Indian Pines data set versus the predicted labels.

\begin{figure}[!ht]
\begin{center}
\begin{subfigure}[t]{.45\textwidth}
\begin{center}
\includegraphics[width=\textwidth]{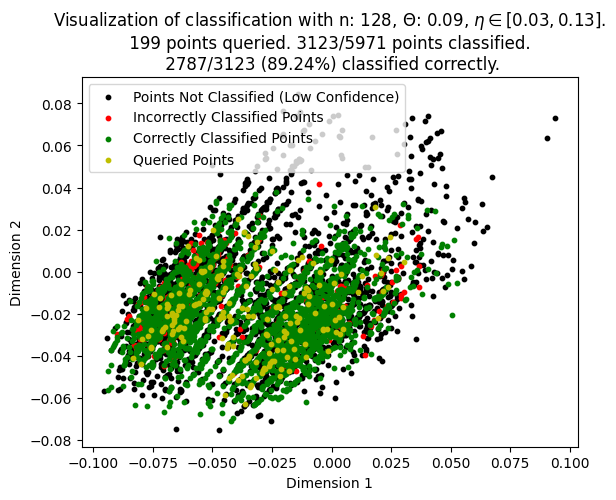}
\subcaption{Classification of certain points in MASC algorithm (before $\overline{k}$-nearest neighbors extension).}
\end{center}
\end{subfigure}
\begin{subfigure}[t]{.45\textwidth}
\begin{center}
\includegraphics[width=\textwidth]{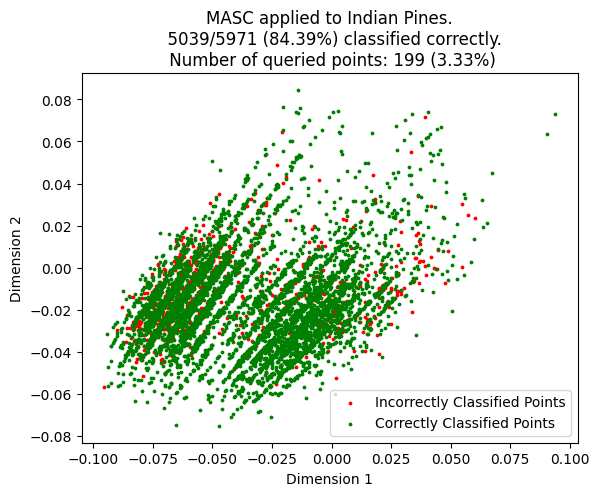}
\subcaption{Classification of remainder points using $\overline{k}$-nearest neighbors extension.}
\end{center}
\end{subfigure}
\caption{This figure illustrates the classification process undergone by MASC at two points on the Salinas hyperspectral data set. On the left, we see the classification of points before the $\overline{k}$-nearest neighbors extension. On the right, we see the result after $\overline{k}$-nearest neighbors extension.}
\label{fig:ip1}
\end{center}
\end{figure}

\begin{figure}[!ht]
\begin{center}
\includegraphics[width=.45\textwidth]{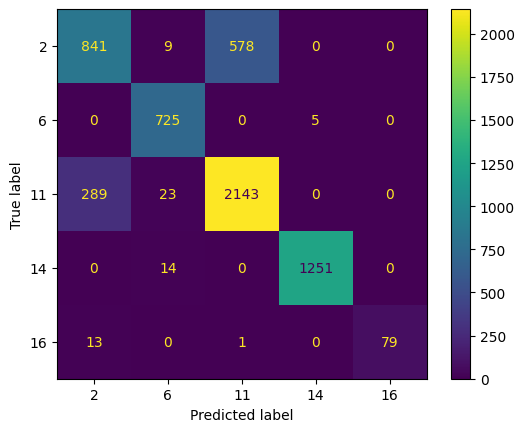}
\caption{Confusion matrix for result of MASC applied to Indian Pines.}
\label{fig:ip2}
\end{center}
\end{figure}

\begin{figure}[!ht]
\begin{center}
\begin{subfigure}[t]{.45\textwidth}
\begin{center}
\includegraphics[width=\textwidth]{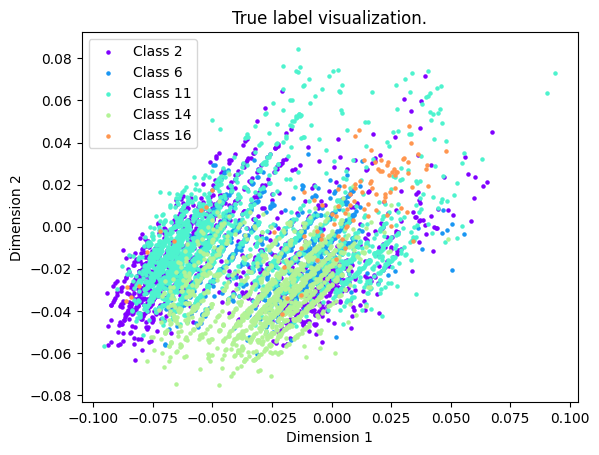}
\caption{True labels.}
\end{center}
\end{subfigure}
\begin{subfigure}[t]{.45\textwidth}
\begin{center}
\includegraphics[width=\textwidth]{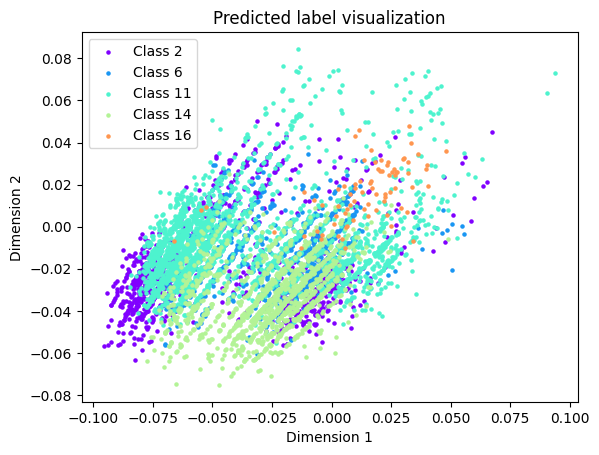}
\caption{Predicted labels.}
\end{center}
\end{subfigure}
\caption{Visual comparison of true labels (left) versus predicted labels output by the model (right) for the Indian Pines hyperspectral data set.}
\label{fig:ip3}
\end{center}
\end{figure}

\subsection{Comparison With LAND and LEND}
\label{sec:comparison}

We compare our method with the LAND \cite{murphy-land} algorithm and its boosted variant, LEND \cite{murphy-lend}. 
In Figure~\ref{fig:comparison}, we see the resulting accuracy that each algorithm achieves on both Salinas and Indian Pines for various query budgets. 
On the left, we observe that our method achieves a comparable accuracy to both LAND and LEND at around 50 queries, then gradually surpasses the accuracy of LAND as the number of queries surpasses around 200. 
On the right, our method achieves a lower accuracy for a small number of queries, but then outperforms both LAND and LEND after the budget exceeds about 60 queries.

The query budgets were decided by how many queries were used at various $\eta$ levels of while loop in the MASC Algorithm~\ref{alg:MASC}. We then forced the nearest-neighbors portion of the MASC algorithm to extend labels to the remainder of the data set at each such level, which is shown in the plot.

A separate aspect of comparison involves the run-time of both algorithms. In Table~\ref{tab:salcompare}, we see that while LEND has the highest accuracy on the Salinas data set with 261 queries, it takes significantly longer than the other two methods to attain this result. Of the three methods, MASC has the quickest run-time at 110.8s, achieving a better accuracy than LAND in less time. In Table~\ref{tab:ipcompare}, we see that MASC produces both the best result and has the fastest run-time for the case of 211 queries on the Indian Pines data set.

When deciding which algorithm to use for an active learning classification task, one has to consider the trade off between query budget/cost, computation time, and accuracy. Our initial results indicate that if the query cost is not so high compared to the run-time of the algorithm, then one may elect to use MASC with its lower run-time and simply query more points. However, if the query cost is high compared to the run-time, then one may instead elect to use an algorithm like LEND instead. The comparison results in this section are not meant to give an exhaustive depiction of which algorithm to use in any case, only illustrate that in two data sets of interest, MASC performs competitively with the existing methods in terms of either or both accuracy and run-time.

\begin{figure}[!ht]
\begin{center}
\begin{subfigure}[t]{.45\textwidth}
\begin{center}
\includegraphics[width=\textwidth]{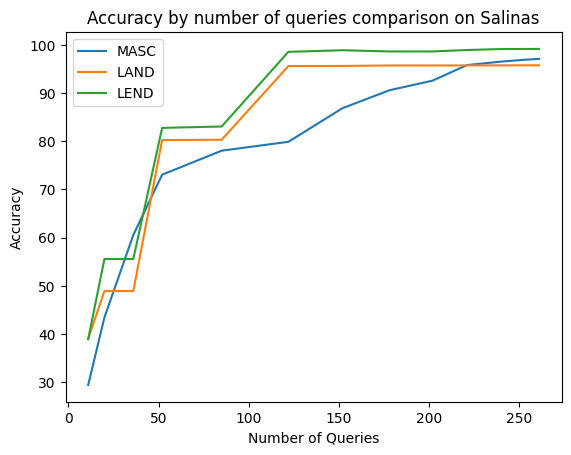}
\subcaption{Plot of accuracy vs. number of query points for Salinas.}
\end{center}
\end{subfigure}
\begin{subfigure}[t]{.45\textwidth}
\begin{center}
\includegraphics[width=\textwidth]{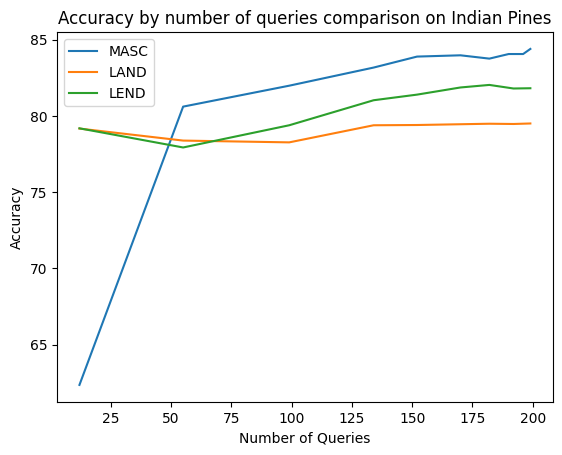}
\subcaption{Plot of accuracy vs. number of query points for Indian Pines.}
\end{center}
\end{subfigure}
\caption{Plots indicating the accuracy of MASC, LAND, and LEND for different query budgets, for both Salinas (left) and Indian Pines (right).}
\label{fig:comparison}
\end{center}
\end{figure}

\begin{table}
\begin{center}
\textbf{Comparison of MASC with LAND and LEND on Salinas subset}
\begin{tabular}{|c|c|c|c|}
\hline
Salinas & MASC & LAND & LEND\\
\hline
Accuracy & 97.1\% & 95.7\% & \textbf{99.2\%}\\
\hline
Run-time & \textbf{110.8s} & 190.0s & 669.1s\\
\hline
\end{tabular}
\end{center}
\caption{Comparison between MASC, LAND, and LEND on the Salinas data set using 261 queries.}
\label{tab:salcompare}
\end{table}

\begin{table}
\begin{center}
\textbf{Comparison of MASC with LAND and LEND on Indian Pines subset}
\begin{tabular}{|c|c|c|c|}
\hline
 & MASC & LAND & LEND\\
\hline
Accuracy & \textbf{84.4\%} & 79.5\% & 82.8\%\\
\hline
Run-time & \textbf{15.5s} & 19.6s & 97.6s\\
\hline
\end{tabular}
\end{center}
\caption{Comparison between MASC, LAND, and LEND on the Indian Pines data set using 211 queries.}
\label{tab:ipcompare}
\end{table}

\subsection{Active Versus Semi-Supervised Learning}

The bulk of the MASC algorithm serves to decide how to cluster the data and query points effectively. An important question to ask is: do we ultimately query useful points? To analyze this question, we compare the results using MASC to those using the same semi-supervised approach drawn from lines 17-21 of Algorithm~\ref{alg:MASC}. We will look at semi-supervised learning on the document data set with 51 random queries and on the Salinas data set with 261 random queries.

In Table~\ref{tab:MASCvsrand} we can see the results from running this experiment 10 times on each data set. On average, the results based on using random queries for lines 17-21 of Algorithm~\ref{alg:MASC} performed 4.60 and 9.03 standard deviations worse than MASC for the document and Salinas data sets respectively. Both cases show that proceeding by querying points randomly will with high probability achieve an accuracy much lower than MASC.

\begin{table}
\begin{center}
\textbf{Comparison of SCALe with function approximation on randomly queried points}\\
\begin{tabular}{|r|l|l|}
\hline
                                  & Document Data & Salinas Data \\
                                  \hline
Number of queries                 & 51            & 261          \\
\hline
MASC accuracy (\%)                   & 87.3         & 97.11           \\
\hline
Mean accuracy of the random trials (\%) & 70.2          & 87.5         \\
\hline
Standard Deviation of the random trials (\%)           & 3.72             & 1.06          \\
\hline
Standard deviations below MASC accuracy  & 4.60        & 9.03        \\
\hline
\end{tabular}
\caption{Table of comparison results between MASC and 10 KNN semi-supervised learning trials with randomly queried data on the Document and Salinas data sets. The mean and standard deviation of the 10 trials in each case are shown, as well has how many standard deviations above these means MASC achieves in the active learning setting.}
\label{tab:MASCvsrand}
\end{center}
\end{table}

\section{Proofs}
\label{sec:proofs}

In this section we give proofs for our main results in Section~\ref{sec:mainresults}. We assume that $\mathbb{X}\coloneqq \operatorname{supp}(\mu)\subseteq \mathbb{M}$ and $n\geq 1$ is given. Essential to our theory is the construction of an integral support estimator:
\be\label{eq:sigmametricdef}
\sigma_n(x)\coloneqq \int_{\mathbb{X}}\Psi_n(x,y)d\mu(y).
\ee
We also define the following two associated values which will be important:
\be\label{eq:InJndef}
I_n\coloneqq \max_{x\in\MM}|\sigma_n(x)|,\qquad J_n\coloneqq \min_{x\in\mathbb{X}}\abs{\sigma_n(x)}.
\ee
Informally, we expect the evaluation of $\sigma_n(x)/I_n$ to give us an estimation on whether or not the point $x$ belongs to $\mathbb{X}$. We encode this intuition by setting a thresholding (hyper)parameter $\theta>0$ in a support estimation set:
\be\label{eq:Sn}
\mathcal{S}_n(\theta)\coloneqq\left\{x\in\mathbb{M}:\sigma_n(x)\geq 4\theta I_n\right\}.
\ee
When the measure $\mu$ is detectable, we show that $\mathcal{S}_n(\theta)$ is an estimate to the support of $\mu$ (Theorem~\ref{thm:suppmu}). When the measure $\mu$ has a fine structure, we show that $\mathcal{S}_n(\theta)$ is partitioned exactly into $K_\eta$ separated components and each component estimates the support of the corresponding partition $\mathbf{S}_{k,\eta}$ (Theorem~\ref{thm:partmu}). These results then give us give us the ability to estimate the classification ability in the discrete setting via probabilistic results, as we investigate in Section~\ref{sec:discreteproofs}.

\subsection{Measure Support Estimation}
\label{sec:measureproofs}

In this section we develop key results to estimate the supports of measures defined on a continuum.
We first start with a useful lemma giving upper and lower bounds on $I_n,J_n$ respectively. Additionally for any given $x\in\mathbb{M}$, we determine a bound for the integral of $\Psi_n$ taken over points away from $x$.

\begin{lemma}\label{lem:integralest}
Let $n\geq 1$ and $S>\alpha$. Then there exist $C_1,C_2>0$ (depending on $\alpha,S,h$) such that
\be\label{eq:In}
I_n=\max_{x\in\MM}|\sigma_n(x)| \leq C_1n^{2-\alpha}
\ee
and
\be\label{eq:Jn}
J_n=\min_{x\in\mathbb{X}}\abs{\sigma_n(x)}\geq C_2n^{2-\alpha}.
\ee
\yadi{$C_1$}{Constant in the upper bound for $\sigma_n$, \eqref{eq:In}}
\yadi{$C_2$}{Constant in a lower bound for $\sigma_n$, \eqref{eq:Jn}}
In particular, $C_1\geq C_2$. For $d>0$ and any $x\in\mathbb{M}$,
\be\label{eq:Kn}
\int_{\mathbb{M}\setminus \mathbb{B}(x,d)}\Psi_n(x,y)d\mu(y)\leq C_1\frac{n^{2-\alpha}}{\max(1,(nd)^{S-\alpha})}.
\ee
\end{lemma}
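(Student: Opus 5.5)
The plan is the usual dyadic-annulus argument, run against the localization estimate \eqref{eq:metrickernloc} and the two ball-measure bounds in the detectability assumption (Definition~\ref{def:detectable}). Since $\Psi_n\ge 0$, every absolute value inside the integrals can be dropped. I would first prove the tail estimate \eqref{eq:Kn}. Deduce the upper bound \eqref{eq:In} from it. Finally obtain \eqref{eq:Jn} separately from the lower ball-measure bound \eqref{eq:ballmeasurelow} together with a lower bound for $\Phi_n$ near the origin.

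\emph{Tail estimate \eqref{eq:Kn}.} Fix $x\in\MM$ and $d>0$, and first assume $nd\ge 1$. Decompose $\MM\setminus\BB(x,d)$ into the annuli
\[
A_k=\BB(x,2^kd)\setminus\BB(x,2^{k-1}d),\qquad k\ge 1.
\]
These are empty once $2^{k-1}d>\pi$. On $A_k$, \eqref{eq:metrickernloc} gives $\Psi_n(x,y)\le c\,n^2(n2^{k-1}d)^{-S}$. By \eqref{eq:ballmeasurecon}, $\mu(A_k)\le\kappa_1(2^kd)^\alpha$. Summing gives
\[
\lesssim n^{2}(nd)^{-S}d^\alpha\sum_k 2^{k(\alpha-S)}\lesssim n^{2-\alpha}(nd)^{\alpha-S},
\]
and the geometric series converges because $S>\alpha$. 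If $nd<1$, I bound the integral over the larger set $\MM\setminus\BB(x,d)\subset \BB(x,1/n)\cup(\MM\setminus\BB(x,1/n))$. The first piece is at most $c n^2\mu(\BB(x,1/n))\le c\kappa_1 n^{2-\alpha}$, and the second is the $d=1/n$ case of the estimate just proved. Together these give \eqref{eq:Kn} in both regimes, with a constant depending only on $\alpha,S,h,\kappa_1$.

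\emph{Upper bound \eqref{eq:In}.} The same split with $d=1/n$, applied to all of $\MM$, gives $\sigma_n(x)\le C_1n^{2-\alpha}$ uniformly in $x$. I would enlarge $C_1$ once so that it serves for both \eqref{eq:In} and \eqref{eq:Kn}.

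\emph{Lower bound \eqref{eq:Jn}.} This is the step I expect to be the main obstacle, because it needs a pointwise lower bound on the kernel and not only decay. First note
\[
\Phi_n(0)=\sum_{|k|<n}h(k/n)\ge \#\{|k|\le n/2\}\ge n,
\]
since $0\le h\le 1$ and $h=1$ on $[-1/2,1/2]$. Since $\Phi_n$ is a trigonometric polynomial of degree $<n$, the Bernstein inequality gives $\|\Phi_n'\|\le n\|\Phi_n\|\lesssim n^2$, where $\|\Phi_n\|\lesssim n$ comes from \eqref{eq:kernloc}. Hence there is a constant $a\in(0,1]$, depending only on $h$, with $\Phi_n(t)\ge n/2$ for $|t|\le a/n$. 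Consequently $\Psi_n(x,y)\ge n^2/4$ whenever $\rho(x,y)\le a/n$.

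Now let $x\in\XX$. If $a/n\le r_0$, then \eqref{eq:ballmeasurelow} gives
\[
\sigma_n(x)\ge \frac{n^2}{4}\,\mu(\BB(x,a/n))\ge \frac{\kappa_2a^\alpha}{4}\,n^{2-\alpha}.
\]
For the finitely many small $n$ with $a/n>r_0$, the ratio $n^{2-\alpha}$ is bounded. There I instead use $\mu(\BB(x,a/n))\ge\mu(\BB(x,\min(r_0,a/n)))\ge\kappa_2\min(r_0,a/n)^\alpha$, which still gives $\sigma_n(x)\gtrsim n^{2-\alpha}$ after shrinking the constant. This yields $C_2$. Finally, $C_1\ge C_2$ follows because $J_n\le I_n$.
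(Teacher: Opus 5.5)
Your proposal is correct and follows the paper's proof essentially step for step: you use dyadic annuli with \eqref{eq:metrickernloc} and \eqref{eq:ballmeasurecon} for the tail estimate and the upper bound, and a pointwise bound $\Psi_n\gtrsim n^2$ on a ball of radius $\sim 1/n$ combined with \eqref{eq:ballmeasurelow} for $J_n$. The differences are minor. You derive the kernel lower bound directly from Bernstein's inequality and $\Phi_n(0)\ge n$ rather than citing \eqref{eq:kernlowbd}, and you explicitly treat the finitely many $n$ with $a/n>r_0$, a case the paper's proof does not address.
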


In order to prove this lemma, we first recall a consequence of the Bernstein inequality for trigonometric polynomials (\cite{paibk}, Chapter~III, Section~3, Theorems~1 and Lemma~5).
\begin{lemma}\label{lemma:bernstein}
Let $T$ be a trigonometric polynomial of order $<2n$. 
Then
\be\label{eq:trigbern}
\|T\|=\max_{x\in\TT}|T'(x)|\le 2n\max_{x\in\TT}|T(x)|.
\ee
Moreover, if $|T(x_0)|=\|T\|$ then
\be\label{eq:triglowbd}
|T(x)|\ge \|T\|\cos(2nx), \qquad |(x-x_0) \mod 2\pi|\le \pi/(2n).
\ee
\end{lemma}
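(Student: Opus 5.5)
The plan is to reduce both assertions to the classical Bernstein and Bernstein--Szeg\H{o} inequalities for trigonometric polynomials. In the second assertion, $\cos(2nx)$ is to be read as $\cos(2n(x-x_0))$.

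\textbf{Step 1: the derivative bound.} Inequality \eqref{eq:trigbern} is the classical Bernstein inequality for a trigonometric polynomial of order $<2n$. I would simply quote it from \cite{paibk}. If a self-contained argument is wanted, the standard route is the M.~Riesz interpolation formula, which writes $T'(x)$ as a combination of $2N$ translates $T(x+t_k)$ with coefficients whose absolute values sum to $N$. Here $N$ is the order bound, and $N\le 2n$.

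\textbf{Step 2: reduction to a real polynomial.} For \eqref{eq:triglowbd} I first pass to a real polynomial. Let $\beta=\arg T(x_0)$ and set $U(x)=\operatorname{Re}\bigl(e^{-i\beta}T(x)\bigr)$. Then $U$ is a real trigonometric polynomial of order $<2n$ with $U\le|T|$ pointwise. We also have $U(x_0)=|T(x_0)|=\|T\|$, and hence $\|U\|=\|T\|$. It therefore suffices to prove $U(x_0+t)\ge \|U\|\cos(2nt)$ for $|t|\le \pi/(2n)$.

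\textbf{Step 3: the Szeg\H{o} form.} For a real trigonometric polynomial of order at most $2n$, the Szeg\H{o} inequality states
\[
U'(x)^2+(2n)^2U(x)^2\le (2n)^2\|U\|^2 .
\]
This is the same classical source as Step 1: apply Bernstein to $U(x)\cos\phi+\widetilde U(x)\sin\phi$, where $\widetilde U$ is the conjugate polynomial, and optimize over $\phi$.

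Normalize $\|U\|=1$ and define $g(t)=\arccos U(x_0+t)\in[0,\pi]$, so that $g(0)=0$. At any $t$ with $|U(x_0+t)|<1$, the function $g$ is differentiable and
\[
|g'(t)|=\frac{|U'(x_0+t)|}{\sqrt{1-U(x_0+t)^2}}\le 2n
\]
by the Szeg\H{o} inequality. The goal is then to conclude $g(t)\le 2n|t|$, which gives $U(x_0+t)=\cos g(t)\ge\cos(2nt)$ whenever $2n|t|\le\pi/2$, since $\cos$ is decreasing on $[0,\pi]$.

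\textbf{Main obstacle.} The delicate point is the set where $U=\pm1$, where $g$ need not be differentiable. I would handle it with a continuity argument. Fix $t>0$ (the case $t<0$ is symmetric) and let $s_0=\sup\{s\in[0,t]: g(s)=0\}$, so $g(s_0)=0$ by continuity. On $(s_0,t]$ we have $U<1$. As long as $g<\pi$, the derivative bound applies there, and integrating gives $g(t)\le 2n(t-s_0)\le 2n t$. The value $g=\pi$ cannot be reached while $2nt\le\pi/2$: it would require $g$ to pass $\pi/2$ first, which is already excluded by the same bound. This yields \eqref{eq:triglowbd}.
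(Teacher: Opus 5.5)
Your outline is sound and more explicit than the paper, which gives no argument and only cites Bernstein's inequality and the companion lower-bound lemma in \cite{paibk} (Chapter~III, Section~3, Theorem~1 and Lemma~5). Reading the display as $\cos(2n(x-x_0))$ is the right interpretation. The passage to $U=\operatorname{Re}(e^{-i\beta}T)$ correctly handles complex $T$. The $\arccos$ comparison, including your treatment of the points where $U=\pm1$, validly turns a pointwise derivative bound into \eqref{eq:triglowbd}. What you gain is an actual derivation from two classical inequalities instead of a bare reference.

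Two things need repair.

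First, your justification of the Szeg\H{o} inequality $U'^2+N^2U^2\le N^2\|U\|^2$ does not work. Bernstein applied to $V_\phi=U\cos\phi+\widetilde U\sin\phi$ gives only $|V_\phi'|\le N\|V_\phi\|$, and $\|V_\phi\|$ is not controlled by $\|U\|$. For example, $U(x)=\sum_{k=1}^N\sin(kx)/k$ is uniformly bounded, while $|\widetilde U(0)|=\sum_{k=1}^N 1/k\sim\log N$. Moreover, maximizing $U'(x)\cos\phi+\widetilde U'(x)\sin\phi$ over $\phi$ produces $\sqrt{U'(x)^2+\widetilde U'(x)^2}$, which has no $U(x)^2$ term at all. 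You have two ways to fix this:
\begin{itemize}
\item quote the inequality as a known theorem (Szeg\H{o}; van der Corput--Schaake);
\item prove it by the standard zero count. Suppose $\|U\|<1$ and the inequality fails at $x_1$. Choose $\alpha$ so that $\cos(N(x-\alpha))$ agrees with $U$ at $x_1$ and their slopes there have the same sign. Then $\cos(N(x-\alpha))-U(x)$ changes sign between each pair of consecutive extrema of the cosine. It also has two extra zeros in the interval containing $x_1$, so it has more than $2N$ zeros per period, which is impossible.
\end{itemize}

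Second, your Step~3 as written only reaches $2n|t|\le\pi/2$, although you reduced to $|t|\le\pi/(2n)$. The remaining range is immediate, since $\cos(2nt)\le 0\le |T(x_0+t)|$ for $\pi/(4n)\le|t|\le\pi/(2n)$. Alternatively, run your argument until $g$ first reaches $\pi$, which gives $g(t)\le 2n|t|$ on the whole range. Either way, you should say so explicitly.
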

The following corollary gives a consequence of this lemma for the kernel $\Psi_n$, which will be used often in this chapter.
\begin{corollary}\label{cor:kernelbern}
Let $x, y, z, w\in \MM$, $n\ge 1$. Then
there are constants $c, C_0$ such that
\be\label{eq:kernmax}
cn^2\le \Psi_n(x,x)\le C_0 n^2.
\ee
\yadi{$C_0$}{upper bound for $\Psi_n(x,x)/n^2$, \eqref{eq:kernmax}.}
Moreover,
\be\label{eq:kernupbd}
\Psi_n(x,y)\le \Psi_n(x,x)\sim n^2,
\ee
\be\label{eq:kernbern}
|\Psi_n(x,y)-\Psi_n(z,w)|\ls n^3\left\{\rho(x,z)+\rho(y,w)\right\}.
\ee
and
\be\label{eq:kernlowbd}
|\Psi_n(x,y)| \gs n^2, \qquad \mbox{ for } \rho(x,y)\le \pi/(6n).
\ee
\end{corollary}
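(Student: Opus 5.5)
The plan is to work entirely on the level of the univariate trigonometric polynomial $\Phi_n$ and then transfer to $\Psi_n(x,y)=\Phi_n(\rho(x,y))^2$. Since $h$ is even and real, $\Phi_n(t)=\sum_{|k|<n}h(k/n)e^{ikt}=h(0)+2\sum_{1\le k<n}h(k/n)\cos(kt)$ is a real, even trigonometric polynomial of order $<n$. Hence $T(t)\coloneqq\Phi_n(t)^2$ is a real trigonometric polynomial of order $<2n$, and $\Psi_n(x,y)=T(\rho(x,y))$ with $\rho(x,y)\in[0,\pi]$. Every claimed property of $\Psi_n$ will be read off from properties of $T$ on $[0,\pi]$, combined with the triangle inequality for $\rho$.

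First, I will compute the diagonal value $\Psi_n(x,x)=T(0)=\Phi_n(0)^2$, where $\Phi_n(0)=\sum_{|k|<n}h(k/n)$. Since $0\le h\le1$ (assume the filter takes values in $[0,1]$, as in earlier chapters) and $h=1$ on $[0,1/2]$, we get $n\lesssim\Phi_n(0)\le 2n-1$, which gives \eqref{eq:kernmax}. Next, $|\Phi_n(t)|\le\sum_{|k|<n}h(k/n)=\Phi_n(0)$ for all $t$, so $\|T\|=T(0)$ and the maximum of $|T|$ is attained at $x_0=0$. This gives \eqref{eq:kernupbd} at once.

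For \eqref{eq:kernbern}, apply \eqref{eq:trigbern} to $T$: $\|T'\|\le 4n\|T\|\lesssim n^3$. Then
\[
|\Psi_n(x,y)-\Psi_n(z,w)|\le \|T'\|\,|\rho(x,y)-\rho(z,w)|.
\]
The triangle inequality gives $|\rho(x,y)-\rho(z,w)|\le\rho(x,z)+\rho(y,w)$, and this finishes the bound.

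For \eqref{eq:kernlowbd}, use \eqref{eq:triglowbd} with $x_0=0$. For $|t|\le\pi/(6n)$ we have $2n|t|\le\pi/3$, so $T(t)\ge\|T\|\cos(2nt)\ge\|T\|/2\gtrsim n^2$. Taking $t=\rho(x,y)$ yields the claim, and the absolute value is harmless since $\Psi_n\ge0$.

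The only delicate point is the lower bound $\Phi_n(0)\gtrsim n$. This is what makes $\|T\|\sim n^2$, rather than merely $\lesssim n^2$, and both \eqref{eq:kernmax} and \eqref{eq:kernlowbd} depend on it. It needs the nonnegativity of $h$, or at least $\sum_{|k|<n}h(k/n)\ge\#\{|k|\le n/2\}\gtrsim n$ with the remaining terms nonnegative. If $h$ were allowed to go negative, I would instead argue that $\Phi_n(0)/n\to\int_{-1}^1h>0$ by a Riemann-sum argument, and handle small $n$ separately by a finite check.
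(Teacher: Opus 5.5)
Your proposal is correct and follows essentially the same route as the paper: everything is read off from the trigonometric polynomial $\Phi_n^2$, namely the diagonal value $\Phi_n(0)^2\sim n^2$, the bound $|\Phi_n|\le\Phi_n(0)$, the Bernstein inequality \eqref{eq:trigbern} with the triangle inequality for \eqref{eq:kernbern}, and \eqref{eq:triglowbd} at $x_0=0$ for \eqref{eq:kernlowbd}; your explicit count replaces the paper's Riemann-sum estimate for $\Phi_n(0)\gtrsim n$. One small caveat on your closing remark: nonnegativity of $h$ is also what gives $\|\Phi_n^2\|=\Phi_n(0)^2$, which \eqref{eq:kernupbd} and your use of \eqref{eq:triglowbd} at $x_0=0$ depend on, so the Riemann-sum fallback alone would not handle a signed filter (the paper likewise tacitly assumes $h\ge 0$ here).
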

\begin{proof} \ % of Corollary~\ref{cor:kernelbern}
The estimate \eqref{eq:kernmax} follows from the fact that
$$
\Psi_n(x,x)=\Phi_n(0)^2 =\left(\sum_\ell h(\ell/n)\right)^2 \sim n^2,
$$ 
where the last estimate is easy to see using Riemann sums for $\int h(t)dt$.
We observe that $\Phi_n^2$ is a trigonometric polynomial, and it is clear that 
$$
|\Phi_n(t)|^2\le \Phi_n(0)^2.
$$
Consequently, \eqref{eq:kernupbd} follows from the definition of $\Psi_n$. 
The estimate \eqref{eq:kernbern} is easy to deduce from the fact that $\|(\Phi_n^2)'\|\ls n^3$, so that
\be
\ba
|\Psi_n(x,y)-\Psi_n(z,w)|\le& |\Phi_n^2(\rho(x,y))-\Phi_n^2(\rho(z,w))|\\
\le& |\Phi_n^2(\rho(x,y))-\Phi_n^2(\rho(z,y))|+|\Phi_n^2(\rho(z,y))-\Phi_n^2(\rho(z,w))|\\
\ls& n^3\left\{|\rho(x,y)-\rho(z,y)|+|\rho(z,y)-
\rho(z,w)|\right\}\\
\ls& n^3\left\{\rho(x,z)+\rho(y,w)\right\}.
\ea
\ee
The estimate \eqref{eq:kernlowbd} follows from \eqref{eq:triglowbd} and the definition of $\Psi_n$.
\end{proof}
 
\noindent\textit{Proof of Lemma~\ref{lem:integralest}.} We proceed by examining concentric annuli. Let $x\in \mathbb{M}$ be fixed, and set $A_0=\mathbb{B}(x,d)$ and $A_k=\mathbb{B}(x,2^kd)\setminus \mathbb{B}(x,2^{k-1}d)$ for every $k\geq 1$. First suppose that $nd\geq 1$. Then by \eqref{eq:metrickernloc}~and~\eqref{eq:ballmeasurecon}, we deduce
\be\label{eq:annuli}
\ba
\int_{\mathbb{M}\setminus \mathbb{B}(x,d)}\Psi_n(x,y)d\mu(y)=&\sum_{k=1}^\infty\int_{A_k}\Psi_n(x,y)d\mu(y)
\lesssim\sum_{k=1}^\infty \frac{\mu(A_k)n^2}{\max(1,2^{k-1}dn)^S}\\
&\lesssim\sum_{k=1}^\infty \frac{2^{k\alpha} d^\alpha n^2}{2^{S(k-1)}(dn)^S}
\lesssim n^{2-\alpha}(nd)^{\alpha-S}\sum_{k=1}^\infty 2^{k(\alpha-S)}
\lesssim  n^{2-\alpha}(nd)^{\alpha-S}.
\ea
\ee
If $nd=1$, we observe
\be\label{eq:annuli2}
\int_{A_0}\Phi_{n}(\rho(x,y))^2d\mu(y)\lesssim \mu(A_0)n^2\lesssim d^{\alpha}n^2=n^{2-\alpha}.
\ee
Combining \eqref{eq:annuli}~and~\eqref{eq:annuli2} when $nd=1$ yields \eqref{eq:In}. When $dn\leq 1$, we see 
\be
\int_{\mathbb{M}\setminus\mathbb{B}(x,d)}\Psi_n(x,y)d\mu(y)\leq I_n\lesssim n^{2-\alpha}.
\ee
Together with \eqref{eq:annuli}, this completes the proof of \eqref{eq:Kn}. 
There is no loss of generality in using the same constant $C_1$ in both of these estimates.
We see, in view of \eqref{eq:kernupbd}, \eqref{eq:kernlowbd}, and the detectability of $\mu$, that if $x\in\mathbb{X}$ it follows that
\be
\int_{\mathbb{X}} \Psi_n(x,y)d\mu(y)\gtrsim \int_{\mathbb{B}(x,\pi/(6n))} n^2 d\mu(y)\gtrsim n^{2-\alpha},
\ee
demonstrating \eqref{eq:Jn} and completing the proof.
\qed

\begin{theorem}\label{thm:suppmu}
Let $\mu$ be detectable and $S>\alpha$. 
If $\theta\leq C_2/(4C_1)$, then by setting
\be\label{eq:dtheta}
d(\theta)=\left(\frac{C_1}{C_2\theta}\right)^{1/(S-\alpha)},
\ee
it follows that (cf. \eqref{eq:Sn})
\be\label{eq:thm1inc}
\mathbb{X}\subseteq \mathcal{S}_{n}(\theta)\subseteq \mathbb{B}(\mathbb{X},d(\theta)/n).
\ee
\end{theorem}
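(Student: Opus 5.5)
Both inclusions in \eqref{eq:thm1inc} follow from Lemma~\ref{lem:integralest}, comparing the threshold $4\theta I_n$ with the lower bound $J_n$ and with the tail estimate \eqref{eq:Kn}.

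\emph{First inclusion.} Let $x\in\XX$. By \eqref{eq:Jn} and \eqref{eq:In}, together with $\theta\le C_2/(4C_1)$,
\[
\sigma_n(x)\ge J_n\ge C_2 n^{2-\alpha}=4\cdot\frac{C_2}{4C_1}\,C_1n^{2-\alpha}\ge 4\theta C_1 n^{2-\alpha}\ge 4\theta I_n .
\]
Hence $x\in\mathcal{S}_n(\theta)$. Since $\Psi_n\ge 0$, we have $\sigma_n=|\sigma_n|$, so no sign issues arise.

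\emph{Second inclusion.} Argue by contrapositive. Let $x\in\MM$ with $\dist(x,\XX)>d(\theta)/n$, and set $d=d(\theta)/n$. Since $\mu$ is supported on $\XX$ and $\XX\subseteq\MM\setminus\BB(x,d)$, \eqref{eq:Kn} gives
\[
\sigma_n(x)=\int_{\MM\setminus\BB(x,d)}\Psi_n(x,y)\,d\mu(y)\le C_1\frac{n^{2-\alpha}}{\max(1,d(\theta)^{S-\alpha})}.
\]
The hypothesis $\theta\le C_2/(4C_1)$ gives $C_1/(C_2\theta)\ge 4$, so $d(\theta)\ge 4^{1/(S-\alpha)}>1$. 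Therefore
\[
\sigma_n(x)\le C_1 n^{2-\alpha}\,\frac{C_2\theta}{C_1}=C_2\theta n^{2-\alpha}.
\]

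To conclude, this must be compared with $4\theta I_n$. Pick any $x_0\in\XX$; then $I_n\ge \sigma_n(x_0)\ge J_n\ge C_2n^{2-\alpha}$, so
\[
4\theta I_n\ge 4\theta C_2 n^{2-\alpha}>C_2\theta n^{2-\alpha}\ge\sigma_n(x),
\]
where the strict middle inequality uses $\theta>0$ and $C_2>0$. Hence $x\notin\mathcal{S}_n(\theta)$, which proves $\mathcal{S}_n(\theta)\subseteq\BB(\XX,d(\theta)/n)$.

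\emph{Subtle points.} The only delicate step is checking that the tail bound is strictly below the threshold. The factor $4$ in the definition \eqref{eq:Sn} supplies the needed slack, once $I_n$ is bounded from \emph{below} by $J_n$ rather than from above. The other thing to verify is that $d(\theta)\ge1$, so that the maximum in \eqref{eq:Kn} is the power term; this is exactly where the condition $\theta\le C_2/(4C_1)$ is used. Closedness of $\BB(\XX,r)$ is not needed, since the argument is pointwise.
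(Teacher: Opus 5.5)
Your proposal is correct and follows the paper's proof essentially step for step. The first inclusion comes from $J_n\ge C_2n^{2-\alpha}\ge 4\theta C_1 n^{2-\alpha}\ge 4\theta I_n$, and the second from \eqref{eq:Kn} with $d(\theta)>1$ (which, as in the paper, implicitly uses $C_1\ge C_2$) combined with $I_n\ge J_n\ge C_2n^{2-\alpha}$. Your use of the strict condition $\dist(x,\XX)>d(\theta)/n$ is in fact slightly cleaner than the paper's $\ge$, since it guarantees $\XX\cap\BB(x,d(\theta)/n)=\emptyset$ and matches the closed neighborhood $\BB(\XX,d(\theta)/n)$ exactly.
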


\begin{proof}
From \eqref{eq:In}~and~\eqref{eq:Jn}, we see that for any $x\in\mathbb{X}$,
\be
\sigma_n(x)\geq J_n \geq \frac{C_2I_n}{C_1}.
\ee
With our assumption of $\theta\leq C_2/(4C_1)$, this proves the inclusion
\be
\mathbb{X}\subseteq \mathcal{S}_n(\theta).
\ee
Note that $C_1^2/C_2^2\ge 1 >1/4$, so that $\theta\leq C_2/(4C_1)< C_1/C_2$, and hence, $d(\theta)>1$. Then, for any $x\in \mathbb{M}$ such that $\dist(x,\mathbb{X})\geq d(\theta)/n$, we have by \eqref{eq:Kn} that
\be
\sigma_n(x)\leq \int_{\mathbb{M}\setminus \mathbb{B}(x,d(\theta)/n)}\Psi_n(x,y)d\mu(y)\leq C_1n^{2-\alpha}/d(\theta)^{S-\alpha}\leq \theta C_2n^{2-\alpha}\leq \theta I_n.
\ee
This demonstrates the inclusion
\be
\mathcal{S}_n(\theta)\subseteq \mathbb{B}(\mathbb{X},d(\theta)/n),
\ee
completing the proof.
\end{proof}

\begin{theorem}\label{thm:partmu}
Assume the setup of Theorem~\ref{thm:suppmu} and suppose $\mu$ has a fine structure. Define
\be\label{eq:Skn}
\mathcal{S}_{k,\eta,n}(\theta)\coloneqq \mathcal{S}_n(\theta)\cap \mathbb{B}(\mathbf{S}_{k,\eta},d(\theta)/n).
\ee
Let $n\geq 2d(\theta)/\eta$, $\mu(\mathbf{S}_{K_{\eta}+1,\eta})\leq \frac{C_2}{C_0}\theta n^{-\alpha}$, and $j,k=1,\dots,K_\eta$ with $j\neq k$.  Then 
\be\label{eq:Snpartition}
\mathcal{S}_n(\theta)=\bigcup_{k=1}^{K_\eta} \mathcal{S}_{k,\eta,n}(\theta)
\ee
and,
\be\label{eq:minsep2}
\operatorname{dist}(\mathcal{S}_{j,\eta,n}(\theta),\mathcal{S}_{k,\eta,n}(\theta))\geq \eta.
\ee
Furthermore,
\be\label{eq:thm2inc}
\mathbb{X}\cap \mathbb{B}(\mathbf{S}_{k,\eta},d(\theta)/n)\subseteq \mathcal{S}_{k,\eta,n}(\theta)\subseteq \mathbb{B}(\mathbf{S}_{k,\eta},d(\theta)/n).
\ee
\end{theorem}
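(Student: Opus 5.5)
We first show that $\mathcal{S}_n(\theta)$ is covered by the sets $\mathcal{S}_{k,\eta,n}(\theta)$, $1\le k\le K_\eta$. By Theorem~\ref{thm:suppmu}, every $x\in\mathcal{S}_n(\theta)$ lies in $\mathbb{B}(\mathbb{X},d(\theta)/n)$. Since $\{\mathbf{S}_{k,\eta}\}_{k=1}^{K_\eta+1}$ partitions $\mathbb{X}$, either $x\in\mathbb{B}(\mathbf{S}_{k,\eta},d(\theta)/n)$ for some $k\le K_\eta$, or $\dist(x,\mathbf{S}_{k,\eta})>d(\theta)/n$ for every $k\le K_\eta$. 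We must exclude the second case. In that case we split $\sigma_n(x)$ into two parts. The first is the integral over $\mathbb{X}\setminus\mathbb{B}(x,d(\theta)/n)$, which by \eqref{eq:Kn} is at most $C_1n^{2-\alpha}d(\theta)^{\alpha-S}=\theta C_2n^{2-\alpha}\le\theta I_n$. The second is the integral over $\mathbb{X}\cap\mathbb{B}(x,d(\theta)/n)\subseteq\mathbf{S}_{K_\eta+1,\eta}$, which by \eqref{eq:kernupbd} and \eqref{eq:kernmax} is at most $C_0n^2\mu(\mathbf{S}_{K_\eta+1,\eta})\le C_2\theta n^{2-\alpha}\le\theta I_n$, using the hypothesis on $\mu(\mathbf{S}_{K_\eta+1,\eta})$ and $I_n\ge J_n\ge C_2n^{2-\alpha}$. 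Adding the two gives $\sigma_n(x)\le 2\theta I_n<4\theta I_n$, so $x\notin\mathcal{S}_n(\theta)$. This contradiction proves \eqref{eq:Snpartition}. The factor $4$ in \eqref{eq:Sn} is what leaves room for both terms.

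Next we prove the separation \eqref{eq:minsep2}. Take $x\in\mathcal{S}_{j,\eta,n}(\theta)$ and $y\in\mathcal{S}_{k,\eta,n}(\theta)$ with $j\ne k$. Choose $x'\in\mathbf{S}_{j,\eta}$ and $y'\in\mathbf{S}_{k,\eta}$ with $\rho(x,x')$ and $\rho(y,y')$ each at most $d(\theta)/n+\epsilon$; these exist by the definition of $\dist$. The triangle inequality together with \eqref{eq:minsep} gives
\[
\rho(x,y)\ge 2\eta-2d(\theta)/n-2\epsilon\ge\eta-2\epsilon,
\]
since $n\ge 2d(\theta)/\eta$. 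Letting $\epsilon\to0$ yields \eqref{eq:minsep2}. Because $\eta>0$, the sets are then pairwise disjoint, so \eqref{eq:Snpartition} is in fact a partition.

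Finally, the right inclusion in \eqref{eq:thm2inc} holds by the definition \eqref{eq:Skn}. For the left inclusion, Theorem~\ref{thm:suppmu} gives $\mathbb{X}\subseteq\mathcal{S}_n(\theta)$, and intersecting both sides with $\mathbb{B}(\mathbf{S}_{k,\eta},d(\theta)/n)$ gives the claim.

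The only delicate step is the first one, where $\sigma_n(x)$ must be bounded for points $x$ that are close only to the overlap region $\mathbf{S}_{K_\eta+1,\eta}$. The small-measure hypothesis on that region, combined with the pointwise bound $\Psi_n\le C_0n^2$, handles this.
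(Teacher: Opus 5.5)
Your proof is correct and is essentially the paper's argument. For $x\notin\bigcup_{k\le K_\eta}\mathbb{B}(\mathbf{S}_{k,\eta},d(\theta)/n)$ you combine \eqref{eq:Kn}, the bound $\Psi_n\le C_0n^2$, and the small-measure hypothesis on $\mathbf{S}_{K_\eta+1,\eta}$ to get $\sigma_n(x)\le 2\theta I_n$, and you obtain the separation and \eqref{eq:thm2inc} from \eqref{eq:minsep}, $n\ge 2d(\theta)/\eta$, and Theorem~\ref{thm:suppmu}, just as the paper does. The only cosmetic difference is the split of $\sigma_n(x)$: you separate the parts inside and outside $\mathbb{B}(x,d(\theta)/n)$ and note the inside part lies in $\mathbf{S}_{K_\eta+1,\eta}$, whereas the paper separates $\mathbf{S}_{K_\eta+1,\eta}$ from the union of the separated classes and notes the latter lies outside that ball.
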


\begin{proof}
The first inclusion in \eqref{eq:thm2inc} is satisfied from \eqref{eq:thm1inc} and the second is satisfied by the definition of $\mathcal{S}_{k,\eta,n}$. In view of the assumption that $\eta\geq 2d(\theta)/n$ and Definition~\ref{def:finestructure}, we see that 
\be
\dist(\mathbb{B}(\mathbf{S}_{j,\eta},d(\theta)/n),\mathbb{B}(\mathbf{S}_{k,\eta},d(\theta)/n))\geq \eta,
\ee
for any $j\neq k$. Since $\mathcal{S}_{k,\eta,n}(\theta)\subseteq \mathbb{B}(\mathbf{S}_{k,\eta},d(\theta)/n)$, it follows that the separation condition \eqref{eq:minsep2} must also be satisfied.
Now it remains to show \eqref{eq:Snpartition}.
Let us define, in this proof only,
\be
\mathbf{S}=\bigcup_{k=1}^{K_\eta}\mathbf{S}_{k,\eta}.
\ee
It is clear from \eqref{eq:Skn} that $\bigcup_{k=1}^{K_\eta} \mathcal{S}_{k,\eta,n}(\theta)\subseteq\mathcal{S}_n(\theta)$. 
We note that for any $x\in\mathbb{M}\setminus \mathbb{B}(\mathbf{S},d(\theta)/n)$, we have $\dist(x,\mathbf{S})\geq d(\theta)/n$ and as a result
\be
\ba
\sigma_n(x)=& \int_{\mathbf{S}_{K_\eta+1}}\Psi_n(x,y)d\mu(y)+\int_{\mathbf{S}}\Psi_n(x,y)d\mu(y)\\
\leq& C_0n^2\mu(\mathbf{S}_{K_{\eta}+1})+\int_{\mathbf{S}\setminus \mathbb{B}(x,d(\theta)/n)}\Psi_n(x,y)d\mu(y)&\text{(By \eqref{eq:kernupbd})}\\
\leq&C_2n^{2-\alpha}\theta+C_1n^{2-\alpha}d(\theta)^{\alpha-S}&\text{(By the assumption on $\mu(\mathbf{S}_{K_\eta+1})$ and \eqref{eq:Kn})}\\
\leq& 2C_2n^{2-\alpha}\theta \le 2J_n\theta &\text{(By \eqref{eq:dtheta})}\\
\leq&2\theta I_n.
\ea
\ee
Thus, $x\notin \mathcal{S}_n(\theta)$ and, equivalently, $\mathcal{S}_n(\theta)\subseteq \mathbb{B}(\mathbf{S},d(\theta)/n)$. Therefore we have shown $\mathcal{S}_n(\theta)=\mathbb{B}(\mathbf{S},d(\theta)/n)$, completing the proof.
\end{proof}

\subsection{Discretization}
\label{sec:discreteproofs}

In this section we relate the continuous support estimator and estimation sets to the discrete cases based on randomly sampled data. The conclusion of this section will be the proofs to the theorems from Section~\ref{sec:mainresults}. To aid us in this process we first state a consequence of the Bernstein Concentration inequality as a proposition.

\begin{proposition}\label{prop:concentration}
Let $X_1,\cdots, X_M$ be independent real valued random variables such that for each $j=1,\cdots,M$, $|X_j|\le R$, and $\mathbb{E}(X_j^2)\le V$. Then for any $t>0$,
\be\label{bernstein_concentration}
\mathsf{Prob}\left( \left|\frac{1}{M}\sum_{j=1}^M (X_j-\mathbb{E}(X_j))\right| \ge Vt/R\right) \le 2\exp\left(-\frac{MVt^2}{2R^2(1+t)}\right).
\ee
\end{proposition}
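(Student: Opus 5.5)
This is a reparametrization of the Bernstein concentration inequality (Proposition~\ref{prop:berncon}), so the plan is simply to apply that proposition with a rescaled threshold. The centered variables $X_j$ satisfy the same hypotheses there, with bound $R$ and second moment bound $V$. Proposition~\ref{prop:berncon} gives, for any $s>0$,
$$
\mathsf{Prob}\left(\left|\frac{1}{M}\sum_{j=1}^M (X_j-\mathbb{E}(X_j))\right|\ge s\right)\le 2\exp\left(-\frac{Ms^2}{2(V+Rs/3)}\right).
$$

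The single step is to substitute $s=Vt/R$. The exponent then becomes
$$
\frac{M V^2t^2/R^2}{2(V+Vt/3)}=\frac{MVt^2}{2R^2(1+t/3)}.
$$
Since $1+t/3\le 1+t$ for $t>0$, this quantity is at least
$$
\frac{MVt^2}{2R^2(1+t)}.
$$
A larger exponent gives a smaller right-hand side, so the bound \eqref{bernstein_concentration} follows. This is the stated inequality, with a slightly weaker constant than the sharper $1+t/3$.

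There is no real obstacle here. The only points to check are that the events in the two statements coincide under $s=Vt/R$, which requires $R>0$ and $V>0$, and that the direction of the inequality is preserved when $1+t/3$ is replaced by $1+t$ in the denominator.
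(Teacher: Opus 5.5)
Your proposal is correct and takes the same route as the paper. The paper gives no separate proof and simply presents this as a consequence of the standard Bernstein inequality, i.e.\ the form in Proposition~\ref{prop:berncon} with the uncentered hypotheses $|X_j|\le R$, $\mathbb{E}(X_j^2)\le V$; your substitution $s=Vt/R$, followed by weakening $1+t/3$ to $1+t$, is exactly the intended derivation. One minor wording fix: it is the uncentered $X_j$, not ``centered'' ones, that satisfy the hypotheses of Proposition~\ref{prop:berncon}.
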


%\begin{proposition}[Bernstein concentration inequality] Let $Z_1,\dots,Z_M$ be independent random variables where for each $j=1,\dots,M$, $|Z_j|\leq R$ and $\mathbb{E}(Z_j^2)\leq V$. Then for any $t>0$ we have,
%\be
%\mathbf{P}\left(\abs{\frac{1}{M}\sum_{j=1}^M(Z_j-\mathbb{E}(Z_j))}\geq t\right)\leq 2\exp\left(-\frac{Mt^2}{2(V+Rt/3)}\right).
%\ee
%\label{prop:bernsteinconcentration}
%\end{proposition}

%The second result is the multiplicative Chernoff bounds.
%
%\begin{proposition}[Multiplicative Chernoff bounds]
%Let $M\geq 1$, $0\leq p\leq 1$, and $X_1,\dots,X_M$ be random variables taking values in $\{0,1\}$, with $\operatorname{Prob}(X_k=1)=p$. Then for any $\epsilon\in (0,1]$, we have
%\be
%\operatorname{Prob}\left(\sum_{k=1}^M X_k\leq (1-\epsilon)Mp\right)\leq \exp\left(-\epsilon^2Mp/2\right).
%\ee
%\label{prop:chernoffbound}
%\end{proposition}

Information and a derivation for the Bernstein concentration inequality are standard among many texts in probability; we list \cite[Section~2.1, 2.7]{concentration} as a reference. 
It is instinctive to use Proposition~\ref{prop:concentration} with $X_j=\Psi_n(x,x_j)$. 
This would yield the desired bound for any value of $x\in\MM$. 
However, to get an estimate on the supremum norm of the difference $F_n-\sigma_n$, we need to find an appropriate net for $\MM$ (and estimate its size) so that the point where this supremum is attained is within the right ball around one of the points of the net. 
Usually, this is done via a Bernstein inequality for the gradients of the objects involved.
In the absence of any differentiability structure on $\MM$, we need a more elaborate argument.

The following proposition is a consequence of \cite[Theorem~7.2]{mhaskardata}, and asserts the existence of a partition of $\mathbb{X}$ satisfying properties which will be helpful to proving our main results.

\begin{proposition}\label{prop:partition} 
Let $\delta>0$. There exists a partition $\{Y_k\}_{k=1}^N$ of $\XX$ such that for each $k$, $\mathsf{diam}(Y_k)\le 36\delta$, and $\mu(Y_k)\sim \delta^\alpha$. 
In particular, $N\ls \delta^{-\alpha}$.
\end{proposition}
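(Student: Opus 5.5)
The plan is to build the partition directly as a Voronoi-type partition around a maximal separated net in $\XX$, and to use the two halves of the detectability assumption (Definition~\ref{def:detectable}) for the two-sided bound on $\mu(Y_k)$. This avoids the full machinery of \cite[Theorem~7.2]{mhaskardata} and in fact gives diameter $\le 2\delta$, which is better than $36\delta$.

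First assume $\delta/2\le r_0$. Let $\{z_k\}$ be a maximal $\delta$-separated subset of $\XX$, meaning $\rho(z_j,z_k)>\delta$ for $j\neq k$ and maximal with respect to inclusion.

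\begin{itemize}
\item \emph{The net is finite with $N\ls\delta^{-\alpha}$.} The balls $\mathbb{B}(z_k,\delta/2)$ are pairwise disjoint. Since $z_k\in\XX$ and $\delta/2\le r_0$, \eqref{eq:ballmeasurelow} gives $\mu(\mathbb{B}(z_k,\delta/2))\ge\kappa_2(\delta/2)^\alpha$. As $\mu$ is a probability measure, any such family has at most $2^\alpha\delta^{-\alpha}/\kappa_2$ members. This also shows a maximal family exists, since one can add points greedily and the process must stop.
\item \emph{Construction of the cells.} Order the net and set
\[
Y_k=\{x\in\XX:\ \rho(x,z_k)\le\rho(x,z_j)\ \forall j,\ \text{and}\ \rho(x,z_k)<\rho(x,z_j)\ \forall j<k\}.
\]
The sets $Y_k$ are Borel, pairwise disjoint, and cover $\XX$.
\item \emph{Diameter bound.} By maximality, every $x\in\XX$ satisfies $\rho(x,z_j)\le\delta$ for some $j$. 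Hence $Y_k\subseteq\mathbb{B}(z_k,\delta)$ and $\mathsf{diam}(Y_k)\le2\delta\le36\delta$.
\item \emph{Upper mass bound.} From $Y_k\subseteq\mathbb{B}(z_k,\delta)$ and \eqref{eq:ballmeasurecon}, $\mu(Y_k)\le\kappa_1\delta^\alpha$.
\item \emph{Lower mass bound.} If $x\in\XX\cap\mathbb{B}(z_k,\delta/2)$, then for every $j\ne k$ the triangle inequality gives $\rho(x,z_j)>\delta/2\ge\rho(x,z_k)$. So $x\in Y_k$. Since $\mu(\MM\setminus\XX)=0$, it follows that $\mu(Y_k)\ge\mu(\mathbb{B}(z_k,\delta/2))\ge\kappa_2 2^{-\alpha}\delta^\alpha$.
\end{itemize}

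The remaining case is $\delta/2>r_0$, where the lower bound \eqref{eq:ballmeasurelow} is no longer available at scale $\delta$. Here I would run the same construction with $\delta$ replaced by $\delta'=2r_0$. This produces cells of diameter $\le 4r_0<2\delta$ and of measure $\sim r_0^\alpha$. Because $\delta\le\mathsf{diam}(\MM)=\pi$ may be assumed, $r_0^\alpha\sim\delta^\alpha$ with constants depending only on $r_0$ and $\alpha$. If $\delta>\pi$, we simply take the single set $Y_1=\XX$, which has measure $1\sim\delta^\alpha$ up to constants.

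The only genuinely delicate point is the lower bound $\mu(Y_k)\gtrsim\delta^\alpha$. It requires the net points to lie in $\XX$, so that \eqref{eq:ballmeasurelow} applies, and it requires the cells to be defined only on $\XX$. I handle this by choosing the net inside $\XX$ and using $\mu(\MM\setminus\XX)=0$. Everything else is the standard covering argument.
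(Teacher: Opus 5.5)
Your argument is correct, but it is not the paper's route. The paper gives no proof of this proposition; it imports the partition from \cite[Theorem~7.2]{mhaskardata}, and the constant $36$ comes from there.

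You instead build the partition by hand, as Voronoi cells (with index tie-breaking) of a maximal $\delta$-separated net chosen inside $\XX$, and you read off both halves of $\mu(Y_k)\sim\delta^\alpha$ directly from Definition~\ref{def:detectable}. The key steps are these:
\begin{itemize}
\item The inclusions $\XX\cap\BB(z_k,\delta/2)\subseteq Y_k\subseteq\BB(z_k,\delta)$, together with $\mu(\MM\setminus\XX)=0$ (valid because a compact metric space is separable), give the two-sided mass bound.
\item The disjoint half-balls $\BB(z_k,\delta/2)$ give finiteness of the net and $N\lesssim\delta^{-\alpha}$.
\end{itemize}

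What your approach buys:
\begin{itemize}
\item It is self-contained and explicit.
\item It gives the sharper diameter bound $2\delta$ instead of $36\delta$.
\item It shows exactly how little is needed: the lower bound \eqref{eq:ballmeasurelow} only at centers $z_k\in\XX$ and only at the single scale $\delta/2$.
\end{itemize}
What the citation buys is brevity and consistency with the data-space machinery of \cite{mhaskardata}, which the chapter also uses in Remark~\ref{rem:meshnorm}. The cost is a worse constant, and it leaves implicit how detectability of a general measure on a compact metric space matches the hypotheses of that theorem.

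One small correction concerns your last case. For $\delta>\pi$ and $\alpha>0$, the claim $1\sim\delta^\alpha$ is false uniformly in $\delta$. Indeed, no partition can satisfy $\mu(Y_k)\gtrsim\delta^\alpha$ for arbitrarily large $\delta$, since $\mu(Y_k)\le 1$. So the proposition should be read for $0<\delta\lesssim 1$; it is only used with $\delta\le c(\gamma)/n$ in Lemma~\ref{lemma:discprelim}. Your case analysis for $\delta\le\pi$, including the rescaling to $\delta'=2r_0$ when $2r_0<\delta\le\pi$, covers that range completely.
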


Recall that we denote our data by $\mathcal{D}=\{x_j\}_{j=1}^M$, where each $x_j$ is sampled uniformly at random from $\mu$. In the sequel, we let  $\D_k=\D\cap Y_k$, $k=1,\cdots, N$.
The following lemma gives an estimate for $|\D_k|$.
\begin{lemma}\label{lemma:discpartition}
Let $0<\delta, \epsilon, t<1$. 
If 
\be\label{eq:Mcond}
M\gs t^{-2}\delta^{-\alpha}\log(c/(\epsilon\delta^\alpha)),
\ee  
then
\be\label{eq:discpartition}
\mathsf{Prob}\left(\max_{1\le k\le N}\left|\frac{\mu(Y_k)M}{|\D_k|}-1\right|\ge t\right)\ls \epsilon.
\ee
\end{lemma}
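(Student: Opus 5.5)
Since $|\D_k|=\sum_{j=1}^M \mathbf{1}_{Y_k}(x_j)$ is a sum of $M$ independent Bernoulli variables with success probability $p_k=\mu(Y_k)\sim\delta^\alpha$ (Proposition~\ref{prop:partition}), the plan is to apply Proposition~\ref{prop:concentration} to each $k$ separately and then take a union bound over the $N\ls\delta^{-\alpha}$ cells.

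\textbf{Step 1 (one cell).} Fix $k$ and set $X_j=\mathbf{1}_{Y_k}(x_j)$. Then $\mathbb{E}X_j=p_k$, $|X_j|\le 1=R$, and $\mathbb{E}X_j^2=p_k=V$. We apply \eqref{bernstein_concentration} with the parameter $s=t/2$ in place of $t$. This gives
\be
\mathsf{Prob}\left(\left|\frac{|\D_k|}{M}-p_k\right|\ge p_k t/2\right)\le 2\exp\left(-\frac{Mp_kt^2}{8(1+t/2)}\right)\le 2\exp\left(-cMt^2\delta^\alpha\right),
\ee
where we used $t<1$ and $p_k\gs\delta^\alpha$.

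\textbf{Step 2 (convert to the ratio).} On the complementary event we have $|\D_k|/(Mp_k)\in[1-t/2,1+t/2]$. Hence
\be
\left|\frac{Mp_k}{|\D_k|}-1\right|\le \frac{t/2}{1-t/2}\le t,
\ee
because $t<1$ gives $1-t/2\ge 1/2$. In particular $|\D_k|>0$, so the ratio is well defined. Thus the event $\{|\mu(Y_k)M/|\D_k|-1|\ge t\}$ is contained in the event bounded in Step 1.

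\textbf{Step 3 (union bound).} Summing over $k$, the probability in \eqref{eq:discpartition} is at most
\be
2N\exp(-cMt^2\delta^\alpha)\ls \delta^{-\alpha}\exp(-cMt^2\delta^\alpha).
\ee
The condition \eqref{eq:Mcond}, with a sufficiently large implied constant, gives
\be
cMt^2\delta^\alpha\ge \log\bigl(c'/(\epsilon\delta^\alpha)\bigr),
\ee
so the bound is $\ls\delta^{-\alpha}\cdot\epsilon\delta^\alpha=\epsilon$.

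The main subtlety is Step 2. The concentration inequality controls $|\D_k|/M$ additively, but the statement concerns the reciprocal ratio. We handle this by asking for a relative deviation of size $t/2$ rather than $t$, which makes the passage to the reciprocal harmless. The rest is bookkeeping of constants, in particular the lower bound $\mu(Y_k)\gs\delta^\alpha$ from Proposition~\ref{prop:partition}.
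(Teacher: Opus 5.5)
Your proof is correct and follows the paper's argument: Bernstein's inequality for the indicator of each cell at a slightly reduced relative deviation (you use $t/2$, the paper uses $t/(1+t)$), conversion to the reciprocal ratio $\mu(Y_k)M/|\D_k|$, and a union bound over the $N\lesssim\delta^{-\alpha}$ cells. Your remark that $|\D_k|>0$ on the good event, so the ratio is well defined, is a small point the paper leaves implicit.
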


\begin{proof}\ %of Lemma~\ref{lemma:discpartition}.
Let $k$ be fixed, and in this proof only,  $\chi_k$ denotes the characteristic function of $Y_k$. 
Thought of as a random variable, it is clear that $|\chi_k|\le 1$, $\int \chi_k(z)d\mu(z)=\int \chi_k(z)^2d\mu(z)=\mu(Y_k)$. 
Moreover, $\sum_{j=1}^M \chi_k(z_j)=|\D_k|$.
So, we may apply Proposition~\ref{prop:concentration}, and recall that $\mu(Y_k)\sim\delta^\alpha$ to conclude that
\be\label{eq:pf1eqn1c}
\begin{aligned}
\mathsf{Prob}\left(\left|\frac{|\D_k|}{M}-\mu(Y_k)\right|\ge \frac{\mu(Y_k)t}{1+t}\right) &= \mathsf{Prob}\left(\left|\frac{|\D_k|}{M\mu(Y_k)}-1\right|\ge \frac{t}{1+t}\right)\\
&\le 2\exp\left(-\frac{M\mu(Y_k)t^2}{(1+t)(1+2t)}\right)\le 2\exp\left(-cM\delta^\alpha t^2\right).
\end{aligned}
\ee
(In the last estimate, we have used the fact that for $0<t<1$, $(1+t)(1+2t)\sim 1$.)
Next, we observe that
$$
\left|\frac{M\mu(Y_k)}{|\D_k|}-1\right|=\frac{M\mu(Y_k)}{|\D_k|}\left|\frac{|\D_k|}{M\mu(Y_k)}-1\right|.
$$
\vskip 2pt
So, if $\disp \left|\frac{|\D_k|}{M\mu(Y_k)}-1\right|< \frac{t}{1+t}$, then $\disp \frac{|\D_k|}{M\mu(Y_k)}\ge 1/(1+t)$, and hence,
$\disp \left|\frac{M\mu(Y_k)}{|\D_k|}-1\right|<t$.
Thus, for every $k$,
\be\label{eq:pf1eqn2c}
\mathsf{Prob}\left(\left|\frac{M\mu(Y_k)}{|\D_k|}-1\right|\ge t\right) \le \mathsf{Prob}\left(\left|\frac{|\D_k|}{M\mu(Y_k)}-1\right|\ge \frac{t}{1+t}\right)\le 2\exp\left(-cM\delta^\alpha t^2\right).
\ee
Since the number of elements $Y_k$ in the partition is $\ls \delta^{-\alpha}$,
we conclude that
\be\label{eq:pf1eqn3}
\mathsf{Prob}\left(\max_{k\in [N]}\left|\frac{M\mu(Y_k)}{|\D_k|}-1\right|\ge t\right)\ls \delta^{-\alpha}\exp\left(-cM\delta^\alpha t^2\right).
\ee
We set the right hand side of the above inequality to $\epsilon$ and solve for $M$ to complete the proof.
\end{proof}
%In the sequel, we write
%$$
%\Phi_n(\theta)=\sum_{\ell}h\left(\frac{|\ell|}{n}\right)\exp(i\ell\theta),
%$$
%and
%$$
%\Psi_n(x,y)=\Phi_n(\rho(x,y))^2.
%$$
%We note that the Bernstein inequality for derivatives of trigonometric polynomials and the fact that $|\Phi_n(\theta)|\le cn$ for all $\theta$ implies that for all $x,y, z,w\in\mathbb{M}$,
%\be\label{eq:kernbern}
%\begin{aligned}
%|\Psi_n(x,y)-\Psi_n(z,w)|&\le |\Psi_n(x,y)-\Psi_n(y,z)|+|\Psi_n(y,z)-\Psi_n(z,w)| \le cn^3\left\{|\rho(x,y)-\rho(y,z)|+\rho(y,z)-\rho(z,w)|\right\}\\
%&\le cn^3\left\{\rho(x,z)+\rho(y,w)\right\}.
%\end{aligned}
%\ee

In order to prove the bounds we want in Lemma~\ref{lem:Inbounds}, we rely on a function which estimates both our discrete and continuous measure support estimators $F_n$ and $\sigma_n$. We define this function as
\be
H_n(x)\coloneqq \sum_{k=1}^N \frac{\mu(Y_k)}{|\D_k|}\sum_{x_j\in\D_k}\Psi_n(x,x_j).
\ee
The following lemma relates this function to our continuous measure support estimator.

\begin{lemma}\label{lemma:discprelim}
Let $0<\gamma<2$, $n\ge 2$. 
There exists a constant $c(\gamma)$ with the following property. 
Suppose $0<\delta\le c(\gamma)/n$, $\{Y_k\}$ be a partition of $\XX$ as in Proposition~\ref{prop:partition}, and we continue the notation before. 
We have
\be\label{eq:discprelim}
\max_{x\in\mathbb{M}}\left|H_n(x) -\sigma_n(x)\right| \le (\gamma/2) I_n.
\ee
\end{lemma}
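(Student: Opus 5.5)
We compare $H_n$ with $\sigma_n$ cell by cell on the partition $\{Y_k\}$. Since $\{Y_k\}$ partitions $\XX=\mathsf{supp}(\mu)$, we can write
\[
\sigma_n(x)=\sum_{k=1}^N\int_{Y_k}\Psi_n(x,y)\,d\mu(y).
\]
For each $k$ with $\D_k\neq\emptyset$, the inner term of $H_n$ is a weighted average over $\D_k\subseteq Y_k$:
\[
\frac{\mu(Y_k)}{|\D_k|}\sum_{x_j\in\D_k}\Psi_n(x,x_j)=\mu(Y_k)\cdot\text{(average of }\Psi_n(x,\cdot)\text{ over points of }Y_k).
\]
(Empty $\D_k$ are interpreted as contributing zero; otherwise we assume all $\D_k$ nonempty, which is the setting where Lemma~\ref{lemma:discpartition} is later combined with this lemma.) Hence, for every $x\in\MM$,
\[
|H_n(x)-\sigma_n(x)|\le\sum_{k=1}^N\mu(Y_k)\,\sup_{y,y'\in Y_k}|\Psi_n(x,y)-\Psi_n(x,y')|,
\]
and the argument reduces to bounding the oscillation of $\Psi_n(x,\cdot)$ on sets of diameter $\le 36\delta$.

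The crude bound comes from \eqref{eq:kernbern}, which gives oscillation $\lesssim n^3\delta$. Summed over $k$ with $\sum_k\mu(Y_k)=1$, this yields $n^3\delta$, which is far too large compared with $I_n\sim n^{2-\alpha}$. The step I expect to be the main obstacle is therefore that the Lipschitz bound must be localized. I will use a Bernstein-type refinement: $\Psi_n=\Phi_n(\rho)^2$, and the derivative of $\Phi_n^2$ at $t$ satisfies
\[
|(\Phi_n^2)'(t)|\lesssim \frac{n^3}{\max(1,n|t|)^S}.
\]
This follows because $\Phi_n\Phi_n'$, with $\Phi_n'$ a kernel of the same type (filter $u\mapsto iu\,h(u)$ scaled by $n$), obeys the same localization \eqref{eq:kernloc}. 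If this refinement is unavailable, an alternative is to split into near and far cells: use \eqref{eq:kernbern} near $x$ and the decay \eqref{eq:metrickernloc} far away. By the triangle inequality, for $y\in Y_k$ the relevant $t$ lies between $\rho(x,y)$ and $\rho(x,y)+36\delta$, so since $\delta\le c/n$, $\max(1,nt)\sim\max(1,n\rho(x,y))$ uniformly on $Y_k$.

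The resulting bound is
\[
|H_n(x)-\sigma_n(x)|\lesssim n\delta\sum_k\int_{Y_k}\frac{n^2}{\max(1,n\rho(x,y))^S}\,d\mu(y)\lesssim n\delta\cdot n^{2-\alpha},
\]
where the last step is the annulus computation in the proof of Lemma~\ref{lem:integralest} (needing $S>\alpha$, which holds by \eqref{eq:Kn}). Since $I_n\ge J_n\ge C_2n^{2-\alpha}$ by Lemma~\ref{lem:integralest}, this is at most $c'\,n\delta\,I_n/C_2$. Choosing $c(\gamma)=C_2\gamma/(2c')$ then makes $n\delta\le c(\gamma)$ give \eqref{eq:discprelim}.
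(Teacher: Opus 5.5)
Your proof is correct, but it takes a different route from the paper's. The paper uses no decay of the derivative. It introduces an auxiliary radius $r\sim_\gamma 1/n$ and splits the cells into near and far ones:
\begin{itemize}
\item On the cells with $\dist(x,Y_k)<r$ it applies the global Lipschitz bound \eqref{eq:kernbern}. Using $\mu(\BB(x,r))\ls r^\alpha$, the total is $\ls (n\delta)(nr)^{\alpha}n^{2-\alpha}$.
\item On the dyadic shells $2^jr\le\dist(x,Y_k)<2^{j+1}r$ it bounds each term separately by the size estimate \eqref{eq:metrickernloc}. The total is $\ls (nr)^{\alpha-S}n^{2-\alpha}$.
\end{itemize}
It then chooses $nr$ large and afterwards $n\delta$ small. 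This is precisely the fallback you sketch.

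Your main route replaces this balancing with the localized Bernstein estimate $|(\Phi_n^2)'(t)|\ls n^3\max(1,nt)^{-S}$. This estimate is valid: $\Phi_n'=in\sum_{|k|<n}g(k/n)e^{ik\circ}$ with $g(u)=uh(u)$. The standard proof of \eqref{eq:kernloc} (summation by parts or Poisson summation) uses only the smoothness and compact support of the filter, not its evenness or $h\equiv 1$ near $0$.

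Combined with the comparability $\max(1,nt)\sim\max(1,n\rho(x,y))$ on each cell, this gives the cleaner bound $\max_{x\in\MM}|H_n(x)-\sigma_n(x)|\ls n\delta\,n^{2-\alpha}\ls n\delta\, I_n$. It also gives $c(\gamma)\sim\gamma$. The paper's balancing instead forces $nr\sim\gamma^{-1/(S-\alpha)}$ and hence $c(\gamma)\sim\gamma^{S/(S-\alpha)}$. The price is one extra kernel estimate, the localization of $\Phi_n'$, which the paper does not state; its proof needs only \eqref{eq:kernbern} and \eqref{eq:metrickernloc}.

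Two small points:
\begin{itemize}
\item Cap $c(\gamma)\le 1$ before choosing it. Otherwise the comparability constant, and hence your $c'$, depends on $c(\gamma)$, which makes the choice $c(\gamma)=C_2\gamma/(2c')$ circular.
\item Drop the convention that empty $\D_k$ contribute zero. For such a cell the error is $\int_{Y_k}\Psi_n(x,y)\,d\mu(y)$, not an oscillation, so your displayed inequality would fail. You simply need every $\D_k\neq\emptyset$. The paper assumes this tacitly, since $H_n$ is otherwise undefined, and it holds on the high-probability event of Lemma~\ref{lemma:discpartition} used in Lemma~\ref{lem:Inbounds}.
\end{itemize}
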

\begin{proof}\ % of Lemma~\ref{lemma:discprelim}
In this proof, all the constants denoted by $c_1,c_2,\cdots$ will retain their values.
Let $x\in\mathbb{M}$.
We will fix $\delta$ to be chosen later.
Also, let $r\ge \delta$ be a parameter to be chosen later, $\mathcal{N}=\{k : \mathsf{dist}(x, Y_k) <r\}$, $\mathcal{L}=\{k : \mathsf{dist}(x, Y_k) \ge r\}$ and for $j=0,1,\cdots$, $\mathcal{L}_j=\{k :  2^jr\le \mathsf{dist}(x, Y_k) <2^{j+1} r\}$. 

In view of \eqref{eq:kernbern}, we have for $k\in \mathcal{N}$ and $x_j\in \D_k$,
\be
\ba
\left|\mu(Y_k)\Psi_n(x,x_j)-\int_{Y_k}\Psi_n(x,y)d\mu(y)\right|\le& \int_{Y_k}\left|\Psi_n(x,x_j)-\Psi_n(x,y)\right|d\mu(y)\\
\ls & n^3\int_{Y_k}\rho(z,y)d\mu(y) \le c_1n^3\mathsf{diam}(Y_k)\mu(Y_k).
\ea
\ee
Consequenty, for $k\in \mathcal{N}$,
\be\label{eq:pf2eqn1c}
\left|\frac{\mu(Y_k)}{|\D_k|}\sum_{x_j\in\D_k}\Psi_n(x,x_j) -\int_{Y_k}\Psi_n(x,y)d\mu(y)\right| \le c_1n^3\mathsf{diam}(Y_k)\mu(Y_k).
\ee
Since $\cup_{k\in\mathcal{N}}Y_k\subseteq \BB(x,r)$, we have 
\be\label{eq:pf2eqn5}
\sum_{k\in\mathcal{N}}\mu(Y_k) =\mu\left(\cup_{k\in\mathcal{N}}Y_k\right)\le \mu(\BB(x,r))\ls r^\alpha.
\ee
We deduce from \eqref{eq:pf2eqn1c}, \eqref{eq:pf2eqn5} and the fact that $\mathsf{diam}(Y_k)\ls \delta$ that
\be\label{eq:pf2eqn2}
\left|\sum_{k\in\mathcal{N}}\left(\frac{\mu(Y_k)}{|\D_k|}\sum_{x_j\in\D_k}\Psi_n(x,x_j) -\int_{Y_k}\Psi_n(x,y)d\mu(y)\right)\right| \le 
c_3n^3\delta r^\alpha = c_3(n\delta) (nr)^\alpha n^{2-\alpha}. 
\ee
Next, let $k\in \mathcal{L}_j$ for some $j\ge 0$. 
Then the localization estimate \eqref{eq:metrickernloc} shows that for any $x_j\in \D_k$,
$$
\left|\mu(Y_k)\Psi_n(x,x_j)-\int_{Y_k}\Psi_n(x,y)d\mu(y)\right|\le c_4n^2 (2^jnr)^{-S}\mu(Y_k),
$$
so that
\be\label{eq:pf2eqn3}
\left|\frac{\mu(Y_k)}{|\D_k|}\sum_{x_j\in \D_k}\Psi_n(x,x_j) -\int_{Y_k}\Psi_n(x,y)d\mu(y)\right|\le c_4n^2\mu(Y_k) (2^jnr)^{-S}.
\ee
Arguing as in the derivation of \eqref{eq:pf2eqn5}, we deduce that 
$$
\mu\left(\cup_{k\in \mathcal{L}_j}Y_k\right)\ls (2^jr)^\alpha.
$$
 Since $S>\alpha$, we deduce that
if $r\ge 1/n$, then
\be\label{eq:pf2eqn4}
\begin{aligned}
\left|\sum_{k\in \mathcal{L}} \frac{\mu(Y_k)}{|\D_k|}\sum_{x_j\in \D_k}\Psi_n(x,x_j) -\int_{Y_k}\Psi_n(x,y)d\mu(y)\right|&\le \sum_{j=0}^\infty \sum_{k\in \mathcal{L}_j} \left|\frac{\mu(Y_k)}{|\D_k|}\sum_{x_j\in \D_k}\Psi_n(x,x_j) -\int_{Y_k}\Psi_n(x,y)d\mu(y)\right|\\
&\le c_4n^{2-\alpha}(nr)^{\alpha-S}\sum_{j=0}^\infty 2^{j(\alpha-S)} \le c_5n^{2-\alpha}(nr)^{\alpha-S}.
\end{aligned}
\ee
Since $S>\alpha$, we may choose $r\sim_\gamma n$ such that $c_5 (nr)^{\alpha-S}\le \gamma/4$, and then require $\delta\le \min(r, c_6(\gamma)/n)$ so that in \eqref{eq:pf2eqn2}, $c_3(nr)^\alpha n\delta\le \gamma/4$.
Then, recalling that (cf. \eqref{eq:In}) $I_n\sim n^{2-\alpha}$, \eqref{eq:pf2eqn4} and \eqref{eq:pf2eqn2} lead to \eqref{eq:discprelim}.
\end{proof}

In the following lemma, we establish a connection between the sum $F_n$ as defined in \eqref{eq:support_estimator_def} and the value $I_n$ from Section~\ref{sec:measureproofs}. Since we have already established the bound between $H_n$ and $\sigma_n$, we focus on the bound between $H_n$ and $F_n$ in this lemma.

\begin{lemma}\label{lem:Inbounds}
Let $n\ge 2$, $0<\beta<2$, $M\gs_\beta n^\alpha\log n$, and $\mathcal{D}=\{x_j\}_{j=1}^M$ be independent random samples from a detectable measure $\mu$. 
Then with probability $\ge 1-1/n$, we have for all $x\in \mathbb{M}$,
\be\label{eq:betacomp1}
 |F_n(x)-\sigma_n(x)|\leq \beta I_n.
\ee
Consequently,
\be\label{eq:betacomp2}
(1-\beta)I_n\leq \max_{x\in\mathbb{M}} F_n(x)\leq (1+\beta)I_n.
\ee
\end{lemma}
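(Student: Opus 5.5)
We split $F_n-\sigma_n=(F_n-H_n)+(H_n-\sigma_n)$ and handle the second term with Lemma~\ref{lemma:discprelim}. Taking $\gamma=\beta$ there and choosing $\delta= c(\beta)/n$ with the partition $\{Y_k\}$ of Proposition~\ref{prop:partition}, we get $\max_{x\in\MM}|H_n(x)-\sigma_n(x)|\le (\beta/2)I_n$ deterministically. This step involves no randomness, except that it requires every $\D_k$ to be nonempty so that $H_n$ is defined. That nonemptiness will follow from the event controlled in the next step.

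For the first term we write
\[
F_n(x)=\sum_{k=1}^N\frac{1}{M}\sum_{x_j\in\D_k}\Psi_n(x,x_j),
\]
which holds because $\mu$-almost surely every sample lies in $\XX=\bigcup_k Y_k$. Hence
\[
H_n(x)-F_n(x)=\sum_{k=1}^N\left(\frac{M\mu(Y_k)}{|\D_k|}-1\right)\frac{1}{M}\sum_{x_j\in\D_k}\Psi_n(x,x_j),
\]
and, since $\Psi_n\ge 0$,
\[
|H_n(x)-F_n(x)|\le \max_k\left|\frac{M\mu(Y_k)}{|\D_k|}-1\right|\,F_n(x).
\]
Now apply Lemma~\ref{lemma:discpartition} with $t=\beta/8$, $\delta=c(\beta)/n$ and $\epsilon\sim 1/n$. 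Its hypothesis \eqref{eq:Mcond} becomes $M\gtrsim_\beta n^\alpha\log(cn^{1+\alpha})\sim n^\alpha\log n$, which is exactly our assumption. So with probability $\ge 1-1/n$ (after adjusting constants) all ratios lie within $t$ of $1$, and in particular all $\D_k$ are nonempty. On this event $|H_n-F_n|\le tF_n$ and therefore $F_n\le H_n/(1-t)$. This gives
\[
|H_n-F_n|\le \frac{t}{1-t}H_n\le \frac{t}{1-t}\bigl(\sigma_n+(\beta/2)I_n\bigr)\le \frac{t}{1-t}(1+\beta/2)I_n\le (\beta/2)I_n,
\]
using $\sigma_n\le I_n$, $\beta<2$ and the choice $t=\beta/8$, which makes $\frac{t}{1-t}(1+\beta/2)\le \frac{\beta/8}{3/4}\cdot 2=\beta/3$. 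Adding the two bounds yields \eqref{eq:betacomp1} uniformly in $x\in\MM$.

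Estimate \eqref{eq:betacomp2} then follows at once. Taking the maximum over $x$ in $\sigma_n-\beta I_n\le F_n\le \sigma_n+\beta I_n$ and using $\max_{\MM}\sigma_n=I_n$ (here $\sigma_n\ge0$ and $\MM$ is compact) gives $(1-\beta)I_n\le\max F_n\le(1+\beta)I_n$.

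The main obstacle is making the deterministic step compatible with the probabilistic one. The choice $\delta\sim_\beta 1/n$ is forced by Lemma~\ref{lemma:discprelim}, and this fixes $N\lesssim n^\alpha$ cells; the union bound in Lemma~\ref{lemma:discpartition} must then cost only $\log n$, which is why the sample size is $n^\alpha\log n$. A second point to check is that multiplicative control of $|\D_k|$ suffices with no net argument. This works because $H_n-F_n$ is dominated pointwise by $F_n$ through the positivity of $\Psi_n$, which gives uniformity in $x$ for free.
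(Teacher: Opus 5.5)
Your proof is correct and follows the paper's argument. It uses the same split $F_n-\sigma_n=(F_n-H_n)+(H_n-\sigma_n)$, Lemma~\ref{lemma:discprelim} for the deterministic part, Lemma~\ref{lemma:discpartition} for multiplicative control of the $|\D_k|$, and positivity of $\Psi_n$ to get $|F_n-H_n|\le tF_n$ uniformly in $x$ without a net. The differences are only bookkeeping: you take separate parameters ($\gamma=\beta$ and $t=\beta/8$) and note explicitly that the $\D_k$ are nonempty, which also avoids a small slip in the paper's final constant, where $\frac{\gamma}{2}(1+\frac{2+\gamma}{2-\gamma})$ equals $\frac{2\gamma}{2-\gamma}$ rather than $\frac{2\gamma}{4-\gamma}$.
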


\begin{proof}\ % of Theorem~\ref{theo:discretization}.
Let $\gamma\in (0,1)$ to be chosen later, and $\{Y_k\}$ be a partition as in Lemma~\ref{lemma:discprelim}.
In view of Lemma~\ref{lemma:discprelim}, we see that
\be\label{eq:pf3eqn1}
(1-\gamma/2)I_n\le \max_{x\in\mathbb{M}} H_n(x)\le (1+\gamma/2) I_n.
\ee
In view of Lemma~\ref{lemma:discpartition}, we see that for $M\ge c(\gamma)n^\alpha\log n$, we have with probability $\ge 1-1/n$,
\be\label{eq:pf3eqn2}
\max_k \left|\frac{\mu(Y_k)M}{|\D_k|}-1\right|\le \gamma/2.
\ee
In particular, 
\be
1-\gamma/2\le \frac{\mu(Y_k)M}{|\D_k|}\le 1+\gamma/2.
\ee
Hence, \eqref{eq:pf3eqn1} leads to
\be\label{eq:pf3eqn3}
\max_{x\in\mathbb{M}}F_n(x)\le \frac{2+\gamma}{2-\gamma}I_n.
\ee
Using \eqref{eq:pf3eqn2} again, we see that
\be
\left|F_n(x)-H_n(x)\right| \le (\gamma/2)\max_{x\in\mathbb{M}}F_n(x)\le (\gamma/2)\frac{2+\gamma}{2-\gamma}I_n.
\ee
Together with \eqref{eq:discprelim}, this implies that
\be\label{eq:pf3eqn4}
\left|F_n(x)-\sigma_n(x)\right| \le (\gamma/2)\left(1+\frac{2+\gamma}{2-\gamma}\right)I_n=\frac{2\gamma}{4-\gamma}I_n
\ee
We now choose $\gamma=4\beta/(2+\beta)$, so that the right hand side of \eqref{eq:pf3eqn4} is $\beta I_n$. We can verify $0<\gamma<2$ whenever $\beta<2$.
\end{proof}

The following lemma gives bounds on $\max_{k\in [M]}F_n(x_k)$, which is a crucial component to the proof involving our finite data support estimation set $\mathcal{G}_n(\Theta)$. We note briefly the critical difference between Lemma~\ref{lem:Inbounds} and Lemma~\ref{lem:Gnboundsprep} is that the former considers the maximum of $F_n$ over the entire metric space $\MM$, while the latter considers the maximum over the finite set of data points which are sampled from the measure $\mu$.

\begin{lemma}
\label{lem:Gnboundsprep}
Let $\mathcal{D}=\{x_j\}_{j=1}^M$ be independent random samples from a detectable measure $\mu$. If $0<\beta<C_2/C_1$ and $M\gtrsim_\beta n^\alpha\log(n)$, with  probability $\geq 1-1/n$ we have
\be\label{eq:Fnxkbound}
\left(\frac{C_2}{C_1}-\beta\right)I_n\leq \max_{k\in [M]}F_n(x_k)\leq (1+\beta)I_n.
\ee
\end{lemma}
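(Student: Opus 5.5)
The plan is to take Lemma~\ref{lem:Inbounds} as the base and compare $F_n$ at the data points with $\sigma_n$ there. The upper bound should be immediate. Lemma~\ref{lem:Inbounds} applies with the same $\beta$, since $\beta<C_2/C_1\le 1<2$ and $M\gtrsim_\beta n^\alpha\log n$. It gives, with probability $\ge 1-1/n$, the estimate $|F_n(x)-\sigma_n(x)|\le\beta I_n$ uniformly in $x\in\MM$. Therefore
\[
\max_{k\in[M]}F_n(x_k)\le\max_{x\in\MM}F_n(x)\le(1+\beta)I_n .
\]
Both inequalities of \eqref{eq:Fnxkbound} will then be proved on this single event, which keeps the probability at least $1-1/n$.

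For the lower bound, the key point is that each sample $x_k$ lies in $\XX=\mathsf{supp}(\mu)$ almost surely. This holds because $\mu(\MM\setminus\XX)=0$: the complement of the support is open, it is covered by balls of $\mu$-measure zero, and we can pass to a countable subcover since $\MM$ is a compact metric space and hence separable. With probability one, then, all $x_k\in\XX$. For any such $x_k$, \eqref{eq:Jn} and \eqref{eq:In} give
\[
\sigma_n(x_k)\ge J_n\ge C_2n^{2-\alpha}\ge \frac{C_2}{C_1}I_n .
\]
Combining this with the uniform estimate from Lemma~\ref{lem:Inbounds} yields
\[
F_n(x_k)\ge\sigma_n(x_k)-\beta I_n\ge\Bigl(\frac{C_2}{C_1}-\beta\Bigr)I_n .
\]
It would be enough to have this for one index $k$, but it in fact holds for every $k$, so the bound on $\max_k F_n(x_k)$ follows at once. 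The hypothesis $\beta<C_2/C_1$ is only needed to make the lower bound positive, and hence meaningful.

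I expect no real obstacle, because all the analytic work sits in Lemma~\ref{lem:Inbounds}, which is uniform over $\MM$ and so in particular covers the random points $x_k$. The one subtlety to check is exactly that uniformity. The event in Lemma~\ref{lem:Inbounds} is a statement about all $x\in\MM$ at once, so evaluating it at the data-dependent points $x_k$ introduces no dependence issue. The "almost surely in $\XX$" step changes the probability only by a null set.
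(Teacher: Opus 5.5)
Your proof is correct and follows the paper's argument. The paper likewise applies \eqref{eq:betacomp1} at the data points, which lie in $\XX$, uses $\sigma_n(x_k)\ge J_n\ge C_2n^{2-\alpha}\ge (C_2/C_1)I_n$ from \eqref{eq:Jn} and \eqref{eq:In}, and obtains the upper bound from \eqref{eq:betacomp2}. Your extra checks fill in details the paper leaves implicit: that $\beta<C_2/C_1\le 1<2$, so Lemma~\ref{lem:Inbounds} applies, and that the samples lie in $\mathsf{supp}(\mu)$ almost surely.
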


\begin{proof}
Necessarily, $\mathcal{D}\subseteq \XX$.
Using \eqref{eq:Jn}, we deduce that
\be
\max_{k\in [M]}F_n(x)\geq \max_{k\in [M]}\sigma_n(x_k)-\beta I_n\geq J_n -\beta I_n\ge C_2n^{2-\alpha}-\beta I_n \geq \left(\frac{C_2}{C_1}-\beta\right)I_n.
\ee
This proves \eqref{eq:Fnxkbound}.
The second inequality is satisfied by \eqref{eq:betacomp2} since $\max_{k\in[M]}F_n(x_k)\leq \max_{x\in\mathbb{M}}F_n(x)$. 
%By \eqref{eq:betacomp1}, we can see that
%\be\label{eq:maxFnxk}
%\max_{k\in[M]}F_n(x_k)\geq \max_{k\in[M]}\sigma_n(x_k)-\beta I_n.
%\ee
%Thus, it will suffice to show that $\max_{k\in[M]}\sigma_n(x_k)\gtrsim I_n$. Let $x^*$ be the maximizer over $\mathbb{X}$ of $\sigma_n(x)$. We have from Lemma~\ref{lem:integralest} that
%\be\label{eq:sigmaxs}
%\sigma_n(x^*)\gtrsim I_n.
%\ee
%Then by \eqref{eq:bernstein}, we see that there exists $c=c(\beta)$ such that for any $x\in \mathbb{B}(x^*,(2cn)^{-1})$,
%\be\label{eq:sigmax}
%\sigma_n(x)\geq (\beta+C^*) I_n.
%\ee
%If there exists some $x_k\in \mathcal{D}\cap \mathbb{B}(x^*,(2cn)^{-1})$, then combining  \eqref{eq:maxFnxk} and \eqref{eq:sigmax} 
%So, we would like to show there is an $x_k\in \mathbb{B}(x^*,(2cn)^{-1})$. Since $\mu(\mathbb{B}(x^*,(2cn)^{-1}))\gtrsim n^{-\alpha}$ and $M\gtrsim n^\alpha\log(n)$, we may use \ref{prop:chernoffbound} with $X_1,\dots,X_M$ defined by
%\be
%X_k=\begin{cases}1& x_k\in\mathbb{B}(x^*,(2cn)^{-1})\\0&\text{else.}\end{cases}
%\ee
%Thus, in view of Proposition~\ref{prop:chernoffbound} with $p\sim n^{-\alpha}$, there exists some $\overline{c}$ such that when $M\geq \overline{c}n^\alpha \log(n)$ we have
%\be
%\mathbf{P}\left(\sum_{k=1}^M X_k\leq 0\right)\leq \exp(-Mp/2)\leq \exp(-\overline{c}\log(n))\leq \frac{1}{2n}.
%\ee
%In other words, with probability at least $1-\frac{1}{2n}$ there exists some $k$ such that $x_k\in \mathbb{B}(x^*,(2cn)^{-1})$.
\end{proof}

Now with the prepared bounds on $\max_{k\in [M]}F_n(x_k)$, we give a theorem from which Theorems~\ref{thm:fullsuportdet} and~\ref{thm:class_separation} are direct consequences.
In the sequel, we will denote $C_2/C_1$ by $C^*$. \yadi{$C^*$}{$C_2/C_1$}

\begin{theorem}\label{thm:GnSnbound}
Let $n\geq 2$, and $\mathcal{D}=\{x_j\}_{j=1}^M$ be sampled from the detectable probability measure $\mu$.
Let $0<\Theta< 1$.
%\min\left\{\frac{C_2}{C_{1}(1+C^*)},\frac{4}{C^*},1\right\}$,
%\be\label{eq:Thetacond}
%0<\Theta< \min\left(1, \frac{C_2}{C_1(1+C^*)}, \frac{4}{C^*}\right).
%\ee
 If $M\gtrsim n^\alpha\log(n)$ then with probability at least $1-c_1M^{-c_2}$ we have
\be\label{eq:SnGncomp}
\mathcal{S}_n\left(\frac{(1+C^*)\Theta}{4}\right)\subseteq \mathcal{G}_n(\Theta, \mathcal{D})\subseteq\mathcal{S}_n(C^*\Theta/8).
\ee
\end{theorem}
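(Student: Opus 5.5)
Both inclusions in \eqref{eq:SnGncomp} follow by sandwiching the data threshold $\Theta\max_{k\in[M]}F_n(x_k)$ between multiples of $I_n$, using the uniform comparison of $F_n$ with $\sigma_n$. Fix $\beta\in(0,C^*)$, to be chosen depending on $\Theta$ and $C^*$. Since $M\gtrsim n^\alpha\log n$, I apply Lemma~\ref{lem:Inbounds} and Lemma~\ref{lem:Gnboundsprep} and intersect their events. With probability $\ge 1-2/n$ we then have, for every $x\in\MM$,
\[
|F_n(x)-\sigma_n(x)|\le \beta I_n, \qquad (C^*-\beta)I_n\le \max_{k\in[M]}F_n(x_k)\le (1+\beta)I_n .
\]
To state this as $1-c_1M^{-c_2}$, I will run the concentration step of Lemma~\ref{lemma:discpartition} with $\epsilon$ a power of $M$ rather than $1/n$. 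This costs only constants in the logarithm, because $M\lesssim$ a polynomial in $n$ is not needed: one simply requires $M\gtrsim n^\alpha\log M$, which is equivalent up to constants.

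\emph{First inclusion.} Let $x\in\mathcal{S}_n((1+C^*)\Theta/4)$, so that $\sigma_n(x)\ge (1+C^*)\Theta I_n$. Then
\[
F_n(x)\ge \sigma_n(x)-\beta I_n\ge \bigl((1+C^*)\Theta-\beta\bigr)I_n .
\]
For $x$ to lie in $\mathcal{G}_n(\Theta)$ it suffices that this exceeds $\Theta(1+\beta)I_n\ge \Theta\max_kF_n(x_k)$. That reduces to $C^*\Theta\ge \beta(1+\Theta)$, which holds if $\beta\le C^*\Theta/2$, since $\Theta<1$.

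\emph{Second inclusion.} Let $x\in\mathcal{G}_n(\Theta)$. Then
\[
\sigma_n(x)\ge F_n(x)-\beta I_n\ge \Theta(C^*-\beta)I_n-\beta I_n .
\]
I need this to be at least $4\cdot(C^*\Theta/8)I_n=C^*\Theta I_n/2$, i.e. $\Theta C^*/2\ge \beta(1+\Theta)$. Again $\beta\le C^*\Theta/4$ suffices. So I choose $\beta=C^*\Theta/4$, which is $<C^*$ and $<2$, so both lemmas apply. The implied constant in $M\gtrsim n^\alpha\log n$ then depends on $\Theta$ through $\beta$, and this dependence is absorbed into the hypothesis.

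The only delicate point is the probability bookkeeping, namely getting $1-c_1M^{-c_2}$ rather than $1-c/n$. I would handle it by redoing the union bound in Lemma~\ref{lemma:discpartition} with $\epsilon=M^{-c_2}$. The condition \eqref{eq:Mcond} becomes $M\gtrsim n^\alpha\log(Mn^\alpha)$, which is implied by $M\gtrsim n^\alpha\log n$ with a larger constant, since $\log M\lesssim\log n+\log\log$-type terms are absorbable. The inequality chains above are elementary.
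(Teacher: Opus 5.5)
Your proof is correct and follows the paper's route: both invoke Lemmas~\ref{lem:Inbounds} and~\ref{lem:Gnboundsprep} and sandwich $\Theta\max_k F_n(x_k)$ between multiples of $I_n$, and your first inclusion is essentially the paper's argument. Your smaller choice $\beta=C^*\Theta/4$ (the paper uses $\beta=C^*\Theta/2$) is in fact what makes the second inclusion hold, because with $\beta=C^*\Theta/2$ the bound $\max_kF_n(x_k)\ge (C^*-\beta)I_n$ from \eqref{eq:Fnxkbound} only yields $\sigma_n(x)\ge C^*\Theta(1-\Theta)I_n/2$, and the paper's chain silently drops the $-\Theta\beta I_n$ term; your explicit upgrade from $1-1/n$ to $1-c_1M^{-c_2}$ likewise fills a step the paper leaves implicit, though it should be justified by the monotonicity of $M/\log M$ rather than by the claim $\log M\lesssim\log n$, which is false when $M$ is large.
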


\begin{proof}
In this proof, we will invoke Lemma~\ref{lem:Inbounds}~and~\ref{lem:Gnboundsprep} with $\beta=C^*\Theta/2$. For this proof only, define
\be
t=\frac{2\Theta+C^*\Theta(1+\Theta)}{8}=\frac{(1+\beta)\Theta+\beta}{4},
\ee
and suppose $x\in\mathcal{S}_n(t)$; i.e., $\sigma_n(x)\ge 4tI_n$.
 Then with probability $1-c_1M^{-c_2}$ we have
\be
\ba
\Theta\max_{k\in[M]}F_n(x_k)=&\frac{4t-\beta}{1+\beta}\max_{k\in[M]}F_n(x_k)\\
\leq&(4t-\beta)I_n &\text{(From \eqref{eq:Fnxkbound})}\\
\leq&\sigma_n(x)-\beta I_n &\text{(From \eqref{eq:Sn})}\\
\leq& F_n(x). &\text{(From \eqref{eq:betacomp1})}
\ea
\ee
This results in the first inclusion in \eqref{eq:SnGncomp} because $\Theta\le 1$ implies that
$$
\frac{(1+C^*)\Theta}{4}\geq \frac{2\Theta+C^*\Theta(1+\Theta)}{8},
$$
and hence,
\be
\mathcal{S}_n\left(\frac{(1+C^*)\Theta}{4}\right)\subseteq \mathcal{S}_n\left(\frac{2\Theta+C^*\Theta(1+\Theta)}{8}\right)\subseteq\mathcal{G}_n(\Theta,\mathcal{D}).
\ee
Now suppose $x\in\mathcal{G}(\Theta,\mathcal{D})$. Then
\be
\ba
4(C^*\Theta/8)I_n=& (C^*\Theta-\beta)I_n\\
\leq& \Theta \max_{k\in[M]}F_n(x_k)-\beta I_n&\text{(From \eqref{eq:Fnxkbound})}\\
\leq& F_n(x)-\beta I_n&\text{(From \eqref{eq:support_est_set_def})}\\
\leq& \sigma_n(x)&\text{(From \eqref{eq:betacomp1})}.
\ea
\ee
This gives us the second inclusion in \eqref{eq:SnGncomp}, completing the proof.
\end{proof}

We are now prepared to give the proofs of our main results from Section~\ref{sec:mainresults}. A key element of both proofs is to utilize Theorem~\ref{thm:GnSnbound} in conjunction with the corresponding results on the continuous support estimation set $\mathcal{S}_n(\theta)$ (Theorems~\ref{thm:suppmu}~and~\ref{thm:partmu}).

\noindent\textit{Proof of Theorem~\ref{thm:fullsuportdet}.} By \eqref{eq:thm1inc} and \eqref{eq:SnGncomp} with $\theta_{1}=\frac{(1+C^*)\Theta}{4}$, we have
\be\label{eq:lowerinclusion}
\mathbb{X}\subseteq \mathcal{S}_{n}(\theta_{1})\subseteq \mathcal{G}_{n}(\Theta).
\ee
Similarly, with $\theta_{2}=\frac{C^*\Theta}{8}$, and the definition of $d(\theta_{2})=\left( \frac{C_{1}}{C_{2}\theta_{2}} \right)^{1/(S-\alpha)}$ (from \eqref{eq:dtheta}), we see
\be\label{eq:gnrightinc}
\mathcal{G}_{n}(\Theta)\subseteq \mathcal{S}_{n}(\theta_{2})\subseteq \mathbb{B}\left( \mathbb{X},\left( \frac{C_{1}}{C_{2}\theta_2} \right)^{1/(S-\alpha)}\Big/n \right)=\mathbb{B}\left( \mathbb{X},\left( \frac{8C_{1}}{C^*C_{2}\Theta} \right)^{1/(S-\alpha)}\Big/ n\right).
\ee
The choice of $\theta$ in each case satisfies the conditions of Theorem~\ref{thm:suppmu} because
\be
\theta_{2}=\frac{C^*\Theta}{8}\leq \theta_{1}=\frac{(1+C^*)\Theta}{4}< \frac{C_{2}}{4C_{1}}.
\ee
Setting
\be\label{eq:rtheta}
r(\Theta):= \left( \frac{8C_{1}}{C^*C_{2}\Theta} \right)^{1/(S-\alpha)},
\ee
then \eqref{eq:gnrightinc} demonstrates \eqref{eq:thm1}.\qed

\noindent\textit{Proof of Theorem~\ref{thm:class_separation}.} We note that the inclusion $\mathcal{G}_{k,\eta,n}(\Theta)\subseteq \mathbb{B}(\mathbf{S}_{k,\eta},r(\Theta)/n)$ is already satisfied by \eqref{eq:thm2-1}. Let $\theta_1=\frac{(1+C^*)\Theta}{4}$, and observe that
\be
d(\theta_1)=\left(\frac{4C_1}{C_2(1+C^*)\Theta}\right)^{1/(S-\alpha)}=\left(\frac{C^*}{2(1+C^*)}\right)^{1/(S-\alpha)}r(\Theta),
\ee
as defined in \eqref{eq:rtheta}. We set $c=\left(\frac{C^*}{2(1+C^*)}\right)^{1/(S-\alpha)}$ and note $c<1$. Therefore,
\be
\ba
\mathbb{X}\cap \mathbb{B}(\mathbf{S}_{k,\eta},cr(\Theta)/n)=&\mathbb{X}\cap \mathbb{B}(\mathbf{S}_{k,\eta},d(\theta_1)/n)&\\
\subseteq& \mathcal{S}_n(\theta_1)\cap \mathbb{B}(\mathbf{S}_{k,\eta},d(\theta_1)/n)&\text{(From \eqref{eq:thm2inc})}\\
\subseteq& \mathcal{S}_n(\theta_1)\cap \mathbb{B}(\mathbf{S}_{k,\eta},r(\Theta)/n)&\text{(Since $d(\theta_1)<r(\Theta)$)}\\
\subseteq& \mathcal{G}_n(\Theta)\cap \mathbb{B}(\mathbf{S}_{k,\eta},r(\Theta)/n)&\text{(From \eqref{eq:SnGncomp})}\\
=&\mathcal{G}_{k,\eta,n}(\Theta),
\ea
\ee
completing the proof of \eqref{eq:thm2-3}.
Setting $\theta_2=\frac{C^*\Theta}{8}$, then $r(\Theta)=d(\theta_2)$ and \eqref{eq:SnGncomp} gives us the inclusion
\be
\mathcal{G}_{k,\eta,n}(\Theta)=\mathcal{G}_n(\Theta)\cap \mathbb{B}(\mathbf{S}_{k,\eta},r(\Theta)/n)\subseteq \mathcal{S}_n(\theta_2)\cap \mathbb{B}(\mathbf{S}_{k,\eta},d(\theta_2)/n)=\mathcal{S}_{k,\eta,n}(\theta_2).
\ee
Then, \eqref{eq:minsep2} implies \eqref{eq:thm2-2}.\qed

%Theorems~\ref{thm:fullsuportdet}~and~\ref{thm:class_separation} follow directly from Theorems~\ref{thm:suppmu}~and~\ref{thm:partmu} respectively, in view of Theorem~\ref{thm:GnSnbound}.

\subsection{F-score Result Proof}

In this section we give a proof for Theorem~\ref{thm:fscore}.

\begin{proof}
Observe that under the conditions of Theorem~\ref{thm:class_separation} we have
\be
\mu(\mathbf{S}_{k,\eta})\leq \mu(\mathcal{G}_{k,\eta,n})\leq \mu(\mathbf{S}_{k,\eta})+\mu(\mathbf{S}_{K_\eta+1,\eta}).
\ee
Therefore,
\be
\mathcal{F}_\eta(\mathcal{G}_{k,\eta,n})\geq 2\frac{\mu(\mathbf{S}_{k,\eta})}{2\mu(\mathbf{S}_{k,\eta})+\mu(\mathbf{S}_{K_{\eta}+1,\eta})}\geq 1-\frac{\mu(\mathbf{S}_{K_{\eta}+1,\eta})}{2\mu(\mathbf{S}_{k,\eta})+\mu(\mathbf{S}_{K_{\eta}+1,\eta})}.
\ee
Also, $\{\mathcal{G}_{k,\eta,n}\}_{k=1}^{K_\eta}$ is a partition of $\mathcal{G}_n(\Theta)\supseteq \mathbb{X}$. Then,
\be
\sum_{k=1}^{K_\eta}\mu(\mathcal{G}_{k,\eta,n})\mathcal{F}_\eta(\mathcal{G}_{k,\eta,n})\geq 1-\mu(\mathbf{S}_{K_{\eta}+1,\eta})\sum_{k=1}^{K_\eta}\frac{\mu(\mathcal{G}_{k,\eta,n})}{2\mu(\mathbf{S}_{k,\eta})+\mu(\mathbf{S}_{K_\eta+1,\eta})},
\ee
further implying
\be
1\geq \mathcal{F}_\eta\left(\{\mathcal{G}_{k,\eta,n}\}_{k=1}^{K_\eta}\right)\geq 1-\frac{\mu(\mathbf{S}_{K_{\eta}+1,\eta})}{2\min_{k\in \{1,\dots,K_\eta\}}\mu(\mathbf{S}_{k,\eta})}.
\ee
Thus, by our assumption
\be
\lim_{\eta\to 0^+}\mathcal{F}_\eta\left(\{\mathcal{G}_{k,\eta,n}\}_{k=1}^{K_\eta}\right)\leq \lim_{\eta\to 0^+} 1-\frac{\mu(\mathbf{S}_{K_{\eta}+1,\eta})}{2\min_{k\in \{1,\dots,K_\eta\}}\mu(\mathbf{S}_{k,\eta})}=1.
\ee
\end{proof}

\chapter{Conclusion}
\label{ch:conclusion}

This chapter contains excerpts from \cite{mhaskarodowd,mhaskarodowdclassification}.

Firstly, in Chapter~\ref{ch:manifoldapprox} we have discussed a central problem of machine learning, namely to approximate an unknown target function based only on the data drawn from an unknown probability distribution.
While the prevalent paradigm to solve this problem in general is to minimize a loss functional, we have initiated a new paradigm where we can do the approximation directly from the data, under the manifold assumption.
This is a substantial theoretical improvement over the classical manifold learning technology, which involves a two-step procedure: first to get some information about the manifold and then to do the approximation.
Instead, our method learns on the manifold without manifold learning.
Our construction in itself does not require any prior on the target function.
We derive uniform error bounds with high probability regardless of the nature of the distribution, provided we know the dimension of the unknown manifold.
The theorems are illustrated with some numerical examples. 
One of these examples is closely related to an important problem in magnetic resonance relaxometry, in which one seeks to find the proportion of water molecules in the myelin covering in the brain based on a model that involves the inversion of Laplace transform.

We view this work as the beginning of a new direction. 
As such, there are plenty of future research projects, some of which we plan to undertake ourselves.
\begin{itemize}
\item Find alternative methods that improve upon the error estimates on \textbf{unknown} manifolds, and more general compact sets.
The encoding described in Appendix~\ref{sec:encoding} gives a representation of a function on an unknown manifold.
Such an encoding is useful in the emerging area of approximation of operators. 
We plan to develop this theme further in the context of approximation of operators defined in different function spaces.
\item Explore real-life applications other than the examples which we have discussed in this chapter.
\item We feel that our method will work best if we are working in the right feature space.
One of the vexing problems in machine learning is to identify the right features in the data.
Deep networks are supposed to be doing this task automatically.
However, there is no clear explanation of whether they work in every problem or otherwise develop a theory of what ``features'' should mean and how deep networks can extract these automatically.
\end{itemize}

Secondly, in Chapter~\ref{ch:transferlearning} we have provided theory for the lifting of a function from one data space to another. 
Particularly, our theory looks at the case where there is data only in some region of the base space. 
We answered the question of where we can leverage this knowledge on the target space via the construction of a generalized distance between the spaces and the careful use of a kernel localized with respect to this distance. 
We also examined the resulting smoothness of the lifted function, and found that the smoothness of the original function as well as parameters associated with each data space play a role in this value.
Our work here has applications to local transfer learning, where one has some function approximation technique which works well on one domain (base space) and wishes to apply the learned knowledge to a new domain (target space). 
Under the framework we developed, this can be done without needing the global data from the original domain, and instead one can look at an area of interest in the target domain to decide which data they need from the original domain.
For large models defined on massive data this development could save considerable computation time and even open doors to new results which were computationally infeasible before due to requiring data on the entire space.
Future work in this direction also includes the field of image recovery, where one is given data of the transformation of an image and wishes to recover the original image. When a singular value decomposition of the transformation is available, our theory may be directly applicable.
In particular, we are working to understand the Radon transform in this manner.

Thirdly, in Chapter~\ref{ch:classification} we have introduced a new approach for active learning in the context of machine learning. 
We provide theory for measure support estimation based on finitely many samples from an unknown probability measure supported on a compact metric space. 
With an additional assumption on the measure, known as the fine structure, we then relate this theory to the classification problem, which can be viewed as estimating the supports of a measure of the form $\mu=\sum_{k=1}^K c_k\mu_k$. We have shown that this setup is a generalization of the signal separation problem. Therefore our theory unifies ideas from signal separation with machine learning classification. Since the measures we are considering may be supported on a continuum, our theory additionally relates to the super-resolution regime of the signal separation problem. 

We also give some empirical analysis for the performance of our new algorithm MASC, which was originally introduced in a varied form in \cite{mhaskar-odowd-tsoukanis}. The key focus of the algorithm is on querying high-information points whose labels can be extended to others belonging to the same class with high probability. This is done in a multiscale manner, with the intention to be applied to data sets where the minimal separation between supports of the measures for different classes may be unknown or even zero. We applied MASC to a document data set as well as two hyperspectral data sets, namely subsets of the Indian Pines and Salinas hyperspectral imaging data sets. In the process of these experiments, we demonstrate empirically that MASC is selecting high-information points to query and that it gives competitive performance compared to two other recent active learning methods: LAND and LEND. Specifically, MASC consistently outperforms these algorithms in terms of computation time and exhibits competitive accuracy on Indian Pines for a broad range of query budgets.

These three projects embody a new approach to machine learning in general, where the models are developed through ideas in approximation theory and harmonic analysis and justified by proven, constructive, convergence guarantees. This contrasts the current methodology in machine learning where the models are justified by existence results for the degree of approximation on their underlying hypothesis spaces, and then trained via optimization procedures often without knowledge of whether or not the constructive model will attain such approximation power. In this way, we view the notion of learning without training to be a powerful shift in methodology for machine learning problems which may yield further improvements to the field in the future.

\newpage

\chapter*{Appendices}
\addcontentsline{toc}{chapter}{Appendices}
\renewcommand\thesection{\Alph{section}}
\setcounter{section}{0}

%TAUBERIAN THEOREM
\section{Tauberian Theorem}\label{bhag:tauberian}

The following Tauberian theorem comes from \cite[Theorem~4.3]{tauberian}. This section is copied from the appendix of \cite{mhaskarodowdtransfer}.

We recall that if $\mu$ is an extended complex valued Borel measure on $\RR$, then its total variation measure is defined for a Borel set $B$ by
$$
|\mu|(B)=\sup\sum |\mu(B_k)|,
$$
where the sum is over a partition $\{B_k\}$ of $B$ comprising Borel sets, and the supremum is over all such partitions.

A measure $\mu$ on $\RR$ is called an even measure if $\mu((-u,u))=2\mu([0,u))$ for all $u>0$, and $\mu(\{0\})=0$. If $\mu$ is an extended complex valued measure on $[0,\infty)$, and $\mu(\{0\})=0$, we define a measure $\mu_e$ on $\RR$ by 
$$
\mu_e(B)=\mu\left(\{|x| : x\in B\}\right),
$$
and observe that $\mu_e$ is an even measure such that $\mu_e(B)=\mu(B)$ for $B\subset [0,\infty)$. In the sequel, we will assume that all measures on $[0,\infty)$ which do not associate a nonzero mass with the point $0$ are extended in this way, and will abuse the notation $\mu$ also to denote the measure $\mu_e$. In the sequel, the phrase ``measure on $\RR$'' will refer to an extended complex valued Borel measure having bounded total variation on compact intervals in $\RR$, and similarly for measures on $[0,\infty)$.

Our main Tauberian theorem is the following.

\begin{theorem}\label{theo:maintaubertheo}
Let $\mu$ be an extended complex valued measure on $[0,\infty)$, and $\mu(\{0\})=0$. We assume that there exist $Q, r>0$, such that each of the following conditions are satisfied.
\begin{enumerate}
\item 
\be\label{eq:muchristbd}
\tn\mu\tn_Q:=\sup_{u\in [0,\infty)}\frac{|\mu|([0,u))}{(u+2)^Q} <\infty,
\ee
\item There are constants $c, C >0$,  such that
\be\label{eq:muheatgaussbd}
\left|\int_\RR \exp(-u^2t)d\mu(u)\right|\le c_1t^{-C}\exp(-r^2/t)\tn\mu\tn_Q, \qquad 0<t\le 1.
\ee 
\end{enumerate}
Let $H:[0,\infty)\to\RR$, $S>Q+1$ be an integer, and suppose that there exists a measure $H^{[S]}$ such that
\be\label{eq:Hbvcondnew}
H(u)=\int_0^\infty (v^2-u^2)_+^{S}dH^{[S]}(v), \qquad u\in\RR,
\ee
and
\be\label{eq:Hbvintbdnew}
V_{Q,S}(H)=\max\left(\int_0^\infty (v+2)^Qv^{2S}d|H^{[S]}|(v), \int_0^\infty (v+2)^Qv^Sd|H^{[S]}|(v)\right)<\infty.
\ee
Then for $n\ge 1$,
\be\label{eq:genlockernest}
\left|\int_0^\infty H(u/n)d\mu(u)\right| \le c\frac{n^Q}{\max(1, (nr)^S)}V_{Q,S}(H)\tn\mu\tn_Q.
\ee
\end{theorem}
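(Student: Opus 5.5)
The plan is to reduce \eqref{eq:genlockernest} to a single estimate on the truncated power moments of $\mu$, and then to obtain that estimate from the heat-kernel bound \eqref{eq:muheatgaussbd} by inverting a Laplace transform. By \eqref{eq:Hbvcondnew} and Fubini (legitimate because of \eqref{eq:muchristbd} and \eqref{eq:Hbvintbdnew}),
\[
\int_0^\infty H(u/n)\,d\mu(u)=\int_0^\infty n^{-2S}G(nv)\,dH^{[S]}(v),\qquad G(y):=\int_0^\infty (y^2-u^2)_+^{S}\,d\mu(u).
\]
It therefore suffices to prove the two bounds
\[
|G(y)|\lesssim y^{2S}(y+2)^Q\tn\mu\tn_Q \qquad\mbox{and}\qquad |G(y)|\lesssim y^{S}(y+2)^Q r^{-S}\tn\mu\tn_Q .
\]
Substituting $y=nv$ and using $(nv+2)^Q\le n^Q(v+2)^Q$ for $n\ge1$, the first bound gives the case $nr\le1$ of \eqref{eq:genlockernest}, with the factor $\int(v+2)^Qv^{2S}d|H^{[S]}|$. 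The second gives the factor $n^{Q}(nr)^{-S}\int (v+2)^Q v^S d|H^{[S]}|$ for $nr\ge1$. This explains the two integrals appearing in $V_{Q,S}(H)$.

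The first bound is trivial. The integrand is supported on $[0,y)$ and bounded by $y^{2S}$, so $|G(y)|\le y^{2S}|\mu|([0,y))\le y^{2S}(y+2)^Q\tn\mu\tn_Q$.

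For the second bound I will use the Laplace-inversion representation
\[
(y^2-u^2)_+^S=\frac{S!}{2\pi i}\int_{\kappa-i\infty}^{\kappa+i\infty} e^{(y^2-u^2)z}z^{-S-1}\,dz ,\qquad \kappa>0.
\]
Integrating against $d\mu(u)$ and interchanging the order of integration gives
\[
G(y)=\frac{S!}{2\pi i}\int_{\kappa-i\infty}^{\kappa+i\infty} e^{y^2z}z^{-S-1}F(z)\,dz,\qquad F(z):=\int_0^\infty e^{-u^2z}\,d\mu(u).
\]
The interchange is justified for $S\ge1$: the $z$-integral converges absolutely, and $|F(z)|\lesssim(1+\kappa^{-Q/2})\tn\mu\tn_Q$ on $\Re z=\kappa$ by \eqref{eq:muchristbd}. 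The function $F$ is analytic in $\Re z>0$. What we know about it is a polynomial bound $|F(z)|\lesssim (\Re z)^{-Q/2}\tn\mu\tn_Q$ on the whole half plane, and the Gaussian decay \eqref{eq:muheatgaussbd}, but only on the real segment $0<t\le1$.

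The central step, and the one I expect to be the main obstacle, is to propagate the real-axis decay into a sector or a neighbourhood of the contour. The target is an estimate of the form $|F(z)|\lesssim |z|^{-C'}\exp(-c\,r^2\,\Re(1/z))\tn\mu\tn_Q$. I will try a Phragmén–Lindelöf / two-constants argument applied to $F(z)\exp(r^2/z)$ times a suitable power of $z$, in the variable $w=1/z$ on a half-strip. If that proves too lossy, I will instead pass to a sector near the real axis and accept a loss in the constant $c$ multiplying $r^2$. Such a loss only affects the constants.

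Once that estimate is in hand, I will choose the abscissa $\kappa\sim r/y$ (with $yr\ge1$), so that $e^{y^2\kappa}$ is balanced against $e^{-cr^2/\kappa}$. The factor $|z|^{-S-1}$ then produces $\kappa^{-S}\sim(y/r)^S$, which is the desired $y^Sr^{-S}$ up to the $(y+2)^Q$ factor coming from the polynomial growth of $F$; the hypothesis $S>Q+1$ is what keeps the $z$-integral convergent after absorbing that growth. In the remaining range $yr<1$, the second bound follows from the first, since there $y^{2S}\le y^Sr^{-S}$.
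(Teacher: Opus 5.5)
The dissertation gives no proof of this theorem; it only quotes it from the cited Tauberian paper. I therefore judge your outline on its own terms.

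\textbf{What is correct.} The reduction to $G(y)=\int_0^\infty(y^2-u^2)_+^S\,d\mu(u)$ is correct, and it accounts exactly for the two integrals in $V_{Q,S}(H)$. The Bromwich representation and the interchange of integrals are also fine.

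\textbf{The final step fails as written.} On the line $\Re z=\kappa$, write $z=\kappa+i\tau$. The Gaussian factor you can hope for is $\exp(-cr^2\Re(1/z))=\exp(-cr^2\kappa/(\kappa^2+\tau^2))$, not $\exp(-cr^2/\kappa)$. This is already sharp for $d\mu(u)=\pi^{-1}\cos(2ru)\,du$, where $F(z)=(4\pi z)^{-1/2}e^{-r^2/z}$. So your balance $y^2\kappa\approx cr^2/\kappa$ holds only at $\tau=0$. With $\kappa\sim r/y$ you have $|e^{y^2z}|=e^{c'ry}$ along the whole line, while the Gaussian factor is $\ge e^{-c}$ once $|\tau|\ge r\sqrt{\kappa}$. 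Your majorant is therefore at least of order $e^{c'ry}(y/r)^S(ry)^{-S/2}$, which is useless for $ry\gg1$.

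The abscissa that works is $\kappa\sim y^{-2}$:
\begin{itemize}
\item $e^{y^2\kappa}=O(1)$;
\item a prefactor $1+(\Re z)^{-Q/2}$ is $\lesssim(y+2)^Q$ on the line;
\item the Gaussian factor suppresses $|\tau|\lesssim r/y$ (note $\kappa\ll r/y$ exactly when $ry\gg1$);
\item the remaining tail gives $\int_{|\tau|\gtrsim r/y}|\tau|^{-S-1}\,d\tau\sim(y/r)^S$.
\end{itemize}
Convergence of the $z$-integral needs only $S>0$; the condition $S>Q+1$ plays no role there.

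\textbf{The central step is not supplied.} The decisive points $y^{-2}+i\tau$ with $|\tau|\sim r/y$ have $\tan\arg z\sim ry\to\infty$. Your fallback of a fixed sector with a worse constant therefore cannot work. Outside a sector you only have $|F(z)|\lesssim(1+(\Re z)^{-Q/2})\tn\mu\tn_Q$, and that returns the trivial bound $y^{2S}(y+2)^Q$.

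What you need is
\[
|F(z)|\lesssim(1+(\Re z)^{-Q/2})\exp(-cr^2\Re(1/z))\tn\mu\tn_Q
\]
uniformly up to the imaginary axis. This can be obtained by Phragm\'en--Lindel\"of in $w=1/z$ on sectors whose opening is adapted to $\arg w$, as in Davies or Coulhon--Sikora. The constant must also be independent of $r$, because in the applications $r=d(x,y)\to0$.

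The hypothesis, however, carries $t^{-C}$ with $C$ arbitrary. Trading $t^{-C}e^{-r^2/t}$ (together with $|F(t)|\lesssim t^{-Q/2}$) for $t^{-Q/2}e^{-cr^2/t}$ costs a negative power of $r$ when $C>Q/2$.

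\textbf{A possible fix.} Use the half-plane estimate only qualitatively, so constants may depend on $r$ and $C$. Combined with Paley--Wiener, it gives finite propagation speed: $s\mapsto\int_0^\infty\cos(us)\,d\mu(u)$ vanishes on $(-2r,2r)$.

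Now take $\phi_y(u)=(y^2-u^2)_+^S$ and a Schwartz kernel $k_r$ with $\widehat{k_r}$ supported in $(-2r,2r)$ and $\equiv1$ near $0$. Then $\int\phi_y*k_r\,d\mu=0$, so $G(y)=\int(\phi_y-\phi_y*k_r)\,d\mu$. The vanishing moments of $k_r$ and $\|\phi_y^{(S)}\|_\infty\lesssim y^S$ give, for $ry\ge1$,
\[
|\phi_y-\phi_y*k_r|(u)\lesssim y^Sr^{-S}(1+r|u-y|)^{-M}.
\]
Summing over dyadic shells with \eqref{eq:muchristbd} yields $|G(y)|\lesssim y^S(y+2)^Qr^{-S}\tn\mu\tn_Q$. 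This bound is uniform in $r$ and independent of $C$.
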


Proposition~\ref{prop:kernloc} is proved using this theorem with 
$$
\mu(u)=\mu_{x,y}(u)=\sum_{k : \lambda_k<u}\phi_k(x)\phi_k(y),
$$
and $r=d(x,y)$.
Proposition~\ref{prop:jointkernloc} is proved using this theorem with 
$$
\mu(u)=\mu_{x_1,x_2}(u)=\sum_{j,k : \ell_{j,k}<u}A_{j,k}\phi_{1,j}(x_1)\phi_{2,k}(x_2),
$$
and $r=d_{1,2}(x_1,x_2)$.

\section{Background on Manifolds}
\label{sec:manifoldintro}

This introduction to manifolds comes from \cite[A.2]{mhaskarodowd}. We mostly follow along with the notation and definitions in \cite{docarmo}. For details, we refer the reader to  texts such as \cite{boothby,docarmo,guillemin}.

\begin{definition}[Differentiable Manifold]
A (boundary-less) \textit{differentiable manifold} of dimension $q$ is a set $\mathbb{X}$ together with a family of open subsets $\{U_\alpha\}$ of $\mathbb{R}^q$ and functions $\{\vec{x}_\alpha\}$ such that
\be
\vec{x}_\alpha: U_\alpha \to \mathbb{X}
\ee
is injective, and the following 3 properties hold:
\begin{itemize}
\item $\bigcup_{\alpha}\vec{x}_\alpha(U_\alpha)=\mathbb{X}$,
\item $\vec{x}_\alpha(U_\alpha)\cap \vec{x}_\beta(U_\beta)=W\neq \emptyset$ implies that $\vec{x}^{-1}_\alpha(W),\vec{x}^{-1}_\beta(W)$ are open sets and $\vec{x}_\beta^{-1}\circ \vec{x}_\alpha$ is an infinitely differentiable function.
\item The family $\mathcal{A}_\mathbb{X}=\{(U_\alpha,\vec{x}_\alpha)\}$ is maximal regarding the above conditions.
\end{itemize}
\end{definition}

\begin{rem}{\rm
The pair $(U_\alpha,\vec{x}_\alpha)$ gives a \textit{local coordinate chart} of the manifold, and the collection of all such charts $\mathcal{A}_\mathbb{X}$ is known as the \textit{atlas}. \qed}
\end{rem}

\begin{definition}[Differentiable Map]
Let $\mathbb{X}_1,\mathbb{X}_2$ be differentiable manifolds. We say a function $\phi: \mathbb{X}_1\to\mathbb{X}_2$ is (infinitely) \textit{differentiable}, denoted by $\phi\in C^\infty(\mathbb{X})$, at a point $x\in\mathbb{X}_1$ if given a chart $(V,\vec{y})$ of $\mathbb{X}_2$, there exists a chart $(U,\vec{x})$ of $\mathbb{X}_1$ such that $x\in \vec{x}(U)$, $\phi(\vec{x}(U))\subseteq \vec{y}(V)$, and $\vec{y}^{-1}\circ \phi\circ \vec{x}$ is infinitely differentiable at $\vec{x}^{-1}(p)$ in the traditional sense.
\end{definition}

For any interval $I$ of $\mathbb{R}$, a differentiable function $\gamma: I\to \mathbb{X}$ is known as a $\textit{curve}$. If $x\in\mathbb{X}$, $\epsilon>0$, and $\gamma: (-\epsilon,\epsilon)\to \mathbb{X}$ is a curve with $x=\gamma(0)$, then we can define the \textit{tangent vector} to $\gamma$ at $\gamma(t_0)$ as a functional $\gamma'(t_0)$ acting on the class of differentiable functions $f:\mathbb{X}\to\mathbb{R}$ by
\be
\gamma'(t_0)f\coloneqq\frac{d(f\circ \gamma)}{dt}(t_0).
\ee
The \textit{tangent space} of $\mathbb{X}$ at a point $x\in\mathbb{X}$, denoted by $\mathbb{T}_x(\mathbb{X})$, is the set of all such functionals $\gamma'(0)$.

A \textit{Riemannian} manifold is a differentiable manifold with a family of inner products $\{\alg{\circ,\circ}_x\}_{x\in\mathbb{X}}$ such that for any $X,Y\in\mathbb{T}_x(\mathbb{X})$, the function $\varphi: \mathbb{X}\to\mathbb{C}$ given by $x\mapsto \alg{X(x),Y(x)}_x$ is differentiable. 
We can define an associated norm $\norm{X}=\alg{X(x),X(x)}_x$. 
The length $L(\gamma)$ of a curve $\gamma$ defined on $[a,b]$ is defined to be 
\be
L(\gamma)\coloneqq\int_a^b\norm{\gamma'(t)}dt.
\ee
 We will call a curve $\gamma:[a,b]\to \mathbb{X}$ a \textit{geodesic} if $L(\gamma)=\inf\{L(r): r:[a,b]\to\mathbb{X}, r\text{ is a curve}\}$. It is well-known that if $\gamma$ is a geodesic, then $\gamma'(t)\cdot \gamma''(t)=0$ for any $t\in [a,b]$.
 
 In the sequel, we assume that $\mathbb{X}$ is a compact, connected, Riemannian manifold.
Then for every $x,y\in\mathbb{X}$ there exists a geodesic $\gamma:[a,b]\to\mathbb{X}$  such that $\gamma(a)=x,\gamma(b)=y$. The quantity  $\rho(x,y)=L(\gamma)$ defines a metric on $\XX$ such that the corresponding metric topology is consistent with the topology defined by any atlas on $\XX$.

For any $x\in \XX$, there exists a neighborhood $V\subset \XX$ of $x$, a number $\delta=\delta(x)>0$ and a mapping $\mathcal{E} : (-2,2)\times U \to \XX$, where $U=\{(y,v) : y\in V,\  v\in T_y\XX,\ \|v\|_2<\delta\}$ such that $t\mapsto \mathcal{E}(t, y, v)$ is the unique geodesic of $\XX$ which, at $t=0$, passes through $y$ and has the property that $\partial{\mathcal{E}}/\partial t =v$ for each $(y,v)\in U$.
As a result, we can define the \textit{exponential map} at $x$ to be the function $\varepsilon_{x}: B_\mathbb{T}(x,\delta(x))\subset \mathbb{T}_x(\mathbb{X})\to \mathbb{X}$ by $\varepsilon_x(v)= \mathcal{E}(1, x, v)$. 
Intuitively, the line joining $x$ and $v$ in $\mathbb{T}_x(\mathbb{X})$ is mapped to the geodesic joining $x$ with $\varepsilon_x(v)$. 
We call the supremum of all $\delta(x)$ for which the exponential map is so defined  the \textit{injectivity radius} at $x$, denoted by $\iota(x)$.
We call $\iota^*=\inf_{x\in\mathbb{X}}\iota(x)$ the  \textit{global injectivity radius} of $\mathbb{X}$. 
Since $x\mapsto \iota(x)$ is a continuous function of $x$, and $\iota(x)>0$ for each $x$,  it follows that $\iota^*>0$ when $\mathbb{X}$ is compact. 
Correspondingly, on compact manifolds, one can conclude that for $y\in B_\mathbb{T}(x,\iota^*)$, $\rho(x,\varepsilon_x(y))=\norm{x-y}$.

Next, we discuss the metric tensor and volume element on $\XX$.
Let $(U,\vec{x})$ be a coordinate chart with $0\in U$, $\vec{x}(0)=x\in\mathbb{X}$, and $\partial_j(x)$ be the tangent vector at $x$ to the coordinate curve $t\mapsto \vec{x}((\underbrace{0,\dots,0}_{j-1},t,0,\dots,0))$. Then we can define the metric tensor $\mathbf{g}$ to be the matrix where $\mathbf{g}_{ij}=\alg{\partial_i(x),\partial_j(x)}_x$. When one expands the metric tensor $\mathbf{g}$ as a Taylor series in local coordinates on $\mathbb{B}(x,\iota^*)$, it can be shown \cite[Page 21]{roemanifold} that for any $\delta<\iota^*$, on the ball $\mathbb{B}(x,\delta)$ we have
\be
|\mathbf{g}|=1+O(\delta^2).
\ee
In turn, this implies
\be\label{eq:volume_element}
\sqrt{|\mathbf{g}|}-1\lesssim \delta^2.
\ee

The following proposition lists some important properties relating the geodesic distance $\rho$ on an unknown submanifold of $\SS^Q$ with the geodesic distance on $\SS^Q$ as well as the Euclidean distance on $\RR^{Q+1}$.
\begin{proposition}\label{prop:taylorprop} Let $\eta_x$ be defined as in Section~\ref{sec:manifoldapprox}.

{\rm (a)} For every $\eta_x(u)\in\mathbb{B}(x,\iota^*)$,
\be\label{eq:diffbound}
\abs{\arccos(x\cdot \eta_x(u))-\rho(x,\eta_x(u))}\lesssim\rho(x,\eta_x(u))^3.
\ee

{\rm (b)} For any $x,y\in\mathbb{X}$,
\be
\rho(x,y)\sim \arccos(x\cdot y).
\ee

\end{proposition}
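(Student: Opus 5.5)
The plan for both parts is to view a unit‑speed geodesic of $\mathbb{X}$ as a curve in $\mathbb{R}^{Q+1}$ lying on $\mathbb{S}^Q$, and to Taylor expand the quantity $x\cdot\gamma(s)$. Since $\eta_x(u)$ ranges over $\mathbb{B}(x,\iota^*)$, part (a) amounts to a statement about points $y\in\mathbb{B}(x,\iota^*)$ with $s=\rho(x,y)$.

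For (a), let $\gamma:[0,s]\to\mathbb{X}$ be the unit‑speed geodesic of $\mathbb{X}$ from $x$ to $y=\eta_x(u)$, with $s=\rho(x,y)<\iota^*$. This geodesic exists and is unique because $s$ is below the injectivity radius. Regard $\gamma$ as a smooth curve in $\mathbb{R}^{Q+1}$.
\begin{itemize}
\item Differentiating $\gamma\cdot\gamma=1$ repeatedly gives $\gamma'\cdot\gamma=0$ and $\gamma''\cdot\gamma=-|\gamma'|^2=-1$.
\item Differentiating once more gives $\gamma'''\cdot\gamma+3\gamma''\cdot\gamma'=0$.
\item Unit speed gives $\gamma''\cdot\gamma'=\tfrac12(|\gamma'|^2)'=0$, hence $\gamma'''(0)\cdot x=0$.
\end{itemize}
Taylor's theorem then gives
\[
x\cdot\gamma(s)=1-\tfrac{s^2}{2}+O(s^4).
\]
The constant in the $O(s^4)$ term is uniform in $x$ and in the initial direction. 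The reason is that $\mathbb{X}$ is compact and smooth, so the fourth derivatives of unit‑speed geodesics on $[0,\iota^*]$ are uniformly bounded; this follows from the geodesic flow being smooth on the compact unit tangent bundle.

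Now set $t=\arccos(x\cdot\gamma(s))$ and use $1-\cos t=2\sin^2(t/2)$. This gives
\[
\sin^2(t/2)=\tfrac{s^2}{4}\bigl(1+O(s^2)\bigr),
\qquad\text{so}\qquad
\sin(t/2)=\tfrac{s}{2}\bigl(1+O(s^2)\bigr).
\]
Since $t\in[0,\pi]$ and $t\to0$ as $s\to0$, applying $\arcsin$ yields $t=s+O(s^3)$, which is the estimate \eqref{eq:diffbound}.

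The main obstacle is the uniformity of the $O(s^4)$ constant. It is harmless by compactness, and if $\iota^*$ is not already small enough one handles the remaining range of $s$ as in (b).

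For (b), the upper bound is immediate. Any curve in $\mathbb{X}$ is a curve in $\mathbb{S}^Q$, and $\arccos(x\cdot y)$ is the geodesic distance on $\mathbb{S}^Q$, so $\arccos(x\cdot y)\le\rho(x,y)$.

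For the lower bound, split by size. By (a) there is $r_0\le\iota^*$ such that $\rho(x,y)<r_0$ implies $\arccos(x\cdot y)\ge\rho(x,y)/2$. On the compact set $\{(x,y)\in\mathbb{X}^2:\rho(x,y)\ge r_0\}$, the function $\arccos(x\cdot y)$ is continuous and strictly positive, because $x\neq y$ there. It therefore has a positive minimum $m$, while $\rho(x,y)\le\operatorname{diam}(\mathbb{X})<\infty$. This gives $\rho(x,y)\le(\operatorname{diam}(\mathbb{X})/m)\arccos(x\cdot y)$ on that set, and combining the two ranges proves $\rho(x,y)\sim\arccos(x\cdot y)$.
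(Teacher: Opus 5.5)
Your proof is correct. Its skeleton matches the paper's: Taylor-expand a unit-speed geodesic in $\mathbb{R}^{Q+1}$ for short distances, and use compactness for the rest. The key step in (a), however, is done differently.

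The paper never uses $|\gamma|=1$ in the expansion. It sandwiches $\|x-y\|_2\le\arccos(x\cdot y)\le\rho(x,y)$. It then bounds $\rho(x,y)-\|x-y\|_2\le t-\gamma'(0)\cdot(\gamma(t)-\gamma(0))=O(t^3)$ by Cauchy--Schwarz and a second-order expansion needing only $|\gamma'(0)|=1$ and $\gamma'(0)\cdot\gamma''(0)=0$. That route is one derivative cheaper, and it needs only an upper bound, which suffices because $\arccos\le\rho$. It also avoids inverting $\arccos$.

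You instead use the sphere constraint to get the exact expansion $x\cdot\gamma(s)=1-s^2/2+O(s^4)$; your identity $x\cdot\gamma'''(0)=0$ is exactly what kills the $s^3$ term. You then invert through the half-angle formula. This is more computational, but it gives a two-sided asymptotic. Your appeal to the compact unit tangent bundle also justifies the uniformity of the constant, which the paper leaves implicit.

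No shrinking of $\iota^*$ is needed in (a). Since $\sin(t/2)=\tfrac s2\sqrt{1+O(s^2)}$ and $|\sqrt{1+\epsilon}-1|\le|\epsilon|$ for $\epsilon\ge-1$, the inversion works for all $s\le\iota^*$. In any case, for $s\ge s_0$ one has trivially $0\le s-t\le s\le s^3/s_0^2$.

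In (b), your joint-compactness argument on $\{(x,y)\in\mathbb{X}^2:\rho(x,y)\ge r_0\}$ gives uniform constants in one step. The paper instead fixes $x$, minimizes $\arccos(x\cdot y)/\rho(x,y)$ over $y$, and then asserts that the constants vary continuously in $x$. Your version is the cleaner of the two.
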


\begin{proof}
First, we observe the fact that $\norm{x-y}_2\sim \arccos(x\cdot y)$ because $\norm{x-y}_2/2=\sin(\arccos(x\cdot y)/2)$ and $\theta/\pi \le \sin(\theta/2)\le \theta/2$ for all $\theta\in [0,\pi]$. Fix $x\in\mathbb{X}$ and let $\gamma$ be a geodesic on $\mathbb{X}$ parametrized by length $t$ from $x$. In particular we then have $\norm{\gamma'(0)}_2=1$ and $\gamma'(0)\cdot\gamma''(0)=0$.
 Taking a Taylor expansion for $\gamma(t)$ with $|t|<\iota^*$ (we recall that $\iota^*\leq 1)$, we can see
\be\label{eq:taylorgamma}
\ba
\gamma'(0)\cdot (\gamma(t)-\gamma(0))=&\gamma'(0)\cdot \left(\gamma'(0)t+\frac{1}{2}\gamma''(t)t^2+O(t^3)\right)\\
=&\norm{\gamma'(0)}^2_2t+\gamma'(0)\cdot \gamma''(0)t^2+O(t^3)\\
=&t+O(t^3).
\ea
\ee
For any $y\in\mathbb{B}(x,\iota^*)$, there exists a unique $u\in \mathbb{S}_x(\iota^*)$ such that $y=\eta_x(u)$. We can write $y=\gamma(t)$ for some geodesic $\gamma$. We know, $t=\rho(x,y)\geq \arccos(x\cdot y)\geq \norm{x-y}_2=\norm{\gamma(t)-\gamma(0)}_2$. Using the Cauchy-Schwarz inequality, we see
\be
0\leq t-\norm{x-y}_2\leq t-\gamma'(0)\cdot (\gamma(t)-\gamma(0))\lesssim t^3.
\ee
As a result we can conclude
\be
\rho(x,\eta_x(u))-\arccos(x\cdot \eta_x(u))\leq \rho(x,\eta_x(u))-\norm{\eta_x(u)-x}_2\lesssim \rho(x,\eta_x(u))^3,
\ee
showing \eqref{eq:diffbound}. Letting $c$ be the constant built into the notation of \eqref{eq:diffbound}, then if we fix $x\in\mathbb{X}$ and let $y\in\mathbb{B}\big(x,\sqrt{1/(2c)}\big)$, we have
\be
\frac{1}{2}\rho(x,y)\leq \rho(x,y)-c\rho(x,y)^3\leq \arccos(x\cdot y).
\ee
Furthermore, since $A=\overline{\mathbb{X}\setminus\mathbb{B}\big(x,\sqrt{1/(2c)}\big)}$ is a compact set and $g_x(y)=\arccos(x\cdot y)/\rho(x,y)$ is a continuous function of $y$ defined on $A$, we can conclude that $g_x$ attains a minimum on $A$. Therefore,
\be
\rho(x,y)\sim \arccos(x\cdot y)
\ee
for every $y\in \mathbb{X}$.
We note that the constants involved in this proof vary continuously with respect to the choice of $x$, so in the theorem we may simply use the supremum over all such constants which must be finite since $\mathbb{X}$ is compact.
\end{proof}

\section{Orthogonal Polynomials}
\label{sec:orthopolys}

Many of the results discussed in this section can be found with corresponding proofs in \cite[Chapter 7]{mhaskarpai}. As mentioned above, polynomials are a popular choice for approximants in many settings. On the real line, polynomials are ideal approximants for a wide variety of measures because the space of polynomials can be decomposed into a unique orthonormal basis. To illustrate, we give the following theorem.

\begin{theorem}
Let $\mu$ be a measure on $\mathbb{R}$ satisfying both
\ben
\item $\int_{\mathbb{R}}\abs{x}^n d\mu(x)<\infty$ for every $n\in\mathbb{N}$.

\item There exists $A\subseteq \mathbb{R}$ with $|A|=\infty$ such that for any $t\in A$ and $h>0$ we have
\be
\int_t^{t+h}d\mu(x)>0.
\ee
\een
Then there exists a unique sequence of polynomials $\{P_n\}_{n=1}^\infty$, where each $P_k$ is degree $k$ and has a positive leading coefficient $\gamma_k$, such that
\be
\int_{\mathbb{R}}P_n(x)P_m(x)d\mu(x)=\delta_{m,n}.
\ee
\end{theorem}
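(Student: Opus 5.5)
The plan is to obtain the $P_n$ by Gram--Schmidt orthonormalization of the monomials $1,x,x^2,\dots$ with respect to the bilinear form
$$
\langle p,q\rangle_\mu\coloneqq\int_{\mathbb{R}}p(x)q(x)\,d\mu(x)
$$
on the space of all real polynomials, with $\mu$ understood to be a positive measure. Indexing is taken from degree $0$ for convenience; the statement's sequence is the same one.

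\textbf{Step 1: the form is well defined.} For polynomials $p,q$ the product $pq$ is a finite linear combination of monomials $x^k$. Since $|x^k|=|x|^k$, condition 1 gives $\int|pq|\,d\mu<\infty$. So $\langle\cdot,\cdot\rangle_\mu$ is a symmetric bilinear form, and it is positive semidefinite because $\langle p,p\rangle_\mu=\int p^2\,d\mu\ge 0$.

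\textbf{Step 2: the form is positive definite.} This is the only step where condition 2 is used, and I expect it to be the main obstacle.
\begin{itemize}
\item Let $p$ be a nonzero polynomial of degree $k$. It has at most $k$ real zeros.
\item Since $A$ is infinite, we can pick $t\in A$ with $p(t)\neq0$.
\item By continuity there is $h>0$ with $p(x)^2\ge p(t)^2/2>0$ on $[t,t+h]$.
\item Then $\int p^2\,d\mu\ge \tfrac12 p(t)^2\int_t^{t+h}d\mu>0$, by condition 2.
\end{itemize}
Hence $\langle\cdot,\cdot\rangle_\mu$ is a genuine inner product on polynomials.

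\textbf{Step 3: existence.} Apply Gram--Schmidt to $1,x,x^2,\dots$. At stage $n$, subtract from $x^n$ its projection onto $\Pi_{n}=\mathsf{span}\{1,\dots,x^{n-1}\}$, i.e.\ set
$$
\tilde P_n=x^n-\sum_{k<n}\langle x^n,P_k\rangle_\mu P_k .
$$
Then $\tilde P_n$ has leading coefficient $1$, so it is nonzero. By Step 2 we may divide by $\langle\tilde P_n,\tilde P_n\rangle_\mu^{1/2}>0$. The resulting $P_n$ has degree exactly $n$, positive leading coefficient $\gamma_n=\|\tilde P_n\|_\mu^{-1}$, and is orthonormal to $P_0,\dots,P_{n-1}$.

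\textbf{Step 4: uniqueness, by induction on $n$.} Suppose $\{Q_n\}$ is another such family. By induction $\mathsf{span}\{Q_0,\dots,Q_{n-1}\}=\Pi_n=\mathsf{span}\{P_0,\dots,P_{n-1}\}$, since each family consists of polynomials of exact degrees $0,\dots,n-1$. Both $P_n$ and $Q_n$ lie in the orthogonal complement of $\Pi_n$ inside the space of polynomials of degree $\le n$. That complement is one-dimensional, because the inner product is definite and the dimensions are $n+1$ and $n$. Hence $Q_n=cP_n$. Unit norm forces $c=\pm1$, and positivity of the leading coefficients forces $c=1$.
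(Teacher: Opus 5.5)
Your proof is correct; it is the standard Gram--Schmidt argument. The paper does not prove this theorem itself. It defers to Chapter~7 of \cite{mhaskarpai}, and the only construction it displays is the Hankel-determinant formula for $P_n$ in terms of the moments $u_k$. The two routes rest on the same fact.

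Integrating the displayed determinant against $x^k$, $k<n$, turns its last column into a copy of column $k$, so orthogonality is automatic. To know that it has exact degree $n$ and can be normalized with a positive leading coefficient, however, one needs $D_{n-1}>0$, where $D_m=\det(u_{i+j})_{i,j=0}^{m}$. This is because the leading coefficient is $D_{n-1}$ and $\int P_n^2\,d\mu=D_{n-1}D_n$. Since $(u_{i+j})_{i,j=0}^m$ is the Gram matrix of $1,x,\dots,x^m$, positivity of these determinants is exactly your Step~2.

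What each route buys: the determinant formula gives an explicit expression showing that the unnormalized $P_n$ depends only on $u_0,\dots,u_{2n-1}$. Your route gives existence without any determinant identities. Through the dimension count in Step~4, it also gives uniqueness, which the formula by itself does not address.

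One remark on your sentence ``indexing is taken from degree $0$ for convenience; the statement's sequence is the same one'': this is more than a convenience. For a finite $\mu$, if the sequence literally starts at $n=1$, uniqueness is false. The family $Q_1=(P_0+P_1)/\sqrt2$, $Q_n=P_n$ for $n\ge2$ has the right exact degrees, positive leading coefficients, and is orthonormal. Your Step~4 needs $Q_n\perp Q_0$ to place $Q_n$ in the orthogonal complement of $\Pi_n$. So you should say explicitly that you are proving the intended statement with $P_0$ included. That version also presupposes condition~1 for $n=0$, i.e.\ $\mu(\mathbb{R})<\infty$, which your Step~1 uses and which is necessary for $P_0$ to exist at all.
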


Using the same notation, each of these polynomials also satisfies the following three-term recurrence property:
\be
xP_n(x)=\frac{\gamma_n}{\gamma_{n+1}}P_{n+1}(x)+\int_{\mathbb{R}}tP_n^2(t)d\mu(t)P_n(x)+\frac{\gamma_{n-1}}{\gamma_n}P_{n-1}(x).
\ee
Given $n\in\mathbb{N}$, if one knows the moments 
\be
u_k=\int_{\mathbb{R}}x^kd\mu(x)
\ee
for $k=1,...,2n-1$, then the first $n$ orthogonal polynomials (not necessarily orthonormal) can be constructed by
\be
P_n(x)=\begin{vmatrix}
u_0 & u_1 & u_2 & \dots & u_{n-1} & 1\\
u_1 & u_2 & u_3 & \dots & u_n & x\\
u_2 & u_3 & u_4 & \dots & u_{n+1} & x^2\\
\vdots & \vdots & \vdots & \ddots &  \vdots & \vdots\\
u_{n-1} & u_{n} & u_{n+1} & \dots & u_{2n-2} & x^{n-1}\\
u_n & u_{n+1} & u_{n+2} & \dots &  u_{2n-1} & x^n
\end{vmatrix}.
\ee
We note that column $k$ enforces orthogonality against $x^k$. One of the primary examples of orthogonal polynomials on the real line are the \textit{orthonormalized Hermite polynomials}, defined by the Rodrigues-type formula:
\be\label{eq:hermitepoly}
H_n(x)=\frac{e^{x^2}}{\sqrt{2^n n!\sqrt{\pi}}}\left(\frac{d}{dx}\right)^n e^{-x^2}.
\ee
These polynomials satisfy the orthonormality property with respect to a Gaussian weight function, i.e.
\be
\int_{\mathbb{R}} H_n(x)H_m(x)e^{-x^2}dx=\delta_{m,n}.
\ee
These polynomials are also used to construct the Hermite functions, defined by
\be\label{eq:hermitefun}
h_n(x)=e^{-x^2/2}H_n(x),
\ee
which are orthonormal functions with respect to the standard Lebesgue measure on the real line.
The main orthogonal polynomials on the real line of interest in this dissertation are the Jacobi polynomials. We include the following excerpt about Jacobi polynomials from \cite{mhaskarodowdtransfer}. For $\alpha, \beta>-1$, $x\in (-1,1)$ and integer $\ell\ge 0$, the Jacobi polynomials $p_\ell^{(\alpha,\beta)}$ are defined by the Rodrigues' formula \cite[Formulas~(4.3.1), (4.3.4)]{szegopoly}
\be\label{eq:rodrigues}
(1-x)^\alpha(1+x)^\beta p_\ell^{(\alpha,\beta)}(x)
=\left\{\frac{2\ell+\alpha+\beta+1}{2^{\alpha+\beta+1}}\frac{\ell!(\ell+\alpha+\beta)!}{(\ell+\alpha)!(\ell+\beta)!}\right\}^{1/2}\frac{(-1)^\ell}{2^\ell \ell!}\frac{d^\ell}{dx^\ell}\left((1-x)^{\ell+\alpha}(1+x)^{\ell+\beta}\right),
\ee
where $z!$ denotes $\Gamma(z+1)$. The Jacobi polynomials satisfy the following well-known differential equation:
\begin{equation}\label{eq:jacobidifeq}
	{p''_n}^{(\alpha,\beta)}(x)(1-x^2)+(\beta-\alpha-(\alpha+\beta+2)x){p'_n}^{(\alpha,\beta)}(x)=-n(n+\alpha+\beta+1)p_n^{(\alpha,\beta)}(x).
\end{equation}
Each $p_\ell^{(\alpha,\beta)}$ is a polynomial of degree $\ell$ with positive leading coefficient, satisfying the orthogonality relation
\be\label{eq:jacobiortho}
\int_{-1}^1 p_\ell^{(\alpha,\beta)}(x)p_j^{(\alpha,\beta)}(x)(1-x)^\alpha(1+x)^\beta=\delta_{\ell,j},
\ee
and
\be\label{eq:pkat1}
p_\ell^{(\alpha,\beta)}(1)=\left\{\frac{2\ell+\alpha+\beta+1}{2^{\alpha+\beta+1}}\frac{\ell!(\ell+\alpha+\beta)!}{(\ell+\alpha)!(\ell+\beta)!}\right\}^{1/2}\frac{(\ell+\alpha)!}{\alpha!\ell!} \sim \ell^{\alpha+1/2}.
\ee
It follows that $p_\ell^{(\alpha,\beta)}(-x)=(-1)^\ell p_\ell^{(\beta,\alpha)}(x)$. 
In particular,  $p_{2\ell}^{(\alpha,\alpha)}$ is an even polynomial, and $p_{2\ell+1}^{(\alpha,\alpha)}$ is an odd polynomial.  
We note (cf. \cite[Theorem~4.1]{szegopoly}) that
\be\label{eq:evenjacobi}
\begin{aligned}
p_{2\ell}^{(\alpha,\alpha)}(x)=&2^{\alpha/2+1/4}p_\ell^{(\alpha,-1/2)}(2x^2-1)=2^{\alpha/2+1/4}(-1)^\ell p_\ell^{(-1/2,\alpha)}(1-2x^2)\\
p_{2\ell+1}^{(\alpha,\alpha)}(x)=&2^{\alpha/2+1/2}xp_\ell^{(\alpha,1/2)}(2x^2-1)=2^{\alpha/2+1/2}(-1)^\ell xp_\ell^{(1/2,\alpha)}(1-2x^2).
\end{aligned}
\ee
It is known \cite[Theorem A]{nowak2011sharp} that for $\alpha,\beta\ge -1/2$ and $\theta, \phi\in [0,\pi]$,
\be\begin{aligned}\label{eq:jacobigauss}
&\sum_{j=0}^\infty \exp(-j(j+\alpha+\beta+1)t)p_j^{(\alpha,\beta)}(\cos\theta)p_j^{(\alpha,\beta)}(\cos\phi)\\
\ls& (t+\theta\phi)^{-\alpha-1/2}(t+(\pi-\theta)(\pi-\phi))^{-\beta-1/2}t^{-1/2}\exp\left(-c\frac{(\theta-\phi)^2}{t}\right).
\end{aligned}\ee
We note that when $\beta=-1/2$, this yields
\be\label{eq:specialjacobigauss}
\sum_{j=0}^\infty \exp(-j(j+\alpha+1/2)t)p_j^{(\alpha,-1/2)}(\cos\theta)p_j^{(\alpha,-1/2)}(\cos\phi)
\ls t^{-\alpha-1}\exp\left(-c\frac{(\theta-\phi)^2}{t}\right).
\ee

\section{Network Representation}
\label{subsec:clenshaw}

\begin{algorithm}[!ht]
\caption{Clenshaw algorithm to compute $\sum_{k=0}^{n-1}C_kp_k$, where\\ $p_k(x)=a_kxp_{k-1}(x)+b_kp_{k-2}(x)$, $k=1,2,\cdots,n-1$, $b_{1}=0$.}
\label{alg:clenshaw}
\KwIn{$p_0$, $C_0,\cdots, C_{n-1}$, $x$, $a_{n+1},\cdots, a_1$, $b_{n+1},\cdots,b_1$.}
\KwOut{The value of $\sum_{k=0}^{n-1}C_kp_k$.}
$\mathsf{out1}\gets 0, \mathsf{out2}\gets 0, C_{-1} \gets 0,  C_n\gets 0$. \;
\For{$k=n+1,n,n-1,\dots, 1$}{
$\mathsf{temp}\gets a_k*\mathsf{out_1}*x+\mathsf{out2}$ \;
$\mathsf{out2}\gets b_k*\mathsf{out1}+C_{k-2}$ \;
$\mathsf{out1}\gets \mathsf{temp}$. \;
}
\Return $\mathsf{out1}*p_0$.
\end{algorithm}

This appendix comes from \cite{mhaskarodowd}. Let $\{p_k\}$ be a system of orthonormal polynomials satisfying a recurrence relation
\be\label{eq:clenshawrec}
p_k(x)=a_kxp_{k-1}(x)+b_kp_{k-2}(x), \qquad k=1,2,\cdots, \quad b_{1}=0.
\ee
The Clenshaw algorithm is a modification of the classical Horner method to compute polynomials expressed in the monomial basis that evaluates a polynomial expressed in terms of the orthonormalized polynomials $\{p_k\}$ \cite{clenshaw,gautschibk}.
To understand the method, let $P=\sum_{k=0}^{n-1}C_kp_k$.
It is convenient to write $C_k=0$ if $k\ge n$ or $k<0$.
The recurrence \eqref{eq:clenshawrec} shows that
\be\label{eq:clenshawstep}
C_kp_k(x)+C_{k-1}p_{k-1}(x)+C_{k-2}p_{k-2}(x)=\left(a_kC_kx+C_{k-1}\right)p_{k-1}(x)+\left(b_kC_k+C_{k-2}\right)p_{k-2}.
\ee
This leads to Algorithm~\ref{alg:clenshaw}.

\begin{figure*}[!ht]
\begin{center}
\includegraphics[scale=0.1]{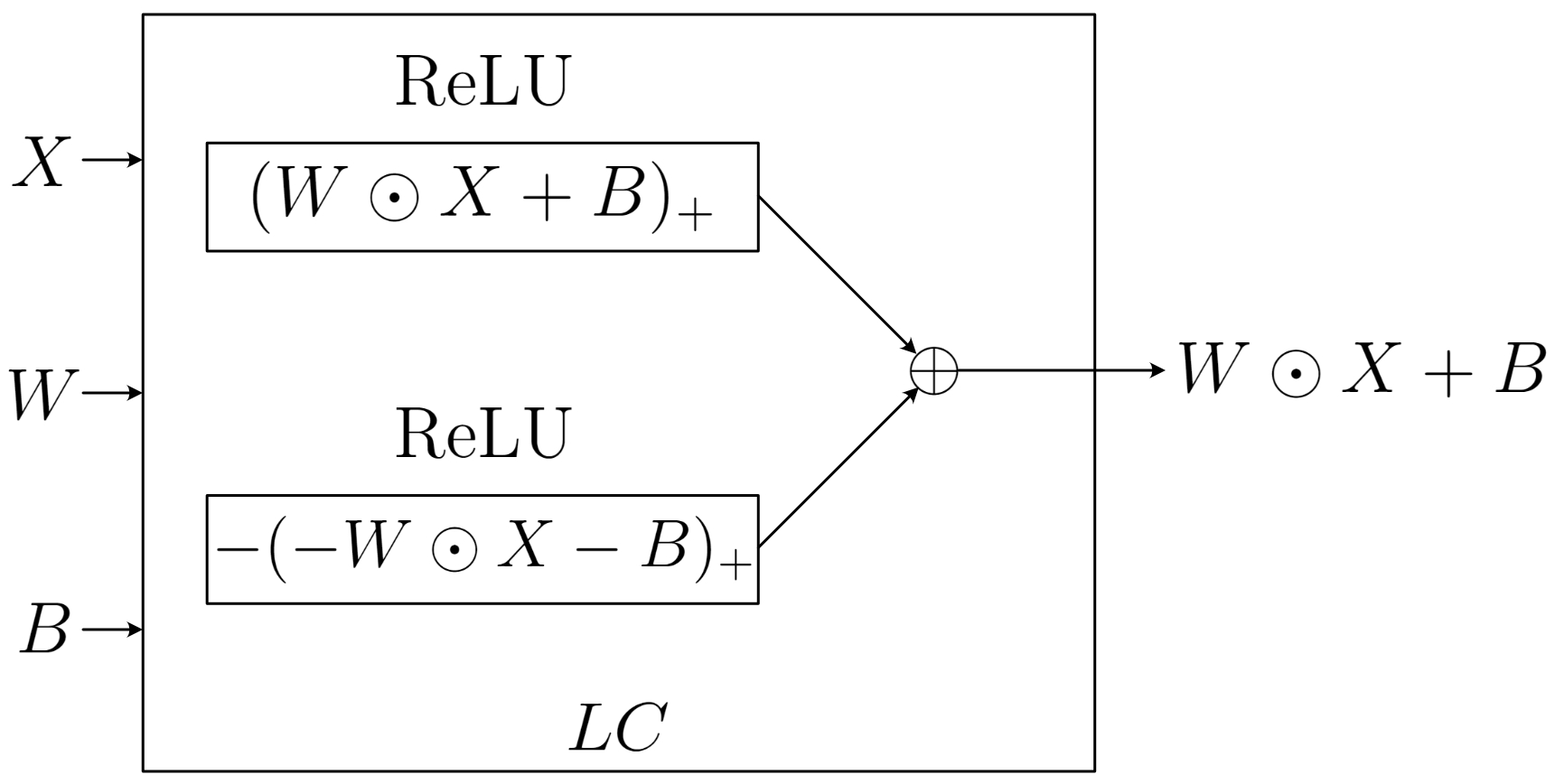} 
\end{center}
\caption{The implementation of a linear combination as a ReLU network. Here all operations are pointwise. The symbols $\odot$ represents Hadamard product of matrices, $\oplus$ is the sum of matrices.}
\label{fig:linearcomb}
\end{figure*} 

\begin{figure*}[!ht]
\begin{center}
\includegraphics[scale=0.15]{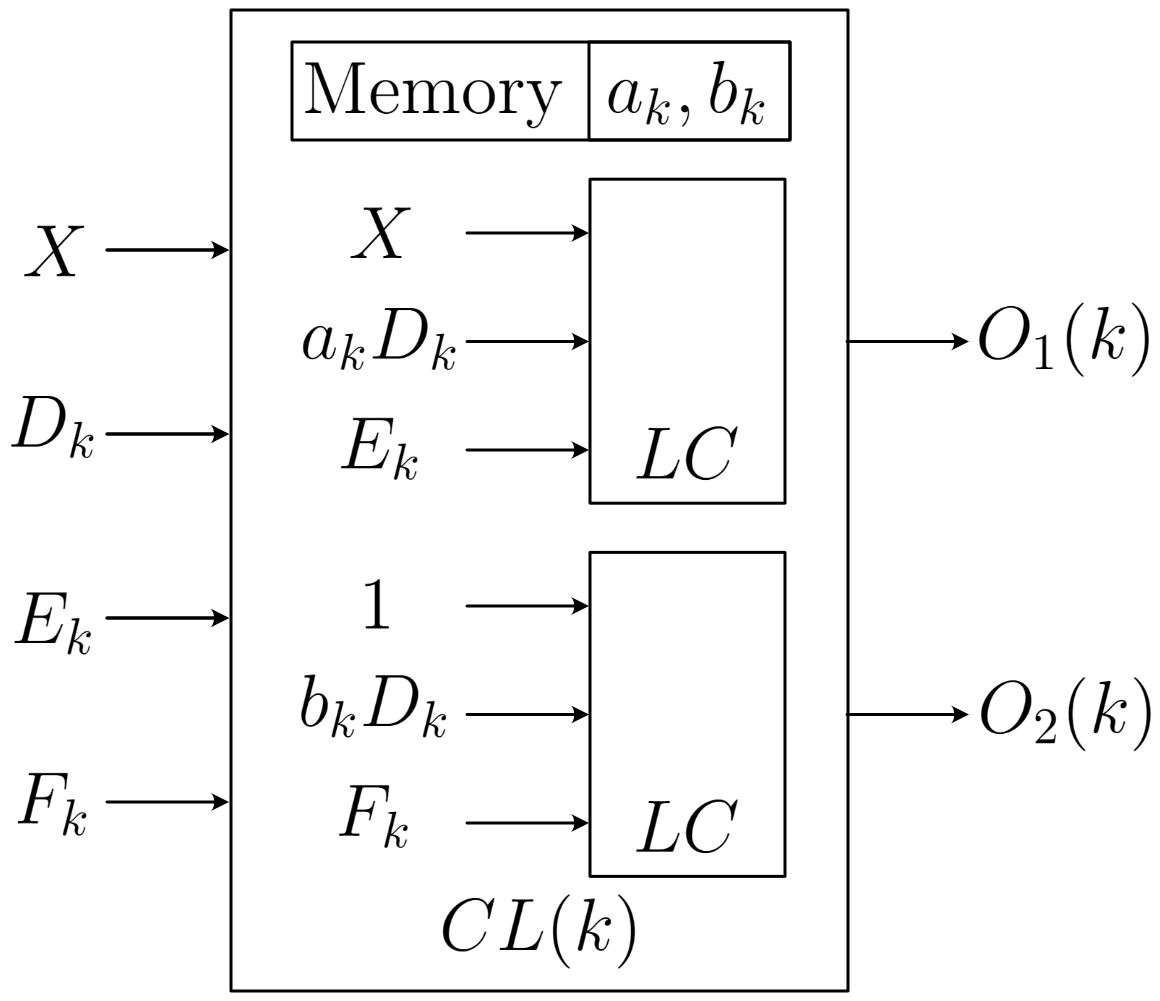} 
\end{center}
\caption{One step of the Clenshaw algorithm, using two circuits of the form LC (4 neurons) as in Figure~\ref{fig:linearcomb}. 
The circuit diagram is shown in general with four input pins and two output pins.}
\label{fig:clenshawstep}
\end{figure*}

\begin{figure*}[!ht]
\begin{center}
\includegraphics[scale=0.25]{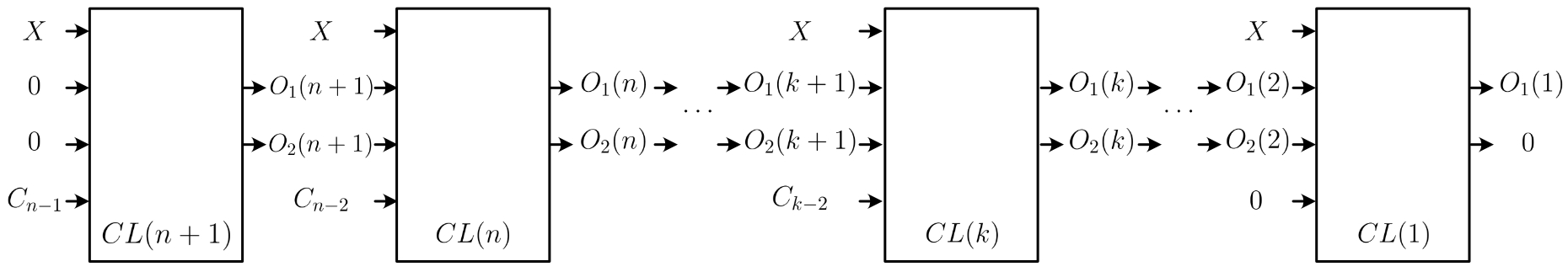} 
\end{center}
\caption{Unrolling the Clenshaw algorithm as a cascade of the circuits of the form CL($k$) as in Figure~\ref{fig:clenshawstep}.}
\label{fig:clenshawalg}
\end{figure*}
 
By algorithm unrolling, we may express this algorithm in terms of a deep neural network evaluating a ReLU activation function.
The network is a cascade of different circuits. 
The most fundamental is the implementation of a linear combination as a ReLU network (see  Figure~\ref{fig:linearcomb})
$$
ax+b=(ax+b)_+ -(-ax-b)_+.
$$
Using the circuits LC in Figure~\ref{fig:linearcomb}, we next construct a circuit to implement recursive reduction \eqref{eq:clenshawstep}.
This is illustrated in Figure~\ref{fig:clenshawstep}.
Finally, we unroll the Clenshaw algorithm by cascading the circuits CL($k$) from Figure~\ref{fig:clenshawstep} for $k=n+1$ down to $k=1$ with different inputs and outputs as shown in Figure~\ref{fig:clenshawalg}.
We use this in order to compute $\Phi_{n,q}(x\cdot y_j)$ by using the recursive formula for ultraspherical polynomials \eqref{eq:recurrence} in the following way.
We set
\be
\begin{aligned}
C_k&=\frac{\omega_q}{\omega_{q-1}}h(k/n)p_{q,k}(1),\\
a_k&= \begin{cases}\displaystyle{\frac{\sqrt{\Gamma(q)\Gamma(q+1)}}{\Gamma(q-1)}}& k=1 \vspace{5pt}\\ \displaystyle{\sqrt{\frac{(2k+q-3)(2k+q-1)}{k(n+q-2)}}}& k\geq 2\end{cases},\\
b_k&=\sqrt{\frac{(k-1)(k+q-3)(2k+q-1)}{k(k+q-2)(2k+q-5)}}.
\end{aligned}
\ee
For the matrix $X$ shown in Figure~\ref{fig:clenshawalg}, we consider the $(Q+1)\times N$ test data matrix $S$ where each column represents one test data $x$, and a $(Q+1)\times M$ train data matrix $R$ where column $j$ represents data point $y_j$. 
Then we set $X=S^TR$. In this way, we would return $\Phi_{n,q}(S^TR)$ from running Algorithm~\ref{alg:clenshaw}, with a time complexity of $O(NMn)$.

\section{Encoding}
\label{sec:encoding}

This appendix comes from \cite{mhaskarodowd}. Our construction in \eqref{eq:approximation} allows us to encode the target function in terms of finitely many real numbers.
For each integer $\ell\geq 0$, let $\{Y_{Q,\ell,k}\}_{k=1}^{\operatorname{dim}(\mathbb{H}_\ell^Q)}$ be an orthonormal basis for $\mathbb{H}_\ell^Q$ on $\mathbb{S}^Q$. We define  the encoding of $f$ by
\be\label{eq:encoding}
\hat{z}(\ell,k)\coloneqq \frac{1}{M}\sum_{j=1}^M z_jY_{Q,\ell,k}(y_j).
\ee
Given this encoding, the decoding algorithm is given in the following proposition.
\begin{proposition}\label{prop:encoding}
Assume $\Phi_{n,q}$ is given as in \eqref{eq:kernel}. Given the encoding of $f$ as given in \eqref{eq:encoding}, one can rewrite the approximation in \eqref{eq:approximation} as
\be\label{eq:decoder}
F_n(\mathcal{D};x)=\sum_{\ell=0}^n \Gamma_{\ell,n}\sum_{k=1}^{\operatorname{dim}(\mathbb{H}_\ell^Q)}\hat{z}(\ell,k)Y_{Q,\ell,k}(x)\qquad x\in \mathbb{S}^Q,
\ee 
where
\be\label{eq:connection}
\Gamma_{\ell,n}\coloneqq \frac{\omega_q\omega_{Q-1}}{\omega_Q\omega_{q-1}}\sum_{i=\ell}^n h\left(\frac{i}{n}\right)\frac{p_{q,i}(1)}{p_{Q,\ell}(1)}C_{Q,q}(\ell,i),
\ee
and $C_{Q,q}(\ell,i)$ is defined in \eqref{eq:dimchange}.
\end{proposition}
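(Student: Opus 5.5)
The plan is a direct algebraic computation: expand the kernel $\Phi_{n,q}$, which is built from ultraspherical polynomials of dimension $q$, in terms of ultraspherical polynomials of dimension $Q$. Then apply the summation formula \eqref{eq:summation} on the ambient sphere $\SS^Q$ to split each zonal term into spherical harmonics in $x$ and $y_j$. Averaging over the data then produces exactly the encoding coefficients $\hat z(\ell,k)$ of \eqref{eq:encoding}.

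\textbf{Step 1: expand the kernel.} Start from \eqref{eq:approximation} and the definition \eqref{eq:kernel}:
\[
F_n(\mathcal{D};x)=\frac{1}{M}\sum_{j=1}^M z_j\,\frac{\omega_q}{\omega_{q-1}}\sum_{i=0}^n h\left(\frac{i}{n}\right)p_{q,i}(1)\,p_{q,i}(x\cdot y_j).
\]
Since $p_{q,i}$ is a univariate polynomial of degree $i$, and $\{p_{Q,\ell}\}_{\ell=0}^i$ is a basis for polynomials of degree $\le i$, the connection formula \eqref{eq:dimchange} with $d_1=q$, $d_2=Q$ gives
\[
p_{q,i}(t)=\sum_{\ell=0}^i C_{Q,q}(\ell,i)\,p_{Q,\ell}(t), \qquad t\in[-1,1].
\]
Substitute this in, and interchange the finite double sum $\sum_{i=0}^n\sum_{\ell=0}^i=\sum_{\ell=0}^n\sum_{i=\ell}^n$.

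\textbf{Step 2: apply the summation formula on $\SS^Q$.} Both $x$ and $y_j$ lie on $\SS^Q$ (indeed $y_j\in\XX\subset\SS^Q$), so \eqref{eq:summation} with $Q$ in place of $q$ gives
\[
p_{Q,\ell}(x\cdot y_j)=\frac{\omega_{Q-1}}{\omega_Q\,p_{Q,\ell}(1)}\sum_{k=1}^{\dim(\HH_\ell^Q)}Y_{Q,\ell,k}(x)\,Y_{Q,\ell,k}(y_j).
\]
Inserting this and collecting all the factors that depend only on $\ell$ and $i$ yields precisely $\Gamma_{\ell,n}$ of \eqref{eq:connection}. The remaining sum over $j$ is $\frac{1}{M}\sum_j z_jY_{Q,\ell,k}(y_j)=\hat z(\ell,k)$, which gives \eqref{eq:decoder}.

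\textbf{Expected obstacles.} There is no analytic obstacle; the only care needed is bookkeeping. First, one must check the direction of the connection coefficients in \eqref{eq:dimchange}, namely that $C_{Q,q}(\ell,i)$ expresses $p_{q,i}$ in the $p_{Q,\ell}$ basis, and not the reverse. Second, $p_{Q,\ell}(1)\neq 0$ by \eqref{eq:pnat1}, so the division in Step 2 is legitimate. Coefficients with the wrong parity simply vanish and need no separate treatment.
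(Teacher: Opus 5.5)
Your proposal is correct and follows essentially the paper's own argument: write out $F_n$ using \eqref{eq:kernel}, rewrite each $p_{q,i}$ in the $p_{Q,\ell}$ basis via \eqref{eq:dimchange}, split $p_{Q,\ell}(x\cdot y_j)$ into spherical harmonics on $\mathbb{S}^Q$ with \eqref{eq:summation}, and collect terms into $\Gamma_{\ell,n}$ and $\hat z(\ell,k)$. Your bookkeeping also checks out: $C_{Q,q}(\ell,i)$ is used in the right direction, and your $i$-sum correctly starts at $i=0$ as in \eqref{eq:kernel}, whereas the paper's written-out expansion starts at $i=1$, which looks like a typo.
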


\begin{proof}
The proof follows from writing out
\be
F_n(\mathcal{D};x)=\frac{\omega_q}{M\omega_{q-1}}\sum_{j=1}^M z_j \sum_{i=1}^n h\left(\frac{i}{n}\right)p_{q,i}(1)p_{q,i}(x\cdot y_j),
\ee
making substitutions using \eqref{eq:dimchange}, \eqref{eq:reproduce}, and  \eqref{eq:summation}, then collecting terms.
\end{proof}

\begin{rem}{\rm
The encoding \eqref{eq:encoding} is not parsimonious. Since the basis functions $\{Y_{Q,\ell,k}\}_{\ell=0,k=1}^{n,\operatorname{dim}(\mathbb{H}_\ell^Q)}$ is not necessarily independent on $\XX$, the encoding can be made parsimonious by exploiting linear relationships in this system. Given a reparametrization the functions as $\{Y_j\}_{j=1}^{\sum_{\ell=0}^n \operatorname{dim}(\mathbb{H}_\ell^Q)}$, we  form the discrete Gram matrix $G$ by the entries
\be\label{eq:discrete_gram}
G_{i,j}\coloneqq \frac{1}{M}\sum_{k=1}^M Y_i(y_k)Y_j(y_k) \approx\int_\mathbb{X} Y_i(y) Y_j(y) f_0d\mu^*(y).
\ee
In practice, one may formulate a QR decomposition by fixing some first basis vector and proceeding by the Gram-Schmidt process until a basis is formed, then setting some threshold on the eigenvalues to get the desired dependencies among the $Y_j$'s. \qed}
\end{rem}

\newpage

\addcontentsline{toc}{chapter}{Bibliography}

%\nocite{*}
\printbibliography

\end{document}